\documentclass[11pt]{amsart}
\usepackage[T1]{fontenc}
\usepackage[utf8]{inputenc}
\usepackage{fullpage}
\usepackage{amsmath,amssymb,amsthm,amsfonts,mathtools,bm}
\allowdisplaybreaks
\usepackage{graphicx}
\usepackage{pdfpages} 
\usepackage{flafter} 
\usepackage{float} 
\usepackage[all]{xy}
\usepackage{array,booktabs,colonequals}
\usepackage{upgreek}
\usepackage{microtype}
\usepackage{url}
\usepackage[numbers,sort&compress]{natbib}
\usepackage[hidelinks]{hyperref}
\hypersetup{
  pdftitle={When Can One Obtain Certificates of Optimality Using Positivstellensaetze?},
  pdfauthor={Nayoon Kim, Allen Gehret, Shenyuan Ma, Jakub Mareček},
  pdfsubject={Axiomatic strict and weak Positivstellensaetze for function algebras},
  pdfkeywords={Positivstellensatz, positivity certificates, function algebras, optimization, neural networks}
}
\newtheorem{theorem}{Theorem}[section]

\newtheorem*{claim*}{Claim}
\newtheorem{corollary}[theorem]{Corollary}

\newtheorem{lemma}[theorem]{Lemma}
\newtheorem{proposition}[theorem]{Proposition}
\newtheorem{weakPositivstellensatz}[theorem]{Weak Positivstellensatz}
\newtheorem{strictPositivstellensatz}[theorem]{Strict Positivstellensatz}
\newtheorem{H17Lemma}[theorem]{Hilbert--17 Lemma}

\theoremstyle{definition}
\newtheorem{definition}[theorem]{Definition}
\newtheorem{example}[theorem]{Example}
\newtheorem{remark}[theorem]{Remark}
\newtheorem{mainExample}[theorem]{Main Example}
\newtheorem{nonExample}[theorem]{Main Non-Example}
\newtheorem{convention}[theorem]{Convention}

\newcommand\N{\mathbb{N}}
\newcommand\Q{\mathbb{Q}}
\newcommand\R{\mathbb{R}}
\newcommand\Z{\mathbb{Z}}

\newcommand\calL{\mathcal{L}}

\newcommand\lh{\mathsf{lh}}

\title{When Can One Obtain Certificates of Optimality Using Positivstellens\"{a}tze?}

\author{Nayoon Kim}
\address[Nayoon Kim, Allen Gehret, Shenyuan Ma, and Jakub Mare\v{c}ek]{%
  Czech Technical University in Prague\\
  Artificial Intelligence Center\\
  Charles Square 13\\
  Prague 2, Czech Republic}
\email[Nayoon Kim, corresponding author]{nayoon.kim@fel.cvut.cz}

\author{Allen Gehret}
\author{Shenyuan Ma}
\author{Jakub Mare\v{c}ek}

\date{\today}
\begin{document}

\begin{abstract}
We study certificates of positivity and optimality for learning problems whose objectives and constraints need not be polynomial. We isolate an axiomatic core of Fischer's constructive strict and weak Positivstellens\"{a}tze and prove the resulting theorems for abstract function algebras over ordered fields. The framework separates two roles that can otherwise be conflated: objective and constraint functions may be built from broad classes of continuous or definable operations, while the auxiliary primitives used to construct a certificate satisfy explicit scalar and closure axioms. We give instances over continuous and definable function algebras, including ordered fields not closed under square roots, derive lower-bound and global-optimality certificates, and analyze both expanded term length and shared computation-graph complexity.

\end{abstract}

\maketitle

\section{Introduction}

\noindent There is substantial interest in guarantees of optimality for neural-network training.
Two difficulties arise from activation functions: sigmoid is smooth but non-semialgebraic, whereas ReLU is semialgebraic but nonsmooth. Many available certificates rely on semialgebraicity or smoothness.

\medskip\noindent Several approaches to such certificates have been developed, including branch-and-bound proof systems
\cite{huchette2026deep}, cutting-plane proof systems
\cite{NEURIPS2022_0b06c867,de2025truly}, Positivstellens\"{a}tze, and several others \cite{NEURIPS2018_f2f44698,1011453498704,palma2024expressive,mao2024expressiveness,balauca2024gaussian} based on convexifications.
Branch-and-bound and polyhedral proof systems are also used in ReLU-network verification~\cite{huchette2026deep}. Independently, strong lower bounds are known for branching and cutting-plane proof systems on difficult instance classes~\cite{dey2023lower,fleming2021power}. A concise representation of a certificate family does not by itself make semantic verification tractable.
Positivstellens\"{a}tze have been developed in real algebraic geometry \cite{scheiderer2024course}. Related semialgebraic methods have been used to estimate Lipschitz constants of ReLU networks \cite{NEURIPS2020_dea9ddb2} and to represent and certify Monotone Deep Equilibrium models \cite{NEURIPS2021_e3b21256}.
The challenge is the computational complexity of obtaining such certificates, which remains prohibitive for many practical applications, even when sparsity
\cite{wang2021chordal,xue2022chordal,wang2024scalability} and nonnegativity \cite{mai2026tractable} are exploited.
Here we focus on the Positivstellensatz approach.

\medskip\noindent More precisely, we ask which \emph{S\"{a}tze} can provide explicit algebraic certificates for deep-learning problems, and how such certificates can be constructed.
Many classical \emph{S\"{a}tze} contain a nonconstructive step and therefore do not immediately yield an effective construction.
An exception to this is the
\emph{Positivstellens\"{a}tze for differentiable functions} by Fischer (2011)~\cite{FischerPsatzDiff}, a constructive generalization of the earlier
\emph{Positivstellensatz for definable functions on o-minimal structures} by Acquistapace, Andradas, and Broglia (2002)~\cite{AAB}.
Fischer's construction, inspired in part by~\cite{AAB}, has an explicit structure that permits the present axiomatic formulation.

\medskip\noindent Two features of Fischer's formulation~\cite{FischerPsatzDiff} make systematic reuse of the construction less direct:
\begin{enumerate}
\item The main theorems and proofs are presented in the setting of (definable families of) definable $C^r$-functions on a definably complete real closed field. Fischer also gives Banach-space and Peano-differentiable variants, explaining that minor variations of the definable proofs yield these results and indicating the required substitutions rather than writing out separate full proofs.
\label{challenge1}
\item The construction is too reliant on the fact that the scalar field is a real-closed field (such as $\R$). This is mainly because the square-root function $\sqrt{\cdot}$ plays a seemingly essential role.
\label{challenge2}
\end{enumerate}
\noindent The first point, \ref{challenge1}, suggests a single axiomatic framework containing these settings as special cases. The second, \ref{challenge2}, asks how far the construction can be generalized by replacing functions such as ``$\sqrt{\cdot}$''.

\subsection{Contributions and organization}

\noindent Sections~\ref{section_Setup} and~\ref{section_Main_results} introduce the axiomatic setting and state the strict and weak Positivstellens\"{a}tze (Theorems~\ref{strict_Positivstellensatz_abstract} and~\ref{weak_Positivstellensatz_abstract}). Section~\ref{section_examples} gives examples that satisfy the axiomatic Positivstellensatz but are not direct instances of the hypotheses in~\cite{FischerPsatzDiff}, including examples over fields without square roots, such as $\Q$, and examples using variants of common neural-network activation functions. 
Section~\ref{section_optimality_applications} gives optimality certificates and a neural-network application. Section~\ref{section_complexity_observations} analyzes the complexity of Fischer's construction.

\medskip\noindent Appendix~\ref{section_appendix} gives the universal terms and proof architecture. Appendices~\ref{subsection_proof_strict} and~\ref{subsection_proof_weak} contain the proofs of the strict and weak theorems. Appendix~\ref{appendix_function_algebra_examples_section} verifies the function-algebra criteria for various examples, including Fischer's definable $C^r$ setting. Appendix~\ref{appendix_complexity_section} derives the exact term-length and shared-graph complexity bounds and discusses the small-arity cases.

\section{Background}\label{section_background}

\noindent Polynomials that can be written as a sum of squares (SOS) of polynomials are obviously nonnegative; however, the converse is false by Hilbert (1888). The next question is whether all the nonnegative polynomials are SOS of \emph{rational functions}; this is Hilbert's 17th problem (1900) which was answered in the affirmative by Artin (1927).
This motivates the study of \emph{certificates of positivity}.

\medskip\noindent Let $g,f_1,\dots,f_k$ be polynomials in $\R[X]=\mathbb{R}[X_1,\dots,X_n]$. The \emph{preordering} $T(f_1,\dots,f_k)$ generated by $f_1,\dots,f_k$ is the smallest collection of polynomials that contains all squares, each $f_i$, and is closed under addition and multiplication. Any element of this set is tautologically nonnegative on the basic closed semialgebraic set
\[
F \ = \ \bigcap_{1\leq i\leq k}\{f_i(x)\geq0\} \ \subseteq \ \mathbb{R}^n,
\]
so it serves as an algebraic witness for nonnegativity. Classical polynomial Positivstellensätze certify positivity (or nonnegativity) of $g$ on $F$ by producing such a witness, or variants thereof.

\medskip\noindent The Krivine--Stengle Positivstellensatz (1964, 1974)~\cite{KrivinePsatz,StenglePsatz} gives
a certificate with a multiplier in the preordering: $g\geq0$ on $F$ if and only if there exist $\mu\in\mathbb{N}$ and $a_1,a_2\in T(f_1,\dots,f_k)$ such that
\[
(g^{2\mu}+a_1)g \ = \ a_2.
\]
Schmüdgen's Positivstellensatz removes the multiplier under a compactness assumption on $F$ (1991)~\cite{SchmudgenPsatz}: if $F$ is compact and $g>0$ on $F$, then $g$ belongs to the preordering:
\[
g \ \in \ T(f_1,\dots,f_k)
\]
Putinar then reduces the structure from a preordering (multiplicative cone) to a quadratic module (additive cone), provided the quadratic module is Archimedean (1993)~\cite{PutinarPsatz}: if $g>0$ on $F$, then:
\[
g \ = \ \sigma_0+\sum_{1\leq i\leq k}\sigma_if_i\quad \text{($\sigma_i$ SOS)}
\]
Beyond polynomial rings, Gamboa proved a Positivstellensatz for rings of continuous functions~\cite{gamboa1987positivstellensatz}. Later, Acquistapace--Andradas--Broglia proved strict and weak Positivstellens\"{a}tze for definable $C^r$-functions in o-minimal expansions of real closed fields~\cite{AAB}. Fischer subsequently gave an explicit finite construction, preserved definability uniformly in parameters, treated the smooth case, and gave Banach-space variants~\cite{FischerPsatzDiff}. We formulate the construction using scalar and function-algebra axioms. The resulting terms are independent of the particular function algebra and scalar field; only verification of the axioms depends on the instance.

\medskip\noindent Other nonpolynomial positivity results use different certificate mechanisms. Dinh and Pham obtain sums-of-squares representations for nonnegative definable $C^p$-functions under additional hypotheses on the zero set, with coefficients of class $C^{p-2}$~\cite{dinh2021nichtnegativstellens}. Lasserre--Putinar study positivity and optimization in finitely generated algebras of Borel or semialgebraic functions by lifting to polynomial optimization~\cite{lasserre2010positivity}, while Marshall--Netzer use hidden positivity and quadratic modules in real function algebras~\cite{marshall2012positivstellensatze}. These frameworks are complementary to the present closure-axiom approach rather than direct substitutes for the universal constructions below.

\section{Setup}\label{section_Setup}
\noindent Throughout, our main running example is the collection $C^0(U,\R)$ of continuous real-valued functions on an open set $U\subseteq\R^n$. Here we have an underlying \emph{space} $X=U$, a field of \emph{scalars} $\bm{R}=\R$, and an \emph{algebra} $A=C^0(U,\R)$.

\medskip\noindent To obtain further examples, we consider more general choices of \emph{space}, \emph{scalars}, and \emph{algebra}: $X$ is an arbitrary set, $\bm{R}$ is a \emph{model-theoretic structure} (cf.~\cite[Appendix B]{ADAMTT}) satisfying some first-order axioms, and $A$ is a collection of functions $X\to R$ satisfying some closure axioms. In Sections~\ref{section_Setup} and~\ref{section_Main_results}, $X$ is arbitrary.

\medskip\noindent We often consider a one-sorted first-order language $\calL$, an $\calL$-structure $\bm{R}=(R;\ldots)$ of ``scalars'', and a collection $A$ of $R$-valued functions $X\to R$. Thus $A\subseteq R^X$, and $A$ satisfies the closure properties specified below.

\medskip\noindent The language $\calL$ allows us to state and prove Theorems~\ref{strict_Positivstellensatz_abstract} and~\ref{weak_Positivstellensatz_abstract} independently of a particular function algebra. Interpretations of $\calL$-terms also define functions in $R^X$ uniformly:

\begin{definition}
Suppose $t(z_1,\ldots,z_n)$ is an $\calL$-term, $\bm{R}=(R;\ldots)$ is an $\calL$-structure, and $f_1,\ldots,f_n:X\to R$ are elements of $R^X$; then we define a new element $t(f_1,\ldots,f_n)$ of $R^X$ as follows:
\[
t(f_1,\ldots,f_n) \ : \ X\to R,\quad x \ \mapsto \ t(f_1,\ldots,f_n)(x) \ := \ t^{\bm{R}}(f_1(x),\ldots,f_n(x))
\]
where $t^{\bm{R}}:R^n\to R$ is the interpretation of the $\calL$-term $t(z_1,\ldots,z_n)$ in the $\calL$-structure $\bm{R}$.
\end{definition}

\noindent We begin with the language $\calL_{\operatorname{oF}}$ of \emph{ordered fields}:
\[
\calL_{\operatorname{oF}} \ := \ \{\leq,0,1,-,+,\cdot,(\cdot)^{-1}\}
\]
Let $T_{\operatorname{oF}}$ be the $\calL_{\operatorname{oF}}$-theory whose models are precisely the ordered fields $\bm{R}=(R;\leq,+,\cdot)$ where all function symbols have their usual interpretation; we declare $0$ to be a default value by setting $0^{-1}:=0$. Below, all languages we introduce will extend $\calL_{\operatorname{oF}}$, and all theories will extend $T_{\operatorname{oF}}$.

\medskip\noindent Suppose $\bm{R}=(R;\ldots)$ is a model of $T_{\operatorname{oF}}$. The first two axioms we wish to impose on a collection $A\subseteq R^X$ of functions $X\to R$ are the following:
\begin{itemize}
\item[(A0)] The collection $A$ is a unital commutative ring of functions $X\to R$, i.e.,
\begin{itemize}
\item[(A0a)] the constant functions $X\to R,x\mapsto 0,1$ are elements of $A$
\item[(A0b)] if $f\in A$, then $-f\in A$
\item[(A0c)] if $f,g\in A$, then $f+g,f\cdot g\in A$
\end{itemize}
\item[(A1)] if $f\in A$, and $\{f>0\}=X$, then $f^{-1}\in A$.
\end{itemize}

\noindent Under the axioms (A0) and (A1), the collection $A$ has the natural structure of a $\Q$-algebra, which partly motivates our use of the word \emph{algebra}. We use the term more broadly below for function rings satisfying the stated closure axioms, and impose (A1) only where it is needed: it is included with the strict axioms (AsP), but not the weak axioms (AwP).

\begin{mainExample}\label{mainExamplePt1}
Suppose $\bm{R}=(\R;\leq,+,\cdot)$ is the usual ordered field of real numbers, construed as an $\calL_{\operatorname{oF}}$-structure in a natural way, thus making it a model of $T_{\operatorname{oF}}$. Likewise, let $X=U\subseteq\R^n$ be an open subset of $\R^n$, for some $n$, and set $A=C^0(U,\R)$ be the $\R$-algebra of continuous $\R$-valued functions $U\to\R$. Then $A$ satisfies axioms (A0) and (A1). We prove all claims about this running Main Example in subsection~\ref{subsection_checking_main_example}.
\end{mainExample}

\begin{nonExample}[for (A1)]\label{main_Non_Example}
Let $n\geq 1$, suppose $\bm{R}=(\R;\leq,+,\cdot)$, and consider $A=\R[X]=\R[X_1,\ldots,X_n]$, the ring of polynomials over $\R$ in the indeterminates $X_1,\ldots,X_n$; we construe $A$ as a collection of functions $\R^n\to\R$ in the usual way. Then $A$ satisfies axiom (A0) but does not satisfy axiom (A1). Indeed, we have $f:=1+X_1^2+\cdots+X_n^2\in A$ and $f>0$ on $\R^n$, however $f^{-1}\not\in A$ since $f^{-1}$ is bounded on $\R^n$ but not constant. Consequently, $A$ cannot satisfy the strict axioms (AsP) given below, so the Strict Positivstellensatz~\ref{strict_Positivstellensatz_abstract} does not apply to this choice of $A$. 
We show below in~\ref{main_Non_Example_A2} that the Weak Positivstellensatz~\ref{weak_Positivstellensatz_abstract} also does not apply to this choice of $A$.
\end{nonExample}

\begin{convention}
If $\bm{R}$ satisfies $ T_{\operatorname{oF}}$ and $f:X\to R$, then we set $\{f\geq 0\}:=\{x\in X:f(x)\geq 0\}\subseteq X$. A similar notation is used likewise for $\{f>0\}$, $\{f=0\}$, $\{f\leq 0\}$, and $\{f<0\}$.
\end{convention}

\subsection{The languages}

\noindent We equip the scalar fields with additional primitives. Their intended roles are specified in subsection~\ref{subsection_Scalar_Theories}; the corresponding language extensions of $\calL_{\operatorname{oF}}$ add the following function symbols:
\begin{itemize}
\item a unary function $\sigma$; it produces nonnegative values (for example, $|x|$, $x^2$, or $x^{2d}$)
\item a unary function $\rho$; it is a partial inverse of $\sigma$ (for example, $\sqrt{x}$)
\item a unary function $\xi$; it is a generalized ``positive part'' or activation-type function (for example, $\operatorname{ReLU}(x):=\max(0,x)$ or $\exp(-1/t)\cdot1\{t>0\}$)
\item a unary function $\delta$; it is strictly positive and dominates twice its input (for example, $2\sqrt{1+x^2}$ or $2\operatorname{ReLU}(x)+1$)
\item a binary function $\nu$; it is a partial proxy for the $\max$ function: it is positive whenever either input is positive, and is bounded by its second input whenever the first input is nonpositive and the second input is positive (for example, $\max(x,y)$ or $\sqrt{x^2+y^2}+x$)
\end{itemize}
\noindent We define extensions of $\calL_{\operatorname{oF}}$ by these function symbols as follows:
\[
\calL_{\operatorname{wP}} \ := \ \calL_{\operatorname{oF}}\cup\{\sigma,\rho,\xi\} \ \subseteq \  \calL_{\operatorname{sP}} \ := \ \calL_{\operatorname{wP}}\cup\{\delta,\nu\}
\]

\begin{mainExample}\label{mainExamplePt2}
To continue with example~\ref{mainExamplePt1}, we expand the $\calL_{\operatorname{oF}}$-structure $\bm{R}=(\R;\ldots)$ to an $\calL_{\operatorname{sP}}$-structure by interpreting the new function symbols in $\calL_{\operatorname{sP}}\setminus\calL_{\operatorname{oF}}$ as follows for every $x,y\in\R$:
\[
\sigma(x) \ := \ x^2,\quad \rho(x) \ := \ \sqrt{|x|},\quad \xi(x) \ := \ \operatorname{ReLU}^2(x),
\]
\[
\delta(x) \ := \ 2\sqrt{1+x^2},\quad \nu(x,y) \ := \ \sqrt{x^2+y^2}+x
\]
\end{mainExample}
\noindent Figure~\ref{fig:auxiliary_combined} shows these functions:
\begin{figure}[!htbp]
    \centering
    \includegraphics[width=\textwidth]{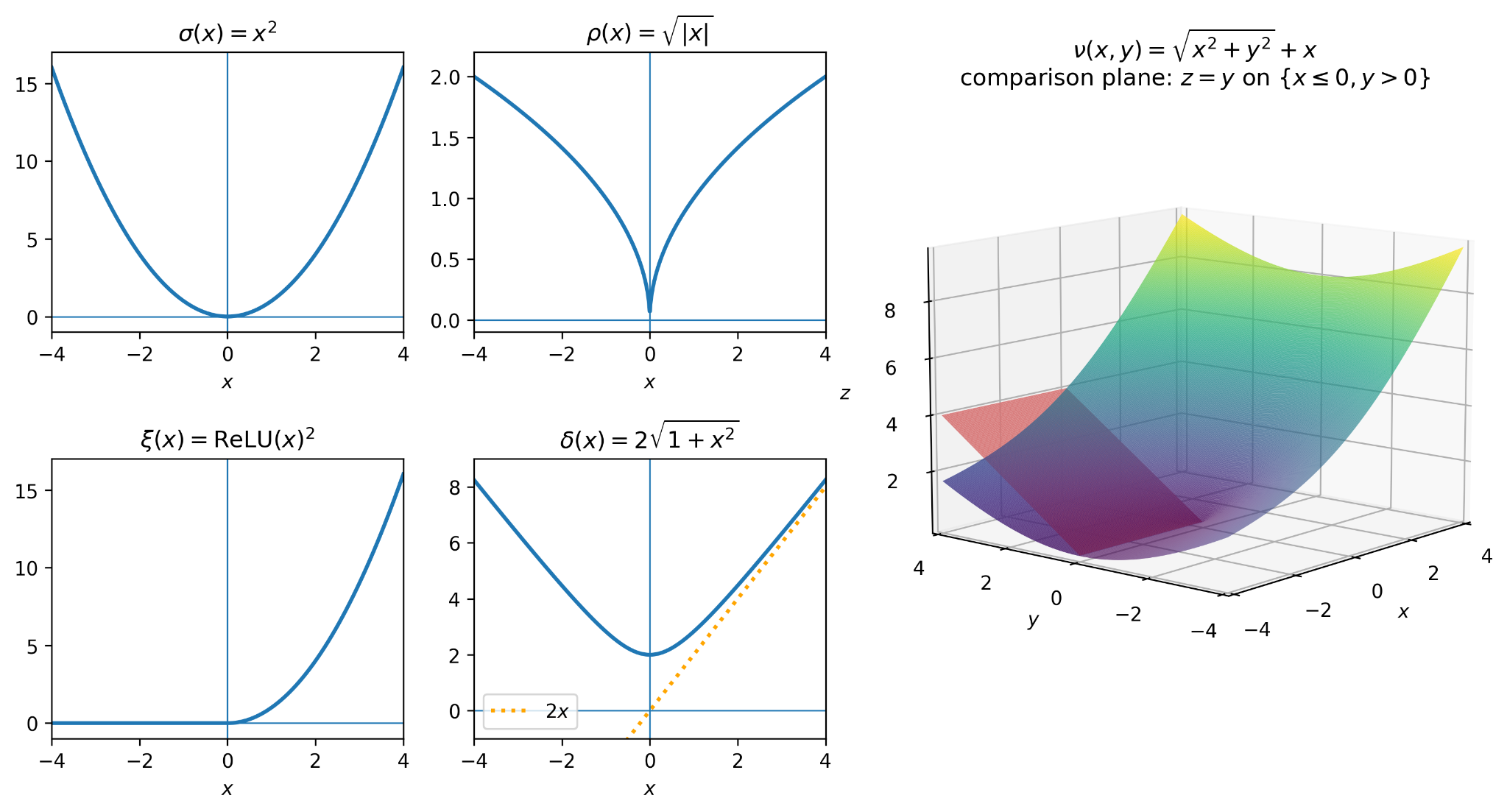}
    \caption{The auxiliary functions $\sigma,\rho,\xi,\delta,\nu$ in Main Example~\ref{mainExamplePt2}.}
    \label{fig:auxiliary_combined}
\end{figure}
\subsection{The scalar theories}\label{subsection_Scalar_Theories}

\noindent Let $T_{\operatorname{wP}}$ be the $\calL_{\operatorname{wP}}$-theory extending $T_{\operatorname{oF}}$ whose models $\bm{R}=(R;\xi,\sigma,\rho)$ satisfy for every $a,b\in R$:
\begin{itemize}
\item[(T$\xi1$)] $\xi(a)\geq 0$
\item[(T$\xi2$)] $\xi(a)>0$ iff $a>0$
\item[(T$\sigma1$)] $\sigma(a)\geq 0$
\item[(T$\sigma2$)] if $a,b\geq 0$, then  $\sigma(a\cdot b)=\sigma(a)\cdot\sigma(b)$
\item[(T$\rho$)] if $a\geq 0$, then $\rho(a)\geq 0$
\item[(T$\sigma\rho$)] if $a\geq 0$, then $\sigma(\rho(a))=a$
\end{itemize}

\noindent Let $T_{\operatorname{sP}}$ be the $\calL_{\operatorname{sP}}$-theory extending $T_{\operatorname{wP}}$ whose models $\bm{R}=(R;\xi,\sigma,\rho,\delta,\nu)$ additionally satisfy for every $a,b\in R$:
\begin{itemize}
\item[(T$\delta$)] $0<\delta(a)$ and $2a\leq \delta(a)$
\item[(T$\nu1$)] if $a>0$ or $b>0$, then $\nu(a,b)>0$, and
\item[(T$\nu2$)] if $a\leq 0$ and $b>0$, then $\nu(a,b)\leq b$
\end{itemize}

\begin{mainExample}\label{mainExamplept3}
The $\calL_{\operatorname{sP}}$-structure $\bm{R}=(\R;\ldots)$ from example~\ref{mainExamplePt2} is a model of $T_{\operatorname{wP}}$ and $T_{\operatorname{sP}}$.
\end{mainExample}

\section{Main results}\label{section_Main_results}

\subsection{Strict Positivstellensatz}\label{strict_positivstellensatz_statement_subsection}

\noindent For the Strict Positivstellensatz~\ref{strict_Positivstellensatz_abstract} below, we consider a model $\bm{R}$ of $T_{\operatorname{sP}}$ as well as the following additional axioms on our algebra $A\subseteq R^X$ of functions $X\to R$:
\begin{itemize}
\item[(A2)] if $f\in A$, then $\xi(f)\in A$
\item[(A3s)] if $f\in A$ and $\{f>0\}=X$, then $\sigma(f),\rho(f)\in A$
\item[(A4s)] if $f,g\in A$ and $\{f\geq 0\}\subseteq\{g\neq 0\}$, then:
\[
\xi(f)\cdot g^{-1} \ \in \ A
\]
\item[(A$\delta$)] if $f\in A$, then $\delta(f)\in A$
\item[(A$\nu$)] if $f,g\in A$ and $\{f\leq 0\}\subseteq\{g>0\}$, then $\nu(f,g)\in A$
\item[(AsP)] the conjunction of axioms (A0), (A1), (A2), (A3s), (A4s), (A$\delta$), and (A$\nu$).
\end{itemize}

\begin{strictPositivstellensatz}(cf.~\cite[Theorem 1.1]{FischerPsatzDiff})\label{strict_Positivstellensatz_abstract}
For each $k\geq 0$, there exist $\calL_{\operatorname{sP}}$-terms $\tilde{t}_i(z_0,\ldots,z_k)$ for $i=0,\ldots,k$ such that for every $\bm{R}$ satisfying $ T_{\operatorname{sP}}$, every $A\subseteq R^X$ satisfying (AsP), and every $g,f_1,\ldots,f_k\in A$, if
\[
\textstyle\bigcap_{1\leq i\leq k}\{f_i\geq 0\} \ \subseteq \ \{g>0\},
\]
then each $t_i:=\tilde{t}_i(g,f_1,\ldots,f_k)$ lies in $A$, is strictly positive on $X$, and
\[
\textstyle g \ = \ \sigma(t_0)+\sum_{1\leq i\leq k}\sigma(t_i)f_i.
\]
\end{strictPositivstellensatz}

\begin{proof}[Proof strategy]
The strict and weak constructions follow the same pattern. The constraints are first compressed into a single function $h$ satisfying
\[
\bigcap_{1\leq i\leq k}\{f_i\geq0\}\subseteq\{h\geq0\}\subseteq\{g>0\}
\]
in the strict case, with the analogous nonnegative relation in the weak case. The sign analysis then involves only $g$ and $h$. The $\xi$-terms distinguish the relevant sign cases for $(g,h)$. The resulting functions, together with the representation of $h$ in terms of the original constraints, give the certificate. The explicit terms are given in Appendix~\ref{section_appendix}; Appendices~\ref{subsection_proof_strict} and~\ref{subsection_proof_weak} contain the corresponding verifications.
\end{proof}

\noindent The statement of~\ref{strict_Positivstellensatz_abstract} provides the $\calL_{\operatorname{sP}}$-terms $\tilde{t}_i$ independently of the particular $\bm{R},A,g,f_1,\ldots,f_k$; it requires only $T_{\operatorname{sP}}$, (AsP), and $\bigcap_{1\leq i\leq k}\{f_i\geq 0\}\subseteq\{g>0\}$.

\begin{mainExample}\label{mainExample_pt4}
Continuing with example~\ref{mainExamplept3}, the algebra $A=C^0(U,\R)$ satisfies (AsP).

\end{mainExample}

\begin{example}
Appendix~\ref{subsection_small_arity_certificates} gives the $k=0$ terms, the complete $k=1$ strict and weak identities as formulae, and the expanded lengths for $k=2$.
\end{example}

\subsection{Weak Positivstellensatz}\label{weak_positivstellensatz_statement_subsection}

\noindent For the Weak Positivstellensatz~\ref{weak_Positivstellensatz_abstract} below, we consider a model $\bm{R}$ of $T_{\operatorname{wP}}$ as well as the following additional axioms on our algebra $A\subseteq R^X$ of functions $X\to R$:
\begin{itemize}
\item[(A3w)] if $f\in A$ and $\{f\geq 0\}=X$, then $\sigma(f),\rho(f)\cdot\xi(f)\in A$
\item[(A4w)] if $f\in A$, then $\xi(-f)\cdot(f)^{-1}\in A$
\item[(A5w)] if $g,h\in A$ and $\{h\geq 0\}\subseteq\{g\geq 0\}$, then:
\[
\xi(g^2)\cdot\xi(g+4h)\cdot(g+h)^{-1} \ \in \ A
\]

\item[(AwP)] the conjunction of (A0), (A2), (A3w), (A4w), and (A5w).
\end{itemize}

\noindent In particular, the weak theorem does not require axiom (A1), i.e., closure under inverses of all everywhere-positive functions. 
Proposition~\ref{A1_independence_proposition} below gives a non-Archimedean example satisfying (AwP) but not (A1), so axiom (A1) is not a consequence of the remaining weak axioms.

\begin{weakPositivstellensatz}(cf.~\cite[Theorem 1.2]{FischerPsatzDiff})\label{weak_Positivstellensatz_abstract}
For each $k\geq 0$, there exist $\calL_{\operatorname{wP}}$-terms $\tilde{t}_i(z_0,\ldots,z_k)$ for $i=-1,0,\ldots,k$ such that for every $\bm{R}$ satisfying $ T_{\operatorname{wP}}$, every $A\subseteq R^X$ satisfying (AwP), and every $g,f_1,\ldots,f_k\in A$, if
\[
\textstyle F \ := \ \bigcap_{1\leq i\leq k}\{f_i\geq 0\} \ \subseteq \ \{g\geq 0\},
\]
then we have:
\[
\textstyle \sigma(p)\cdot g \ = \ \sigma(t_0)+\sum_{1\leq i\leq k}\sigma(t_i)\cdot f_i
\]
where each $t_i:=\tilde{t}_i(g,f_1,\ldots,f_k)$ ($i=0,1,\ldots,k$) is a nonnegative function in $A$; and $p:=\tilde{t}_{-1}(g,f_1,\ldots,f_k)$ is a nonnegative function in $A$ satisfying:
\[
\{p=0\} \ = \ \{g=0\}
\]
\end{weakPositivstellensatz}

\noindent The proof of~\ref{weak_Positivstellensatz_abstract} is given in Appendix~\ref{subsection_proof_weak}.
\begin{mainExample}\label{mainExamplePt5}
Continuing with example~\ref{mainExample_pt4}, the algebra $A=C^0(U,\R)$ also satisfies (AwP).

\end{mainExample}

\begin{example}
The zero-constraint sanity check and the complete one-constraint formulae are given in Appendices~\ref{appendix_zero_constraint_sanity_subsection} and~\ref{subsection_small_arity_certificates}.
\end{example}

\begin{nonExample}[for (A2)]\label{main_Non_Example_A2}
Continuing with~\ref{main_Non_Example}, we claim that there is no choice of function $\xi:\R\to\R$ so that both the structure $(\R;\leq,+,\cdot,\xi)$ satisfies (T$\xi1$)-(T$\xi2$) and the algebra $A=\R[X_1,\ldots,X_n]$ ($n\geq 1$) satisfies (A2); indeed, otherwise applying (A2) to $f=X_1$, there would be a polynomial $P\in \R[X_1,\ldots,X_n]$ such that $P(x_1,\ldots,x_n)=\xi(x_1)$ for all $(x_1,\ldots,x_n)\in\R^n$. Restricting this identity to tuples $(t,0,\ldots,0)\in\R^n$ yields a univariate polynomial $q(t)$ such that $q(t)=\xi(t)$ for all $t\in\R$, contradicting (T$\xi1$)-(T$\xi2$) as a univariate polynomial with infinitely many zeros must be identically zero. Consequently, $A$ cannot satisfy the weak axioms (AwP) either, so the Weak Positivstellensatz~\ref{weak_Positivstellensatz_abstract} also does not apply to this choice of $A$.
\end{nonExample}

\section{Examples}\label{section_examples}

\noindent We give several Positivstellens\"{a}tze that are instances of~\ref{strict_Positivstellensatz_abstract} and~\ref{weak_Positivstellensatz_abstract}. Fischer's original setting is discussed in Appendix~\ref{subsection_Fischers_examples}.

\subsection{Using the binary step function as \texorpdfstring{$\xi$}{xi}}\label{using_binary_step_subsection}

\noindent Outside the continuous setting, one may use for $\xi$ the binary step function (with $\xi(0)=0$):
\[
R\to R,\qquad x\longmapsto
\begin{cases}
0,&x\leq 0,\\
1,&x>0.
\end{cases}
\]
Choose \emph{any} scalar primitives $\sigma,\rho,\xi,\delta,\nu$ on an ordered field $R$ that satisfy the scalar theories $T_{\operatorname{wP}}$ and $T_{\operatorname{sP}}$. Expand the scalar structure by those chosen primitives, let $X$ be a definable set, and take $A$ to be the algebra of all functions $X\to R$ definable in that expanded structure. Closure of definability under composition then supplies every algebra axiom used by the two Positivstellens\"{a}tze.

\begin{corollary}\label{cor_using_binary_step_psatz}
For any such choice of scalar primitives, the corresponding full definable-function algebra $A$ satisfies axioms (AsP) and (AwP).
\end{corollary}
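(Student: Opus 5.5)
The approach is to verify each algebra axiom directly, using a single observation: if $\bm{R}^\ast=(R;\leq,+,\cdot,\sigma,\rho,\xi,\delta,\nu)$ denotes the expanded structure, $X\subseteq R^n$ is definable in $\bm{R}^\ast$, and $A$ is the set of all functions $X\to R$ whose graphs are definable (with parameters) in $\bm{R}^\ast$, then $A$ is closed under pointwise application of any $\bm{R}^\ast$-definable function $R^m\to R$. Concretely, if $f_1,\dots,f_m\in A$ and $\phi:R^m\to R$ is definable, then the graph of $x\mapsto\phi(f_1(x),\dots,f_m(x))$ is
\[
\{(x,y)\in X\times R:\ \exists u_1\cdots\exists u_m\,(\textstyle\bigwedge_j \Gamma_{f_j}(x,u_j)\wedge y=\phi(u_1,\dots,u_m))\},
\]
which is definable. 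Every function symbol of $\calL_{\operatorname{sP}}$ is interpreted by a function definable in $\bm{R}^\ast$ (including $(\cdot)^{-1}$ with the default $0^{-1}=0$). It follows that every $\calL_{\operatorname{sP}}$-term $t(z_1,\dots,z_m)$ defines a definable function $t^{\bm{R}}$, and so $t(f_1,\dots,f_m)\in A$ whenever $f_1,\dots,f_m\in A$. I would state this as a preliminary lemma, proved by induction on terms.

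With the lemma in hand, the axioms follow almost immediately, because $A$ is closed under \emph{all} term operations unconditionally:
\begin{itemize}
\item (A0) and (A1) hold because constants, $-$, $+$, $\cdot$ and $(\cdot)^{-1}$ are terms.
\item (A2), (A$\delta$) and (A3s) hold because $\xi(z)$, $\delta(z)$, $\sigma(z)$ and $\rho(z)$ are terms.
\item (A4s) holds because $\xi(z_1)\cdot z_2^{-1}$ is a term.
\item (A$\nu$) holds because $\nu(z_1,z_2)$ is a term.
\end{itemize}
The weak axioms are handled the same way. (A3w) uses the terms $\sigma(z)$ and $\rho(z)\cdot\xi(z)$. (A4w) uses $\xi(-z)\cdot z^{-1}$. (A5w) uses the term
\[
\xi(z_1^2)\cdot\xi(z_1+4z_2)\cdot(z_1+z_2)^{-1}.
\]
The hypotheses of these axioms ($\{f>0\}=X$, $\{f\geq0\}\subseteq\{g\neq0\}$, and so on) are never needed for membership in $A$. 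They matter only for continuity or smoothness in other instances. Hence (AsP) and (AwP) hold.

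The only point needing care is bookkeeping rather than mathematics. One must make sure "definable" means definable in the expansion by exactly the chosen primitives, with parameters, so that the interpretations of $\sigma,\rho,\xi,\delta,\nu$ are themselves definable; they are trivially definable, since they are basic symbols of the expanded language. One must also check that the totalized inverse $0^{-1}=0$ is definable, via the formula $(u=0\wedge y=0)\vee(u\cdot y=1)$. Finally, the scalar theories $T_{\operatorname{wP}}$ and $T_{\operatorname{sP}}$ hold by assumption on the chosen primitives. The binary step function is the motivating choice of $\xi$, but nothing in the argument uses it.
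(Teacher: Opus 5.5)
Your proposal is correct and follows the paper's proof: every expression required by an algebra axiom is a composition of functions in $A$ with the named primitives or ordered-field operations, so it is definable, and the side conditions play no role in membership. Your induction on terms and explicit graph formula simply spell out the closure-under-composition step that the paper states in one line.
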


\noindent Thus the framework permits arbitrary, including discontinuous, primitive choices subject to the scalar axioms; the price of this formal freedom is that one works in the full definable-function algebra. Appendix~\ref{using_binary_step_subsection_appendix_subsection} gives the precise setup and two limiting variants.

\subsection{Alternative choices of \texorpdfstring{$\sigma,\rho,\xi,\delta,\nu$}{sigma, rho, xi, delta, nu} in the main example (activation functions)}

\noindent We consider alternative choices of $\sigma,\rho,\xi,\delta,\nu$ in Main Example~\ref{mainExamplePt2}. The following proposition gives sufficient conditions:

\begin{proposition}\label{main_example_function_ablation_proposition}
If the real ordered field $\R$ is expanded to a model of $T_{\operatorname{sP}}$ such that:
\begin{enumerate}
\item\label{main_example_function_ablation_proposition_sP_xi_cond} $\xi:\R\to\R$ is continuous,
\item the restrictions of $\sigma$ and $\rho$ to $(0,+\infty)$ are continuous,
\item $\delta:\R\to\R$ is continuous, and
\item $\nu:D_{\nu}\to\R$ is continuous, where $D_{\nu}:=\{(a,b)\in\R^2:\text{if $a\leq 0$, then $b>0$}\}$,
\end{enumerate}
then the algebra $A=C^{0}(U,\R)$ satisfies (AsP). If instead $\R$ is expanded to a model of $T_{\operatorname{wP}}$ such that:
\begin{enumerate}\setcounter{enumi}{4}
\item\label{main_example_function_ablation_proposition_wP_xi_cond} $\xi:\R\to\R$ is continuous and $\xi(t)\prec t$ at $0^+$, i.e., $\lim_{t\downarrow0}\xi(t)/t=0$,
\item $\sigma:[0,+\infty)\to\R$ is continuous, and
\item $t\mapsto\xi(t)\rho(t):[0,+\infty)\to\R$ is continuous,
\end{enumerate}
then the algebra $A=C^0(U,\R)$ satisfies (AwP).

\end{proposition}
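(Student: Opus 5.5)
The plan is to verify each algebra axiom directly for $A=C^0(U,\R)$ by writing the relevant function as a composition of continuous maps, using that $U$ is open only insofar as it is a topological space. Axiom (A0) is immediate, and (A1) holds because $t\mapsto t^{-1}$ is continuous on $(0,+\infty)$. In the strict case, (A2) and (A$\delta$) follow from continuity of $\xi$ and $\delta$ on all of $\R$. For (A3s), an everywhere-positive $f$ takes values in $(0,+\infty)$, where $\sigma$ and $\rho$ are assumed continuous. For (A$\nu$), the hypothesis $\{f\le0\}\subseteq\{g>0\}$ says exactly that $(f,g)(U)\subseteq D_\nu$, so $\nu(f,g)$ is a continuous map composed with a continuous map into $D_\nu$.

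For (A4s), with $\{f\ge0\}\subseteq\{g\ne0\}$, I will argue locally. At a point $x$ with $f(x)>0$, we have $g(x)\ne0$, so $g\ne0$ on a neighbourhood and $\xi(f)g^{-1}$ is a quotient of continuous functions there. At a point $x$ with $f(x)<0$, the set $\{f<0\}$ is an open neighbourhood on which $\xi(f)=0$ by (T$\xi$2) and (T$\xi$1), so the product is identically $0$ there regardless of the convention $0^{-1}=0$. At a point $x$ with $f(x)=0$, again $g(x)\ne0$, so $g^{-1}$ is continuous near $x$. Continuity is local, so this settles (A4s).

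In the weak case, (A0) and (A2) are as before. For (A3w), $f\ge0$ takes values in $[0,+\infty)$, where $\sigma$ and $t\mapsto\xi(t)\rho(t)$ are assumed continuous. For (A4w), consider $\xi(-f)f^{-1}$. On $\{f<0\}$ it is a quotient of continuous functions with nonvanishing denominator, and on the open set $\{f>0\}$ it vanishes. At a point with $f(x)=0$ the value is $\xi(0)\cdot0^{-1}=0$, and for nearby $y$ we have $|\xi(-f(y))/f(y)|\to0$ by the hypothesis $\xi(t)/t\to0$ as $t\downarrow0$ (applied with $t=-f(y)$ when $f(y)<0$), while the expression is $0$ when $f(y)\ge0$. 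Hence it is continuous.

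The main obstacle is (A5w): showing $\xi(g^2)\xi(g+4h)(g+h)^{-1}$ is continuous under $\{h\ge0\}\subseteq\{g\ge0\}$. The plan is to first identify where the product is nonzero: this requires $g\ne0$ and $g+4h>0$. If $g<0$ then $h<0$, so $g+4h<0$ and the product vanishes. Thus the support lies in $\{g>0,\ g+4h>0\}$, and there I expect $g+h>0$. Indeed, if $h\ge0$ this is clear, and if $h<0$ then $g+h>g+4h>0$. So on the open set $W:=\{g\ne0\}\cap\{g+4h>0\}\cap\{g+h\neq0\}$ the function is a continuous quotient. Off the support, at points where $g+h\ne0$, continuity also holds: either the denominator is locally nonvanishing, or the numerator is locally zero on the open set $\{g+4h<0\}$.

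The delicate points are those $x$ with $g(x)+h(x)=0$. There I will show $g(x)=h(x)=0$: if $g(x)>0$ then $h(x)=-g(x)<0$, so $g(x)+4h(x)=-3g(x)<0$; if $g(x)<0$ then $h(x)>0$, contradicting the hypothesis $\{h\ge0\}\subseteq\{g\ge0\}$. At such a point I will bound the quotient near $x$ on the support, where $g>0$ and $g+4h>0$. Using $g+h\ge\tfrac14(g+4h)$ when $h<0$ and $g+h\ge g$ when $h\ge0$, I get $g+h\ge\tfrac14\min(g,\,g+4h)$, roughly. Then it suffices that $\xi(g^2)\xi(g+4h)/\min(g,g+4h)\to0$. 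If the minimum is $g$, the factor $\xi(g^2)/g$ may fail to tend to $0$ unless $\xi$ is controlled at $0^+$. Here the hypothesis $\xi(t)\prec t$ is what I would try first, by factoring $\xi(g^2)\le o(g^2)$ so that $\xi(g^2)/g\to0$ while $\xi(g+4h)$ stays bounded. If the minimum is $g+4h$, I will use $\xi(g+4h)/(g+4h)\to0$ and the boundedness of $\xi(g^2)$. Both factors tend to $0$ by continuity, and the value at $x$ is $0$, which gives continuity at $x$ and completes the verification.
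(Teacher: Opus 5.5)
Your proposal is correct and follows essentially the paper's route (Corollaries~\ref{sP_C0_general_criterion_cor} and~\ref{wP_C0_general_criterion_cor}): you check each axiom by continuity with a local case analysis at the relevant zero sets, and the hypothesis $\xi(t)\prec t$ at $0^+$ is what drives (A4w) and (A5w). The paper packages this analysis into continuous maps $\Psi,\Theta,\Phi$ on subsets of $\R^2$ or $\R$ and composes. For (A5w) it uses $g+4h\leq 4(g+h)$, valid wherever $g\geq 0$, to get continuity of $\xi(g+4h)(g+h)^{-1}$ alone, so your $\min(g,g+4h)$ split and your use of the factor $\xi(g^2)$ are unnecessary.

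One slip: at a point with $g+h=0$ you cannot conclude $g=h=0$ (take $g=1$, $h=-1$). The case $g(x)>0$ is nevertheless harmless, because then $g+4h<0$ near $x$, so the function vanishes on a neighbourhood.
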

\begin{proof}
See subsection~\ref{appendix_C0_examples_subsection}, in particular Corollaries~\ref{sP_C0_general_criterion_cor} and~\ref{wP_C0_general_criterion_cor}, for more general statements and their proofs.
\end{proof}

\noindent Examples of functions $\sigma,\rho,\xi,\delta,\nu$ satisfying the scalar axioms and the conditions of Proposition~\ref{main_example_function_ablation_proposition} are easy to construct. We consider a few choices of $\xi$:
\begin{itemize}
\item (Power gates) For any $\alpha\in(0,+\infty)$, the function $\xi=\operatorname{ReLU}^{\alpha}$ satisfies~\ref{main_example_function_ablation_proposition}(\ref{main_example_function_ablation_proposition_sP_xi_cond}); if moreover $\alpha>1$, then $\xi$ additionally satisfies~\ref{main_example_function_ablation_proposition}(\ref{main_example_function_ablation_proposition_wP_xi_cond}); if instead $\alpha\in (0,1]$, then condition~\ref{main_example_function_ablation_proposition}(\ref{main_example_function_ablation_proposition_wP_xi_cond}) fails.
\item Let $F:\R\to\R$ be continuous, differentiable at $0$, and strictly increasing on $[0,+\infty)$. Define $\xi_F$ as follows. It satisfies (T$\xi1$), (T$\xi2$), and the conditions in~\ref{main_example_function_ablation_proposition}(\ref{main_example_function_ablation_proposition_sP_xi_cond},\ref{main_example_function_ablation_proposition_wP_xi_cond}).
\begin{equation*}
    \xi_F \ : \ \R\to\R,\quad t \ \mapsto \ \xi_F(t) \ := \ \begin{cases}
    (F(t)-F(0))^2 & \text{if $t>0$} \\
        0 &\text{if $t\leq 0$}
    \end{cases}
\end{equation*}
Applying this construction to the common activation functions $\mathsf{GELU}$, sigmoid, and softplus (cf.~\cite{kunc2024three}) gives, for $t>0$,
\end{itemize}
\[
\xi_1(t) \ := \
\mathsf{GELU}^2(t)
\quad
\xi_2(t) \ := \
(\mathsf{sigmoid}(t)-1/2)^2
\quad
\xi_3(t) \ := \
(\mathsf{softplus}(t)-\log 2)^2
\]
\begin{figure}[!htbp]
    \centering
    \includegraphics[width=\textwidth]{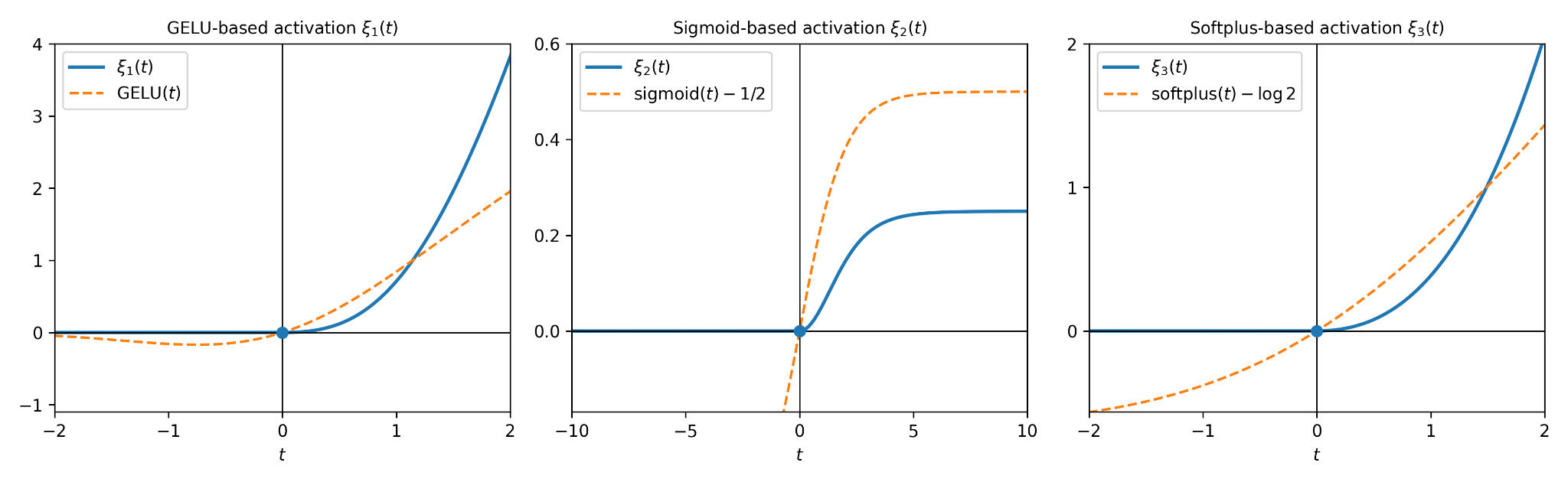}
    \caption{Alternative choices of $\xi$ based on common activation functions}
    \label{fig:activation_based_xi_combined}
\end{figure}

\subsection{\texorpdfstring{$\Q$}{Q}-valued continuous functions}\label{subsection_Q_valued_continuous_functions_example}

\noindent In~\cite{FischerPsatzDiff}, the implicit model $\bm{R}$ of $T_{\operatorname{oF}}$ is always a real closed field (cf.~\ref{subsection_Fischers_examples}). This is primarily because the implicit $\rho,\delta,\nu$ functions there are defined in terms of $\sqrt{\cdot}$, a function which an arbitrary ordered field might not support.
We give a $C^0$ example over an ordered field that need not be real closed, and state it for $R=\Q$. This is an exact rational-arithmetic model, or an algebraic idealization of exact computation. It does not model ordinary floating-point or fixed-point arithmetic, where rounding, overflow, exceptional values, and inexact inversion require an error-aware semantics.

\medskip\noindent Consider the ordered field $(\Q;\leq,+,\cdot)$ as a model $\bm{Q}=(\Q;\ldots)$ of $T_{\operatorname{oF}}$ in the natural way. Expand $\bm{Q}$ to an $\calL_{\operatorname{sP}}$-structure by interpreting the new function symbols as follows for every $x,y\in\Q$:
\[
\sigma(x) \ := \ \operatorname{ReLU}(x),\quad \rho(x) \ := \ \operatorname{ReLU}(x),\quad \xi(x) \ := \ \operatorname{ReLU}^2(x)
\]
\[
\delta(x) \ := \ 2\operatorname{ReLU}(x)+1,\quad \nu(x,y) \ := \ \max(x,y)
\]
Next, equip $\Q$ with the order topology, and let $X$ be an arbitrary topological space. Let $A=C^0(X,\Q)$ be the collection of all continuous functions $X\to\Q$. We have the following (see subsection~\ref{subsection_Q_valued_continuous_functions_example_proof}):

\begin{corollary}
$\bm{Q}$ satisfies $ T_{\operatorname{sP}}$ and $A$ satisfies axioms (AsP) and (AwP).

\end{corollary}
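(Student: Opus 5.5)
We must verify two things. First, that $\bm{Q}$ satisfies $T_{\operatorname{sP}}$. Second, that $A=C^0(X,\Q)$ satisfies every algebra axiom listed in (AsP) and (AwP). The scalar part is pointwise and elementary. The algebra part reduces to continuity of the relevant compositions on the sets where they are evaluated. Throughout, $\Q$ carries the order topology, which is the subspace topology from $\R$, so the field operations, $\operatorname{ReLU}$, $\max$, and $x\mapsto x^{-1}$ on $\Q\setminus\{0\}$ are continuous.

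\emph{Scalar axioms.} We check each one directly.
\begin{itemize}
\item (T$\xi1$) and (T$\xi2$) hold because $\operatorname{ReLU}^2(a)\ge0$, with $\operatorname{ReLU}^2(a)>0$ iff $a>0$.
\item (T$\sigma1$) holds since $\operatorname{ReLU}\ge0$.
\item (T$\sigma2$) holds because for $a,b\ge0$ we have $\operatorname{ReLU}(ab)=ab=\operatorname{ReLU}(a)\operatorname{ReLU}(b)$.
\item (T$\rho$) and (T$\sigma\rho$) hold because $\rho(a)=a\ge0$ and $\sigma(a)=a$ for $a\ge0$.
\item (T$\delta$) holds since $2\operatorname{ReLU}(a)+1>0$ and $2a\le2\operatorname{ReLU}(a)<\delta(a)$.
\item (T$\nu1$) holds since $\max(a,b)>0$ whenever one of $a,b$ is positive.
\item (T$\nu2$) holds since $a\le0<b$ gives $\max(a,b)=b$.
\end{itemize}
The point is that choosing $\sigma$ to be the identity on $[0,\infty)$ makes $\rho$ trivial, so no square roots are needed.

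\emph{Easy algebra axioms.} (A0) is clear. (A1) follows because inversion is continuous on $\Q\setminus\{0\}$. Axioms (A2), (A3s), (A3w), (A$\delta$) and (A$\nu$) hold because $\xi,\sigma,\rho,\delta,\nu$ are continuous on all of $\Q$ or $\Q^2$. So composing them with continuous functions stays in $A$, and the domain restrictions in those axioms are not even needed.

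\emph{Axioms involving inverses.} The real work is in (A4s), (A4w) and (A5w), where a factor $\xi(\cdot)$ must tame a possibly discontinuous inverse. For each of these we will show continuity at every point $x_0$ by a case split.
\begin{itemize}
\item \textbf{(A4s).} If $f(x_0)\ge0$, then $g(x_0)\ne0$, so $g^{-1}$ is continuous near $x_0$. If $f(x_0)<0$, then $f<0$ on a neighbourhood, so $\xi(f)g^{-1}=0$ there.
\item \textbf{(A4w).} Write $\phi(t)=\operatorname{ReLU}^2(-t)t^{-1}$, with $\phi(0)=0$. For $t\ne0$ this equals $-\operatorname{ReLU}(-t)$, which also gives the right value at $t=0$. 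So $\phi$ is continuous, and composition with $f$ finishes.
\item \textbf{(A5w).} Set $q=\xi(g^2)\xi(g+4h)(g+h)^{-1}$.
\begin{itemize}
\item If $g(x_0)+h(x_0)\ne0$, continuity is immediate.
\item If $g(x_0)=h(x_0)=0$, use the bound $|q|\le g^4\,\xi(g+4h)/|g+h|$ wherever $g\neq0$. We have $q=0$ where $g=0$, and also where $g+h=0$ (by the $0^{-1}=0$ convention).
\item If $g(x_0)+h(x_0)=0$ but $g(x_0)\ne0$, then $h(x_0)=-g(x_0)$. Both cases $g(x_0)>0$ and $g(x_0)<0$ need the hypothesis $\{h\ge0\}\subseteq\{g\ge0\}$. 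When $g(x_0)<0$ we get $h(x_0)>0$, contradicting the hypothesis. When $g(x_0)>0$ we get $h(x_0)<0$, so $g+4h=-3g(x_0)<0$ near $x_0$ and $q$ vanishes locally.
\end{itemize}
\end{itemize}

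\emph{Main obstacle.} The delicate subcase is (A5w) at points where $g=h=0$. There one must show that $\xi(g^2)\xi(g+4h)/(g+h)\to0$. We would first try a pointwise estimate in $\Q$. Where $g+4h>0$ and $g+h\ne0$, we want $\xi(g+4h)/|g+h|$ bounded by a multiple of $|g|+|h|$, and $\xi(g^2)=g^4\to0$.

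The difficulty is that $g+h$ can be small while $g+4h$ stays comparable to $|g|+|h|$. We plan to use the hypothesis again to rule this out. Where $h\ge0$ we have $g\ge0$, so $|g+h|=g+h\ge\tfrac14(g+4h)$ when $g+4h>0$. Where $h<0$ and $g+4h>0$, we have $g>-4h>0$, so $g+h>\tfrac34 g$ and $g+4h\le g$, giving $g+h>\tfrac34(g+4h)$.

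In both cases $\xi(g+4h)/|g+h|\le 4\,\operatorname{ReLU}(g+4h)$. Hence $|q|\le 4g^4\operatorname{ReLU}(g+4h)\to0$. This settles continuity and completes the verification.
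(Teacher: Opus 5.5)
Your proof is correct and follows essentially the same route as the paper. The paper checks the scalar axioms via Lemmas~\ref{general_ReLU_lemma} and~\ref{general_max_nu_lemma}. It obtains the algebra axioms by instantiating the general continuity criteria of Corollaries~\ref{sP_C0_general_criterion_cor} and~\ref{wP_C0_general_criterion_cor}, whose (A4w) and (A5w) parts rest on $\xi(t)\prec t$ at $0^+$ and on the same comparison $x+4y\leq 4(x+y)$ that you use. Your version verifies the specific compositions directly on $X$ (for example, via the closed form $-\operatorname{ReLU}(-t)$ for (A4w)). The paper instead proves reusable criteria for the scalar functions $\Psi,\Theta,\Phi$, which also yield Proposition~\ref{main_example_function_ablation_proposition}.
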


\section{Optimality certificates and an application to neural networks}\label{section_optimality_applications}

\noindent The Positivstellens\"{a}tze give the following optimization-theoretic reformulation:

\begin{corollary}[Lower-bound and optimality certificates]
\label{optimality_certificates}
Let $g,f_1,\ldots,f_k\in A$ and fix $L\in R$. Set
\[
F:=\bigcap_{1\leq i\leq k}\{f_i\geq 0\}.
\]
Assume additionally that every constant function $X\to R$ belongs to $A$.

(Strict case) Suppose $\bm{R}$ satisfies $ T_{\operatorname{sP}}$ and $A$ satisfies (AsP). Then the following are equivalent:
\begin{enumerate}
\item[(S1)] There exists $\eta>0$ such that:
\[
F \ \subseteq \ \{g-(L+\eta)>0\}
\]
\item[(S2)] There exists $\eta>0$ and strictly positive functions $t_0,\ldots,t_k\in A$, such that
\[
\textstyle g-(L+\eta) \ = \ \sigma(t_0)+\sum_{1\leq i\leq k}\sigma(t_i)f_i
\]
\end{enumerate}
For every $\eta$ witnessing {\rm(S1)}, the functions in {\rm(S2)} may be chosen uniformly by applying the Strict Positivstellensatz~\ref{strict_Positivstellensatz_abstract} to the functions $g-(L+\eta),f_1,\ldots,f_k\in A$.

(Weak case) Suppose $\bm{R}$ satisfies $ T_{\operatorname{wP}}$ and $A$ satisfies (AwP). Then the following are equivalent:
\begin{enumerate}
\item[(W1)] We have:
\[
F \ \subseteq \ \{g-L\geq 0\}
\]
\item[(W2)] There exist nonnegative functions $p,t_0,\ldots,t_k\in A$ such that
\[
\textstyle \sigma(p)\cdot(g-L) \ = \ \sigma(t_0)+\sum_{1\leq i\leq k}\sigma(t_i)f_i\quad\text{and}\quad \{p=0\} \ = \ \{g-L=0\}
\]
\end{enumerate}
Under {\rm(W1)}, the functions {\rm(W2)} may be chosen uniformly by applying the Weak Positivstellensatz~\ref{weak_Positivstellensatz_abstract} to the functions $g-L,f_1,\ldots,f_k\in A$.

If $F\neq \varnothing$ and $g^{\star}:=\inf_{x\in F}g(x)$ exists as an element of $R$, then {\rm(S1)} is equivalent to $L<g^{\star}$, while {\rm(W1)} is equivalent to $L\leq g^{\star}$. In particular, for every $x^{\star}\in F$, the point $x^{\star}$ is a global minimizer of $g$ on $F$ if and only if {\rm(W2)} holds with $L=g(x^{\star})$.
\end{corollary}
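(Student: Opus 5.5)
The corollary is a direct repackaging of Theorems~\ref{strict_Positivstellensatz_abstract} and~\ref{weak_Positivstellensatz_abstract}, together with elementary order arguments in $R$. The only role of the extra hypothesis that all constants lie in $A$ is to guarantee, via (A0c), that $g-(L+\eta)$ and $g-L$ belong to $A$. The plan is to prove each of the two equivalences by one implication that invokes the relevant Positivstellensatz and one that is a pointwise sign check. After that, the statements about $g^\star$ and minimizers follow.

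\emph{Strict case.} For (S1)$\Rightarrow$(S2), fix $\eta$ witnessing (S1) and apply Theorem~\ref{strict_Positivstellensatz_abstract} to $g-(L+\eta),f_1,\ldots,f_k\in A$. The hypothesis of the theorem is exactly (S1), so it yields strictly positive $t_i\in A$ with the stated identity; this also gives the uniformity claim. For (S2)$\Rightarrow$(S1), evaluate the identity at $x\in F$. By (T$\sigma$1) each $\sigma(t_i(x))\ge0$, and $f_i(x)\ge0$, so $g(x)-(L+\eta)\ge\sigma(t_0(x))\ge0$. To get strict positivity I would use $t_0(x)>0$ to conclude $\sigma(t_0(x))>0$. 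Suppose instead $\sigma(a)=0$ for some $a>0$. Then (T$\sigma$2) gives $\sigma(a\cdot b)=0$ for every $b\ge0$, so $\sigma$ vanishes on $[0,\infty)$. But (T$\sigma\rho$) gives $\sigma(\rho(1))=1$ with $\rho(1)\ge0$, a contradiction. Hence $g(x)-(L+\eta)>0$, which is (S1) with the same $\eta$. (Alternatively one could use $g-(L+\eta)\ge0$ on $F$ and witness (S1) with $\eta/2$; either route works.)

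\emph{Weak case.} For (W1)$\Rightarrow$(W2), apply Theorem~\ref{weak_Positivstellensatz_abstract} to $g-L,f_1,\ldots,f_k$; its hypothesis is (W1). For (W2)$\Rightarrow$(W1), take $x\in F$. The right-hand side of the identity is $\ge0$, so $\sigma(p(x))(g(x)-L)\ge0$. If $g(x)=L$ we are done. Otherwise $p(x)\ne0$, so $p(x)>0$ since $p$ is nonnegative. By the argument above $\sigma(p(x))>0$, and dividing gives $g(x)-L\ge0$.

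\emph{Infimum statements.} Assume $F\ne\varnothing$ and $g^\star=\inf_F g$ exists in $R$. Then (W1) says $L$ is a lower bound of $g$ on $F$, which is equivalent to $L\le g^\star$. If (S1) holds, then $L+\eta$ is a lower bound, so $g^\star\ge L+\eta>L$. Conversely, if $L<g^\star$, put $\eta:=(g^\star-L)/2>0$; then $g(x)\ge g^\star>L+\eta$ on $F$. For the minimizer claim, $x^\star\in F$ is a global minimizer iff $g(x^\star)\le g$ on $F$, i.e. iff (W1) holds with $L=g(x^\star)$, i.e. iff (W2) holds with that $L$ by the weak equivalence. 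This last step does not need $g^\star$ to exist a priori.

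The main obstacle is the small lemma that $\sigma(a)>0$ for $a>0$, because the axioms only assert $\sigma\ge0$. The derivation from (T$\sigma$2) and (T$\sigma\rho$) sketched above handles this. The remaining steps are routine.
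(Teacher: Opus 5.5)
Your proof is correct and follows the argument the paper leaves implicit: the forward implications apply Theorems~\ref{strict_Positivstellensatz_abstract} and~\ref{weak_Positivstellensatz_abstract} to $g-(L+\eta)$ and $g-L$ (which lie in $A$ because the constants do), the converses are pointwise sign checks on $F$, and the infimum and minimizer claims are elementary order facts in $R$. The fact $\sigma(a)>0$ for $a>0$ that you call the main obstacle is already Lemma~\ref{sigma_rho_lemma}(\ref{sigma_rho_lemma_pos_sigma_pos}), proved there via $\sigma(1)=1=\sigma(a)\sigma(a^{-1})$; your derivation, which shows $\sigma$ would otherwise vanish on $[0,\infty)$, is an equally valid alternative.
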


\noindent To illustrate the corollary, consider the problem of empirical risk minimization subject to pairwise output-consistency constraints motivated by individual fairness~\cite{dwork2012fairness,jung2019algorithmic,kliachkin2025benchmarking}. Let $N_{\theta}:\R^p\to\R^m$ be a neural network with fixed architecture and trainable parameters $\theta\in\R^n$. Let $\{(z_j,y_j)\}_{j=1}^4$ be four labeled samples, where $(z_1,z_2)$ and $(z_3,z_4)$ are designated matched pairs, for example pairs that differ only in a protected attribute. Given a continuous training loss $\ell:\R^m\times\R^m\to\R_{\geq0}$, a continuous output discrepancy $d:\R^m\times\R^m\to\R_{\geq0}$, and a user-specified tolerance $\varepsilon>0$, define the functions:
\[
g(\theta) \ := \ \frac{1}{4}\sum_{j=1}^4\ell(N_{\theta}(z_j),y_j),\qquad f_i(\theta) \ := \ \varepsilon-d(N_{\theta}(z_{2i-1}),N_{\theta}(z_{2i})),\quad i=1,2
\]
and the feasible region:
\[
F \ := \ \{f_1\geq 0\}\cap \{f_2\geq 0\}
\]
Thus the training problem is $\inf_{\theta\in F}g(\theta)$.

\medskip\noindent Suppose the network is built from continuous operations and activations, such as $\mathsf{ReLU}$, $\mathsf{GeLU}$, or $\mathsf{Mish}$. Then $g,f_1,f_2$ belong to the ambient algebra $A=C^0(\R^n,\R)$ of the Main Example. Consequently, for any feasible parameter vector $\theta^{\star}\in F$, Corollary~\ref{optimality_certificates} implies that $\theta^{\star}$ is a global minimizer if and only if there exist nonnegative functions $p,t_0,t_1,t_2\in A$ such that:
\[
p^2\cdot(g-g(\theta^{\star})) \ = \ t_0^2+t_1^2f_1+t_2^2f_2\quad \text{and}\quad \{p=0\} \ = \ \{g=g(\theta^{\star})\}
\]
The witnesses may be chosen by substituting $g-g(\theta^{\star}),f_1,f_2$ into the universal terms $\tilde{t}_i$ supplied by the Weak Positivstellensatz~\ref{weak_Positivstellensatz_abstract}. Here $A$ is an ambient function algebra, not the class of functions represented by the fixed network. The auxiliary primitives $\sigma,\rho,\xi,\delta,\nu$ used to construct the certificates need not be the activation functions used by $N_\theta$. Likewise, if $L<\inf_{\theta\in F}g(\theta)$, then for some $\eta>0$ there exist strictly positive $t_0,t_1,t_2\in A$ such that:
\[
g-(L+\eta) \ = \ t_0^2+t_1^2f_1+t_2^2f_2
\]
Alternative admissible certificate primitives may be chosen using Proposition~\ref{main_example_function_ablation_proposition}, independently of the activation functions used in $N_{\theta}$; for definable discontinuous networks, one may instead work in the corresponding definable-function algebra of subsection~\ref{using_binary_step_subsection}.

\subsection{A rational-valued optimality certificate}\label{subsection_rational_optimality_certificate}
\noindent We instantiate Corollary~\ref{optimality_certificates} over the non-real-closed ordered field $\Q$. Let
\[
Q(x):=\left\lfloor x+\frac12\right\rfloor\in\Z\subseteq\Q
\]
and consider
\[
\inf_{x\in\Q} Q(x)\qquad\text{subject to}\qquad f(x):=x-1\geq0.
\]
The feasible point $x^\star=1$ has value $Q(x^\star)=1$. Choose the weak certificate primitives from subsection~\ref{subsection_Q_valued_continuous_functions_example},
\[
\sigma(a)=\rho(a)=\operatorname{ReLU}_{\Q}(a),\qquad
\xi(a)=\operatorname{ReLU}_{\Q}(a)^2,
\]
and let $A$ be the algebra of all functions $\Q\to\Q$ definable in $(\Q;<,+,\cdot,Q)$. The functions $Q$ and $f$ belong to $A$, and $Q(x)\geq1$ whenever $f(x)\geq0$.

\medskip\noindent The $k=1$ weak construction yields nonnegative $p,t_0,t_1\in A$ with
\[
p(x)(Q(x)-1)=t_0(x)+t_1(x)(x-1),\qquad \{p=0\}=\{Q-1=0\}.
\]
After simplifying the resulting expressions, one obtains
\[
p(x)=
\begin{cases}
(1-Q(x))^{28}(1-x)^{42}\bigl(1-Q(x)+(1-x)^3\bigr)^{16},&x<\frac12,\\
0,&\frac12\leq x<\frac32,\\
(Q(x)-1)^{72},&x\geq\frac32,
\end{cases}
\]
\[
t_0(x)=
\begin{cases}
(1-Q(x))^{28}(1-x)^{45}\bigl(1-Q(x)+(1-x)^3\bigr)^{16},&x<\frac12,\\
0,&\frac12\leq x<\frac32,\\
(Q(x)-1)^{73},&x\geq\frac32,
\end{cases}
\]
and
\[
t_1(x)=
\begin{cases}
(1-Q(x))^{28}(1-x)^{41}\bigl(1-Q(x)+(1-x)^3\bigr)^{17},&x<\frac12,\\
0,&x\geq\frac12.
\end{cases}
\]
The identity is checked separately on the three displayed regions, and $\{p=0\}=[\frac12,\frac32)\cap\Q=\{Q-1=0\}$. Hence $Q(x)\geq1$ for every feasible $x$, while $x^\star=1$ attains this value. Thus $x^\star$ is a global minimizer. 

\section{Complexity analysis}\label{section_complexity_observations}

\noindent We measure the expanded length of the universal terms. Following~\cite[Appendix B]{ADAMTT}, we construe $\calL$-terms as \emph{admissible words} over an alphabet consisting of the function symbols of $\calL$, together with an infinite collection of variables $z_0,z_1,z_2,\ldots$. In particular, an $\calL$-term $t$ is a string of symbols from this alphabet and thus has a well-defined \emph{length} $\lh(t)\in\N^{\geq1}$. The following gives the lengths of the terms provided by Fischer's construction (see~\ref{complexity_analysis_prop_proof_subsection}):

\begin{proposition}\label{main_complexity_analysis_proposition}
Suppose $k\geq 1$. The length of the term appearing on the right-hand side of~\ref{strict_Positivstellensatz_abstract} is:
\[
\textstyle \lh(\sigma(\tilde{t}_0(z_0,\ldots,z_k))+\sum_{1\leq i\leq k}\sigma(\tilde{t}_i(z_0,\ldots,z_k))\cdot z_i) \ = \ 72 k^{3} + 232 k^{2} + 224 k + 51 \ = \ \Theta(k^3)
\]
The lengths of the terms appearing on the left- and right-hand side of~\ref{weak_Positivstellensatz_abstract} are:
\begin{align*}
\lh(\sigma(\tilde{t}_{-1}(z_0,\ldots,z_k))\cdot z_0) \ &= \ 532k+467 \ = \ \Theta(k)\\
\textstyle \lh(\sigma(\tilde{t}_0(z_0,\ldots,z_k))+\sum_{1\leq i\leq k}\sigma(\tilde{t}_i(z_0,\ldots,z_k))\cdot z_i) \ &= \ 707k^2+1370k+649 \ = \ \Theta(k^2)
\end{align*}
\end{proposition}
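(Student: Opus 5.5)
The proof is a bookkeeping computation over the explicit universal terms of Appendix~\ref{section_appendix}. Since we cannot see those terms here, the plan is structural.

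First, fix the length conventions from~\cite[Appendix B]{ADAMTT}. Terms are admissible words in Polish (prefix) notation, with no parentheses. So $\lh$ is additive: a variable or constant has length $1$, and
\[
\lh(F(s_1,\ldots,s_m))=1+\lh(s_1)+\cdots+\lh(s_m)
\]
for an $m$-ary function symbol $F$. For instance, $\lh(s\cdot z_i)=\lh(s)+2$ and $\lh(\sigma(s))=\lh(s)+1$. I will also fix once and for all how an iterated sum $\sum_{i}$ or product $\prod_i$ is parenthesized, say left-nested. Then a $k$-fold sum of terms $u_1,\ldots,u_k$ has length $\sum_i\lh(u_i)+(k-1)$.

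Second, follow the proof architecture in layers and compute a closed-form length for each intermediate term as a function of $k$.

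\begin{enumerate}
\item The compressed constraint $h$ (and the auxiliary products or sums of the $f_i$ used to build it and its representation) has length linear in $k$; I will write it as $\alpha k+\beta$ with explicit constants.
\item The sign-distinguishing $\xi$-terms in $(g,h)$ have length affine in $\lh(h)$, since each one contains a bounded number of copies of $h$ and $z_0$.
\item Each $\tilde t_i$ with $i\geq1$ is obtained by combining these $\xi$-terms with the representation of $h$ in terms of the $f_j$. Writing $c_i$ for the number of copies of $h$ (or of $O(k)$-length subterms) that $\tilde t_i$ contains, we get $\lh(\tilde t_i)=c_i\,\lh(h)+O(1)$ when $c_i$ is a constant. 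We get $\lh(\tilde t_i)=O(k^2)$ if $\tilde t_i$ contains a product over $j\neq i$ of linear-length pieces.
\end{enumerate}

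The right-hand side is then $\sigma(\tilde t_0)+\sum_i\sigma(\tilde t_i)\cdot z_i$. Its length is
\[
\lh(\tilde t_0)+1+\sum_{i=1}^k\bigl(\lh(\tilde t_i)+3\bigr)+k\cdot(\text{cost of }+)
\]
with the exact cost depending on the nesting convention. Summing over $i$ gains one power of $k$. This gives $\Theta(k^3)$ in the strict case, where each $\tilde t_i$ is quadratic because the $\nu/\delta$-based compression nests $k$-size pieces inside products. It gives $\Theta(k^2)$ in the weak case, where each $\tilde t_i$ is linear. The left-hand side $\sigma(\tilde t_{-1})\cdot z_0$ involves only $p$, which depends on $g$ and $h$ alone, so it is $\Theta(k)$.

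Third, I will pin down the exact coefficients. Because all lengths are polynomials in $k$ of known degree ($\le 3$), it suffices to compute the lengths exactly for a few small values of $k$, such as $k=1,2,3,4$, and then check that the closed forms match. The $k=1,2$ lengths listed in Appendix~\ref{subsection_small_arity_certificates} serve as a cross-check. For the polynomial interpolation to be legitimate, I must first justify rigorously that each length is a polynomial of the claimed degree for all $k\ge1$. This follows from the layered recursion above, since each layer's length is a sum of finitely many terms, each either constant or linear in $k$, or a sum over $i$ of such terms. The hypothesis $k\ge1$ is needed because the $k=0$ terms are degenerate, with no sum and a trivial $h$.

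The main obstacle is the exact constant accounting: the leading coefficients $72$ and $707$ and the lower-order terms. These depend on every occurrence of subtraction, inversion $(\cdot)^{-1}$, and the numerals $2,4$, which are written as $1+1$ and so on. An off-by-one in how numerals or iterated sums are encoded would shift the constants. I will therefore first expand the $k=1$ identity symbol by symbol and compare it with the appendix before trusting the general recursion.
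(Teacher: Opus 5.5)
Your length conventions are the paper's: the prefix recursion $\lh(Ft_1\cdots t_m)=1+\sum_j\lh(t_j)$, left-nested sums costing $k-1$ extra symbols, and the numeral $k$ of length $2k-1$. Your assembly of the right-hand sides also reduces to the paper's formula $4k+1+\lh(\tilde t_0)+k\,\lh(\tilde t_i)$.

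The gap is that the statement is a list of exact polynomials, and its proof \emph{is} the term-by-term tabulation of the terms in Appendix~\ref{appendix_universal_constructions} (Tables~\ref{strict_length_table} and~\ref{weak_length_table}). You never carry this out. The structural sketch you put in its place is wrong for the strict construction.

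There $h$ does not have length $\alpha k+\beta$. Each $\tilde s_i=\rho(\xi(-z_i)+\tilde\varepsilon_i)$ contains $\tilde\varepsilon_i=\nu(z_0,-\tilde\psi\cdot k^{-1})\cdot(\delta(z_i))^{-1}$. This embeds the entire sum $\tilde\psi=\sum_j\xi(-z_j)\cdot z_j$ (length $6k-1$) together with the numeral $k$. Hence $\lh(\tilde s_i)=8k+12$ and $\lh(\tilde h)=8k^2+16k-1$.

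The quadratic size of $\tilde t_i=\rho(\tilde w)\cdot\tilde s_i$ comes from the nine copies of $\tilde h$ inside $\tilde w$, giving $\lh(\tilde w)=9\lh(\tilde h)+55$. It does not come from a product over $j\neq i$; no such product occurs. Taken literally, your template ($\lh(h)$ linear, $\lh(\tilde t_i)=c\,\lh(h)+O(1)$) predicts a quadratic strict right-hand side, not a cubic one. So the degree bounds you invoke to justify interpolation are not actually derived. In the weak construction your picture is correct: $\lh(\tilde h)=7k-1$ and every $\tilde t_i$ has linear length.

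Interpolation does not close the gap. Exact lengths at $k=3,4$ require running the same recursion on the same terms. Doing that symbolically yields the closed forms directly, so interpolation is redundant.

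What is needed are the intermediate values:
\begin{itemize}
\item strict case: $\lh(\tilde w)=72k^2+144k+46$, $\lh(\tilde t_0)=80k^2+160k+50$, and $\lh(\tilde t_i)=72k^2+152k+60$;
\item weak case: $\lh(\tilde t_{-1})=532k+464$, $\lh(\tilde t_0)=749k+648$, and $\lh(\tilde t_i)=707k+617$.
\end{itemize}
Substituting these into your assembly formula, together with $\lh(\sigma(\tilde t_{-1})\cdot z_0)=\lh(\tilde t_{-1})+3$, gives the three stated polynomials.
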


\noindent Appendix~\ref{subsection_small_arity_certificates} includes the complete one-constraint identities with every shared subterm inlined. The formulae were produced by maintained symbolic code implementing the recursive constructions rather than transcribed by hand, and are intended to show the resulting expression trees.

\subsection{Shared straight-line computation graphs}
\noindent Expanded term length counts every repeated occurrence of an identical subterm. For the complexity analysis, we use the shared straight-line directed-acyclic-graph model defined precisely in Appendix~\ref{appendix_shared_graph_model_subsection}: all certificate outputs are computed jointly, syntactically identical subexpressions share one node, fan-out is free, each scalar primitive (including inversion) has unit cost, and finite sums and numerals are formed by balanced binary addition trees. This is the standard term-graph model~\cite{plump1999term}.

\begin{proposition}[Shared-graph complexity]\label{shared_graph_complexity_proposition}
For either the strict or weak construction with $k$ constraints, the universal certificate has a shared straight-line computation graph of size $O(k)$ and depth $O(\log(k+1))$, treating $z_0,\ldots,z_k$ as inputs and each scalar primitive as a unit-cost operation. If the inputs are themselves represented jointly by graphs of total size $S$ and maximum depth $D$, the composed certificate graph has size $S+O(k)$ and depth $D+O(\log(k+1))$.
\end{proposition}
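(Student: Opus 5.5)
The plan is to read off the shared-graph bound directly from the recursive structure of the universal terms $\tilde t_i$ in Appendix~\ref{section_appendix}, and to avoid inlining anything. As the proof strategy of Theorem~\ref{strict_Positivstellensatz_abstract} indicates, both constructions have two stages.

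\emph{Stage 1.} The constraints $f_1,\ldots,f_k$ (the inputs $z_1,\ldots,z_k$) are compressed into a single function $h$. Alongside $h$ we obtain coefficients expressing $h$ through the $f_i$.

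\emph{Stage 2.} A fixed, $k$-independent gadget of $\xi$-, $\sigma$-, $\rho$-, $\delta$-, $\nu$-, inverse- and field-operations is applied to the pair $(g,h)$. This produces the sign-case functions and, from them, $t_0$ and $p$ together with a common multiplier.

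Each $t_i$ for $i\ge 1$ is then the Stage~2 multiplier times the $i$-th Stage~1 coefficient, possibly adjusted by $\rho$ or $\sigma$. This costs $O(1)$ extra nodes per $i$.

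First I will write the Stage~1 compression as a straight-line program in which each $f_i$ contributes a bounded number of new nodes. Fischer-type compressions combine constraints by iterating a binary combinator (via $\nu$ or $\xi$ in the strict case, and via products/sums of $\xi(-f_i)$-type terms in the weak case). The apparent $\Theta(k^2)$ and $\Theta(k^3)$ expanded lengths in Proposition~\ref{main_complexity_analysis_proposition} come only from re-inlining shared subterms such as $h$ and the global multiplier in each of the $k$ outputs.

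Where a combinator is a symmetric aggregate, I will realize it as a balanced binary tree: sums $\sum_i \xi(-f_i)$, products $\prod_i$, and iterated $\nu$ where the construction permits reassociation. That gives size $O(k)$ and depth $O(\log(k+1))$. If the appendix defines $h$ by a left-nested recursion $h_j=\Phi(h_{j-1},f_j)$ with a non-associative $\Phi$, the tree cannot simply be rebalanced. In that case I will use the freedom in the statement, which asks only that \emph{a} shared graph exists, and replace the construction by an equivalent balanced one. Here I would check that the sign properties used in the verification in Appendix~\ref{subsection_proof_strict} depend only on the containments $\bigcap\{f_i\ge0\}\subseteq\{h\ge0\}\subseteq\{g>0\}$, and that these are preserved when the recursion is rebalanced as $\Phi$ applied to sub-aggregates. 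This rebalancing step is the main obstacle.

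Second, I count the Stage~2 gadget as $O(1)$ nodes of $O(1)$ depth. Third, I produce the outputs. The coefficients $c_i$ expressing $h$ via the $f_i$ typically share prefix/suffix products, as in $\prod_{j\ne i}$-type expressions. These can all be obtained with $O(k)$ nodes and $O(\log k)$ depth by a parallel prefix/suffix product computation, with no divisions needed. Multiplying each $c_i$ by the shared Stage~2 multiplier and applying $\sigma$ or $\rho$ gives all $t_i$ in $O(k)$ further nodes at constant extra depth. Numerals appear only as fixed constants, so they contribute $O(1)$.

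Finally, composition is immediate. We substitute the input graphs for the leaves $z_0,\ldots,z_k$ and share them via free fan-out. Sizes add to $S+O(k)$, and depths add along any path to at most $D+O(\log(k+1))$.
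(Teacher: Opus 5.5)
Your overall plan matches the paper's proof: share the common nodes ($\tilde\psi,\tilde h,\tilde\varphi,\tilde w,\tilde u$ in the strict case; $\tilde h,\tilde\varphi,\tilde q,\tilde\omega,\tilde u,\tilde\psi_j,\tilde\Psi_j$ in the weak case), build every $k$-fold sum as a balanced tree, pay $O(1)$ per index, and compose by identifying the input graphs' output nodes with $z_0,\ldots,z_k$. However, the proposition is about the \emph{explicit} terms of subsection~\ref{appendix_universal_constructions}, and its proof consists of inspecting them. You replace that inspection with conjectures, and your contingency plan is not admissible. Replacing the construction by ``an equivalent balanced one'' would prove a bound for different terms. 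The statement, and the paper's caveat that the bound says nothing about finding smaller equivalent terms, concern the given $\tilde t_i$; the cost model only permits rebalancing finite sums and numerals.

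No such step is needed, because the only $k$-fold aggregates are sums. These are $\tilde\psi=\sum_i\xi(-z_i)z_i$ and $\tilde h=\sum_i\sigma(\tilde s_i)z_i$ in the strict case, $\tilde h=\sum_i\sigma(\xi(-z_i))z_i$ in the weak case, and the final certificate sums. There is no non-associative recursion and no $\prod_{j\neq i}$ coefficient, so the prefix/suffix machinery is superfluous. The per-index terms are $\tilde\varepsilon_i$, $\tilde s_i$, and $\tilde t_i=\rho(\tilde w)\cdot\tilde s_i$ (strict), and $\xi(-z_i)$ and $\tilde t_i=(\tilde\psi_1\tilde\Psi_2\tilde\psi_3)\cdot\xi(-z_i)$ (weak). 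Each costs $O(1)$ nodes over shared ones.

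One concrete claim in your count is false: not all numerals are fixed constants. In the strict construction, $\tilde\varepsilon_i=\nu(z_0,-\tilde\psi\cdot(1+\cdots+1)^{-1})\cdot\delta(z_i)^{-1}$ contains the numeral $k$ with $k$ summands. Under the term conventions this is a left-nested chain of depth $k-1$, and counted as written it would force depth $\Omega(k)$ on every output. The bound holds only because the cost model builds this numeral once as a balanced addition tree and shares it across all $i$, giving size $O(k)$ and depth $O(\log(k+1))$.

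Two further corrections to your picture. First, $\delta$ and $\nu$ enter in this per-index compression step, not in the $k$-independent gadget. Second, the strict compression consists of two sequential balanced sums: $\tilde\psi$, then $\tilde h$, which depends on $\tilde\psi$ through $\tilde s_i$. This still gives depth $O(\log(k+1))$, but it should be counted.

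Finally, you must also balance the certificate sum $\sigma(\tilde t_0)+\sum_i\sigma(\tilde t_i)z_i$, which your count omits. With these corrections your argument coincides with the paper's.
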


\begin{proof}
See Appendix~\ref{appendix_shared_graph_model_subsection}, where the cost model and the sharing convention are fixed before the strict and weak constructions are counted.
\end{proof}

\noindent These bounds concern evaluation of the explicit universal terms under the specified sharing model. Expanded term length counts every repeated occurrence of a shared subterm, whereas Proposition~\ref{shared_graph_complexity_proposition} counts each shared subexpression once. Thus the graph bound concerns evaluation of the given terms; it says nothing about finding smaller equivalent terms or deciding equality of arbitrary terms.

\medskip\noindent These bounds concern evaluation of the fixed universal terms and do not give a polynomial-time method for the associated optimization problems. Pardalos and Schnitger proved that, even for constrained indefinite quadratic programs, deciding whether a given feasible point is a local minimizer and deciding whether a local minimum is strict are NP-hard~\cite{pardalosSchnitger1988local}. In a Euclidean instance with fixed radius $r>0$, local optimality can be expressed by adjoining $r^2-\lVert x-x^\star\rVert^2\geq0$ and certifying nonnegativity of $g-g(x^\star)$ on the resulting neighborhood; strictness additionally requires $x^\star$ to be the only feasible zero. The radius and the required semantic conditions are not supplied by Proposition~\ref{shared_graph_complexity_proposition}.

\section{Conclusions}\label{main_paper_conclusions_section}

\noindent We have extracted the formal mechanism of Fischer's Positivstellensatz construction into scalar and function-algebra axioms. The strict and weak constructions give universal terms under these axioms. The examples include ordered fields that are not real closed, exact rational arithmetic, and variants of standard activation functions.

\noindent The complexity analysis separates expanded term length from evaluation using shared computation graphs. For the constructions considered here, the expanded strict certificate has cubic length in the number of constraints, while the shared computation graph has linear size and logarithmic depth. These bounds concern the explicit certificate constructions and do not establish optimality, efficient certificate discovery, or polynomial-time algorithms for the associated optimization problems.

\noindent Several questions remain open. The present axioms do not cover $\R[X]$, which is a central setting for classical Positivstellens\"{a}tze. It remains unknown whether there exists an algebra $A$ satisfying ((AwP)$\smallsetminus$(A5w))$+$(A5w${}^{-}$) but not (A5w); see Remark~\ref{weaker_options_for_A4w_A5w}. It is open whether analogous constructions exist in the noncommutative setting. 

\section{Acknowledgements}\label{main_paper_acknowledgements_section}

\noindent 
We are grateful to the three anonymous reviewers, whose comments helped improve the paper. This work has received
funding from the European Union’s Horizon Europe research and innovation programme under grant agreement No.
101070568.
LLM-based tools, primarily ChatGPT, were used in preparing this manuscript for brainstorming, checking dependencies, revising exposition, and identifying possible gaps or ambiguities. The human authors assume responsibility for the content of the manuscript, including every mathematical statement, citation, and wording choice.

{\small
}

\newpage
\appendix
\section{Universal terms and proof architecture}\label{section_appendix}

\noindent The appendices are organized by mathematical role. This appendix gives the term conventions, universal constructions, common sign geometry, and basic scalar consequences. Appendices~\ref{subsection_proof_strict} and~\ref{subsection_proof_weak} prove the strict and weak Positivstellens\"{a}tze. Appendix~\ref{appendix_function_algebra_examples_section} verifies the function-algebra examples, including Fischer's definable $C^r$ setting. Appendix~\ref{appendix_complexity_section} gives the compact term-length derivation, the shared-graph model used in the complexity statements, and the small-arity discussion.

\subsection{Term conventions}

\begin{convention}
We use the following notational conventions for terms:
\begin{itemize}
\item For the binary functions $+$ and $\cdot$, we write $s+t$ and $s\cdot t$ (infix notation) instead of $+(s,t)$ and $\cdot(s,t)$ (prefix notation).
\item We may write $s-t$ instead of $s+(-t)$.
\item Given $k\geq 0$ and terms $t_1,\ldots,t_k$, we define the term $\sum_{1\leq i\leq k}t_i$ by recursion on $k$ as follows: if $k=0$, then $\sum_{1\leq i\leq 0}t_i:=0$ (the empty sum), if $k=1$, then $\sum_{1\leq i\leq 1}t_i:=t_1$, and if $k\geq 2$, then $\sum_{1\leq i\leq k}t_i:=(\sum_{1\leq i\leq k-1}t_i)+t_k$.
\item We may regard natural numbers $k\geq 2$ as terms via $k:=\sum_{1\leq i\leq k}1$, e.g., $2=1+1$, $3=(1+1)+1$, etc.
\item For powers, set $s^0:=1$, $s^1:=s$, and, for every $m\geq1$, define $s^{m+1}:=s^m\cdot s$. Thus positive powers are repeated products with no leading unit; in particular, $z_0^2$ abbreviates $z_0\cdot z_0$.
\item We may denote $s\cdot(t)^{-1}$ as either $s/t$ or $\frac{s}{t}$.
\item Since we will be working always in (extensions of) the ambient theory $T_{\operatorname{oF}}$ (where addition and multiplication are associative), we may write $s+t+r$ and $s\cdot t\cdot r$ instead of, e.g., $(s+t)+r$ and $(s\cdot t)\cdot r$ as no ambiguity will arise when it comes to evaluating such terms. If disambiguation is necessary, we adhere to a left-associative convention.
\end{itemize}
\end{convention}

\subsection{Universal constructions}\label{appendix_universal_constructions}

\subsubsection{Strict construction}
\noindent We state directly how the $\calL_{\operatorname{sP}}$-terms $\tilde{t}_i$ are constructed (extracted from the construction in~\cite{FischerPsatzDiff}); set $z:=(z_0,\ldots,z_k)$:
{\allowdisplaybreaks
\begin{align*}
\tilde{\varepsilon}(z_0,z_1,z_2) \ &:= \ \nu(z_0,z_1)\cdot(\delta(z_2))^{-1} \\
\tilde{\psi}(z_1,\ldots,z_k) \ &:= \ \textstyle\sum_{1\leq i\leq k}\xi(-z_i)\cdot z_i\\
\tilde{\varepsilon}_i(z) \ &:= \ \tilde{\varepsilon}(z_0,-\tilde{\psi}(z_1,\ldots,z_k)\cdot (\underbrace{1+\cdots+1}_{\text{$k$ times}})^{-1},z_i) \quad\text{(for $i=1,\ldots,k$)}\\
\tilde{s}_i(z) \ &:= \ \rho(\xi(-z_i)+\tilde{\varepsilon}_i(z)) \quad\text{(for $i=1,\ldots,k$)}\\
\tilde{h}(z) \ &:= \ \textstyle \sum_{1\leq i\leq k}\sigma(\tilde{s}_i(z))\cdot z_i \\
\tilde{\varphi}_1(z) \ &:= \ \xi(z_0+(1+1+1+1)\cdot\tilde{h}(z)) \\
\tilde{\varphi}_2(z) \ &:= \ \xi(-(z_0+\tilde{h}(z))) \\
\tilde{\varphi}_3(z) \ &:= \ \xi(-z_0\cdot\tilde{h}(z)) \\
\tilde{\varphi}(z) \ &:= \ \tilde{\varphi}_1(z)+\tilde{\varphi}_2(z)+\tilde{\varphi}_3(z) \\
\tilde{w}(z) \ &:= \ (z_0\cdot\tilde{\varphi}_1(z)\cdot(z_0+\tilde{h}(z))^{-1}+(z_0+\tilde{h}(z))\cdot\tilde{\varphi}_2(z)\cdot(\tilde{h}(z))^{-1}+\tilde{\varphi}_3(z))\cdot(\tilde{\varphi}(z))^{-1}\\
\tilde{u}(z) \ &:= \ z_0+(-\tilde{w}(z)\cdot\tilde{h}(z))\\
\tilde{t}_0(z) \ &:= \ \rho(\tilde{u}(z))\\
\tilde{t}_i(z) \ &:= \ \rho(\tilde{w}(z))\cdot\tilde{s}_i(z) \quad\text{(for $i=1,\ldots,k$)}
\end{align*}
}

\subsubsection{Weak construction}

\noindent We state directly how the $\calL_{\operatorname{wP}}$-terms $\tilde{t}_i$ are constructed (also extracted from the construction in~\cite{FischerPsatzDiff}); set $z:=(z_0,\ldots,z_k)$:

{\allowdisplaybreaks
\begin{align*}
\tilde{h}(z_1,\ldots,z_k) \ &:= \ \textstyle \sum_{1\leq i\leq k}\sigma(\xi(-z_i))\cdot z_i\\
\tilde{\varphi}_1(z) \ &:= \ \xi(z_0+(1+1+1+1)\cdot\tilde{h}(z)) \\
\tilde{\varphi}_2(z) \ &:= \ \xi(-(z_0+\tilde{h}(z))) \\
\tilde{\varphi}_3(z) \ &:= \ \xi(-z_0\cdot\tilde{h}(z)) \\
\tilde{\varphi}(z) \ &:= \ \tilde{\varphi}_1(z)+\tilde{\varphi}_2(z)+\tilde{\varphi}_3(z) \\
\tilde{q}(z) \ &:= \ \tilde{\varphi}(z)\cdot\xi(z_0^2)\cdot(\xi(-\tilde{h}(z))+\xi(z_0)\cdot\xi(z_0+\tilde{h}(z)))\\
\tilde{\omega}_1(z) \ &:= \ \tilde{\varphi}_3(z)\cdot \tilde{q}(z)\cdot(\tilde{\varphi}(z))^{-1} \\
\tilde{\omega}_2(z) \ &:= \ \tilde{q}(z)\cdot\tilde{\varphi}_2(z)\cdot(z_0+\tilde{h}(z))\cdot(\tilde{h}(z)\cdot \tilde{\varphi}(z))^{-1} \\
\tilde{\omega}_3(z) \ &:= \ \tilde{q}(z)\cdot\tilde{\varphi}_1(z)\cdot z_0\cdot((z_0+\tilde{h}(z))\cdot\tilde{\varphi}(z))^{-1}\\
\tilde{\omega}(z) \ &:= \ \tilde{\omega}_1(z)+\tilde{\omega}_2(z)+\tilde{\omega}_3(z)\\
\tilde{u}(z) \ &:= \ \tilde{q}(z)\cdot z_0-\tilde{\omega}(z)\cdot\tilde{h}(z)\\
\tilde{\psi}_1(z) \ &:= \ \xi(\tilde{u}(z))\\
\tilde{\Psi}_1(z) \ &:= \ \xi(\tilde{u}(z))\cdot\rho(\tilde{u}(z)) \\
\tilde{\psi}_2(z) \ &:= \ \xi(\tilde{\omega}(z)) \\
\tilde{\Psi}_2(z) \ &:= \ \xi(\tilde{\omega}(z))\cdot\rho(\tilde{\omega}(z)) \\
\tilde{\psi}_3(z) \ &:= \ \xi(\tilde{q}(z))\\
\tilde{\Psi}_3(z) \ &:= \ \xi(\tilde{q}(z))\cdot\rho(\tilde{q}(z))\\
\tilde{t}_{-1}(z) \ &:= \ \tilde{\psi}_1(z)\cdot \tilde{\psi}_2(z)\cdot \tilde{\Psi}_3(z)\\
\tilde{t}_0(z) \ &:= \ \tilde{\Psi}_1(z)\cdot\tilde{\psi}_2(z)\cdot\tilde{\psi}_3(z)\\
\tilde{t}_i(z) \ &:= \ \tilde{\psi}_1(z)\cdot\tilde{\Psi}_2(z)\cdot\tilde{\psi}_3(z)\cdot\xi(-z_i)\quad\text{(for $i=1,\ldots,k$)}\\
\end{align*}
}

\begin{remark}[Uniform term evaluation]\label{uniform_term_evaluation_remark}
A fixed first-order term may be evaluated simultaneously on a parameterized family. If the input functions are jointly definable in $(x,s)$, then the resulting family is jointly definable because definability is closed under composition. Membership of each fiber in a chosen function algebra is a separate closure question and is checked at the stages listed in the axiom-use map.
\end{remark}

\subsection{Zero-constraint sanity check}\label{appendix_zero_constraint_sanity_subsection}

\noindent When $k=0$, the strict construction reduces to
\[
g=\sigma(\rho(g)).
\]
For the weak construction, the universal identity reduces to
\[
\sigma(p)g=\sigma(t_0),
\]
where
\[
q=\xi(g^2)\xi(g)^3,\qquad
p=\xi(qg)\xi(q)^2\rho(q),\qquad
t_0=\xi(qg)\rho(qg)\xi(q)^2.
\]
The weak theorem also gives $\{p=0\}=\{g=0\}$. These formulas provide a quick boundary-case check on the term conventions and on the leading multiplier in the weak identity.

\subsection{Proof architecture and common sign geometry}\label{appendix_proof_architecture_subsection}

\noindent Both constructions first compress the constraint list into a single auxiliary function $h$ and then use the same three sign gates. Their dependency patterns are:
\[
\begin{array}{c}
\text{strict:}\quad (g,f_1,\ldots,f_k)\leadsto(\psi,\varepsilon_i,s_i)\leadsto h\leadsto(\varphi_1,\varphi_2,\varphi_3,w)\leadsto(u,t_i),\\[1mm]
\text{weak:}\quad (g,f_1,\ldots,f_k)\leadsto h\leadsto(\varphi_1,\varphi_2,\varphi_3,q)\leadsto\omega\leadsto u\leadsto(p,t_i).
\end{array}
\]

\begin{lemma}[Common sign geometry]\label{common_sign_geometry_lemma}
Let $g,h:X\to R$ satisfy $\{h\geq0\}\subseteq\{g\geq0\}$, and put
\[
U_1:=\{g+4h>0\},\qquad U_2:=\{g+h<0\},\qquad U_3:=\{g>0\}\cap\{h<0\}.
\]
Then:
\begin{enumerate}
\item on $U_1$ one has $g+h>0$ and $g+h\geq \frac34g$;
\item on $U_2$ one has $h<0$, hence $(g+h)/h>0$;
\item on $U_3$ one has $-gh>0$;
\item $U_1\cup U_2\cup U_3=X\setminus(\{g=0\}\cap\{h=0\})$.
\end{enumerate}
If the stronger inclusion $\{h\geq0\}\subseteq\{g>0\}$ holds, then $U_1,U_2,U_3$ cover all of $X$.
\end{lemma}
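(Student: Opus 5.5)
This is a pointwise statement in an ordered field, so I would fix $x\in X$ and write $a:=g(x)$ and $b:=h(x)$. The hypothesis $\{h\geq0\}\subseteq\{g\geq0\}$ then says: $b\geq 0$ implies $a\geq0$. All four items, and the final remark, reduce to case analysis on the signs of $a$ and $b$, using only the ordered-field axioms in $T_{\operatorname{oF}}$.

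For (1), assume $a+4b>0$ and split on the sign of $b$.
\begin{itemize}
\item If $b\geq0$, then $a\geq0$ by the hypothesis. Hence $a+b\geq a\geq \frac34 a$, since $\frac14 a\geq 0$. Moreover $a+b>0$, because $a+b=0$ would force $a=b=0$ and then $a+4b=0$.
\item If $b<0$, then $a>-4b>0$. Writing $a+b=\frac34 a+\frac14(a+4b)$ gives $a+b>\frac34 a>0$ directly.
\end{itemize}
This decomposition $a+b=\frac34a+\frac14(a+4b)$ in fact handles both subcases at once: it gives $a+b\geq\frac34 a$ and $a+b>\frac34 a$ whenever $a+4b>0$. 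Positivity of $a+b$ then needs only $a\geq 0$. When $b\geq0$ this comes from the hypothesis, and when $b<0$ it comes from $a>-4b>0$. I would present it this way.

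For (2), assume $a+b<0$. If $b\geq0$, then $a\geq0$ and so $a+b\geq0$, a contradiction; hence $b<0$. Then $(a+b)/b$ is a quotient of two negatives, so it is positive; here $b^{-1}$ is the genuine inverse since $b\neq0$.

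For (3), $a>0$ and $b<0$ give $-ab>0$ immediately.

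For (4), the inclusion "$\subseteq$" is routine. On $U_1$ item (1) gives $a+b>0$, so $(a,b)\neq(0,0)$. On $U_2$ we have $a+b<0$, so again $(a,b)\neq(0,0)$. On $U_3$ we have $a>0$.

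For "$\supseteq$", take $(a,b)\neq(0,0)$ with $x\notin U_1\cup U_2$, i.e. $a+4b\leq0\leq a+b$; the aim is to show $x\in U_3$.
\begin{itemize}
\item Subtracting the two inequalities gives $3b\leq 0$, so $b\leq0$.
\item If $b=0$, then $a\leq0\leq a$, so $a=0$, contradicting $(a,b)\neq(0,0)$. Hence $b<0$.
\item Then $a\geq -b>0$, so $x\in U_3$.
\end{itemize}

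This exhaustion step is the only place requiring a little care, and it is where I expect any subtlety to lie. Notice that it does not even use the hypothesis; the hypothesis enters only in (1) and (2).

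Finally, under the stronger inclusion $\{h\geq0\}\subseteq\{g>0\}$, the set $\{g=0\}\cap\{h=0\}$ is empty: a point with $h=0$ satisfies $h\geq 0$ and so has $g>0$. Therefore (4) gives $U_1\cup U_2\cup U_3=X$.
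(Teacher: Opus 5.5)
Your proof is correct and follows the paper's argument essentially step for step. It uses the same pointwise sign case analysis for (1)--(3), the same exhaustion in (4) (derive $3h\le 0$, note that $h<0$ lands the point in $U_3$, and that $h=0$ forces $g=0$), and the same observation under the stronger inclusion. The only variation is presentational: your identity $g+h=\frac34 g+\frac14(g+4h)$ merges the paper's two subcases of (1), and it even yields the strict bound $g+h>\frac34 g$ on $U_1$.
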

\begin{proof}
On $U_1$, if $h\geq0$, then $g\geq0$ and $g+h>0$, while $g+h\geq g\geq\frac34g$. If $h<0$, then $g+4h>0$ gives $g>0$ and $h>-g/4$, hence $g+h>3g/4$. If $g+h<0$ and $h\geq0$, the assumed implication gives $g\geq0$, a contradiction; this proves (2), and (3) is immediate.

For (4), suppose a point lies outside $U_1\cup U_2\cup U_3$. Then $g+4h\leq0$ and $g+h\geq0$, so $3h\leq0$. If $h<0$, then $g+h\geq0$ forces $g>0$, placing the point in $U_3$; hence $h=0$. The two displayed inequalities then give $g=0$. The converse is immediate. Under the stronger inclusion, $g=h=0$ is impossible because $h\geq0$ would imply $g>0$.
\end{proof}

\noindent With
\[
\varphi_1:=\xi(g+4h),\qquad \varphi_2:=\xi(-(g+h)),\qquad \varphi_3:=\xi(-gh),
\]
axiom (T$\xi2$) gives $\{\varphi_j>0\}=U_j$. Thus Lemma~\ref{common_sign_geometry_lemma} gives the cover used in the strict proof and the denominator-safety estimates used in the weak proof and in the Fischer regularity argument.

\medskip\noindent The closure axioms enter the constructions at the following stages; all other steps use only ring operations from (A0).
\begin{center}
\small
\begin{tabular}{@{}p{0.17\textwidth}p{0.42\textwidth}p{0.28\textwidth}@{}}
\toprule
Construction & Purpose & Non-ring closure axioms \\
\midrule
Strict $\psi$ & detect violated constraints & (A2) \\
Strict $\varepsilon_i$ & positive correction with a pointwise bound & (A$\delta$), (A$\nu$), (A1) \\
Strict $s_i$ & lift positive coefficients through $\sigma\circ\rho$ & (A3s) \\
Strict $w$ & glue the three sign-region formulas & (A2), (A4s), (A1) \\
Strict $t_i$ & take the final positive roots & (A3s) \\
\midrule
Weak $h$ & detect violated constraints & (A2), (A3w) \\
Weak $\varphi,q$ & form the sign gates and regularizing factor & (A2) \\
Weak $\omega_1$ & cancel the common factor $\varphi$ in $q$ & (A2) \\
Weak $\omega_2$ & cancel $\varphi$ and regularize division by $h$ & (A2), (A4w) \\
Weak $\omega_3$ & cancel $\varphi$ and regularize division by $g+h$ & (A2), (A5w) \\
Weak $\psi_j,\Psi_j$ & Hilbert--17 factorizations of $u,\omega,q$ & (A2), (A3w) \\
\bottomrule
\end{tabular}
\end{center}

\noindent In the three weak $\omega_i$ rows, the factor $\varphi$ occurring in $q$ cancels pointwise. On $\{\varphi=0\}=\{g=0\}\cap\{h=0\}$, both the raw terms and their cancelled expressions vanish. Thus these steps do not use (A1).

\subsection{Scalar consequences and pathologies for \texorpdfstring{$\sigma$ and $\rho$}{sigma and rho}}\label{subsection_examining_sigma_rho}

\noindent In this subsection we establish in Lemma~\ref{sigma_rho_lemma} some basic properties of $\sigma$ and $\rho$ needed later. We also construct in Example~\ref{wild_sigma_rho_example} a highly pathological example of a $(\sigma,\rho)$-pair which demonstrates some properties which are \emph{not} consequences of the axioms.

\begin{lemma}\label{sigma_rho_lemma}
In every model $\bm{R}=(R;\ldots)$ of  $T_{\operatorname{wP}}$ we have for every $a\in R$:
\begin{enumerate}
\item $\sigma(1)=1$
\item\label{sigma_rho_lemma_rho0_0} $\rho(0)=0$
\item\label{sigma_rho_lemma_sigma0_0} $\sigma(0)=0$,
\item\label{sigma_rho_lemma_pos_sigma_pos} if $a>0$, then $\sigma(a)>0$, and
\item\label{sigma_rho_lemma_pos_rho_pos} if $a>0$, then $\rho(a)>0$.
\end{enumerate}
Moreover, the restriction $\rho:[0,+\infty)\to[0,+\infty)$ is injective, and the restriction $\sigma:[0,+\infty)\to[0,+\infty)$ is surjective.
\end{lemma}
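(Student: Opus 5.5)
The plan is to derive every item directly from the four axioms (T$\sigma1$), (T$\sigma2$), (T$\rho$), (T$\sigma\rho$), using only ordered-field arithmetic in $R$. The one delicate point is the order of the steps: $\rho(0)=0$ has to be established before $\sigma(0)=0$, because otherwise the natural argument is circular.

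\emph{Item (1).} We get $\sigma(1)=1$ immediately by applying (T$\sigma\rho$) at $a=1$, which gives $\sigma(\rho(1))=1$. This alone is not enough, since it concerns $\sigma(\rho(1))$ rather than $\sigma(1)$. So I will use multiplicativity instead. Put $c:=\rho(1)\geq0$, so that $\sigma(c)=1$. If $c>0$, then (T$\sigma2$) with the factorization $c=c\cdot 1$ gives $\sigma(c)=\sigma(c)\sigma(1)$, i.e.\ $1=\sigma(1)$. If $c=0$, then $\sigma(0)=1$; in that case (T$\sigma2$) with $0=0\cdot b$ gives $\sigma(0)=\sigma(0)\sigma(b)$ for every $b\geq0$, so $\sigma(b)=1$ for all $b\geq0$. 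Taking $b=1$ again yields $\sigma(1)=1$.

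\emph{Items (2) and (3).} Let $c:=\rho(0)$; by (T$\rho$) we have $c\geq0$, and by (T$\sigma\rho$) we have $\sigma(c)=0$. Suppose $c>0$. For any $a\geq0$ we can write $a=c\cdot(a c^{-1})$ with both factors nonnegative, so (T$\sigma2$) gives $\sigma(a)=\sigma(c)\sigma(ac^{-1})=0$. In particular $\sigma(1)=0$, contradicting (1). Hence $\rho(0)=0$, which is (2). Then (T$\sigma\rho$) at $a=0$ gives $\sigma(0)=\sigma(\rho(0))=0$, which is (3).

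\emph{Items (4) and (5).} For $a>0$, multiplicativity gives $\sigma(a)\sigma(a^{-1})=\sigma(1)=1$. Thus $\sigma(a)\neq0$, and together with (T$\sigma1$) this gives $\sigma(a)>0$, proving (4). For (5), (T$\rho$) gives $\rho(a)\geq0$. If $\rho(a)=0$, then $a=\sigma(\rho(a))=\sigma(0)=0$ by (3), a contradiction; so $\rho(a)>0$.

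\emph{The final assertions.} The restriction $\rho:[0,+\infty)\to[0,+\infty)$ is well defined by (T$\rho$). It is injective because $\rho(a)=\rho(b)$ with $a,b\geq0$ implies $a=\sigma(\rho(a))=\sigma(\rho(b))=b$. Similarly, $\sigma$ maps $[0,+\infty)$ into $[0,+\infty)$ by (T$\sigma1$). It is surjective there because each $a\geq0$ equals $\sigma(\rho(a))$ with $\rho(a)\geq0$.

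I expect step (2) to be the only real obstacle. The axioms give no direct information about $\rho(0)$, so it must be extracted by the scaling argument above, which depends on first securing $\sigma(1)=1$ independently of knowing $\sigma(0)$.
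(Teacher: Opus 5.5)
Your proof is correct and follows essentially the same route as the paper: $\sigma(1)=1$ via multiplicativity applied to $\rho(1)\cdot 1$, then $\rho(0)=0$ by contradiction from $\sigma(1)=\sigma(\rho(0))\sigma(\rho(0)^{-1})=0$, and the remaining items and the injectivity/surjectivity claims exactly as you argue them. The only difference is that your case split in item (1) is unnecessary: (T$\sigma2$) needs only $c\geq 0$ and $1\geq 0$, so $1=\sigma(c)=\sigma(c\cdot 1)=\sigma(c)\sigma(1)=\sigma(1)$ holds uniformly, including when $c=\rho(1)=0$.
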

\begin{proof}
(1) We have $\rho(1)\geq 0$ and $\sigma(\rho(1))=1$ by (T$\rho$) and (T$\sigma\rho$). Thus (T$\sigma2$) and (T$\sigma\rho$) yield:
\[
1 \ = \ \sigma(\rho(1)) \ = \ \sigma(\rho(1)\cdot 1) \ = \ \sigma(\rho(1))\cdot\sigma(1) \ = \ 1\cdot\sigma(1) \ = \ \sigma(1)
\]

(2) If $\rho(0)>0$, then $\rho(0)^{-1}>0$, so by (T$\sigma2$) and (T$\sigma\rho$):
\[
\sigma(1) \ = \ \sigma(\rho(0)\cdot\rho(0)^{-1}) \ = \ \sigma(\rho(0))\cdot\sigma(\rho(0)^{-1}) \ = \ 0
\]
contradicting (1). Thus $\rho(0)=0$.

(3) By (T$\sigma\rho$) we have $\sigma(\rho(0))=0$, and by (2) we have $\rho(0)=0$, hence $\sigma(0)=0$.

(4) Suppose $a>0$. By (1) and (T$\sigma2$) we have:
\[
1 \ = \ \sigma(1) \ = \ \sigma(a\cdot a^{-1}) \ = \ \sigma(a)\cdot\sigma(a^{-1})
\]
Thus $\sigma(a)\neq 0$.
Since $\sigma(a)\geq 0$ by (T$\sigma1$), this gives $\sigma(a)>0$.

(5) Suppose $a>0$, thus $\rho(a)\geq 0$ and $\sigma(\rho(a))=a>0$ by (T$\rho$) and (T$\sigma\rho$). If $\rho(a)=0$, then by (3),
\[
a \ = \ \sigma(\rho(a)) \ = \ \sigma(0) \ = \ 0
\]
a contradiction. Thus $\rho(a)>0$.

Finally, for injectivity of $\rho$, suppose $a,b\geq 0$ satisfy $\rho(a)=\rho(b)$. Applying $\sigma$ yields $a=\sigma(\rho(a))=\sigma(\rho(b))=b$ by (T$\sigma\rho$); the surjectivity of the (restriction of) $\sigma$ likewise follows by (T$\sigma\rho$).
\end{proof}

\noindent The axioms $T_{\operatorname{wP}}$ do not determine whether $\rho$ is surjective, whether $\sigma$ is injective, or whether either function is monotone on $[0,+\infty)$. The following example shows this.

\begin{example}[Wild $\sigma$ and $\rho$]\label{wild_sigma_rho_example}
We will construct functions $\sigma,\rho:\R\to\R$ which satisfy the scalar axioms involving $\sigma,\rho$, and such that for every interval $(a,b)\subseteq[0,+\infty)$:
\begin{itemize}
\item ($\rho$ nowhere surjective) $(a,b)\not\subseteq \rho[(0,+\infty)]$,
\item ($\sigma$ nowhere injective) the restriction $\sigma|_{(a,b)}:(a,b)\to\R$ is not injective, and
\item ($\sigma$ and $\rho$ nowhere monotone) neither $\sigma$ nor $\rho$ is monotone (=weakly increasing or weakly decreasing) when restricted to $(a,b)$.
\end{itemize}
The example uses the Axiom of Choice ($\mathsf{AC}$), through the construction below.
\end{example}
\begin{proof}[Construction]
We construct additive maps on $\R$ with the required algebraic properties.

Since $\R$ is an infinite-dimensional vector space over $\Q$, there exists a $\Q$-linear isomorphism:
\[
H \ : \ \R\to \R\oplus\R
\]
Its existence follows, for example, from the existence of a Hamel basis for $\R$ over $\Q$.

Next, let $\pi_1:\R\oplus\R\to\R$ be projection onto the first coordinate, and let
\[
\iota_1 \ : \ \R\to\R\oplus\R,\quad t \ \mapsto \ (t,0)
\]
be the inclusion of the first summand. Define additive maps:
\[
L \ := \ \pi_1\circ H \ : \ \R\to\R,\quad M \ := \ H^{-1}\circ\iota_1 \ : \ \R\to\R
\]
Then:
\begin{itemize}
\item $L\circ M=\operatorname{id}_{\R}$,
\item $L$ is surjective and not injective, and
\item $M$ is injective and not surjective.
\end{itemize}
First, we claim that $L$ and $M$ are discontinuous. Indeed, a continuous additive map $\R\to\R$ must be of the form $x\mapsto cx$ for some $c\in\R$ (this is a classical fact due to Cauchy which easily follows from the much stronger~\cite[Theorem 1.1.7]{BGT_RV}). But such a map is either $0$ or bijective.

Next, again by~\cite[1.1.7]{BGT_RV}, any additive map $\R\to\R$ which is monotone on a nonempty interval is automatically continuous. Since $L$ and $M$ are discontinuous, it follows that neither $L$ nor $M$ is monotone on any nonempty interval.

We transfer $L,M$ to $(0,+\infty)$ via $\log$ and $\exp$. Define:
\[
\sigma \ : \ \R\to\R,\quad x \ \mapsto \ \sigma(x) \ := \ \begin{cases}
\exp(L(\log(x))) & \text{if $x>0$} \\
0 & \text{if $x\leq 0$}
\end{cases}
\]
and
\[
\rho \ : \ \R\to\R,\quad x \ \mapsto \ \rho(x) \ := \ \begin{cases}
\exp(M(\log(x))) & \text{if $x>0$} \\
0 & \text{if $x\leq 0$}
\end{cases}
\]
We verify the scalar axioms involving $\sigma,\rho$:

(T$\sigma1$) This is immediate by definition.

(T$\rho$) This is immediate by definition.

(T$\sigma2$) Let $a,b\geq 0$. If $ab=0$, then $\sigma(ab)=\sigma(0)=0$, while at least one of $\sigma(a),\sigma(b)$ is also $0$, so $\sigma(ab)=0=\sigma(a)\sigma(b)$. If both $a,b>0$, then by additivity of $L$:
\[
\sigma(ab) \ = \ \exp(L(\log(ab))) \ = \ \exp(L(\log a+\log b)) \ = \ \exp(L(\log a)+L(\log b)) \ = \ \sigma(a)\sigma(b)
\]

(T$\sigma\rho$) Let $a\geq 0$. If $a=0$, then $\sigma(\rho(0))=\sigma(0)=0$. If $a>0$, then:
\[
\sigma(\rho(a)) \ = \ \sigma(\exp(M(\log a))) \ = \ \exp(L(M(\log a))) \ = \ \exp(\log a) \ = \ a
\]

We verify the three pathologies; let $(a,b)\subseteq[0,+\infty)$, i.e., $0\leq a<b<+\infty$.

($\rho$ nowhere surjective) Set $V:=M[\R]\subseteq\R$. Then $V$ is a proper $\Q$-subspace of $\R$, and thus $V$ does not contain any open interval. It follows that $\exp[V]\subseteq(0,+\infty)$ also contains no open interval. However, for $x>0$ we have $\rho(x)=\exp(M(\log x))\in\exp[V]$, hence $\rho[(0,+\infty)]\subseteq\exp(V)$, hence also does not contain any open interval.

($\sigma$ nowhere injective) Since $\{0\}\subsetneq\ker(L)\subsetneq\R$, it follows that $\ker(L)$ is dense in $\R$ and co-dense in $\R$ (i.e., $\R\setminus\ker(L)$ is dense in $\R$). Thus there exists some $u\in\R$ and $0\neq k\in\ker(L)$ such that:
\[
\log(a) \ < \ u \ < \ u+k \ < \ \log(b)
\]
Set $x:=e^u$ and $y:=e^{u+k}$. Then $x,y\in (a,b)$ and $x\neq y$. But:
\[
\sigma(x) \ = \ \exp(L(u))\quad\text{and}\quad \sigma(y) \ = \ \exp(L(u+k)) \ = \ \exp(L(u)+L(k)) \ = \ \exp(L(u))
\]
since $k\in\ker(L)$. Hence $\sigma(x)=\sigma(y)$.

($\sigma$ and $\rho$ nowhere monotone) Suppose towards a contradiction that $\sigma$ were monotone on $(a,b)$. Since $\log:(a,b)\to (\log a,\log b)$ and $\exp:\R\to(0,+\infty)$ are strictly increasing homeomorphisms (using the convention in the interval $(\log a,\log b)$ that $\log 0:=-\infty$), the function $L=\log\circ \sigma\circ \exp$ would then be monotone on $(\log a,\log b)$, contradicting the fact above that $L$ is not monotone on any open interval. Thus $\sigma$ is not monotone on $(a,b)$.

The same argument applies to $\rho$, using $M=\log\circ \rho\circ \exp$ on $(\log a,\log b)$. Hence $\rho$ is not monotone on $(a,b)$ either.
\end{proof}

\section{Proof of the strict Positivstellensatz}\label{subsection_proof_strict}

\noindent In this appendix we prove the Strict Positivstellensatz~\ref{strict_Positivstellensatz_abstract}, which will be established as Lemma~\ref{strict_proof_final_statement_minus1}(\ref{strict_proof_final_statement_minus1_2}) and Corollary~\ref{strict_proof_final_statement} below. The overall strategy of the proof is to carry out Fischer's construction in~\cite{FischerPsatzDiff} in the axiomatic setting. This is done by constructing a sequence of functions $X\to R$:
\[
f_i \ \leadsto \ (\psi,\varepsilon_i,s_i) \ \leadsto \ h \ \leadsto \ (\varphi_i,\varphi,w) \ \leadsto \ (u,t_i)
\]
as defined by the sequence of $\calL_{\operatorname{sP}}$-terms presented in the strict construction of subsection~\ref{appendix_universal_constructions}, all while checking along the way that they have the desired properties in the axiomatic setting.

\medskip\noindent First we observe the role of the term $\tilde{\varepsilon}(z_0,z_1,z_2)$:

\begin{lemma}(cf.~\cite[Lemma 2.1]{FischerPsatzDiff})\label{abstrctFischerLemma2_1}
Let $\bm{R}\models T_{\operatorname{sP}}$, let $A\subseteq R^X$ satisfy (A0), (A1), (A$\delta$), (A$\nu$), and let $g,\varphi,f\in A$. Assume
\[
\{g\leq 0\} \ \subseteq \ \{\varphi>0\}
\]
Then the function $\varepsilon:=\tilde{\varepsilon}(g,\varphi,f)=\nu(g,\varphi)\cdot(\delta(f))^{-1}:X\to R$ satisfies:
\begin{enumerate}
\item $\varepsilon\in A$
\item $\varepsilon>0$ on $X$, and
\item $\varepsilon\cdot f<\varphi$ on $\{g\leq 0\}$
\end{enumerate}
\end{lemma}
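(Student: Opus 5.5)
The plan is to verify the three claims directly from the axioms, treating membership in $A$ first and the two pointwise properties afterwards.

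\emph{Membership.} The hypothesis $\{g\leq 0\}\subseteq\{\varphi>0\}$ is exactly the side condition of (A$\nu$) with $(f,g)$ replaced by $(g,\varphi)$, so $\nu(g,\varphi)\in A$. By (A$\delta$), $\delta(f)\in A$. By (T$\delta$), $\delta(f)>0$ on all of $X$, so (A1) gives $\delta(f)^{-1}\in A$. Finally (A0c) gives $\varepsilon=\nu(g,\varphi)\cdot\delta(f)^{-1}\in A$.

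\emph{Positivity.} Fix $x\in X$. If $g(x)>0$, then (T$\nu1$) gives $\nu(g(x),\varphi(x))>0$. If $g(x)\leq 0$, the hypothesis gives $\varphi(x)>0$, and (T$\nu1$) again gives $\nu(g(x),\varphi(x))>0$. Since $\delta(f(x))>0$, its inverse is positive, and therefore $\varepsilon(x)>0$.

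\emph{The bound $\varepsilon f<\varphi$ on $\{g\leq 0\}$.} This is the only step needing care. Fix $x$ with $g(x)\leq 0$, so $\varphi(x)>0$. By (T$\nu2$), $0<\nu(g(x),\varphi(x))\leq\varphi(x)$, which gives $0<\varepsilon(x)\leq \varphi(x)/\delta(f(x))$. We split on the sign of $f(x)$.
\begin{itemize}
\item If $f(x)\leq 0$, then $\varepsilon(x)f(x)\leq 0<\varphi(x)$.
\item If $f(x)>0$, then (T$\delta$) gives $\delta(f(x))\geq 2f(x)>f(x)>0$. Hence
\[
\varepsilon(x)f(x)\ \leq\ \varphi(x)\,\frac{f(x)}{\delta(f(x))}\ \leq\ \frac{\varphi(x)}{2}\ <\ \varphi(x),
\]
where the last inequality uses $\varphi(x)>0$.
\end{itemize}
The only potential obstacle is ensuring strictness, and it is handled by the factor $2$ in (T$\delta$) together with $\varphi>0$ on $\{g\leq 0\}$. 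Only ordered-field manipulations are needed beyond the cited axioms.
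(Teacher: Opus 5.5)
Your proof is correct and follows essentially the same route as the paper: membership via (A$\delta$), (T$\delta$), (A1), (A$\nu$), and (A0); positivity by splitting on the sign of $g(x)$ and applying (T$\nu1$); and the strict bound by splitting on the sign of $f(x)$ and using (T$\nu2$) together with $2f\leq\delta(f)$ and $\varphi>0$.
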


\begin{proof}
(1) Since $g,\varphi,f\in A$, axiom (A$\delta$) gives $\delta(f)\in A$. By (T$\delta$) we have $0<\delta(f(x))$ for all $x\in X$, so $\delta(f)>0$ on $X$. Hence $(\delta(f))^{-1}\in A$ by (A1).

Next, by assumption $\{g\leq 0\}\subseteq\{\varphi>0\}$ so axiom (A$\nu$) applies to the pair $(g,\varphi)$, and yields $\nu(g,\varphi)\in A$. Therefore $\varepsilon=\nu(g,\varphi)\cdot(\delta(f))^{-1}\in A$ by (A0).

(2) Fix $x\in X$. If $g(x)>0$, then by (T$\nu1$) we have $\nu(g(x),\varphi(x))>0$; if $g(x)\leq 0$, then by assumption $\varphi(x)>0$, and again (T$\nu1$) gives $\nu(g(x),\varphi(x))>0$.
Since both $\nu(g,\varphi)>0$ on $X$ and $(\delta(f))^{-1}>0$ on $X$, it follows that $\varepsilon>0$ on $X$.

(3) Fix $x\in \{g\leq 0\}$. Then $\varphi(x)>0$ by assumption, and by (T$\nu2$):
\[
\nu(g(x),\varphi(x)) \ \leq \ \varphi(x)
\]
There are two cases.

Case 1: ($f(x)\leq 0$) Then:
\[
\varepsilon(x)f(x) \ \leq \ 0 \ < \ \varphi(x)
\]

Case 2: ($f(x)>0$) Then by (T$\delta$):
\[
2f(x) \ \leq \ \delta(f(x))
\]
Since $\delta(f(x))>0$, we get:
\[
0 \ < \ \frac{f(x)}{\delta(f(x))} \ \leq \ \frac{1}{2}
\]
Thus:
\[
\varepsilon(x)f(x) \ = \ \nu(g(x),\varphi(x))\frac{f(x)}{\delta(f(x))} \ \leq \ \frac{\varphi(x)}{2} \ < \ \varphi(x) \qedhere
\]
\end{proof}

\noindent \emph{For the rest of this subsection, we fix $k\geq 0$, a model $\bm{R}\models T_{\operatorname{sP}}$, an algebra $A\subseteq R^X$ satisfying (AsP), and functions $g,f_1,\ldots,f_k\in A$ satisfying:
\[
F \ := \ \bigcap_{1\leq i\leq k}\{f_i\geq 0\} \ \subseteq \ \{g>0\},
\]}

\medskip\noindent Our first two lemmas follow the arguments of (the proof of)~\cite[Lemma 2.2]{FischerPsatzDiff}.

\medskip\noindent First define the following functions $X\to R$:
\begin{align*}
\psi \ &:= \ \tilde{\psi}(f_1,\ldots,f_k) \ = \ \sum_{1\leq i\leq k}\xi(-f_i)\cdot f_i \\
\varepsilon_i \ &:= \ \tilde{\varepsilon}_i(g,f_1,\ldots,f_k) \ = \ \tilde{\varepsilon}(g,-\psi/k,f_i) \quad\text{(for $i=1,\ldots,k$)} \\
s_i \ &:= \ \tilde{s}_i(g,f_1,\ldots,f_k) \ = \ \rho(\xi(-f_i)+\varepsilon_i) \quad\text{(for $i=1,\ldots,k$)} \\
\end{align*}

\medskip\noindent The functions $\psi,\varepsilon_i,s_i$ have the following properties:

\begin{lemma}\label{abstract_Strict_Lemma_2_2_minus1}
We have:
\begin{enumerate}
\item $\psi\in A$, $\psi\leq 0$ on $X$, and $\{g\leq 0\}\subseteq\{\psi<0\}$.
\end{enumerate}
Moreover, if $k\geq 1$, we have for each $i=1,\ldots,k$:
\begin{enumerate}\setcounter{enumi}{1}
\item\label{abstract_Strict_Lemma_2_2_minus1_varepsilon_i} $\varepsilon_i\in A$, $\varepsilon_i>0$ on $X$, and on $\{g\leq 0\}$ we have:
\[
\varepsilon_i f_i<-\psi/k;
\]
\item\label{abstract_Strict_Lemma_2_2_minus1_s_i} $s_i\in A$, $s_i>0$ on $X$, and
\[
\sigma(s_i) \ = \ \xi(-f_i)+\varepsilon_i
\]
\end{enumerate}
\end{lemma}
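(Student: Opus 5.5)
We prove the three items in order, since (2) depends on (1) and (3) depends on (2). Throughout we work pointwise on $X$ and use only the scalar axioms together with the closure axioms listed in the axiom-use map for the strict $\psi$, $\varepsilon_i$, $s_i$ rows.

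\emph{Item (1).} Each $-f_i\in A$ by (A0b), so $\xi(-f_i)\in A$ by (A2), and $\psi\in A$ follows by (A0c); the case $k=0$ gives $\psi=0\in A$ by (A0a). For the sign, fix $x$. If $f_i(x)\geq 0$, then $-f_i(x)\leq 0$, so $\xi(-f_i(x))=0$ by (T$\xi1$)--(T$\xi2$) and the $i$-th summand vanishes. If $f_i(x)<0$, then $\xi(-f_i(x))>0$, so the summand $\xi(-f_i(x))f_i(x)$ is strictly negative. Hence $\psi\leq 0$ everywhere, and $\psi(x)<0$ exactly when some $f_i(x)<0$, that is, $\{\psi<0\}=X\setminus F$. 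Taking complements in the hypothesis $F\subseteq\{g>0\}$ gives $\{g\leq 0\}\subseteq X\setminus F=\{\psi<0\}$. (When $k=0$ we have $F=X$, so $\{g\leq 0\}=\varnothing$ and the inclusion is trivial.)

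\emph{Item (2).} Put $\varphi:=-\psi/k$, which makes sense for $k\geq 1$. The numeral $k=1+\cdots+1$ is a positive constant in $A$, so its inverse lies in $A$ by (A1), and $\varphi\in A$ by (A0). Since $k>0$, item (1) gives $\{g\leq 0\}\subseteq\{\varphi>0\}$. We then apply Lemma~\ref{abstrctFischerLemma2_1} to the triple $(g,\varphi,f_i)$. Because $\varepsilon_i=\tilde{\varepsilon}(g,\varphi,f_i)$, the lemma yields directly that $\varepsilon_i\in A$, that $\varepsilon_i>0$ on $X$, and that $\varepsilon_if_i<\varphi=-\psi/k$ on $\{g\leq 0\}$.

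\emph{Item (3).} Set $a_i:=\xi(-f_i)+\varepsilon_i$. It lies in $A$ by (A0) and (A2), and it is strictly positive on $X$ because (T$\xi1$) gives $\xi(-f_i)\geq 0$ and item (2) gives $\varepsilon_i>0$. Axiom (A3s) applied to $a_i$ then gives $s_i=\rho(a_i)\in A$. Lemma~\ref{sigma_rho_lemma}(\ref{sigma_rho_lemma_pos_rho_pos}) gives $s_i>0$ pointwise, and (T$\sigma\rho$) gives $\sigma(s_i)=a_i$.

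There is no real obstacle. The only step needing care is item (2): one must check that the hypothesis of Lemma~\ref{abstrctFischerLemma2_1} holds for $\varphi=-\psi/k$, which rests on $\{\psi<0\}=X\setminus F$ and on (A1) making $k^{-1}$ available in $A$.
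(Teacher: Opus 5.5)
Your proposal is correct and follows the paper's proof step for step. The three steps are: (A0)+(A2) and a summand-wise sign analysis for $\psi$; Lemma~\ref{abstrctFischerLemma2_1} applied to $(g,-\psi/k,f_i)$, with (A1) supplying $k^{-1}$; and (A3s), Lemma~\ref{sigma_rho_lemma}(\ref{sigma_rho_lemma_pos_rho_pos}) and (T$\sigma\rho$) for $s_i$. Your identification $\{\psi<0\}=X\setminus F$ is only a slightly tidier packaging of the paper's direct argument that some $f_i(x)<0$ whenever $g(x)\leq 0$.
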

\begin{proof}
(1) Since each $f_i\in A$, axiom (A2) gives $\xi(-f_i)\in A$, hence $\psi=\sum_{1\leq i\leq k}\xi(-f_i)f_i\in A$ by (A0).

Next, fix $x\in X$. By (T$\xi1$), for each $i$ we have $\xi(-f_i(x))\geq 0$. Also, by (T$\xi2$), $\xi(-f_i(x))>0$ iff $-f_i(x)>0$ iff $f_i(x)<0$. Therefore, for each $i$ we have $\xi(-f_i(x))f_i(x)\leq 0$; indeed, if $f_i(x)\geq 0$, then $\xi(-f_i(x))=0$, and if $f_i(x)<0$, then $\xi(-f_i(x))>0$. Summing over $i$ yields $\psi(x)\leq 0$.

Next suppose $g(x)\leq 0$, which implies $k\geq 1$. Since $\bigcap_{1\leq i\leq k}\{f_i\geq 0\}\subseteq\{g>0\}$, it is impossible for $f_i(x)\geq 0$ for all $i$ since otherwise $x\in F$. Thus for some $i$ we must have $f_i(x)<0$, and thus $\xi(-f_i(x))f_i(x)<0$. All other summands $\xi(-f_j(x))f_j(x)$ of $\psi(x)$ are $\leq 0$, so altogether we have $\psi(x)<0$.

Assume now $k\geq 1$, and fix $i\in\{1,\ldots,k\}$.

(2) By (1) we have $\{g\leq 0\}\subseteq\{\psi<0\}$, hence $\{g\leq 0\}\subseteq\{-\psi/k>0\}$, since $k>0$. Moreover, $-\psi/k\in A$ as a consequence of (A0) and (A1). Thus Lemma~\ref{abstrctFischerLemma2_1} applied with $\varphi:=-\psi/k$ and $f:=f_i$, shows that $\varepsilon_i=\tilde{\varepsilon}(g,-\psi/k,f_i)\in A$, that $\varepsilon_i>0$ on $X$, and that on $\{g\leq 0\}$ we have $\varepsilon_if_i<-\psi/k$.

(3) Since $-f_i\in A$, axiom (A2) yields $\xi(-f_i)\in A$; together with $\varepsilon_i\in A$ from (2) this gives $\xi(-f_i)+\varepsilon_i\in A$. Moreover, $\xi(-f_i)\geq 0$ on $X$ by (T$\xi1$), and $\varepsilon_i>0$ on $X$ by (2), so $\xi(-f_i)+\varepsilon_i>0$ on $X$. Hence axiom (A3s) applies and we obtain:
\[
s_i \ = \ \rho(\xi(-f_i)+\varepsilon_i) \ \in \ A
\]
Since the input of $\rho$ is strictly positive, Lemma~\ref{sigma_rho_lemma}(\ref{sigma_rho_lemma_pos_rho_pos}) yields $s_i>0$ on $X$. Finally, axiom (T$\sigma\rho$) gives:
\[
\sigma(s_i) \ = \ \sigma(\rho(\xi(-f_i)+\varepsilon_i)) \ = \ \xi(-f_i)+\varepsilon_i \qedhere
\]
\end{proof}

\noindent Next define the following function $X\to R$:
\begin{align*}
h \ &:= \ \tilde{h}(g,f_1,\ldots,f_k) \ = \ \sum_{1\leq i\leq k}\sigma(s_i)\cdot f_i
\end{align*}
The function $h$ has the following properties:

\begin{lemma}(cf.~\cite[Lemma 2.2]{FischerPsatzDiff})\label{abstract_Strict_Lemma_2_2}
The function $h$ belongs to $A$ and satisfies:
\[
h \ = \ \psi+\sum_{1\leq i\leq k}\varepsilon_if_i\quad \text{and}\quad  F \ \subseteq \ \{h\geq 0\} \ \subseteq \ \{g> 0\}
\]
\end{lemma}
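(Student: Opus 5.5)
The plan is to derive all three claims from Lemma~\ref{abstract_Strict_Lemma_2_2_minus1}, splitting off the degenerate case $k=0$ first.

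\emph{The case $k=0$.} Here $h$ is the empty sum $0$, which lies in $A$ by (A0a). We also have $\psi=0$, so the identity is trivial. Moreover $F=X\subseteq\{g>0\}$ by hypothesis, so all three sets coincide and the inclusions hold.

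\emph{Membership and the identity for $k\geq 1$.} By Lemma~\ref{abstract_Strict_Lemma_2_2_minus1}(\ref{abstract_Strict_Lemma_2_2_minus1_s_i}) we have $s_i\in A$ and
\[
\sigma(s_i)=\xi(-f_i)+\varepsilon_i .
\]
The right-hand side lies in $A$ by (A2), (A0) and part (\ref{abstract_Strict_Lemma_2_2_minus1_varepsilon_i}). Hence $\sigma(s_i)\in A$, and $h\in A$ follows from (A0c). Substituting this expression for $\sigma(s_i)$ into the definition of $h$ gives
\[
h=\sum_{1\leq i\leq k}\xi(-f_i)f_i+\sum_{1\leq i\leq k}\varepsilon_if_i=\psi+\sum_{1\leq i\leq k}\varepsilon_if_i .
\]

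\emph{First inclusion $F\subseteq\{h\geq0\}$.} Take $x\in F$. Each $f_i(x)\geq0$, so (T$\xi2$) gives $\xi(-f_i(x))=0$, and therefore $\psi(x)=0$. Each $\varepsilon_i(x)>0$ by Lemma~\ref{abstract_Strict_Lemma_2_2_minus1}(\ref{abstract_Strict_Lemma_2_2_minus1_varepsilon_i}), so $h(x)=\sum_i\varepsilon_i(x)f_i(x)\geq0$.

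\emph{Second inclusion $\{h\geq0\}\subseteq\{g>0\}$.} This is the only step with real content, and I argue by contraposition. Suppose $g(x)\leq0$. Lemma~\ref{abstract_Strict_Lemma_2_2_minus1}(\ref{abstract_Strict_Lemma_2_2_minus1_varepsilon_i}) gives $\varepsilon_i(x)f_i(x)<-\psi(x)/k$ for each of the $k$ indices. Summing over $i$ yields
\[
\sum_{1\leq i\leq k}\varepsilon_i(x)f_i(x)<-\psi(x),
\]
so $h(x)<0$. The strict inequality survives summation because all $k\geq1$ summands are strict.

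One point needs care: $1/k$ denotes $(1+\cdots+1)^{-1}$ in $R$. Multiplying back by $k$ is legitimate because $R$ is an ordered field and hence has characteristic $0$.
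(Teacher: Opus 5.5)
Your proof is correct and follows the paper's argument essentially step for step. It uses the same $k=0$ split, the same substitution $\sigma(s_i)=\xi(-f_i)+\varepsilon_i$ for the identity, and the same contrapositive summation of $\varepsilon_if_i<-\psi/k$ for the second inclusion. The only local differences are equally valid: you get $\sigma(s_i)\in A$ from that identity rather than from (A3s), and you prove $F\subseteq\{h\geq0\}$ via $\psi=0$ and $\varepsilon_i>0$ on $F$ rather than via (T$\sigma1$).
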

\begin{proof}
There are two cases:

Case 1: ($k=0$) In this case, we have $h=0\in A$ by (A0) and $\psi+\sum_{1\leq i\leq k}\varepsilon_if_i=\psi+0=\psi=0$ (using our empty summation convention), thus the identity holds. Furthermore, in this case we have $\{h\geq 0\}=X$, and the assumption $\bigcap_{1\leq i\leq k}\{f_i\geq 0\}\subseteq\{g>0\}$ reduces to saying $X=\{g>0\}$. Thus the two inclusion relations follow trivially.

Case 2: ($k\geq 1$)
Since each $s_i\in A$ and $s_i>0$ on $X$ by Lemma~\ref{abstract_Strict_Lemma_2_2_minus1}(\ref{abstract_Strict_Lemma_2_2_minus1_s_i}), axiom (A3s) gives $\sigma(s_i)\in A$ for all $i=1,\ldots,k$. Hence $h\in A$ by (A0).

For the identity note that by Lemma~\ref{abstract_Strict_Lemma_2_2_minus1}(\ref{abstract_Strict_Lemma_2_2_minus1_s_i}):
\[
h \ = \ \sum_{1\leq i\leq k}\sigma(s_i)f_i \ = \  \sum_{1\leq i\leq k}(\xi(-f_i)+\varepsilon_i)f_i
\ = \ \sum_{1\leq i\leq k}\xi(-f_i)f_i+\sum_{1\leq i\leq k}\varepsilon_if_i \ = \ \psi+\sum_{1\leq i\leq k}\varepsilon_if_i
\]

For the first inclusion, suppose $x\in F$, i.e., $f_i(x)\geq 0$ for all $i$. Since $\sigma(s_i(x))\geq 0$ for all $i$ (by (T$\sigma 1$)), it follows that each summand $\sigma(s_i(x))f_i(x)\geq 0$, and thus $h(x)\geq 0$.

We prove the second inclusion by establishing the contrapositive. Let $x\in X$ and suppose $g(x)\leq 0$. By Lemma~\ref{abstract_Strict_Lemma_2_2_minus1}(\ref{abstract_Strict_Lemma_2_2_minus1_varepsilon_i}), for each $i$ we have $\varepsilon_i(x)f_i(x)<-\psi(x)/k$. Summing over all $i$ yields:
\[
\sum_{1\leq i\leq k}\varepsilon_i(x)f_i(x) \ < \ -\psi(x)
\]
Using the identity for $h$, we obtain
\[
h(x) \ = \ \psi(x)+\sum_{1\leq i\leq k}\varepsilon_i(x)f_i(x) \ < \ \psi(x)-\psi(x) \ = \ 0
\]
Thus $x\in \{h<0\}$. This shows the inclusion $\{h\geq 0\}\subseteq\{g>0\}$.
\end{proof}

\noindent For Lemmas~\ref{strict_proof_final_statement_minus2} and~\ref{strict_proof_final_statement_minus1} and Corollary~\ref{strict_proof_final_statement} below, we follow the arguments of (the proof of)~\cite[Theorem 1.1]{FischerPsatzDiff}.

\medskip\noindent Next define the following functions $X\to R$:
\begin{align*}
\varphi_1 \ &:= \ \tilde{\varphi}_1(g,f_1,\ldots,f_k) \ = \ \xi(g+4h) \\
\varphi_2 \ &:= \ \tilde{\varphi}_2(g,f_1,\ldots,f_k) \ = \ \xi(-(g+h)) \\
\varphi_3 \ &:= \ \tilde{\varphi}_3(g,f_1,\ldots,f_k) \ = \ \xi(-gh) \\
\varphi \ &:= \ \tilde{\varphi}(g,f_1,\ldots,f_k) \ = \ \varphi_1+\varphi_2+\varphi_3 \\
w \ &:= \ \tilde{w}(g,f_1,\ldots,f_k) \ = \ \left(\frac{g\varphi_1}{g+h}+\frac{(g+h)\varphi_2}{h}+\varphi_3\right)/\varphi
\end{align*}

\medskip\noindent The functions $\varphi_i,\varphi,w$ satisfy the following properties:

\begin{lemma}\label{strict_proof_final_statement_minus2}
We have:
\begin{enumerate}
\item $\varphi_1,\varphi_2,\varphi_3,\varphi\in A$ and $\varphi>0$ on $X$,
\item\label{strict_proof_final_statement_minus2_wformula} $w\in A$, $w>0$ on $X$, and:
\[
w(x) \ = \ \begin{cases}
\displaystyle\frac{g(x)}{g(x)+h(x)} & \text{if $h(x)\geq 0$} \\
\displaystyle\frac{g(x)+h(x)}{h(x)} & \text{if $g(x)\leq 0$} \\
\end{cases}
\]
\end{enumerate}
\end{lemma}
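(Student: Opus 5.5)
The plan is to work region by region, using Lemma~\ref{common_sign_geometry_lemma} applied to $g$ and $h$. By Lemma~\ref{abstract_Strict_Lemma_2_2} we have $h\in A$ and $\{h\geq0\}\subseteq\{g>0\}$, which is the stronger inclusion in that lemma. Hence $U_1,U_2,U_3$ cover $X$.

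For (1), I first note that $g+4h$, $-(g+h)$ and $-gh$ lie in $A$ by (A0). Then $\varphi_1,\varphi_2,\varphi_3\in A$ by (A2), and $\varphi\in A$ by (A0). By (T$\xi1$) each $\varphi_j\geq0$, and by (T$\xi2$) we have $\{\varphi_j>0\}=U_j$. Since the $U_j$ cover $X$, this gives $\varphi>0$ on $X$.

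For membership $w\in A$, I treat the three summands separately. The term $g\varphi_1/(g+h)$ is $g\cdot\xi(f)\cdot (g')^{-1}$ with $f:=g+4h$ and $g':=g+h$. Lemma~\ref{common_sign_geometry_lemma}(1) gives $\{f>0\}\subseteq\{g'>0\}$. However, (A4s) needs $\{f\geq0\}\subseteq\{g'\neq0\}$, so I must also check the boundary $g+4h=0$. If there $g+h=0$, then $h=0$, hence $g>0$ by the inclusion, contradicting $g=-4h=0$. So (A4s) applies, and multiplying by $g$ uses (A0). The term $(g+h)\varphi_2/h$ is handled the same way with $f:=-(g+h)$ and $g':=h$. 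If $g+h\geq0$ fails to be excluded at $h=0$, we would have $g\leq0$ and $h=0$, which the inclusion forbids. Finally, $\varphi^{-1}\in A$ by (A1), since $\varphi>0$.

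For positivity and the formula, I evaluate pointwise. On $U_1$, the first summand is $\varphi_1\cdot g/(g+h)$, and $g/(g+h)>0$ there: part (1) of the sign lemma gives $g+h>0$, and $g>0$ because $g+4h>0$ with either $h\geq0$ (so $g>0$) or $h<0$. On $U_2$, the quotient $(g+h)/h$ is positive by part (2). On $U_3$, the summand is $\varphi_3>0$. So $w$ is a convex combination, with weights $\varphi_j/\varphi$, of positive numbers on the active regions. This makes $w>0$.

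For the formula I use three observations:
\begin{itemize}
\item If $h(x)\geq0$, then $g>0$, so $x\in U_1$ and $x\notin U_2\cup U_3$. Thus $\varphi=\varphi_1$ and $w=g/(g+h)$.
\item If $g(x)\leq0$, then $h<0$. So $x\notin U_3$, and $x\notin U_1$ since $g+4h<0$, while $g+h<0$ places $x$ in $U_2$. Thus $w=(g+h)/h$.
\item On $U_3$ the values all agree by consistency, but that case is not needed here.
\end{itemize}

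The main obstacle is checking the boundary hypotheses of (A4s) in the form $\{f\geq0\}$ rather than $\{f>0\}$. I expect it to be settled by the argument above, namely that $g=h=0$ is impossible.
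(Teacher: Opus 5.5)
Your proposal is correct and follows the paper's proof essentially step for step. It uses the same (A0)/(A2) argument and sign-region cover for (1), the same two boundary checks feeding (A4s) together with (A1) for $w\in A$, and the same two pointwise cases for the formula; the only difference is that you get $w>0$ in one stroke from the nonnegative weights $\varphi_j/\varphi$, where the paper uses subcases 3a--3c. Two harmless slips: in the second boundary check the condition should read $-(g+h)\geq 0$ rather than $g+h\geq 0$, and the aside that the values ``all agree'' on $U_3$ is not correct (on $U_3$ one has, e.g., $g/(g+h)\neq 1$), though, as you note, nothing depends on it.
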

\begin{proof}
Below we will freely use the fact $\{h\geq 0\}\subseteq\{g>0\}$ from Lemma~\ref{abstract_Strict_Lemma_2_2} several times.

(1) Since $g,h\in A$ by Lemma~\ref{abstract_Strict_Lemma_2_2}, axiom (A0) gives $g+4h,-(g+h),-gh\in A$, hence by (A2) we have $\varphi_1,\varphi_2,\varphi_3\in A$, and thus also $\varphi\in A$ by (A0).

It remains to show $\varphi>0$ on $X$. Let $U_1,U_2,U_3$ be the sign regions from Lemma~\ref{common_sign_geometry_lemma}. By (T$\xi2$),
\[
\{\varphi_j>0\}=U_j\qquad(j=1,2,3).
\]
The stronger inclusion $\{h\geq0\}\subseteq\{g>0\}$ from Lemma~\ref{abstract_Strict_Lemma_2_2} implies, by Lemma~\ref{common_sign_geometry_lemma}, that $U_1\cup U_2\cup U_3=X$. Hence at every point at least one $\varphi_j$ is positive. Since all three are nonnegative by (T$\xi1$), their sum $\varphi$ is strictly positive on $X$.

(2) First we will show $w\in A$. This relies on the following two disjointness claims:
\begin{itemize}
\item $\{g+4h\geq 0\}\cap\{g+h=0\}=\varnothing$. Indeed, suppose $x\in X$ is such that $g(x)+h(x)=0$, i.e., $h(x)=-g(x)$. Suppose towards a contradiction that $g(x)+4h(x)\geq 0$, then $-3g(x)=g(x)+4h(x)\geq 0$ implies $g(x)\leq 0$ and thus $h(x)\geq 0$, contradicting $\{h\geq 0\}\subseteq\{g>0\}$.
\item $\{-(g+h)\geq 0\}\cap \{h=0\}=\varnothing$. Indeed, if $x\in X$ is such that $h(x)=0$, then $g(x)>0$; thus $-(g(x)+h(x))=-g(x)<0$, i.e., $x\not\in\{-(g+h)\geq 0\}$.
\end{itemize}
Thus by (A4s) we get $\xi(g+4h)(g+h)^{-1}=\varphi_1(g+h)^{-1}\in A$ and $\xi(-(g+h))h^{-1}=\varphi_2h^{-1}\in A$. Next, since $\varphi>0$ on $X$ by (1), we have $\varphi^{-1}\in A$ by (A1). It now follows that $w\in A$ by (A0).

Finally, we must show $w>0$ on $X$ and the indicated formula for $w(x)$. Let $x\in X$ be arbitrary, there are three cases to consider:

Case 1: $h(x)\geq 0$. Then $g(x)>0$, so $x\in U_1$. Also $x\not\in U_2$ since $g(x)+h(x)>0$, and $x\not\in U_3$ since $h(x)\geq 0$. Thus $\varphi_1(x)>0$ and $\varphi_2(x)=\varphi_3(x)=0$. Thus:
\[
w(x) \ = \ \left(\frac{g(x)\varphi_1(x)}{g(x)+h(x)}\right)/\varphi(x) \ = \ \frac{g(x)}{g(x)+h(x)}
\]
Since $g(x)>0$ and $h(x)\geq 0$, it follows that $w(x)>0$.

Case 2: $g(x)\leq 0$. Then necessarily $h(x)<0$, so $x\in U_2$. Also, $x\not\in U_1$, for otherwise $g(x)+4h(x)>0$ with $h(x)<0$ would imply $g(x)>-4h(x)>0$, a contradiction. Furthermore, $x\not\in U_3$ since $g(x)\leq 0$. Thus $\varphi_2(x)>0$ and $\varphi_1(x)=\varphi_3(x)=0$. Thus:
\[
w(x) \ = \ \left(\frac{(g(x)+h(x))\varphi_2(x)}{h(x)}\right)/\varphi(x) \ = \ \frac{g(x)+h(x)}{h(x)}
\]
Finally, since $g(x)+h(x)<0$ and $h(x)<0$, we get $w(x)>0$.

Case 3: $g(x)>0$ and $h(x)<0$. Then $\varphi_3(x)>0$. We distinguish three subcases:

Subcase 3a: $g(x)+h(x)<0$.
First we claim $\varphi_1(x)=0$; otherwise, if $g(x)+4h(x)>0$ we have together with $g(x)+h(x)<0$ that $3h(x)>0$, a contradiction. Thus $g(x)\varphi_1(x)/(g(x)+h(x))=0$.
Also $\varphi_2(x)>0$, and thus $(g(x)+h(x))\varphi_2(x)/h(x)>0$. It now follows that $w(x)>0$.

Subcase 3b: $g(x)+h(x)=0$. Then $\frac{g(x)\varphi_1(x)}{g(x)+h(x)}=0$ by definition of $0^{-1}=0$. Also $(g(x)+h(x))\varphi_2(x)/h(x)=0$ as well. Thus $w(x)>0$.

Subcase 3c: $g(x)+h(x)>0$. Then $\frac{g(x)\varphi_1(x)}{g(x)+h(x)}\geq 0$; $\varphi_2(x)=0$ and so $\frac{(g(x)+h(x))\varphi_2(x)}{h(x)}=0$. It follows that $w(x)>0$.
\end{proof}
\noindent Finally, define the following functions $X\to R$:
\begin{align*}
u \ &:= \ \tilde{u}(g,f_1,\ldots,f_k) \ := \ g-wh \\
t_0 \ &:= \ \tilde{t}_0(g,f_1,\ldots,f_k) \ = \ \rho(u) \\
t_i \ &:= \ \tilde{t}_i(g,f_1,\ldots,f_k) \ = \ \rho(w)s_i \quad\text{(for $i=1,\ldots,k$)}
\end{align*}

\medskip\noindent The functions $u,t_i$ satisfy the following properties:

\begin{lemma}\label{strict_proof_final_statement_minus1}
We have for $i=0,\ldots,k$:
\begin{enumerate}
\item\label{strict_proof_final_statement_minus1_1} $u\in A$ and $u>0$ on $X$,
\item\label{strict_proof_final_statement_minus1_2} $t_i\in A$ and $t_i>0$ on $X$, and
\item\label{strict_proof_final_statement_minus1_3} if $i\geq 1$, then $\sigma(t_i)=w\sigma(s_i)$.
\end{enumerate}
\end{lemma}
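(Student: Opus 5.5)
Membership in $A$ is immediate. We have $g,h,w\in A$ by Lemmas~\ref{abstract_Strict_Lemma_2_2} and~\ref{strict_proof_final_statement_minus2}, so $u=g-wh\in A$ by (A0).

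\textbf{Positivity of $u$.} Fix $x\in X$ and use the three cases of the proof of Lemma~\ref{strict_proof_final_statement_minus2}.
\begin{itemize}
\item If $h(x)\geq 0$, then $g(x)>0$ and $w(x)=g/(g+h)$, so
\[
u=g-\frac{gh}{g+h}=\frac{g^2}{g+h}>0 .
\]
\item If $g(x)\leq 0$, then $h(x)<0$ and $w=(g+h)/h$, so $u=g-(g+h)=-h>0$.
\item If $g(x)>0$ and $h(x)<0$, then $w(x)>0$ by Lemma~\ref{strict_proof_final_statement_minus2}, so $-wh>0$ and $u=g-wh>g>0$.
\end{itemize}
The three cases cover $X$, because $h(x)<0$ and $g(x)\le 0$ fails only if either $h(x)\ge0$ or $g(x)>0$. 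Hence $u>0$ on $X$, which proves (1).

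\textbf{The terms $t_i$.} For $t_0=\rho(u)$, apply (A3s) to $u$, which is everywhere positive; this gives $t_0\in A$. Lemma~\ref{sigma_rho_lemma}(\ref{sigma_rho_lemma_pos_rho_pos}) gives $t_0>0$. For $i\geq1$, apply (A3s) to $w$, which is positive by Lemma~\ref{strict_proof_final_statement_minus2}; this gives $\rho(w)\in A$ with $\rho(w)>0$. Lemma~\ref{abstract_Strict_Lemma_2_2_minus1}(\ref{abstract_Strict_Lemma_2_2_minus1_s_i}) gives $s_i\in A$ with $s_i>0$. Hence $t_i=\rho(w)s_i\in A$ by (A0), and $t_i>0$. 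This proves (2).

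For (3), evaluate pointwise. Both $\rho(w(x))\geq0$ and $s_i(x)\geq0$, so (T$\sigma2$) applies and, with (T$\sigma\rho$),
\[
\sigma(t_i)=\sigma(\rho(w))\,\sigma(s_i)=w\,\sigma(s_i).
\]

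The only real check is the sign analysis for $u$, specifically confirming that the case formula for $w$ is used only where it holds. The third region needs only $w>0$, not an explicit formula. Everything else is a direct invocation of axioms. Note that (T$\sigma2$) requires nonnegative arguments, which are supplied by (T$\rho$) and positivity of $s_i$.
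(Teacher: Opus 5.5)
Your proposal is correct and follows essentially the same route as the paper. The paper uses the same sign analysis for $u$ (your cases $g(x)\le 0$ and $g(x)>0,\ h(x)<0$ are its two subcases of $h(x)<0$, since $g\le 0$ forces $h<0$). It also uses the same appeal to (A3s) and Lemma~\ref{sigma_rho_lemma} for $t_0$ and $\rho(w)$, and the same (T$\sigma2$)/(T$\sigma\rho$) computation for $\sigma(t_i)=w\,\sigma(s_i)$.
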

\begin{proof}
(1) Since $g,w,h\in A$ by Lemmas~\ref{strict_proof_final_statement_minus2}(~\ref{strict_proof_final_statement_minus2_wformula}) and~\ref{abstract_Strict_Lemma_2_2}, it follows that $u\in A$ since $A$ is a ring by (A0). It remains to show $u>0$ on $X$. Let $x\in X$ be arbitrary, there are two cases:

Case 1: ($h(x)\geq 0$). Then by Lemma~\ref{strict_proof_final_statement_minus2}(\ref{strict_proof_final_statement_minus2_wformula}) we have in this case:
\[
w(x) \ = \ \frac{g(x)}{g(x)+h(x)}
\]
hence:
\[
u(x) \ = \ g(x)-w(x)h(x) \ = \ g(x)-\frac{g(x)h(x)}{g(x)+h(x)} \ = \ \frac{g(x)^2}{g(x)+h(x)}
\]
However, since $\{h\geq 0\}\subseteq\{g>0\}$ by Lemma~\ref{abstract_Strict_Lemma_2_2}, it follows that $g(x)+h(x)>0$ and $g(x)^2>0$, thus $u(x)>0$.

Case 2: ($h(x)<0$) There are two subcases:

Subcase 2a: ($g(x)\leq 0$) Then by Lemma~\ref{strict_proof_final_statement_minus2}(\ref{strict_proof_final_statement_minus2_wformula}) we have:
\[
w(x) \ = \ \frac{g(x)+h(x)}{h(x)}
\]
and thus
\[
u(x) \ = \ g(x)-w(x)h(x) \ = \ g(x)-(g(x)+h(x)) \ = \ -h(x) \ > \ 0
\]

Subcase 2b: ($g(x)>0$) Then since $w(x)>0$ by Lemma~\ref{strict_proof_final_statement_minus2}(\ref{strict_proof_final_statement_minus2_wformula}) and $h(x)<0$, we have:
\[
u(x) \ = \ g(x)-w(x)h(x) \ = \ g(x)+w(x)(-h(x)) \ > \ 0
\]

(2) Since $u\in A$ and $u>0$ on $X$ by (1), axiom (A3s) gives $t_0=\rho(u)\in A$. Moreover, by Lemma~\ref{sigma_rho_lemma}(\ref{sigma_rho_lemma_pos_rho_pos}), we have $t_0>0$ on $X$.

Likewise, since $w\in A$ and $w>0$ on $X$ by Lemma~\ref{strict_proof_final_statement_minus2}(\ref{strict_proof_final_statement_minus2_wformula}), axiom (A3s) and Lemma~\ref{sigma_rho_lemma}(\ref{sigma_rho_lemma_pos_rho_pos}) give $\rho(w)\in A$ and $\rho(w)>0$ on $X$. Thus for $i\geq 1$ we have $t_i=\rho(w)s_i\in A$ by (A0), and $t_i>0$ on $X$ because we also have $s_i>0$ on $X$ by Lemma~\ref{abstract_Strict_Lemma_2_2_minus1}(\ref{abstract_Strict_Lemma_2_2_minus1_s_i}).

(3) Let $i\geq 1$. Since $w>0$ on $X$ by Lemma~\ref{strict_proof_final_statement_minus2}(\ref{strict_proof_final_statement_minus2_wformula}), we have $\rho(w) > 0$ on $X$ by Lemma~\ref{sigma_rho_lemma}(\ref{sigma_rho_lemma_pos_rho_pos}). Also we have $s_i>0$ on $X$ by Lemma~\ref{abstract_Strict_Lemma_2_2_minus1}(\ref{abstract_Strict_Lemma_2_2_minus1_s_i}). Therefore by (T$\sigma2$) and (T$\sigma\rho$) we have:
\[
\sigma(t_i) \ = \ \sigma(\rho(w)s_i) \ = \ \sigma(\rho(w))\sigma(s_i) \ = \ w\sigma(s_i)\qedhere
\]
\end{proof}

\noindent In conclusion, we have:

\begin{corollary}\label{strict_proof_final_statement}
\hfill
\[
g \ = \ \sigma(t_0)+\sum_{1\leq i\leq k}\sigma(t_i)f_i
\]
\end{corollary}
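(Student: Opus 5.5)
The identity is a pointwise computation, so I fix an arbitrary $x\in X$ and verify
\[
g(x) \ = \ \sigma(t_0(x))+\sum_{1\leq i\leq k}\sigma(t_i(x))f_i(x).
\]

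\emph{First term.} By Lemma~\ref{strict_proof_final_statement_minus1}(\ref{strict_proof_final_statement_minus1_1}) we have $u>0$ on $X$. Axiom (T$\sigma\rho$) then gives
\[
\sigma(t_0) \ = \ \sigma(\rho(u)) \ = \ u \ = \ g-wh .
\]

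\emph{Remaining terms.} By Lemma~\ref{strict_proof_final_statement_minus1}(\ref{strict_proof_final_statement_minus1_3}), $\sigma(t_i)=w\,\sigma(s_i)$ for each $i\geq1$. Hence
\[
\sum_{1\leq i\leq k}\sigma(t_i)f_i \ = \ w\sum_{1\leq i\leq k}\sigma(s_i)f_i \ = \ wh ,
\]
where the last equality is just the definition of $h$ (as in Lemma~\ref{abstract_Strict_Lemma_2_2}). Adding the two displays gives $(g-wh)+wh=g$, which is the claimed identity.

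\emph{The case $k=0$.} The sum is empty and $h=0$, so $u=g$. The hypothesis reads $X=\{g>0\}$, so Lemma~\ref{strict_proof_final_statement_minus1} still applies and yields $\sigma(\rho(g))=g$. This agrees with the zero-constraint sanity check.

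There is no real obstacle, because membership in $A$ and strict positivity of the $t_i$ were already proved in Lemma~\ref{strict_proof_final_statement_minus1}(\ref{strict_proof_final_statement_minus1_2}). The only point needing care is that (T$\sigma\rho$) and (T$\sigma2$) require nonnegative arguments. This is guaranteed by $u>0$, $w>0$ and $s_i>0$, all established earlier. Together with Lemma~\ref{strict_proof_final_statement_minus1}(\ref{strict_proof_final_statement_minus1_2}), this completes the proof of the Strict Positivstellensatz~\ref{strict_Positivstellensatz_abstract}.
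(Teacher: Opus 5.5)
Your proof is correct and follows the paper's argument exactly. You use $\sigma(t_0)=\sigma(\rho(u))=u$ by (T$\sigma\rho$) with $u>0$, and $\sigma(t_i)=w\,\sigma(s_i)$ from the preceding lemma, so the right-hand side is $u+wh=(g-wh)+wh=g$; your separate treatment of $k=0$ is harmless but unnecessary, since the preceding lemmas already cover all $k\geq 0$.
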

\begin{proof}
Note that:
\begin{align*}
\sigma(t_0)+\sum_{1\leq i\leq k}\sigma(t_i)f_i \ &= \ \sigma(\rho(u))+\sum_{1\leq i\leq k}w\sigma(s_i)f_i \quad\text{(by definition of $t_0$ and Lemma~\ref{strict_proof_final_statement_minus1}(\ref{strict_proof_final_statement_minus1_3}))} \\
&= \ u+w\sum_{1\leq i\leq k}\sigma(s_i)f_i \quad\text{(by (T$\sigma\rho$) and $u>0$ on $X$ by Lemma~\ref{strict_proof_final_statement_minus1}(\ref{strict_proof_final_statement_minus1_1}))} \\
&= \ u+wh\quad\text{(by definition of $h$)} \\
&= \ (g-wh)+wh \quad\text{(by definition of $u$)}\\
&= \ g \qedhere
\end{align*}
\end{proof}

\section{Proof of the weak Positivstellensatz}\label{subsection_proof_weak}

\noindent In this appendix we prove the Weak Positivstellensatz~\ref{weak_Positivstellensatz_abstract}, which will be established as Corollary~\ref{weak_proof_final_statement} below. The overall strategy is similar to the proof in the previous Appendix~\ref{subsection_proof_strict}. Namely, we will construct a sequence of functions in $X\to R$:
\[
f_i \ \leadsto \ h \ \leadsto \ \varphi \ \leadsto \ q \ \leadsto \ \omega \ \leadsto \ u \ \leadsto \ (\psi_i,\Psi_i) \ \leadsto \ (p,t_i)
\]
as defined by the sequence of $\calL_{\operatorname{wP}}$-terms presented in the weak construction of subsection~\ref{appendix_universal_constructions}, all while checking along the way that they have the desired properties in the axiomatic setting.

\medskip\noindent First, we have a $\sigma$-version of Hilbert's 17th problem, in the sense that a nonnegative function $f\in A$ can be written as a quotient of $\sigma$-values of functions from $A$ in the following sense:

\begin{H17Lemma}(cf.~\cite[Lemma 2.3]{FischerPsatzDiff})\label{H17_lemma_psatz_weak}
Let $\bm{R}\models T_{\operatorname{wP}}$, let $A\subseteq R^X$ satisfy (A0), (A2), and (A3w), and let $f\in A$. If $f\geq 0$ on $X$, then the functions
\[
\psi \ := \ \xi(f)\quad\text{and}\quad \Psi \ := \ \rho(f)\cdot\xi(f)
\]
satisfy:
\begin{enumerate}
\item the functions $\psi,\Psi,\sigma(\psi),\sigma(\Psi)$ are in $A$ and are nonnegative on $X$,
\item $\sigma(\psi)\cdot f=\sigma(\Psi)$, and
\item $\{\psi=0\}=\{\Psi=0\}=\{f=0\}$.
\end{enumerate}
\end{H17Lemma}
\begin{proof}
(1) Since $f\in A$, axiom (A2) gives $\psi=\xi(f)\in A$. Also, by (T$\xi1$), we have $\psi\geq 0$ on $X$. Since $f\geq 0$ and $f\in A$, axiom (A3w) applied to $f$ yields $\Psi=\rho(f)\cdot\xi(f)\in A$. Moreover, by (T$\rho$) and (T$\xi1$), both $\rho(f)$ and $\xi(f)$ are nonnegative on $X$, hence $\Psi\geq 0$ on $X$. Finally, axiom (A3w) yields $\sigma(\psi),\sigma(\Psi)\in A$ and axiom (T$\sigma1$) gives that they are nonnegative on $X$.

(2) Fix $x\in X$. Since $f(x)\geq 0$, we have $\rho(f(x))\geq 0$ by (T$\rho$), and also $\psi(x)=\xi(f(x))\geq 0$ by (T$\xi1$). Hence (T$\sigma2$) applies to $\rho(f(x))\psi(x)$, giving
\[
\sigma(\Psi(x)) \ = \ \sigma(\rho(f(x))\psi(x)) \ = \ \sigma(\rho(f(x)))\sigma(\psi(x))
\]
Since $f(x)\geq 0$, axiom (T$\sigma\rho$) gives $\sigma(\rho(f(x)))=f(x)$. Thus $\sigma(\Psi(x))=f(x)\sigma(\psi(x))$. Since $x\in X$ was arbitrary, this yields $\sigma(\psi)\cdot f=\sigma(\Psi)$.

(3) Fix $x\in X$. Since $f(x)\geq 0$, by (T$\xi2$) we have $\psi(x)=\xi(f(x))>0$ iff $f(x)>0$. Because $f(x)\geq 0$, this is equivalent to $\psi(x)=0$ iff $f(x)=0$. Hence $\{\psi=0\}=\{f=0\}$.

It remains to prove $\{\Psi=0\}=\{f=0\}$. If $f(x)=0$, then $\xi(f(x))=\xi(0)=0$, and $\rho(f(x))=\rho(0)=0$, hence $\Psi(x)=0$. Conversely, suppose $\Psi(x)=0$. Since $f\geq 0$ on $X$, if $f(x)\neq 0$, then $f(x)>0$. By (T$\xi2$) we have $\xi(f(x))>0$, and by Lemma~\ref{sigma_rho_lemma}(\ref{sigma_rho_lemma_pos_rho_pos}) we have $\rho(f(x))>0$. Hence
\[
\Psi(x) \ = \ \rho(f(x))\xi(f(x)) \ > \ 0
\]
a contradiction. Thus $f(x)=0$.
\end{proof}

\noindent \emph{For the rest of this subsection, we fix $k\geq 0$, a model $\bm{R}\models T_{\operatorname{wP}}$, an algebra $A\subseteq R^X$ satisfying (AwP), and functions $g,f_1,\ldots,f_k\in A$ satisfying:
\[
F \ := \ \bigcap_{1\leq i\leq k}\{f_i\geq 0\} \ \subseteq \ \{g\geq0\}
\]}

\medskip\noindent Define the following function $X\to R$:
\[
h \ := \ \tilde{h}(f_1,\ldots,f_k) \ = \ \sum_{1\leq i\leq k}\sigma(\xi(-f_i))f_i
\]

\begin{lemma}\label{build_h_lemma}
The function $h$ belongs to $A$ and satisfies:
\[
h \ \leq \ 0 \ \text{on $X$},\quad F \ = \ \{h=0\} \ = \ \{h\geq 0\} \ \subseteq \ \{g\geq0\}
\]
\end{lemma}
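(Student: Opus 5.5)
The plan is to verify membership first, then prove the sign claims pointwise, treating $k=0$ separately.

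\emph{Membership.} For each $i$, the function $-f_i$ lies in $A$ by (A0), so $\xi(-f_i)\in A$ by (A2). By (T$\xi1$), $\xi(-f_i)\geq 0$ on $X$, so $\{\xi(-f_i)\geq 0\}=X$. Axiom (A3w) then gives $\sigma(\xi(-f_i))\in A$, and (A0) gives $h\in A$. When $k=0$ we have $h=0$ and $F=X$, so every claim is trivial; the inclusion $F\subseteq\{g\geq 0\}$ is just the standing hypothesis.

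\emph{Sign of $h$.} Fix $x\in X$ and $i$. If $f_i(x)\geq 0$, then $-f_i(x)\leq 0$, so (T$\xi2$) gives $\xi(-f_i(x))=0$. Lemma~\ref{sigma_rho_lemma}(\ref{sigma_rho_lemma_sigma0_0}) then gives $\sigma(\xi(-f_i(x)))=0$, so the $i$-th summand vanishes. If $f_i(x)<0$, then $\xi(-f_i(x))>0$ by (T$\xi2$), hence $\sigma(\xi(-f_i(x)))>0$ by Lemma~\ref{sigma_rho_lemma}(\ref{sigma_rho_lemma_pos_sigma_pos}), and the $i$-th summand is strictly negative. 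So every summand is $\leq 0$, and it is $<0$ exactly when $f_i(x)<0$. Summing gives $h\leq 0$ on $X$.

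\emph{Identification of the sets.} Since $h\leq 0$, we get $\{h\geq 0\}=\{h=0\}$. A sum of nonpositive terms vanishes iff each term does. By the previous paragraph, this happens iff $f_i(x)\geq 0$ for all $i$, i.e.\ iff $x\in F$. The final inclusion $F\subseteq\{g\geq 0\}$ is the standing hypothesis.

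The only delicate point is the need for $\sigma(0)=0$ and for $\sigma$ to be positive on positive inputs. Both are already supplied by Lemma~\ref{sigma_rho_lemma}, so I expect no real obstacle. The one thing to check is that (A3w), rather than the strict (A3s), is enough for $\sigma(\xi(-f_i))\in A$; it is, because (A3w) needs only nonnegativity.
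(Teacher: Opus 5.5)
Your proof is correct and follows the paper's argument essentially step for step. Membership comes from (A2), then (A3w) applied to the nonnegative function $\xi(-f_i)$, then (A0); next comes a termwise sign analysis via (T$\xi1$), (T$\xi2$) and Lemma~\ref{sigma_rho_lemma}, closing with the observation that a sum of nonpositive terms vanishes iff each term does. Your explicit appeal to $\sigma(0)=0$ (Lemma~\ref{sigma_rho_lemma}(\ref{sigma_rho_lemma_sigma0_0})) when $f_i(x)\geq 0$ supplies a step the paper leaves implicit, and the separate $k=0$ remark is harmless.
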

\begin{proof}
First, since $f_i\in A$, axiom (A2) gives $\xi(-f_i)\in A$. By axiom (A3w), applied to the nonnegative function $\xi(-f_i)$, we obtain $\sigma(\xi(-f_i))\in A$. Hence by (A0) we conclude $h\in A$.

Next we will show $h\leq 0$; let $x\in X$ be arbitrary. For each $i$ we have $\xi(-f_i(x))\geq 0$ by (T$\xi1$), hence also $\sigma(\xi(-f_i(x)))\geq 0$ by (T$\sigma1$). Now consider the sign of the factor $f_i(x)$:
\begin{itemize}
\item If $f_i(x)\geq 0$, then $-f_i(x)\leq 0$, so by (T$\xi2$) we get $\xi(-f_i(x))=0$, and hence:
\[
\sigma(\xi(-f_i(x)))f_i(x) \ = \ 0
\]
\item If $f_i(x)<0$, then $\xi(-f_i(x))>0$ by (T$\xi2$), and thus $\sigma(\xi(-f_i(x)))> 0$ by Lemma~\ref{sigma_rho_lemma}(\ref{sigma_rho_lemma_pos_sigma_pos}). Since $f_i(x)<0$ we get:
\[
\sigma(\xi(-f_i(x)))f_i(x) \ < \ 0
\]
\end{itemize}
Thus each summand of $h(x)$ is $\leq 0$, hence $h(x)\leq 0$. Thus $h\leq 0$ on $X$, and so $\{h=0\}=\{h\geq 0\}$.

To show $F\subseteq\{h=0\}$, suppose $f_i(x)\geq 0$ for all $i$. Then $\sigma(\xi(-f_i(x)))f_i(x)=0$ for all $i$, as indicated above. Summing gives $h(x)=0$.

Finally, to show $\{h=0\}\subseteq F\subseteq\{g\geq 0\}$, suppose $h(x)=0$. Then by the above it must be the case that $f_i(x)\geq 0$ for every $i$, i.e., $x\in F$. The inclusion $F\subseteq\{g\geq 0\}$ is true by assumption.
\end{proof}

\noindent Define the following functions $X\to R$:
\begin{align*}
\varphi_1 \ &:= \ \tilde{\varphi}_1(g,f_1,\ldots,f_k) \ = \ \xi(g+4h) \\
\varphi_2 \ &:= \ \tilde{\varphi}_2(g,f_1,\ldots,f_k) \ = \ \xi(-(g+h)) \\
\varphi_3 \ &:= \ \tilde{\varphi}_3(g,f_1,\ldots,f_k) \ = \ \xi(-gh) \\
\varphi \ &:= \ \tilde{\varphi}(g,f_1,\ldots,f_k) \ = \ \varphi_1+\varphi_2+\varphi_3
\end{align*}

\begin{lemma}\label{build_varphi_lemma_weak}
The functions $\varphi_1,\varphi_2,\varphi_3,\varphi$ satisfy:
\begin{enumerate}
\item\label{build_varphi_lemma_weak_inA} $\varphi_1,\varphi_2,\varphi_3\in A$,
\item\label{build_varphi_lemma_weak_nonneg} $\varphi_1,\varphi_2,\varphi_3,\varphi\geq 0$ on $X$, and
\item\label{build_varphi_lemma_weak_zeroset_varphi} $\{\varphi=0\}=\{g=0\}\cap\{h=0\}$.
\end{enumerate}
\end{lemma}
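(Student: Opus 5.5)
The plan is to derive all three items from the axioms (A0), (A2), (T$\xi1$), (T$\xi2$), together with Lemma~\ref{build_h_lemma} and the common sign geometry Lemma~\ref{common_sign_geometry_lemma}. No new construction is needed: the point is to check that the weak setting, where only the non-strict inclusion $\{h\geq0\}\subseteq\{g\geq0\}$ is available, still fits the hypotheses of Lemma~\ref{common_sign_geometry_lemma}.

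For (\ref{build_varphi_lemma_weak_inA}), I will argue as in the strict case. By Lemma~\ref{build_h_lemma} we have $h\in A$, and $g\in A$ by assumption. Axiom (A0) then gives $g+4h$, $-(g+h)$ and $-gh$ in $A$, and (A2) gives $\varphi_1,\varphi_2,\varphi_3\in A$; if convenient I will also record $\varphi\in A$, again by (A0). Item (\ref{build_varphi_lemma_weak_nonneg}) is immediate pointwise: each $\varphi_j\geq0$ by (T$\xi1$), and a sum of nonnegative values is nonnegative.

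For (\ref{build_varphi_lemma_weak_zeroset_varphi}), Lemma~\ref{build_h_lemma} supplies $\{h\geq0\}\subseteq\{g\geq0\}$, which is exactly the hypothesis of Lemma~\ref{common_sign_geometry_lemma}. With $U_1,U_2,U_3$ as there, axiom (T$\xi2$) gives $\{\varphi_j>0\}=U_j$. Since each $\varphi_j\geq0$, a point satisfies $\varphi(x)=0$ if and only if $\varphi_j(x)=0$ for all $j$, that is, if and only if $x\notin U_1\cup U_2\cup U_3$. Lemma~\ref{common_sign_geometry_lemma}(4) identifies the complement of $U_1\cup U_2\cup U_3$ as $\{g=0\}\cap\{h=0\}$, which is the claimed zero set.

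There is no real obstacle in this argument. The only point to watch is that the weak hypothesis $F\subseteq\{g\geq0\}$ really yields $\{h\geq0\}\subseteq\{g\geq0\}$, and this is precisely the chain $\{h\geq0\}=F\subseteq\{g\geq0\}$ in Lemma~\ref{build_h_lemma}. The remaining step is the pointwise equivalence, that a sum of nonnegative elements of an ordered field vanishes if and only if each summand vanishes.
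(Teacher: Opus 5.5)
Your proposal is correct and is essentially the paper's own proof: membership via (A0) and (A2), nonnegativity via (T$\xi1$), and the zero set via the chain $\{h\geq0\}=F\subseteq\{g\geq0\}$ from Lemma~\ref{build_h_lemma}, which lets Lemma~\ref{common_sign_geometry_lemma}(4) identify the complement of $U_1\cup U_2\cup U_3$ as $\{g=0\}\cap\{h=0\}$. One detail that both you and the paper leave implicit: (T$\xi2$) literally gives $\{\varphi_3>0\}=\{gh<0\}$, and this coincides with $U_3$ only because the inclusion $\{h\geq0\}\subseteq\{g\geq0\}$ rules out $g<0<h$.
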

\begin{proof}
(1) Since $g\in A$ by assumption and $h\in A$ by Lemma~\ref{build_h_lemma}, it follows from (A0) and (A2) that $\varphi_1,\varphi_2,\varphi_3,\varphi\in A$.

(2) This is immediate from (T$\xi1$).

(3) Lemma~\ref{build_h_lemma} gives $\{h\geq0\}=\{h=0\}\subseteq\{g\geq0\}$, so Lemma~\ref{common_sign_geometry_lemma} applies. By (T$\xi2$), the positive sets of $\varphi_1,\varphi_2,\varphi_3$ are precisely the corresponding sign regions $U_1,U_2,U_3$. Since the $\varphi_j$ are nonnegative,
\[
\varphi(x)=0
\quad\Longleftrightarrow\quad
x\notin U_1\cup U_2\cup U_3.
\]
Lemma~\ref{common_sign_geometry_lemma}(4) identifies the latter complement with $\{g=0\}\cap\{h=0\}$.
\end{proof}

\noindent Define the following function $X\to R$:
\[
q \ := \ \tilde{q}(g,f_1,\ldots,f_k) \ = \ \varphi\cdot\xi(g^2)\cdot(\xi(-h)+\xi(g)\cdot\xi(g+h))
\]

\begin{lemma}\label{build_q_lemma}
The function $q$ satisfies:
\begin{enumerate}
\item\label{build_q_lemma_inA} $q\in A$,
\item\label{build_q_lemma_nonneg} $q\geq 0$ on $X$, and
\item\label{build_q_lemma_zero_implies_g_zero} $\{q=0\}=\{g=0\}$.
\end{enumerate}
\end{lemma}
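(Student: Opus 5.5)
Membership and nonnegativity are routine, and the content lies in the zero set. For (1), I would note that $g,h\in A$ (Lemma~\ref{build_h_lemma}), so $g^2$, $-h$, $g+h$ lie in $A$ by (A0). Then (A2) gives $\xi(g^2),\xi(-h),\xi(g),\xi(g+h)\in A$. Together with $\varphi\in A$ from Lemma~\ref{build_varphi_lemma_weak}(\ref{build_varphi_lemma_weak_inA}) and (A0), this yields $q\in A$. For (2), every factor is nonnegative by (T$\xi1$) and Lemma~\ref{build_varphi_lemma_weak}(\ref{build_varphi_lemma_weak_nonneg}); the sum $\xi(-h)+\xi(g)\xi(g+h)$ is a sum of nonnegative terms. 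Hence $q\geq0$.

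For (3), I would prove the two inclusions pointwise. If $g(x)=0$, then $\xi(g(x)^2)=\xi(0)=0$ by (T$\xi2$) and (T$\xi1$), so $q(x)=0$.

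Conversely, suppose $g(x)\neq0$; I show each of the three factors is positive at $x$.
\begin{itemize}
\item First, $g(x)^2>0$, so $\xi(g(x)^2)>0$.
\item Second, $\varphi(x)>0$ by Lemma~\ref{build_varphi_lemma_weak}(\ref{build_varphi_lemma_weak_zeroset_varphi}), since $x\notin\{g=0\}$.
\item Third, for the factor $\xi(-h)+\xi(g)\xi(g+h)$, I split on the sign of $h(x)$, using Lemma~\ref{build_h_lemma} ($h\leq0$ and $\{h=0\}\subseteq\{g\geq0\}$).
\begin{itemize}
\item If $h(x)<0$, then $\xi(-h(x))>0$.
\item If $h(x)=0$, then $g(x)\geq0$, hence $g(x)>0$ since $g(x)\neq0$. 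So $g(x)+h(x)=g(x)>0$, giving $\xi(g(x))>0$ and $\xi(g(x)+h(x))>0$.
\end{itemize}
Either way the third factor is strictly positive, since it is a sum of nonnegative terms with one positive.
\end{itemize}
A product of three positive elements of an ordered field is positive, so $q(x)>0$. Thus $\{q=0\}=\{g=0\}$.

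The only mildly delicate point is the case $h(x)=0$. There one must invoke the inclusion $\{h\geq0\}\subseteq\{g\geq0\}$ to exclude $g(x)<0$. This explains the design of the factor $\xi(g)\xi(g+h)$, as opposed to, e.g., $\xi(g+h)$ alone.
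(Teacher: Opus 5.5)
Your proof is correct and follows the paper's argument. The only difference is that you establish positivity of $\xi(-h)+\xi(g)\xi(g+h)$ by splitting directly on the sign of $h(x)$, using $h\leq 0$, whereas the paper first splits on the sign of $g(x)$; this reorganization is inessential.

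One caveat on your closing remark: when $h(x)=0$ and $g(x)>0$ one also has $\xi(g+h)=\xi(g)>0$. So this lemma does not distinguish $\xi(g)\xi(g+h)$ from $\xi(g+h)$ alone, and the inclusion $\{h\geq 0\}\subseteq\{g\geq 0\}$ is needed in either version.
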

\begin{proof}
(1) Since $h,\varphi\in A$ by Lemmas~\ref{build_h_lemma} and~\ref{build_varphi_lemma_weak}(\ref{build_varphi_lemma_weak_inA}), it follows from (A0) and (A2) that $q\in A$.

(2) Since each factor in $q$ is $\geq 0$ on $X$ by Lemma~\ref{build_varphi_lemma_weak}(\ref{build_varphi_lemma_weak_nonneg}) and (T$\xi1$), it follows that $q\geq 0$ on $X$.

(3) Let $x\in X$ be arbitrary. First, if $g(x)=0$, then $g(x)^2=0$, so $\xi(g(x)^2)=0$, hence $q(x)=0$.

Conversely, suppose $g(x)\neq 0$. Then $g(x)^2>0$, so $\xi(g(x)^2)>0$ by (T$\xi2$). Also, by Lemma~\ref{build_varphi_lemma_weak}(\ref{build_varphi_lemma_weak_zeroset_varphi}), $x\not\in\{g=0\}\cap\{h=0\}=\{\varphi=0\}$ implies $\varphi(x)>0$. It remains to show:
\[
\xi(-h(x))+\xi(g(x))\xi(g(x)+h(x)) \ > \ 0
\]
If $g(x)<0$, then $h(x)\neq 0$ by Lemma~\ref{build_h_lemma}; since $h\leq 0$, this gives $h(x)<0$, so $\xi(-h(x))>0$. If $g(x)>0$, then either $h(x)<0$, in which case again $\xi(-h(x))>0$, or $h(x)=0$, in which case $g(x)+h(x)=g(x)>0$, so $\xi(g(x))(\xi(g(x)+h(x)))>0$.

Since every factor in $q(x)$ is $>0$, it follows that $q(x)>0$. Thus $q(x)=0$ implies $g(x)=0$, and we conclude $\{q=0\}=\{g=0\}$.
\end{proof}

\noindent Define the following functions $X\to R$:
\begin{align*}
\omega_1 \ &:= \ \tilde{\omega}_1(g,f_1,\ldots,f_k) \ = \ \frac{q\varphi_3}{\varphi} \\
\omega_2 \ &:= \ \tilde{\omega}_2(g,f_1,\ldots,f_k) \ = \ \frac{q\varphi_2(g+h)}{h\varphi} \\
\omega_3 \ &:= \ \tilde{\omega}_3(g,f_1,\ldots,f_k) \ = \ \frac{q\varphi_1 g}{(g+h)\varphi} \\
\omega \ &:= \ \omega_1+\omega_2+\omega_3
\end{align*}

\begin{remark}
The function $\omega$ plays the same role as ``$qw$ extended by $0$'' in the proof of~\cite[Theorem 1.2]{FischerPsatzDiff}. We avoid directly defining a function ``$w$'' as this is not guaranteed to be in the algebra $A$.
\end{remark}

\begin{lemma}\label{build_omega_lemma}
The function $\omega$ satisfies:
\begin{enumerate}
\item\label{build_omega_lemma_inA} $\omega\in A$,
\item\label{build_omega_lemma_nonneg} $\omega\geq 0$ on $X$,
\item\label{build_omega_lemma_zero_equal_g_zero} $\{\omega=0\}=\{g=0\}$, and
\item\label{build_omega_lemma_key_inequality} for every $x\in X$ with $g(x)\neq 0$:
\[
q(x)g(x)-\omega(x)h(x) \ > \ 0
\]
\end{enumerate}
\end{lemma}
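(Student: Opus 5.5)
The plan is to first rewrite each $\omega_j$ as an honest product of elements of $A$, using cancellation of $\varphi$ together with axioms (A4w) and (A5w). Nonnegativity, the zero set and the key inequality will then follow from a pointwise analysis on the sign regions $U_1,U_2,U_3$ of Lemma~\ref{common_sign_geometry_lemma}. Write $r:=\xi(g^2)\cdot(\xi(-h)+\xi(g)\xi(g+h))\in A$, so that $q=\varphi\cdot r$. By Lemma~\ref{build_varphi_lemma_weak}, $\varphi>0$ off $\{g=0\}\cap\{h=0\}$. On that set we have $q=0$, so each raw $\omega_j$ vanishes there (with the convention $0^{-1}=0$). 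Hence pointwise $q\varphi_j/\varphi=r\varphi_j$ everywhere.

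\textbf{Membership (the main obstacle).}
\begin{itemize}
\item For $\omega_1$ the rewriting gives $\omega_1=r\varphi_3\in A$ by (A0), (A2).
\item For $\omega_2$ we get $\omega_2=r\varphi_2(g+h)h^{-1}$. Where $\varphi_2>0$ we have $g+h<0$, so $\xi(g+h)=0$ and the second summand of $r$ dies. Hence $\omega_2=\xi(g^2)\,\varphi_2\,(g+h)\cdot\bigl(\xi(-h)h^{-1}\bigr)$, and the last factor lies in $A$ by (A4w).
\item For $\omega_3$ we get $\omega_3=g\,(\xi(-h)+\xi(g)\xi(g+h))\cdot\bigl(\xi(g^2)\xi(g+4h)(g+h)^{-1}\bigr)$. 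The bracketed factor lies in $A$ by (A5w), which applies because $\{h\geq0\}\subseteq\{g\geq0\}$ by Lemma~\ref{build_h_lemma}.
\end{itemize}
Each of these identities must be checked pointwise, including at points with $h=0$ or $g+h=0$. Lemma~\ref{common_sign_geometry_lemma}(1),(2) shows that $\varphi_1=0$ where $g+h\leq 0$ and $\varphi_2=0$ where $h=0$, so both sides vanish at those points. This bookkeeping is where I expect care is needed.

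\textbf{Nonnegativity and zero set.} On $U_3$ the term $\omega_1=r\varphi_3$ is $\geq0$. On $U_2$ we have $(g+h)/h>0$, so $\omega_2\geq0$. On $U_1$ we have $g+h>0$ and $g\geq0$: if $h\geq0$ this is the hypothesis, and if $h<0$ then $g>-4h>0$. Hence $\omega_3\geq0$, and so $\omega\geq0$. If $g(x)=0$ then $q(x)=0$, so $\omega(x)=0$. If $g(x)\neq0$, then $q(x)>0$ by Lemma~\ref{build_q_lemma}, hence $r(x)>0$, and $x$ lies in some $U_j$ by Lemma~\ref{common_sign_geometry_lemma}(4). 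The corresponding summand is then strictly positive; on $U_1$ this uses $g>0$, since $g\geq0$ and $g\neq0$. As all summands are nonnegative, $\omega(x)>0$.

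\textbf{Key inequality.} For $g(x)\neq0$ we compute at $x$:
\[
qg-\omega h \;=\; r\Bigl(\varphi_1\frac{g^2}{g+h}+\varphi_2(-h)+\varphi_3(g-h)\Bigr),
\]
using $\varphi=\varphi_1+\varphi_2+\varphi_3$ together with the pointwise formulas above. Each summand vanishes off its $U_j$ and is strictly positive on it: on $U_1$ because $g\neq0$ and $g+h>0$, on $U_2$ because $h<0$, and on $U_3$ because $g>0>h$. Since $r(x)>0$ and $x$ lies in some $U_j$, the expression is $>0$.
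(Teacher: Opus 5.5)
Your proposal is correct and follows the paper's proof closely. You use the same cancellation of $\varphi$ from $q$, the same observation that $\varphi_2\,\xi(g)\,\xi(g+h)=0$ to reduce $\omega_2$ to (A4w), and (A5w) for $\omega_3$, and you read nonnegativity and the zero set off the sign regions $U_1,U_2,U_3$. The only real difference is part (4): you derive the single identity $qg-\omega h=r\bigl(\varphi_1 g^2/(g+h)+\varphi_2(-h)+\varphi_3(g-h)\bigr)$ instead of the paper's case split on the signs of $g$ and $h$, which is a valid and slightly more uniform route to the same inequality.
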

\begin{proof}
(1) First, we observe that the $\omega_i$ are equal to the following functions:
\begin{align*}
\omega_1 \ &= \ \xi(g^2)(\xi(-h)+\xi(g)\xi(g+h))\varphi_3 \\
\omega_2 \ &= \ \frac{\xi(g^2)(\xi(-h)+\xi(g)\xi(g+h))\varphi_2(g+h)}{h} \\
\omega_3 \ &= \ \frac{\xi(g^2)(\xi(-h)+\xi(g)\xi(g+h))\varphi_1g}{g+h}
\end{align*}
To see this, we do a case distinction on whether $x\in \{\varphi=0\}=\{g=0\}\cap\{h=0\}$ or not; cf. Lemma~\ref{build_varphi_lemma_weak}(\ref{build_varphi_lemma_weak_zeroset_varphi}). If $x\in \{\varphi=0\}$, then each $\omega_i(x)=0$, and each of the expressions on the right-hand side are also $=0$ since $g(x)=h(x)=0$. Otherwise, if $\varphi(x)\neq 0$, then in the definition of $\omega_i$ we can expand the definition of $q$ and cancel $\varphi(x)$ from the numerator and denominator without changing the value of the function. These are pointwise identities under the totalized-inverse convention; they neither assert that $\varphi^{-1}\in A$ nor use axiom (A1).

By (A0), to show $\omega\in A$, it suffices to show each $\omega_i\in A$.

First, it easily follows from (A0), (A2), and Lemma~\ref{build_varphi_lemma_weak}(\ref{build_varphi_lemma_weak_inA}) that $\omega_1\in A$.

For $\omega_2$, note first that:
\[
\varphi_2\cdot\xi(g)\cdot\xi(g+h) \ = \ 0
\]
pointwise: if $\varphi_2(x)>0$, then $-(g(x)+h(x))>0$, so $g(x)+h(x)<0$, hence $\xi(g(x)+h(x))=0$ by (T$\xi2$); and if $\varphi_2(x)=0$, then the product is $0$. Thus:
\[
\omega_2 \ = \ \frac{\xi(g^2)\xi(-h)\varphi_2(g+h)}{h}
\]
Now, $\xi(g^2),\xi(-h),\varphi_2,g+h\in A$, and by (A4w) applied to $h$, also $\xi(-h)/h\in A$. Hence $\omega_2\in A$.

For $\omega_3$ we will use axiom (A5w). By Lemma~\ref{build_h_lemma}, we have $\{h\geq 0\}\subseteq\{g\geq0\}$, so (A5w) applies to the pair $(g,h)$ and yields:
\[
\frac{\xi(g^2)\xi(g+4h)}{g+h} \ = \ \frac{\xi(g^2)\varphi_1}{g+h} \ \in \ A
\]
Multiplying this by $g\in A$ and by $\xi(-h)+\xi(g)\xi(g+h)\in A$, we get $\omega_3\in A$.

(2) We show each $\omega_i\geq 0$ on $X$.

For $\omega_1$, this is immediate from the rewritten formula: every factor is $\geq 0$.

For $\omega_2$, use from above:
\[
\omega_2 \ = \ \xi(g^2)\xi(-h)\varphi_2\cdot\frac{g+h}{h}
\]
If $\varphi_2(x)=0$, then $\omega_2(x)=0$. If $\varphi_2(x)>0$, then $g(x)+h(x)<0$. Since $h\leq 0$, necessarily $h(x)<0$, so $(g(x)+h(x))/h(x)\geq 0$. Thus $\omega_2(x)\geq 0$.

For $\omega_3$, if $\varphi_1(x)=0$, then $\omega_3(x)=0$. If $\varphi_1(x)>0$, then $g(x)+4h(x)>0$. Since $h(x)\leq 0$, this gives $g(x)>0$. Also $g(x)+h(x)>0$; otherwise $g(x)+h(x)\leq 0$ and $h(x)<0$, so
\[
g(x)+4h(x) \ = \ (g(x)+h(x))+3h(x) \ < \ 0,
\]
a contradiction. Hence $g(x)/(g(x)+h(x))>0$, and therefore $\omega_3(x)\geq 0$.
(3) Let $x\in X$ be arbitrary. If $g(x)=0$, then $q(x)=0$ by Lemma~\ref{build_q_lemma}(\ref{build_q_lemma_zero_implies_g_zero}), so each $\omega_i(x)=0$, and thus $\omega(x)=0$.

Conversely, suppose $g(x)\neq 0$. Then $q(x)>0$ by Lemma~\ref{build_q_lemma}(\ref{build_q_lemma_nonneg},\ref{build_q_lemma_zero_implies_g_zero}), and $\varphi(x)>0$ by Lemma~\ref{build_varphi_lemma_weak}(\ref{build_varphi_lemma_weak_nonneg},\ref{build_varphi_lemma_weak_zeroset_varphi}). We show $\omega(x)>0$. There are two cases.

Case 1: ($g(x)<0$) Then $h(x)<0$ by Lemma~\ref{build_h_lemma}. Also $\varphi_2(x)>0$, since $g(x)+h(x)<0$. Hence $\omega_2(x)>0$, so $\omega(x)>0$.

Case 2: ($g(x)>0$) There are two subcases.

Case 2a: ($h(x)=0$) Then $\varphi_1(x)>0$ and
\[
\omega_3(x) \ = \ \frac{q(x)\varphi_1(x)g(x)}{(g(x)+0)\varphi(x)} \ > \ 0
\]
and so $\omega(x)>0$.

Case 2b: ($h(x)<0$) Then $\varphi_3(x)=\xi(-g(x)h(x))>0$, so
\[
\omega_1(x) \ = \ \frac{q(x)\varphi_3(x)}{\varphi(x)} \ > \ 0
\]
which again yields $\omega(x)>0$.

(4) Fix $x\in X$ such that $g(x)\neq 0$; note that $q(x)>0$ by Lemma~\ref{build_q_lemma}(\ref{build_q_lemma_nonneg},\ref{build_q_lemma_zero_implies_g_zero}) and $\omega(x)>0$ by parts (2) and (3). There are two cases.

Case 1: ($g(x)>0$) There are two subcases.

Case 1a: ($h(x)=0$) Then $\omega(x)h(x)=0$, so
\[
q(x)g(x)-\omega(x)h(x) \ = \ q(x)g(x) \ > \ 0.
\]
because $q(x)>0$.

Case 1b: ($h(x)<0$) In this case, we have $q(x),g(x),\omega(x),-h(x)>0$, and thus:
\[
q(x)g(x)-\omega(x)h(x) \ = \ q(x)g(x)+\omega(x)(-h(x)) \ > \ 0
\]

Case 2: ($g(x)<0$) Then $h(x)<0$ by Lemma~\ref{build_h_lemma}. Moreover, $\varphi_2(x)>0$, while $\varphi_1(x)=\varphi_3(x)=0$; indeed, $g(x)+4h(x)\leq g(x)<0$, and $-g(x)h(x)<0$. Therefore:
\[
\omega(x) \ = \ \omega_2(x) \ = \ \frac{q(x)\varphi_2(x)(g(x)+h(x))}{h(x)\varphi(x)} \ = \ q(x)\frac{g(x)+h(x)}{h(x)}
\]
since $\varphi(x)=\varphi_2(x)$. Consequently,
\[
q(x)g(x)-\omega(x)h(x) \ = \ q(x)g(x)-q(x)(g(x)+h(x)) \ = \ -q(x)h(x) \ > \ 0 \qedhere
\]
\end{proof}

\noindent Define the following function $X\to R$:
\[
u \ := \ \tilde{u}(g,f_1,\ldots,f_k) \ = \ qg-\omega h
\]

\begin{lemma}\label{build_u_lemma}
The function $u$ satisfies:
\begin{enumerate}
\item\label{build_u_lemma_inA} $u\in A$,
\item\label{build_u_lemma_nonneg} $u\geq 0$ on $X$, and
\item\label{build_u_lemma_zero_equals_g_zero} $\{u=0\}=\{g=0\}$.
\end{enumerate}
\end{lemma}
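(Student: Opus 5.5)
The plan is to read all three parts off the earlier lemmas. For (1), Lemma~\ref{build_q_lemma}(\ref{build_q_lemma_inA}) gives $q\in A$ and Lemma~\ref{build_omega_lemma}(\ref{build_omega_lemma_inA}) gives $\omega\in A$. By assumption $g\in A$, and $h\in A$ by Lemma~\ref{build_h_lemma}. Since $A$ is a ring by (A0), it follows that $u=qg-\omega h\in A$.

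For (3), I would handle the two inclusions separately. Suppose first that $g(x)=0$. Then $q(x)=0$ by Lemma~\ref{build_q_lemma}(\ref{build_q_lemma_zero_implies_g_zero}) and $\omega(x)=0$ by Lemma~\ref{build_omega_lemma}(\ref{build_omega_lemma_zero_equal_g_zero}), so $u(x)=0\cdot 0-0\cdot h(x)=0$. Suppose instead that $g(x)\neq 0$. Then Lemma~\ref{build_omega_lemma}(\ref{build_omega_lemma_key_inequality}) gives $u(x)=q(x)g(x)-\omega(x)h(x)>0$. In particular $u(x)\neq 0$, which proves $\{u=0\}=\{g=0\}$.

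Part (2) comes out of the same case split. At points with $g(x)=0$ we have $u(x)=0$, and at points with $g(x)\neq0$ we have $u(x)>0$. Hence $u\geq 0$ on $X$.

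There is no real obstacle here. All the sign analysis was already done in Lemma~\ref{build_omega_lemma}(\ref{build_omega_lemma_key_inequality}). The only point needing care is that $q$ and $\omega$ vanish simultaneously exactly on $\{g=0\}$, and this is given by the two zero-set lemmas cited above.
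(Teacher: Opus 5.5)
Your proposal is correct and follows essentially the same route as the paper: ring closure gives (1), and the case split on $g(x)=0$ versus $g(x)\neq 0$ gives (2) and (3), using Lemma~\ref{build_omega_lemma}(\ref{build_omega_lemma_zero_equal_g_zero}) for the first case and Lemma~\ref{build_omega_lemma}(\ref{build_omega_lemma_key_inequality}) for the second. One small note: citing $q(x)=0$ is harmless but unnecessary, since $g(x)=0$ already kills $q(x)g(x)$; only $\omega(x)=0$ is needed.
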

\begin{proof}
(1) We have $g,q,\omega,h\in A$ by Lemmas~\ref{build_q_lemma}(\ref{build_q_lemma_inA}),~\ref{build_omega_lemma}(\ref{build_omega_lemma_inA}), and~\ref{build_h_lemma}, thus $u\in A$ by (A0).

(2 and 3) Fix $x\in X$. If $g(x)=0$, then Lemma~\ref{build_omega_lemma}(\ref{build_omega_lemma_zero_equal_g_zero}) gives $\omega(x)=0$, and hence:
\[
u(x) \ = \ q(x)g(x)-\omega(x)h(x) \ = \ q(x)\cdot 0-0\cdot h(x) \ = \ 0
\]
If $g(x)\neq 0$, then by Lemma~\ref{build_omega_lemma}(\ref{build_omega_lemma_key_inequality}) we have:
\[
u(x) \ = \ q(x)g(x)-\omega(x)h(x) \ > \ 0
\]
Thus $u\geq 0$ on $X$, and moreover $\{u=0\}=\{g=0\}$.
\end{proof}

\noindent Define the following functions $X\to R$:
\begin{align*}
\psi_1 \ &:= \ \tilde{\psi}_1(g,f_1,\ldots,f_k) \ = \ \xi(u) \\
\Psi_1 \ &:= \ \tilde{\Psi}_1(g,f_1,\ldots,f_k) \ = \ \xi(u)\rho(u) \\
\psi_2 \ &:= \ \tilde{\psi}_2(g,f_1,\ldots,f_k) \ = \ \xi(\omega) \\
\Psi_2 \ &:= \ \tilde{\Psi}_2(g,f_1,\ldots,f_k) \ = \ \xi(\omega)\rho(\omega) \\
\psi_3 \ &:= \ \tilde{\psi}_3(g,f_1,\ldots,f_k) \ = \ \xi(q) \\
\Psi_3 \ &:= \ \tilde{\Psi}_3(g,f_1,\ldots,f_k) \ = \ \xi(q)\rho(q) \\
\end{align*}

\begin{lemma}\label{psi_Psi_lemma}
We have:
\begin{enumerate}
\item\label{psi_Psi_lemma_inA_nonneg} For each $i=1,2,3$, the functions $\psi_i,\Psi_i$ belong to $A$ and are nonnegative on $X$.
\item\label{psi_Psi_lemma_sigma_identities} The following identities hold:
\[
\sigma(\psi_1)\cdot u \ = \ \sigma(\Psi_1),\quad \sigma(\psi_2)\cdot\omega \ = \ \sigma(\Psi_2),\quad \sigma(\psi_3)\cdot q \ = \ \sigma(\Psi_3)
\]
\item\label{psi_Psi_lemma_zero_sets} The following zero-set equalities hold:
\[
\{\psi_1=0\} \ = \ \{\Psi_1=0\} \ = \ \{u=0\}
\]
\[
\{\psi_2=0\} \ = \ \{\Psi_2=0\} \ = \{\omega=0\}
\]
\[
\{\psi_3=0\} \ = \ \{\Psi_3=0\} \ = \ \{q=0\}
\]
\end{enumerate}
\end{lemma}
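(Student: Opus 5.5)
The plan is to apply the Hilbert--17 Lemma~\ref{H17_lemma_psatz_weak} three times, once to each of $u$, $\omega$, and $q$. Each application delivers parts (1)--(3) of the present lemma for the corresponding pair $(\psi_i,\Psi_i)$.

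First check the hypotheses of the Hilbert--17 Lemma. We have fixed $\bm{R}\models T_{\operatorname{wP}}$ and $A$ satisfying (AwP). In particular $A$ satisfies (A0), (A2), and (A3w), which are all the algebra axioms that lemma requires. Next, the functions $u,\omega,q$ lie in $A$ and are nonnegative on $X$:
\begin{itemize}
\item for $u$, this is Lemma~\ref{build_u_lemma}(\ref{build_u_lemma_inA},\ref{build_u_lemma_nonneg});
\item for $\omega$, this is Lemma~\ref{build_omega_lemma}(\ref{build_omega_lemma_inA},\ref{build_omega_lemma_nonneg});
\item for $q$, this is Lemma~\ref{build_q_lemma}(\ref{build_q_lemma_inA},\ref{build_q_lemma_nonneg}).
\end{itemize}

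Then take $f:=u$, $f:=\omega$, and $f:=q$ in turn. The functions the Hilbert--17 Lemma names $\psi=\xi(f)$ and $\Psi=\rho(f)\cdot\xi(f)$ coincide pointwise with the definitions $\psi_i=\xi(\cdot)$ and $\Psi_i=\xi(\cdot)\rho(\cdot)$ above, because multiplication is commutative. The lemma then gives, for each of the three choices of $f$:
\begin{itemize}
\item part (1): $\psi_i,\Psi_i\in A$ and both are nonnegative;
\item part (2): the identities $\sigma(\psi_i)\cdot f=\sigma(\Psi_i)$;
\item part (3): the zero-set equalities $\{\psi_i=0\}=\{\Psi_i=0\}=\{f=0\}$.
\end{itemize}
These are exactly the three claims of the present lemma.

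The only possible obstacle is bookkeeping. One must make sure the nonnegativity hypothesis on $f$ is taken from the earlier lemmas and not assumed, since (A3w) requires $\{f\geq0\}=X$. All three cases are covered by the results cited above, so no further difficulty is expected.
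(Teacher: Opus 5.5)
Your proposal is correct and matches the paper's proof. The paper likewise applies the Hilbert--17 Lemma~\ref{H17_lemma_psatz_weak} three times, to $f:=u$, $f:=\omega$, and $f:=q$, taking membership in $A$ and nonnegativity from Lemmas~\ref{build_u_lemma}, \ref{build_omega_lemma}, and \ref{build_q_lemma}; your remark that $\xi(f)\rho(f)=\rho(f)\xi(f)$ pointwise is the one piece of bookkeeping the paper leaves implicit.
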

\begin{proof}
We argue in the same way for each pair $(\psi_i,\Psi_i)$.

For $(\psi_1,\Psi_1)$, since $u\in A$ and $u\geq 0$ on $X$ by Lemma~\ref{build_u_lemma}(\ref{build_u_lemma_inA},\ref{build_u_lemma_nonneg}), the Hilbert--17 Lemma~\ref{H17_lemma_psatz_weak} applied to $f:=u$ yields:
\begin{itemize}
\item $\psi_1,\Psi_1,\sigma(\psi_1),\sigma(\Psi_1)\in A$ and are nonnegative on $X$,
\item $\sigma(\psi_1)\cdot u=\sigma(\Psi_1)$, and
\item $\{\psi_1=0\}=\{\Psi_1=0\}=\{u=0\}$
\end{itemize}

For $(\psi_2,\Psi_2)$, apply Lemma~\ref{H17_lemma_psatz_weak} to $f:=\omega$, using Lemma~\ref{build_omega_lemma}(\ref{build_omega_lemma_inA},\ref{build_omega_lemma_nonneg}).

For $(\psi_3,\Psi_3)$, apply Lemma~\ref{H17_lemma_psatz_weak} to $f:=q$, using Lemma~\ref{build_q_lemma}(\ref{build_q_lemma_inA},\ref{build_q_lemma_nonneg}).

These three applications yield exactly (1), (2), and (3).
\end{proof}

\noindent Finally, define the following functions $X\to R$:
\begin{align*}
p \ &:= \ \tilde{t}_{-1}(g,f_1,\ldots,f_k) \ = \ \psi_1\psi_2\Psi_3 \\
t_0 \ &:= \ \tilde{t}_0(g,f_1,\ldots,f_k) \ = \ \Psi_1\psi_2\psi_3 \\
t_i \ &:= \ \tilde{t}_i(g,f_1,\ldots,f_k) \ = \ \psi_1\Psi_2\psi_3\xi(-f_i) \quad\text{(for $i=1,\ldots,k$)}
\end{align*}

\medskip\noindent The functions $p,t_0,t_i$ satisfy the following properties:

\begin{corollary}\label{weak_proof_final_statement}
The functions $p,t_0,\ldots,t_k$ belong to $A$ and are nonnegative on $X$; moreover,
\[
\{p=0\} \ = \ \{g=0\}
\]
and
\[
\sigma(p)g \ = \ \sigma(t_0)+\sum_{1\leq i\leq k}\sigma(t_i)f_i.
\]
\end{corollary}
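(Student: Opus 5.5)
Membership and nonnegativity come first and are immediate. By Lemma~\ref{psi_Psi_lemma}(\ref{psi_Psi_lemma_inA_nonneg}) each $\psi_j,\Psi_j$ lies in $A$ and is nonnegative on $X$. Axiom (A2) gives $\xi(-f_i)\in A$, and (T$\xi1$) makes it nonnegative. Since $A$ is a ring by (A0), the products $p$, $t_0$ and $t_i$ lie in $A$ and are nonnegative. For the zero set of $p=\psi_1\psi_2\Psi_3$ I will use Lemma~\ref{psi_Psi_lemma}(\ref{psi_Psi_lemma_zero_sets}) together with Lemmas~\ref{build_u_lemma}, \ref{build_omega_lemma} and \ref{build_q_lemma}. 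These give
\[
\{\psi_1=0\}=\{u=0\}=\{g=0\},\qquad \{\psi_2=0\}=\{\omega=0\}=\{g=0\},\qquad \{\Psi_3=0\}=\{q=0\}=\{g=0\}.
\]
A product in an ordered field vanishes exactly when some factor does, so $\{p=0\}=\{g=0\}$.

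For the identity I will work pointwise. First I expand $\sigma$ of each product using (T$\sigma2$); this is legitimate because every factor is nonnegative. That gives
\[
\sigma(p)=\sigma(\psi_1)\sigma(\psi_2)\sigma(\Psi_3),\qquad \sigma(t_0)=\sigma(\Psi_1)\sigma(\psi_2)\sigma(\psi_3),\qquad \sigma(t_i)=\sigma(\psi_1)\sigma(\Psi_2)\sigma(\psi_3)\sigma(\xi(-f_i)).
\]
Next I substitute the Hilbert--17 identities from Lemma~\ref{psi_Psi_lemma}(\ref{psi_Psi_lemma_sigma_identities}), namely $\sigma(\Psi_3)=\sigma(\psi_3)q$, $\sigma(\Psi_1)=\sigma(\psi_1)u$ and $\sigma(\Psi_2)=\sigma(\psi_2)\omega$. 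Writing $C:=\sigma(\psi_1)\sigma(\psi_2)\sigma(\psi_3)$, the left side becomes $\sigma(p)g=C\,qg$. The right side becomes
\[
C\,u+C\,\omega\sum_{1\leq i\leq k}\sigma(\xi(-f_i))f_i=C(u+\omega h).
\]
Since $u=qg-\omega h$ by definition, this equals $C\,qg$, and the two sides agree.

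I do not expect a real obstacle here. The only care needed is to check the nonnegativity hypotheses each time (T$\sigma2$) is applied: the factors are values of $\xi$, which are nonnegative by (T$\xi1$), or products of $\xi$ and $\rho$ evaluated at the nonnegative functions $u,\omega,q$, which are nonnegative by (T$\rho$). The identities of Lemma~\ref{psi_Psi_lemma} already hold on all of $X$, so no case split on $\{g=0\}$ is needed.
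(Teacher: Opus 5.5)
Your proposal is correct and follows the paper's proof step for step. Membership and nonnegativity come from Lemma~\ref{psi_Psi_lemma}, (A0), (A2) and (T$\xi1$); the zero set comes from the factorwise zero-set equalities; and the identity comes from expanding with (T$\sigma2$), substituting the Hilbert--17 identities, and using $h=\sum_{1\leq i\leq k}\sigma(\xi(-f_i))f_i$ and $u=qg-\omega h$. The only cosmetic difference is that you reduce both sides to the common expression $C\,qg$, whereas the paper writes a single chain of equalities from the right-hand side to $\sigma(p)g$.
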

\begin{proof}
By Lemma~\ref{psi_Psi_lemma}(\ref{psi_Psi_lemma_inA_nonneg}), for $i=1,2,3$ we have that $\psi_i,\Psi_i\in A$ and are $\geq 0$ on $X$. Since also $\xi(-f_i)\in A$ and $\xi(-f_i)\geq 0$ on $X$ by (A2) and (T$\xi1$), it follows from (A0) that the functions $p,t_0,\ldots,t_k$ all belong to $A$ and are nonnegative on $X$.

Next we show $\{p=0\}=\{g=0\}$. Since
\[
p \ = \ \psi_1\psi_2\Psi_3
\]
and $R$ is a field, we have:
\[
\{p=0\} \ = \ \{\psi_1=0\}\cup\{\psi_2=0\}\cup\{\Psi_3=0\}
\]
By Lemma~\ref{psi_Psi_lemma}(\ref{psi_Psi_lemma_zero_sets}), this gives:
\[
\{p=0\} \ = \ \{u=0\}\cup\{\omega=0\}\cup\{q=0\}
\]
Using Lemma~\ref{build_u_lemma}(\ref{build_u_lemma_zero_equals_g_zero}), Lemma~\ref{build_omega_lemma}(\ref{build_omega_lemma_zero_equal_g_zero}), and Lemma~\ref{build_q_lemma}(\ref{build_q_lemma_zero_implies_g_zero}), we obtain:
\[
\{p=0\} \ = \ \{g=0\}\cup\{g=0\}\cup\{g=0\} \ = \ \{g=0\}
\]

Finally we prove the identity; we use that the functions $\psi_i,\Psi_i,\xi(-f_i)$ are nonnegative on $X$:
\begin{align*}
\sigma(t_0)+\sum_{1\leq i\leq k}\sigma(t_i)f_i \ &= \ \sigma(t_0)+\sum_{1\leq i\leq k}\sigma(\psi_1\Psi_2\psi_3\xi(-f_i))f_i\quad\text{(by definition of $t_i$)} \\
&= \ \sigma(t_0)+\sum_{1\leq i\leq k}\sigma(\psi_1\Psi_2\psi_3)\sigma(\xi(-f_i))f_i\quad\text{(by (T$\sigma2$))} \\
&= \ \sigma(t_0)+\sigma(\psi_1\Psi_2\psi_3)\sum_{1\leq i\leq k}\sigma(\xi(-f_i))f_i \\
&= \ \sigma(\Psi_1\psi_2\psi_3)+\sigma(\psi_1\Psi_2\psi_3) h \quad\text{(by definition of $h$)} \\
&= \ \sigma(\psi_2)\sigma(\psi_3)\sigma(\Psi_1)+\sigma(\psi_1)\sigma(\psi_3)\sigma(\Psi_2)h \quad\text{(by (T$\sigma2$))}\\
&=  \sigma(\psi_2)\sigma(\psi_3)\sigma(\psi_1)u+\sigma(\psi_1)\sigma(\psi_3)\sigma(\psi_2)\omega h\quad\text{(by Lemma~\ref{psi_Psi_lemma}(\ref{psi_Psi_lemma_sigma_identities}))} \\
&= \ \sigma(\psi_1)\sigma(\psi_2)\sigma(\psi_3)(u+\omega h) \\
&= \ \sigma(\psi_1)\sigma(\psi_2)\sigma(\psi_3)qg \quad\text{(by definition of $u$)} \\
&= \ \sigma(\psi_1)\sigma(\psi_2)\sigma(\Psi_3)g \quad\text{(by Lemma~\ref{psi_Psi_lemma}(\ref{psi_Psi_lemma_sigma_identities}))} \\
&= \ \sigma(\psi_1\psi_2\Psi_3)g \quad\text{(by (T$\sigma2$))} \\
&= \ \sigma(p)g \quad\text{(by definition of $p$)} \qedhere
\end{align*}
\end{proof}

\begin{remark}\label{weaker_options_for_A4w_A5w}
In the overall proof of~\ref{weak_Positivstellensatz_abstract}, the axioms (A4w) and (A5w) are only used once each in the proof of Lemma~\ref{build_omega_lemma}(\ref{build_omega_lemma_inA}) above to show that $\omega_2,\omega_3\in A$. In particular, the Weak Positivstellensatz~\ref{weak_Positivstellensatz_abstract} is still true with the same proof if we replace (A4w) and (A5w) with the \emph{a priori} weaker axioms:
\begin{itemize}
\item[(A4w${}^{-}$)] if $g,h\in A$, then:
\[
\xi(g^2)\cdot\xi(-h)\cdot\xi(-(g+h))\cdot(g+h)\cdot(h)^{-1} \ \in \ A
\]
\item[(A5w${}^{-}$)] if $g,h\in A$ and $\{h\geq 0\}\subseteq\{g\geq 0\}$, then:
\[
\xi(g^2)(\xi(-h)+\xi(g)\cdot\xi(g+h))\cdot\xi(g+4h)\cdot g\cdot(g+h)^{-1} \ \in \ A
\]
\end{itemize}
The potential upside to doing this is to broaden the class of examples. Indeed, if $A$ is an algebra which satisfies (A0) and (A2), then (A4w)$\Rightarrow$(A4w${}^{-}$) and (A5w)$\Rightarrow$(A5w${}^{-}$).
If the additional reciprocal-closure axiom (A1) is assumed, Lemma~\ref{A4wminus_fact} below shows that no extra generality is gained by considering (A4w${}^{-}$). Without (A1), we do not claim that (A4w${}^{-}$) implies (A4w). On the other hand, we do not have an example of an algebra which satisfies ((AwP)$\smallsetminus$(A5w))$+$(A5w${}^{-}$) but not (A5w).

\end{remark}

\begin{lemma}\label{A4wminus_fact}
Suppose $B\subseteq R^X$ satisfies (A0), (A1), and (A2). Then $B$ satisfies (A4w) if and only if $B$ satisfies (A4w${}^{-}$).
\end{lemma}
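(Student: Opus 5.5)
The forward direction is the implication already noted in Remark~\ref{weaker_options_for_A4w_A5w}. Assume (A4w), and let $g,h\in B$. Then (A4w) applied to $h$ gives $\xi(-h)\cdot h^{-1}\in B$. The remaining factors $\xi(g^2)$ and $\xi(-(g+h))$ lie in $B$ by (A0) and (A2), and $g+h\in B$ by (A0). Closing under products gives
\[
\xi(g^2)\cdot\xi(-h)\cdot\xi(-(g+h))\cdot(g+h)\cdot h^{-1}\in B,
\]
which is (A4w${}^{-}$).

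For the converse, assume (A4w${}^{-}$) and fix $f\in B$. The goal is $\xi(-f)\cdot f^{-1}\in B$, and the plan is to pick $g$ and $h$ in (A4w${}^{-}$) so that all extra factors become invertible positive functions. Take $h:=f$ and $g:=c-f$ for a suitable $c$, so that $g+h=c$. A constant $c$ is problematic, because $\xi(-c)$ is the only thing that could play the role of $\xi(-(g+h))$, and it vanishes whenever $c>0$. We need $-(g+h)>0$ everywhere, so instead take $g+h=-e$ with $e:=1+f^2$. This $e$ is everywhere positive since $R$ is an ordered field.

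Concretely, set $h:=f$ and $g:=-e-f=-(1+f+f^2)$, both in $B$ by (A0). Then the factors behave as follows:
\begin{enumerate}
\item $g+h=-e$, so $\xi(-(g+h))=\xi(e)>0$ everywhere by (T$\xi2$).
\item Since $1+a+a^2>0$ for every $a$ in an ordered field, $g<0$ everywhere, hence $g^2>0$ and $\xi(g^2)>0$ everywhere.
\end{enumerate}
So (A4w${}^{-}$) yields
\[
G:=\xi(g^2)\cdot\xi(e)\cdot(-e)\cdot\xi(-f)\cdot f^{-1}\in B.
\]

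Now set $P:=\xi(g^2)\cdot\xi(e)\cdot e$. We have $P\in B$ by (A0) and (A2), and $P>0$ everywhere. By (A1), $P^{-1}\in B$, and therefore
\[
-G\cdot P^{-1}=\xi(-f)\cdot f^{-1}\in B.
\]
This identity holds pointwise, including where $f=0$: there both sides vanish, by the convention $0^{-1}=0$ together with $\xi(0)=0$. This gives (A4w).

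The only delicate step is choosing $g$ so that every auxiliary factor is strictly positive at every point. The choice $g=-(1+f+f^2)$ does this, using only ordered-field facts together with (T$\xi2$).
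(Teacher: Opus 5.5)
Your proof is correct and follows the paper's argument: the forward direction is the same product-closure observation. For the converse you, like the paper, take $h=f$ and $g$ a negative quadratic in $f$ so that $\xi(g^2)$, $\xi(-(g+h))$ and $-(g+h)$ are everywhere positive, then remove that positive factor using (A1). The only difference is cosmetic: the paper uses $g=-(1+f^2)$, giving $-(g+h)=f^2-f+1$, whereas your $g=-(1+f+f^2)$ gives $-(g+h)=1+f^2$.
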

\begin{proof}
($\Rightarrow$) Let $g,h\in B$ be arbitrary. By (A4w) applied to $h$, we have $\xi(-h)\cdot h^{-1}\in B$. Also, by (A0) and (A2), $\xi(g^2),\xi(-(g+h))$, and $g+h$ all lie in $B$. Multiplying all of these together yields:
\[
\xi(g^2)\cdot\xi(-h)\cdot\xi(-(g+h))\cdot(g+h)\cdot(h)^{-1} \ \in \ B
\]

($\Leftarrow$) Let $f\in B$ be arbitrary; we will show $\xi(-f)\cdot f^{-1}\in B$. Consider the pair $(g,h)=(-(1+f^2),f)\in B^2$. Note that $g+h=-(f^2-f+1)$. Set $p:=f^2-f+1$, so $p\in B$ by (A0), and $p>0$ on $X$; indeed:
\[
p \ = \ \left(f-\frac{1}{2}\right)^2+\frac{3}{4} \ > \ 0
\]
Moreover, $g^2=(1+f^2)^2>0$ on $X$. Hence by (T$\xi2$) we have $\xi(g^2)>0$ and $\xi(p)>0$ on $X$. Next define $\beta:=\xi(g^2)\cdot\xi(p)\cdot p$, which is in $B$ by (A0) and (A2); moreover, $\beta>0$ on $X$. Thus $\beta^{-1}\in B$ by (A1). Now, by (A4w${}^{-}$) applied to $(g,h)$ we have:
\[
\xi(g^2)\cdot\xi(-f)\cdot\xi(-(g+f))\cdot(g+f)\cdot f^{-1} \ \in \ B
\]
Using $-(g+f)=p$ and $g+f=-p$ yields:
\[
\xi(g^2)\cdot\xi(-f)\cdot\xi(p)\cdot(-p)\cdot f^{-1} \ = \ -\beta\cdot\xi(-f)\cdot f^{-1} \ \in \ B
\]
Finally, multiplying by $-\beta^{-1}$ yields $\xi(-f)\cdot f^{-1}\in B$.
\end{proof}

\subsection{Independence of reciprocal closure from the weak package}\label{subsection_A1_independence}

\noindent The omission of (A1) from (AwP) is genuine. The following example shows that reciprocal closure for arbitrary everywhere-positive functions is not forced by the weak scalar theory together with (A0), (A2), (A3w), (A4w), and (A5w).

\begin{proposition}\label{A1_independence_proposition}
There exist a model $\bm{R}\models T_{\operatorname{wP}}$, a set $X$, and a function ring $A\subseteq R^X$ satisfying (AwP) but not (A1).
\end{proposition}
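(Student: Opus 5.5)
The plan is to take $X$ to be a one-point set, so that $R^X\cong R$ and a function ring $A\subseteq R^X$ is simply a subring of $R$. For $\bm{R}$ I take a non-Archimedean real closed field, e.g.\ the real closure of $\R(\epsilon)$ ordered with $0<\epsilon<1/n$ for all $n$, or a field of real Puiseux series. For $A$ I take the convex valuation ring
\[
\mathcal{O} \ := \ \{a\in R: |a|\leq n\text{ for some }n\in\N\}.
\]
The scalar primitives are
\[
\sigma(a):=a^2,\qquad \rho(a):=\sqrt{|a|},\qquad \xi(a):=\operatorname{ReLU}^2(a).
\]
$\rho$ makes sense because $R$ is real closed. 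The axioms (T$\xi1$), (T$\xi2$), (T$\sigma1$), (T$\sigma2$), (T$\rho$) and (T$\sigma\rho$) are then checked pointwise exactly as for the Main Example~\ref{mainExamplept3}.

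Failure of (A1) is immediate. The element $\epsilon\in\mathcal{O}$ satisfies $\{\epsilon>0\}=X$, but $\epsilon^{-1}$ is infinite, so $\epsilon^{-1}\notin\mathcal{O}$.

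Next I verify (AwP) by bounding each required expression by a polynomial in the inputs. Since $\mathcal{O}$ is a convex subring, any element of $R$ whose absolute value is at most a finite element lies in $\mathcal{O}$.
\begin{itemize}
\item (A0) holds because $\mathcal{O}$ is a ring containing $0$ and $1$.
\item (A2) holds because $\xi(f)\leq f^2$.
\item (A3w): for $f\geq0$ we have $\sigma(f)=f^2\in\mathcal{O}$. Also $0\leq\rho(f)\xi(f)=\sqrt f\,f^2\leq (1+f)f^2$, and this bound is finite.
\item (A4w): if $f\geq0$ then $\xi(-f)f^{-1}=0$. If $f<0$ then $\xi(-f)f^{-1}=f^2/f=f$.
\end{itemize}

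The main step, and the only one needing care, is (A5w). Its reciprocal $(g+h)^{-1}$ is potentially infinite, so one must show it is compensated by the numerator. Assume $\{h\geq0\}\subseteq\{g\geq0\}$, i.e.\ here $h\geq0\Rightarrow g\geq0$, and suppose $\xi(g+4h)\neq0$, i.e.\ $g+4h>0$. By Lemma~\ref{common_sign_geometry_lemma}(1) we have $g+h>0$ and $g+h\geq\frac34 g$. I then split into cases.
\begin{itemize}
\item If $g=0$, the factor $\xi(g^2)$ vanishes, so the expression is $0$.
\item Otherwise $g>0$, so $g+h\geq \frac34 g>0$. This gives
\[
0\ \leq\ \xi(g^2)\,\xi(g+4h)\,(g+h)^{-1} \ \leq\ \tfrac43\, g^{3}(g+4h)^2 .
\]
The right-hand side lies in $\mathcal{O}$.
\end{itemize}
By convexity the (A5w) expression therefore lies in $\mathcal{O}$, and (AwP) holds.

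If a less degenerate example is preferred, the same argument goes through verbatim for $A=\mathcal{O}^X$ (all functions into finite elements) on an arbitrary $X$, since every estimate above is pointwise with uniform constants.
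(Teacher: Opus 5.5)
Your proof is correct and is essentially the paper's argument. Both use a one-point $X$ and $A=\mathcal{O}$, the convex ring of finite elements of a non-Archimedean field, bound each closure axiom and invoke convexity, and handle (A5w) via $g+h\geq\frac34 g$; note that $g\geq0$ on $\{g+4h>0\}$ follows from the side condition, which the paper checks explicitly. The only difference is the primitives: the paper takes $\sigma=\rho=\xi=(\cdot)^+$, which makes (A3w) and (A4w) immediate and works over any non-Archimedean ordered field, whereas your square/square-root choice requires $R$ to be real closed.
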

\begin{proof}
Let $R$ be a non-Archimedean ordered field, and let
\[
\mathcal O
\ := \
\left\{a\in R:\lvert a\rvert\leq n\text{ for some }n\in\N\right\}
\]
be its ring of finite elements. If $\lvert a\rvert\leq m$ and $\lvert b\rvert\leq n$, then $\lvert a+b\rvert\leq m+n$ and $\lvert ab\rvert\leq mn$, so $\mathcal O$ is a subring; its definition also makes it convex. It is moreover a valuation ring: if $a\neq0$ and $a\notin\mathcal O$, then $\lvert a\rvert>n$ for every $n\in\N$, so $\lvert a^{-1}\rvert<1$ and hence $a^{-1}\in\mathcal O$.

For $a\in R$, write $a^+:=\max\{a,0\}$, and interpret the weak scalar primitives by
\[
\sigma(a)=\rho(a)=\xi(a):=a^+.
\]
Then $\bm R=(R;\sigma,\rho,\xi)$ satisfies $T_{\operatorname{wP}}$. Indeed, the sign axioms for $\xi$ and $\sigma$ are immediate; if $a,b\geq0$, then
\[
\sigma(ab)=ab=\sigma(a)\sigma(b),
\]
and if $a\geq0$, then $\rho(a)=a\geq0$ and $\sigma(\rho(a))=a$.

Let $X:=\{\ast\}$, and identify $A:=\mathcal O$ with the corresponding ring of constant functions $X\to R$. Axiom (A0) holds because $\mathcal O$ is a unital subring of $R$, while (A2) holds because $0\leq a^+\leq\lvert a\rvert$ for every $a\in\mathcal O$ and $\mathcal O$ is convex. If $a\in\mathcal O$ is nonnegative, then
\[
\sigma(a)=a\in\mathcal O,
\qquad
\rho(a)\xi(a)=a^2\in\mathcal O,
\]
so (A3w) holds.

For (A4w), totalized inversion gives, for every $a\in\mathcal O$,
\[
\xi(-a)a^{-1}
=
\begin{cases}
-1,&a<0,\\
0,&a\geq0,
\end{cases}
\]
which belongs to $\mathcal O$.

It remains to check (A5w). Let $g,h\in\mathcal O$ satisfy the singleton version of its side condition, namely $h\geq0\Rightarrow g\geq0$, and set
\[
E
:=
\xi(g^2)\xi(g+4h)(g+h)^{-1}
=
 g^2(g+4h)^+(g+h)^{-1}.
\]
If $g+4h\leq0$, then $E=0$. Suppose $g+4h>0$. If $h\geq0$, then $g\geq0$ by the side condition; if $h<0$, then $g>-4h>0$. Thus $g\geq0$. When $g=0$ we again have $E=0$. When $g>0$, either $h\geq0$, in which case $g+h\geq g$, or $h<0$, in which case $g+4h>0$ gives $h>-g/4$. In either case,
\[
 g+h\geq\frac34 g>0.
\]
Consequently,
\[
0\leq E
=
\frac{g^2(g+4h)}{g+h}
\leq
\frac43 g(g+4h).
\]
The right-hand side belongs to $\mathcal O$, and convexity of $\mathcal O$ therefore gives $E\in\mathcal O$. Hence (A5w) holds, so $A$ satisfies (AwP).

Finally, choose $H\in R$ with $H>n$ for every $n\in\N$, and set $\varepsilon:=H^{-1}$. Then $0<\varepsilon<1$, so $\varepsilon\in\mathcal O$, whereas $\varepsilon^{-1}=H\notin\mathcal O$. Thus the positive constant function $\varepsilon$ lies in $A$ but its reciprocal does not, and (A1) fails.
\end{proof}


\section{Function-algebra criteria and examples}\label{appendix_function_algebra_examples_section}

\noindent This appendix verifies the closure axioms in increasingly structured settings. We begin with the full definable-function algebra because it shows most directly that any scalar primitives satisfying the scalar theories can be used. We then turn to continuous algebras, the real and rational examples, and finally Fischer's definable $C^r$ setting.

\subsection{Arbitrary definable primitives}\label{using_binary_step_subsection_appendix_subsection}

\noindent Fix any interpretations of $\sigma,\rho,\xi,\delta,\nu$ on an ordered field $R$ that make the scalar expansion a model of $T_{\operatorname{wP}}$ and $T_{\operatorname{sP}}$. Let $\calL$ contain the ordered-field language together with symbols for those chosen primitives, let $\bm R$ be the resulting $\calL$-structure, let $X\subseteq R^n$ be a nonempty definable set, and let $A$ be the collection of all definable functions $X\to R$.

\begin{corollary}
The algebra $A$ satisfies axioms (AsP) and (AwP).
\end{corollary}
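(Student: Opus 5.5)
The plan is to verify each axiom directly, using one principle: if $f_1,\ldots,f_m:X\to R$ are definable and $t(z_1,\ldots,z_m)$ is an $\calL$-term, then $x\mapsto t^{\bm R}(f_1(x),\ldots,f_m(x))$ is definable. Indeed, its graph is defined by $\exists y_1\ldots\exists y_m\,(\bigwedge_j \Gamma_j(x,y_j)\wedge y=t(y_1,\ldots,y_m))$, where $\Gamma_j$ defines the graph of $f_j$ and $X$ is definable. Here inversion is the totalized $(\cdot)^{-1}$ with $0^{-1}=0$. Since $0^{-1}:=0$ is part of the $\calL_{\operatorname{oF}}$-structure, this is a definable total function.

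The key observation is that every closure axiom has the form ``under some side condition on $f,g$, a fixed term in $f,g$ lies in $A$''. Because $A$ consists of \emph{all} definable functions $X\to R$, the side conditions are irrelevant: the conclusion already follows from the principle above, and the side condition is simply discarded. Concretely, (A0) follows from the terms $0,1,-z,z_1+z_2,z_1z_2$, and (A1) from $z^{-1}$. (A2) follows from $\xi(z)$, (A3s) from $\sigma(z),\rho(z)$, and (A4s) from $\xi(z_1)\cdot z_2^{-1}$. (A$\delta$) follows from $\delta(z)$ and (A$\nu$) from $\nu(z_1,z_2)$. Likewise (A3w) follows from $\sigma(z)$ and $\rho(z)\xi(z)$, (A4w) from $\xi(-z)z^{-1}$, and (A5w) from $\xi(z_1^2)\xi(z_1+4z_2)(z_1+z_2)^{-1}$. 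Taking conjunctions then gives (AsP) and (AwP).

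The only point needing care, and the step I expect to be the main obstacle, is the precise meaning of ``definable''. The argument requires three things: definability with parameters from $R$ (so that every constant function, and constants such as those in $4$, belong to $A$); that $X$ itself is definable, so that the composed graph stays inside $X\times R$; and that the language includes the symbols $\sigma,\rho,\xi,\delta,\nu$, so that these primitives are definable. All three are built into the setup, so no genuine difficulty should arise.

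Finally, the scalar theories $T_{\operatorname{wP}},T_{\operatorname{sP}}$ are not used at all in establishing closure. They are only needed later, when the Positivstellens\"atze are applied.
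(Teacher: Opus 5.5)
Your proposal is correct and follows the paper's proof: you use closure of definable functions under composition with the named primitives and the ordered-field operations, and you discard the side conditions because $A$ is the full definable-function algebra. One small correction: parameters from $R$ are not actually required for this argument, since $4$ is the term $1+1+1+1$ and $0,1$ are constants of the language.
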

\begin{proof}
Every expression required by an algebra axiom is obtained from functions in $A$ by composing them with one of the named scalar primitives or with an ordered-field operation. Such compositions are definable. The side conditions in (A1), (A3s), (A4s), (A$\nu$), and (A5w) specify when the corresponding definable expression is used; they do not alter definability. Hence every required function belongs to $A$.
\end{proof}

\noindent Thus any primitive choices satisfying the scalar theories may be used, including discontinuous choices such as the binary step function for $\xi$. The scalar axioms must hold, and the algebra considered here is the full definable-function algebra in the expanded structure.

\begin{example}[o-minimal structures]
If the expanded scalar structure is o-minimal, every definable function is piecewise $C^r$ for each fixed $r\in\N$ by $C^r$-cell decomposition~\cite[Theorem 7.3]{driesTameTopologyOminimal1998}. Discontinuous definable primitives, including the binary step function, are permitted.
\end{example}

\begin{example}[All functions]
In the maximal expansion that names every subset of every finite Cartesian power of $R$, the graph of every function $X\to R$ is definable. Thus $A=R^X$.
\end{example}

\subsection{Continuous-function algebras}\label{appendix_C0_examples_subsection}

\noindent In this subsection we ask the question:
\[
\text{\emph{How much freedom do we have in choosing $\sigma,\rho,\xi,\delta,\nu$ in the case of $C^0(X,R)$ examples?}}
\]
As an answer we provide sufficient conditions in Corollaries~\ref{sP_C0_general_criterion_cor} and~\ref{wP_C0_general_criterion_cor} below; this yields Proposition~\ref{main_example_function_ablation_proposition} above as a special case when $R=\R$ and $X=U\subseteq\R^n$ is open.

\medskip\noindent \emph{In this subsection, fix a scalar model $\bm{R}=(R;\ldots)\models T_{\operatorname{oF}}$.}

\medskip\noindent We equip $R$ with the order topology, products $R^n$ with the product topology, and subsets $X\subseteq R^n$ with the subspace topology. Note that the functions $-:R\to R$, $+,\cdot:R^2\to R$, and $(\cdot)^{-1}:R^{\neq}\to R$ are continuous.
Given $-\infty\leq a<b\leq +\infty$ from $R_{\pm\infty}$, define the open interval:
\[
(a,b) \ := \ \{x\in R:a<x<b\} \ \subseteq \ R
\]
We also put $|a|:=\max(a,-a)$ for $a\in R$.

\medskip\noindent \emph{In the rest of this subsection, we fix an arbitrary topological space $X$, and fix the algebra $A=C^0(X,R)$ of all continuous functions $X\to R$.}

\medskip\noindent The following well-known fact about composition of continuous functions is key to transforming questions about membership in $A$ into questions about functions on $R$:

\begin{lemma}\label{general_composition_continuity_lemma}
Let $Y\subseteq R^m$, let $H:Y\to R$ be continuous, and let $f_1,\ldots,f_m\in C^0(X,R)$ be such that $(f_1(x),\ldots,f_m(x))\in Y$ for every $x\in X$. Then the map:
\[
H(f_1,\ldots,f_m) \ : \ X\to R,\quad x \ \mapsto \ H(f_1,\ldots,f_m)(x) \ := \ H(f_1(x),\ldots,f_m(x))
\]
belongs to $C^0(X,R)$.
\end{lemma}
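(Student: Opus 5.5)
The plan is to prove Lemma~\ref{general_composition_continuity_lemma} by factoring $H(f_1,\ldots,f_m)$ as a composite of two continuous maps. First, consider the map
\[
F \ : \ X\to R^m,\quad x \ \mapsto \ (f_1(x),\ldots,f_m(x)).
\]
Since $R^m$ carries the product topology and each coordinate $\pi_j\circ F=f_j$ is continuous, the universal property of the product topology gives continuity of $F$. Concretely, a basic open set $\prod_j V_j$ has preimage $\bigcap_j f_j^{-1}(V_j)$, which is open, and preimages commute with unions.

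Second, by hypothesis $F(X)\subseteq Y$, so $F$ corestricts to a map $F':X\to Y$. I will check that $F'$ is continuous for the subspace topology on $Y$. Every open subset of $Y$ has the form $W\cap Y$ with $W$ open in $R^m$, and
\[
F'^{-1}(W\cap Y) \ = \ F^{-1}(W),
\]
which is open in $X$.

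Finally, $H(f_1,\ldots,f_m)=H\circ F'$ is a composite of continuous maps $X\to Y\to R$, hence continuous. Therefore it lies in $C^0(X,R)$.

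There is no genuine obstacle. The only point requiring care is that $H$ is defined only on $Y$ and is continuous with respect to the subspace topology, so the corestriction step is needed. Nothing about $R$ beyond its being a topological space is used; in particular, no completeness or Archimedean property is required. This makes the argument valid over arbitrary ordered fields such as $\Q$.
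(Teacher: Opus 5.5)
Your proof is correct. The paper gives no proof and cites the statement as a well-known fact about composition of continuous functions; your factorization of the map as $X\to Y\to R$, using the universal property of the product topology and corestriction to the subspace topology on $Y$, is exactly the standard argument that remark relies on.
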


\noindent The next lemma is the key for axiom (A4s). Define the set:
\[
S \ := \ \{(x,y)\in R^2:\text{if $x\geq 0$, then $y\neq 0$}\} \ \subseteq \ R^2
\]
Define a function $\Psi=\Psi_{\xi}$ as follows:
\[
\Psi \ : \ S\to R,\quad (x,y) \ \mapsto \ \Psi(x,y) \ := \ \begin{cases}
\displaystyle\frac{\xi(x)}{y} & \text{if $y\neq 0$} \\
0 & \text{if $y=0$}
\end{cases}
\]
\begin{lemma}\label{general_C0_case_Psi_cont}
Suppose $\xi:R\to R$ satisfies (T$\xi1$) and (T$\xi2$). If $\xi$ is continuous, then $\Psi$ is continuous.
\end{lemma}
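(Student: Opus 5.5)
We prove continuity of $\Psi$ at each point $(x_0,y_0)\in S$ (with the subspace topology from $R^2$) by splitting into two cases according to whether $y_0\neq 0$ or $y_0=0$.

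\emph{Case $y_0\neq 0$.} The set $S\cap (R\times R^{\neq})$ is an open neighbourhood of $(x_0,y_0)$ in $S$. On it, $\Psi(x,y)=\xi(x)\cdot y^{-1}$, which is continuous because $\xi$ is continuous and $(\cdot)^{-1}$ is continuous on $R^{\neq}$. So $\Psi$ is continuous at $(x_0,y_0)$.

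\emph{Case $y_0=0$.} By the definition of $S$, we must have $x_0<0$, and $\Psi(x_0,0)=0$. Since $x_0<0$, the set $V:=(-\infty,0)\times R$ is an open neighbourhood of $(x_0,0)$ in $R^2$. Take any $(x,y)\in V\cap S$. Then $x<0$, so by (T$\xi2$) we have $\xi(x)=0$, hence $\Psi(x,y)=0$ whether or not $y$ vanishes. Thus $\Psi$ is identically $0$ on the neighbourhood $V\cap S$, and in particular it is continuous at $(x_0,0)$.

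In this second case, continuity of $\xi$ is not needed at all. The only delicate point is the boundary $y_0=0$, and the defining condition of $S$, together with the fact that $\xi$ vanishes on $(-\infty,0]$ (which follows from (T$\xi1$) and (T$\xi2$)), settles it. It remains to note that continuity at every point of $S$ yields continuity of $\Psi$ on $S$. This holds for the order topology on an arbitrary ordered field $R$, because we used only open boxes and the continuity of the field operations, never completeness. I do not expect a real obstacle; the argument should be straightforward once the case $y_0=0$ is handled as above.
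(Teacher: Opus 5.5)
Your proof is correct and follows the paper's own argument essentially verbatim. The paper makes the same split: for $y_0\neq 0$, $\Psi=\xi(x)/y$ on a neighbourhood; for $y_0=0$, the definition of $S$ forces $x_0<0$, so (T$\xi1$) and (T$\xi2$) make $\Psi\equiv 0$ on a neighbourhood. One small wording point: in the second case, $\xi(x)=0$ for $x<0$ needs (T$\xi1$) as well as (T$\xi2$), as you yourself note a few lines later.
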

\begin{proof}
Let $(x_0,y_0)\in S$ be arbitrary. We show that $\Psi$ is continuous at $(x_0,y_0)$. There are two cases.

Case 1: ($y_0\neq 0$) Since $(x,y)\mapsto y$ is continuous, there is an open neighbourhood $U$ of $(x_0,y_0)$ in $R^2$ such that $y\neq 0$ for all $(x,y)\in U$. On $U\cap S$ we have
\[
\Psi(x,y) \ = \ \frac{\xi(x)}{y}
\]
Now $(x,y)\mapsto x$ is continuous, $\xi$ is continuous by assumption, and inversion is continuous on $R^{\neq}$. Hence $\Psi$ is continuous on $U\cap S$, in particular at $(x_0,y_0)$.

Case 2: ($y_0=0$) We have $x_0<0$ by definition of $S$. Since $(x,y)\mapsto x$ is continuous, there is an open neighbourhood $U$ of $(x_0,y_0)$ in $R^2$ such that $x<0$ for all $(x,y)\in U$. By (T$\xi1$) and (T$\xi2$), we have $\xi(x)=0$ whenever $x\leq 0$. Hence $\xi(x)=0$ for all $(x,y)\in U$, and therefore $\Psi=0$ on $U\cap S$. In particular, $\Psi$ is continuous at $(x_0,y_0)$.
\end{proof}

\begin{corollary}\label{sP_C0_general_criterion_cor}
Expand $\bm{R}$ to a model of $T_{\operatorname{sP}}$ such that:
\begin{enumerate}
\item $\xi:R\to R$ is continuous,
\item the restrictions of $\sigma$ and $\rho$ to $(0,+\infty)$ are continuous,
\item $\delta:R\to R$ is continuous, and
\item $\nu:D_{\nu}\to R$ is continuous, where $D_{\nu}:=\{(a,b)\in R^2:\text{if $a\leq0$, then $b>0$}\}$.
\end{enumerate}
Then the algebra $A=C^0(X,R)$ satisfies the axioms (AsP).
\end{corollary}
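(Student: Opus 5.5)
The plan is to verify each of the seven axioms making up (AsP) for $A=C^0(X,R)$ by reducing every closure requirement to Lemma~\ref{general_composition_continuity_lemma}. In each case we exhibit a continuous function $H$ on a suitable domain $Y\subseteq R^m$ and check that the tuple of input functions takes values in $Y$.

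\begin{itemize}
\item \textbf{(A0).} Constants are continuous, and $-$, $+$, $\cdot$ are continuous on $R$ and $R^2$. Hence Lemma~\ref{general_composition_continuity_lemma} with $Y=R$ or $Y=R^2$ gives closure under negation, sums and products.
\item \textbf{(A1).} If $f>0$ on $X$, then $f$ takes values in $Y=R^{\neq}$, where inversion is continuous, so $f^{-1}\in A$.
\item \textbf{(A2), (A$\delta$).} Since $\xi$ and $\delta$ are continuous on all of $R$, take $H=\xi$ and $H=\delta$ with $Y=R$.
\item \textbf{(A3s).} If $f>0$ on $X$, then $f$ maps into $Y=(0,+\infty)$. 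By hypothesis (2) the restrictions of $\sigma$ and $\rho$ to $Y$ are continuous, so $\sigma(f),\rho(f)\in A$.
\item \textbf{(A$\nu$).} Given $\{f\leq0\}\subseteq\{g>0\}$, the pair $(f(x),g(x))$ lies in $D_\nu$ for every $x\in X$. Hypothesis (4) together with Lemma~\ref{general_composition_continuity_lemma} then gives $\nu(f,g)\in A$.
\end{itemize}

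The only step needing a genuine argument is (A4s), because the totalized inverse $g^{-1}$ is not continuous where $g$ vanishes. Suppose $\{f\geq0\}\subseteq\{g\neq0\}$. Then $(f(x),g(x))\in S$ for every $x\in X$, where $S$ is the set defined before Lemma~\ref{general_C0_case_Psi_cont}.

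We claim that $\xi(f)\cdot g^{-1}=\Psi(f,g)$ pointwise.
\begin{itemize}
\item Where $g(x)\neq0$, both sides equal $\xi(f(x))/g(x)$.
\item Where $g(x)=0$, the side condition forces $f(x)<0$. Then $\xi(f(x))=0$ by (T$\xi1$)--(T$\xi2$), and $0^{-1}=0$, so both sides are $0$.
\end{itemize}
Since $\xi$ is continuous, Lemma~\ref{general_C0_case_Psi_cont} shows that $\Psi$ is continuous on $S$. Lemma~\ref{general_composition_continuity_lemma} with $Y=S$ and $H=\Psi$ then yields $\xi(f)g^{-1}\in A$.

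I expect the main care to lie in this pointwise identification with $\Psi$, especially the convention $0^{-1}=0$ and the use of the side condition to keep inputs inside $S$. All the continuity content is already contained in Lemma~\ref{general_C0_case_Psi_cont}. Collecting the seven items gives (AsP).
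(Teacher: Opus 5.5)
Your proposal is correct and matches the paper's proof: each axiom is reduced to Lemma~\ref{general_composition_continuity_lemma}, and (A4s) is handled through the continuity of $\Psi_\xi$ on $S$ from Lemma~\ref{general_C0_case_Psi_cont}. Your explicit pointwise check that $\xi(f)g^{-1}=\Psi(f,g)$ is a harmless addition: at zeros of $g$ both sides already vanish by the convention $0^{-1}=0$, and the side condition is what keeps $(f,g)$ inside $S$.
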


\begin{proof}
Let $f,g\in A$ be arbitrary. We will use Lemma~\ref{general_composition_continuity_lemma} in each of the following.

(A0) Since $0,1,-,+,\cdot$ are continuous on $R$, it follows that $0,1,-f,f+g,f\cdot g\in A$.

(A1) If $f>0$ on $X$, then since $(\cdot)^{-1}:R^{\neq}\to R$ is continuous, it follows that $f^{-1}\in A$.

(A2) By assumption $\xi:R\to R$ is continuous. Thus $\xi(f)\in A$.

(A3s) By assumption $\sigma,\rho$ are continuous on $(0,+\infty)$. Thus if $f>0$ on $X$, then $\sigma(f),\rho(f)\in A$.

(A4s) Suppose $\{f\geq 0\}\subseteq\{g\neq 0\}$. Then we have $(f(x),g(x))\in S$ for every $x\in X$. By assumption $\xi:R\to R$ is continuous, and so by Lemma~\ref{general_C0_case_Psi_cont} the function $\Psi_{\xi}:S\to R$ is continuous; thus the function $\Psi_{\xi}(f,g)=\xi(f)\cdot g^{-1}$ lies in $A$.

(A$\delta$) By assumption $\delta:R\to R$ is continuous, thus $\delta(f)\in A$.

(A$\nu$) Assume $\{f\leq 0\}\subseteq\{g>0\}$. Then $(f(x),g(x))\in D_{\nu}$ for all $x\in X$. Since $\nu:D_{\nu}\to R$ is continuous by assumption, it follows that $\nu(f,g)\in A$.
\end{proof}

\noindent For the weak case, we introduce the following asymptotic relation on functions $R\to R$:

\begin{definition}
Suppose $f,g:R\to R$.
We say $f$ is \textbf{strictly dominated by} $g$ at $0^+$ (notation: $f\prec g$ at $0^+$) if for every $\varepsilon\in (0,+\infty)$, there exists $\delta\in (0,+\infty)$ such that $|f(t)|< \varepsilon |g(t)|$ for every $t\in (0,\delta)$.
\end{definition}

\noindent Our next lemma is helpful for axiom (A4w). First define a function $\Theta=\Theta_{\xi}$ as follows:
\[
\Theta \ : \ R\to R,\quad x \ \mapsto \ \Theta(x) \ := \ \begin{cases}
\displaystyle\frac{\xi(-x)}{x} & \text{if $x\neq 0$} \\
0 & \text{if $x=0$}
\end{cases}
\]
\begin{lemma}\label{key_A4w_C0_lemma}
Suppose $\xi:R\to R$ satisfies (T$\xi1$) and (T$\xi2$). If $\xi$ is continuous, then
\begin{enumerate}
\item $\Theta$ is continuous on $R^{\neq}$, and
\item\label{key_A4w_C0_lemma_asymp_cond} $\Theta$ is continuous on $R$ if and only if $\xi(t)\prec t$ at $0^+$.
\end{enumerate}
\end{lemma}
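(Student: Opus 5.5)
Both parts reduce to local computations, using only that $\xi$ is continuous and satisfies (T$\xi1$)--(T$\xi2$). From (T$\xi1$) and (T$\xi2$) I first record that $\xi(t)=0$ for $t\leq 0$ and $\xi(t)>0$ for $t>0$. Hence
\[
\Theta(x)=0 \quad\text{for } x\geq 0, \qquad \Theta(x)=\frac{\xi(-x)}{x}<0 \quad\text{for } x<0 .
\]

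For part (1), on the open set $R^{\neq}=(-\infty,0)\cup(0,+\infty)$ the function $\Theta$ agrees with $x\mapsto \xi(-x)\cdot x^{-1}$. This is continuous there: negation and multiplication are continuous, $\xi$ is continuous by assumption, and inversion is continuous on $R^{\neq}$. Because $R^{\neq}$ is open in the order topology, continuity of this restriction is continuity of $\Theta$ at every nonzero point. This is Lemma~\ref{general_composition_continuity_lemma} with $X=Y=R^{\neq}$. I could also argue more simply that $\Theta\equiv 0$ on $(0,+\infty)$.

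For part (2), by (1) the function $\Theta$ is continuous on $R$ if and only if it is continuous at $0$. Since $\Theta(0)=0$ and $\Theta\equiv 0$ on $[0,+\infty)$, continuity at $0$ is equivalent to the one-sided condition
\[
\forall\varepsilon>0\ \exists\delta>0\ \forall x\in(-\delta,0):\ |\Theta(x)|<\varepsilon .
\]
Here I use that the neighbourhoods $(-\delta,\delta)$ form a base at $0$ in the order topology.

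Next I substitute $t=-x\in(0,\delta)$. This gives $|\Theta(-t)|=\xi(t)/t=|\xi(t)|/|t|$, so the condition becomes: for every $\varepsilon>0$ there is $\delta>0$ with $|\xi(t)|<\varepsilon|t|$ for all $t\in(0,\delta)$. That is exactly the definition of $\xi(t)\prec t$ at $0^+$. Both directions of the equivalence follow from this same chain of equivalences.

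The only delicate point is stating the topology correctly. $R$ is an arbitrary ordered field, so there are no sequences and no real limits. I would phrase everything with $\varepsilon$ and $\delta$ taken from $(0,+\infty)\subseteq R$ and the basic order-topology neighbourhoods $(-\delta,\delta)$. I expect no genuine obstacle beyond this.
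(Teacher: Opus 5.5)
Your proposal is correct and follows essentially the same argument as the paper. Part (1) is handled by continuity of compositions on $R^{\neq}$. Part (2) reduces to continuity at $0$, uses that $\Theta$ vanishes on $[0,+\infty)$, and substitutes $t=-x$ to match the definition of $\xi(t)\prec t$ at $0^+$. The only cosmetic difference is that you package both directions as a single chain of equivalences, whereas the paper proves $(\Rightarrow)$ and $(\Leftarrow)$ separately.
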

\begin{proof}
Assume $\xi$ is continuous.

(1) Since inversion is continuous on $R^{\neq}$, the function $x\mapsto \frac{\xi(-x)}{x}$ is continuous on $R^{\neq}$.

(2) ($\Rightarrow$) Since $\Theta(0)=0$ and $\Theta$ is continuous at $0$, for every $\varepsilon>0$ there exists $\delta>0$ such that
\[
|x| \ < \ \delta\quad\Rightarrow\quad |\Theta(x)| \ < \ \varepsilon
\]
Let $t\in R$ satisfy $0<t<\delta$. Then $|-t|<\delta$, so $|\Theta(-t)|<\varepsilon$. But
\[
\Theta(-t) \ = \ \frac{\xi(t)}{-t} \ = \ -\frac{\xi(t)}{t}
\]
and since $\xi(t)\geq 0$ by (T$\xi1$), this gives:
\[
\frac{\xi(t)}{t} \ = \ |\Theta(-t)| \ < \ \varepsilon
\]
Thus $\xi(t)<\varepsilon t$, proving $\xi(t)\prec t$ at $0^+$.

($\Leftarrow$) Now additionally assume $\xi(t)\prec t$ at $0^+$. By (1) it remains to prove that $\Theta$ is continuous at $0$.

Let $\varepsilon>0$ be arbitrary. Since $\xi(t)\prec t$ at $0^+$, there exists $\delta>0$ such that:
\[
0 \ < \ t \ < \ \delta \quad\Rightarrow\quad \xi(t) \ < \ \varepsilon t
\]
Let $x\in R$ satisfy $|x|<\delta$. It suffices to show $|\Theta(x)|<\varepsilon$.

If $x=0$, then $\Theta(x)=0$, so we are done. If $x>0$, then $-x<0$, so by (T$\xi1$) and (T$\xi2$) we have $\xi(-x)=0$. Hence $\Theta(x)=0$ and we are done.

Now suppose $x<0$, then $t:=-x$ satisfies $0<t<\delta$, so
\[
\xi(-x) \ = \ \xi(t) \ < \ \varepsilon t \ = \ \varepsilon(-x)
\]
Since $x<0$, dividing by $-x$ gives:
\[
|\Theta(x)| \ = \ \left|\frac{\xi(-x)}{x}\right| \ = \ \frac{\xi(-x)}{-x} \ < \ \varepsilon \qedhere
\]
\end{proof}

\noindent For axiom (A5w), define the following set and function. Define the set:
\[
Q \ := \ \{(x,y)\in R^2:\text{if $y\geq 0$, then $x\geq 0$}\} \ \subseteq \ R^2
\]
Given $\xi:R\to R$, define a function $\Phi=\Phi_{\xi}$ as follows:
\[
\Phi \ : \ Q\to R,\quad (x,y) \ \mapsto \ \Phi(x,y) \ := \ \begin{cases}
\displaystyle\frac{\xi(x+4y)}{x+y} & \text{if $x+y\neq 0$} \\
0 & \text{if $x+y=0$}
\end{cases}
\]

\begin{proposition}\label{key_A5w_C0_proposition}
Suppose $\xi:R\to R$ satisfies (T$\xi1$) and (T$\xi2$). If $\xi$ is continuous, then
\begin{enumerate}
\item $\Phi$ is continuous on $Q\setminus\{(0,0)\}$, and
\item\label{key_A5w_C0_proposition_asymp_cond} $\Phi$ is continuous on $Q$ if and only if $\xi(t)\prec t$ at $0^+$.
\end{enumerate}
\end{proposition}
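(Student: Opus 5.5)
The proof mirrors Lemmas~\ref{general_C0_case_Psi_cont} and~\ref{key_A4w_C0_lemma}. We first locate the points of $Q$ where the denominator $x+y$ vanishes, and then treat the origin separately.

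\emph{Part (1).} Fix $(x_0,y_0)\in Q\setminus\{(0,0)\}$.

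If $x_0+y_0\neq 0$, take a neighbourhood on which $x+y\neq 0$. There $\Phi(x,y)=\xi(x+4y)/(x+y)$ is a composition of continuous maps, so $\Phi$ is continuous at $(x_0,y_0)$.

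If $x_0+y_0=0$, then $y_0=-x_0$. Suppose $y_0\geq 0$. The defining condition of $Q$ gives $x_0\geq 0$, so $x_0=-y_0\leq 0$, forcing $x_0=y_0=0$, which is excluded. Hence $y_0<0$ and $x_0=-y_0>0$. Then $x_0+4y_0=3y_0<0$. By continuity there is a neighbourhood $U$ on which $x+4y<0$. On $U$, (T$\xi1$) and (T$\xi2$) give $\xi(x+4y)=0$, so $\Phi\equiv 0$ on $U\cap Q$ under either branch of the definition. This gives continuity at $(x_0,y_0)$.

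\emph{Part (2), ($\Leftarrow$).} Assume $\xi(t)\prec t$ at $0^+$; by Part (1) it remains to show continuity at $(0,0)$, where $\Phi(0,0)=0$. The key estimate is the sign geometry of Lemma~\ref{common_sign_geometry_lemma}, applied pointwise with $g=x$, $h=y$; the hypothesis $(x,y)\in Q$ is exactly its side condition. Whenever $x+4y>0$ it yields $x+y>0$ and $x+y\geq \frac34 x$. We also need the upper bound $x+4y\leq 4(x+y)$, which follows from $x+4y=4(x+y)-3x$ once we know $x\geq 0$; and $x\geq 0$ holds on $U_1$, since either $y\geq 0$ and the side condition applies, or $y<0$ and then $x>-4y>0$.

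Given $\varepsilon>0$, choose $\delta$ with $\xi(t)<\varepsilon t$ for $0<t<\delta$. Take $(x,y)\in Q$ with $|x|,|y|<\delta/5$, so that $0<t:=x+4y<\delta$ whenever $x+4y>0$. On such points
\[
0\leq \Phi(x,y)=\frac{\xi(t)}{x+y}<\frac{\varepsilon t}{x+y}\leq 4\varepsilon .
\]
At all other points of $Q$ we have $\Phi=0$: either $x+4y\leq 0$, so $\xi(x+4y)=0$, or the denominator convention applies.

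\emph{Part (2), ($\Rightarrow$).} Test along the ray $(t,0)$ with $t>0$, which lies in $Q$. There $\Phi(t,0)=\xi(t)/t$. Continuity at the origin with $\Phi(0,0)=0$ then gives $\xi(t)/t<\varepsilon$ for small $t>0$, which is exactly $\xi(t)\prec t$ at $0^+$.

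The main obstacle is the origin estimate in ($\Leftarrow$). It hinges on the comparability $x+4y\leq 4(x+y)$ on the region where the numerator is nonzero, and this is precisely where membership in $Q$ is used.
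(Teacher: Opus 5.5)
Your proof is correct and follows the paper's argument essentially step for step: the same two-case split in Part (1) (with $x_0+4y_0=3y_0=-3x_0<0$ where the denominator vanishes), the same test ray $(t,0)$ for ($\Rightarrow$), and the same comparison $x+4y\leq 4(x+y)$ on $\{x+4y>0\}$ for ($\Leftarrow$). The only cosmetic differences are that you obtain $x+y>0$ by applying Lemma~\ref{common_sign_geometry_lemma} pointwise rather than rederiving it, and that you bound $\Phi$ by $4\varepsilon$ on the radius $\delta/5$ instead of starting from $\varepsilon/4$ and the radius $\delta_0/8$.
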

\begin{proof}
Assume $\xi$ is continuous.

(1) Let $(x_0,y_0)\in Q\setminus\{(0,0)\}$ be arbitrary. We will show that $\Phi$ is continuous at $(x_0,y_0)$. There are two cases.

Case 1: ($x_0+y_0\neq 0$) Since $(x,y)\mapsto x+y$ is continuous, there is an open neighbourhood $U$ of $(x_0,y_0)$ in $R^2$ such that $x+y\neq 0$ for every $(x,y)\in U$. On $U\cap Q$ we therefore have
\[
\Phi(x,y) \ = \ \frac{\xi(x+4y)}{x+y}
\]
Now $(x,y)\mapsto x+4y$ is continuous, $\xi$ is continuous by assumption, and inversion is continuous on $R^{\neq}$. Hence $\Phi$ is continuous on $U\cap Q$, in particular at $(x_0,y_0)$.

Case 2: ($x_0+y_0=0$) Then $x_0=-y_0$. By definition of $Q$, we must have $y_0\leq 0$ and $x_0\geq 0$. Hence $y_0<0<x_0$ since $(x_0,y_0)\neq (0,0)$.
We have $x_0+4y_0=-3x_0<0$. Since $(x,y)\mapsto x+4y$ is continuous, there is an open neighbourhood $U$ of $(x_0,y_0)$ in $R^2$ such that $x+4y<0$ for all $(x,y)\in U$. Now by (T$\xi1$) and (T$\xi2$), we have $\xi(x+4y)=0$ on $U$, so $\Phi=0$ on $U\cap Q$. Thus $\Phi$ is continuous at $(x_0,y_0)$.

(2) ($\Rightarrow$) Assume $\Phi$ is continuous on $Q$, hence at $(0,0)$. Since $\Phi(0,0)=0$, for every $\varepsilon>0$ there exists $\delta>0$ such that whenever $(x,y)\in Q$ and
\[
|x| \ < \ \delta,\quad |y| \ < \ \delta,
\]
we have:
\[
|\Phi(x,y)| \ < \ \varepsilon
\]
Let $t\in R$ satisfy $0<t<\delta$. Then $(t,0)\in Q$, and
\[
\Phi(t,0) \ = \ \frac{\xi(t)}{t}
\]
Indeed, $t+0\neq 0$ and $t+4\cdot 0=t$. Therefore:
\[
\frac{\xi(t)}{t} \ = \ |\Phi(t,0)| \ < \ \varepsilon
\]
where we used $\xi(t)\geq 0$ by (T$\xi1$). Hence
\[
\xi(t) \ < \ \varepsilon t
\]
for all $0<t<\delta$. This proves $\xi(t)\prec t$ at $0^+$.

($\Leftarrow$) Now  assume $\xi(t)\prec t$ at $0^+$. We will show that $\Phi$ is continuous at $(0,0)$.

Let $\varepsilon>0$ be arbitrary. Since $\xi(t)\prec t$ at $0^+$, there exists $\delta_0>0$ such that
\[
0 \ < \ t \ < \ \delta_0 \quad\Rightarrow\quad \xi(t) \ < \ \frac{\varepsilon}{4}t
\]
Set $\delta:=\delta_0/8$, and let $(x,y)\in Q$ satisfy:
\[
|x| \ < \ \delta,\quad |y| \ < \ \delta
\]
It suffices to show $|\Phi(x,y)|<\varepsilon$, which we do now.

If $x+y=0$, then $\Phi(x,y)=0$ by definition, so we are done. Thus suppose $x+y\neq 0$. If $\xi(x+4y)=0$, then again $\Phi(x,y)=0$, so we are done. Hence we may also assume $\xi(x+4y)>0$, i.e., $x+4y>0$ by (T$\xi2$). We show $x+y>0$. There are two cases:
\begin{itemize}
\item If $y\geq 0$, then because $(x,y)\in Q$, we have $x\geq 0$, so $x+y\geq 0$. Since we are assuming $x+y\neq 0$, it follows that $x+y>0$.
\item If $y<0$, then also $x+y>0$; indeed, if $x+y\leq 0$, then $x+4y=(x+y)+3y<0$, a contradiction.
\end{itemize}
Thus $x+y>0$. Next, since $|x|<\delta$ and $|y|<\delta$, we have:
\[
0 \ < \ x+4y \ \leq \ |x|+4|y| \ < \ 5\delta \ < \ 8\delta \ = \ \delta_0
\]
Hence by the choice of $\delta_0$:
\[
\xi(x+4y) \ < \ \frac{\varepsilon}{4}(x+4y)
\]
It remains to compare $x+4y$ and $x+y$. We claim that:
\[
x+4y \ \leq \ 4(x+y)
\]
Indeed:
\begin{itemize}
\item if $y\geq 0$, then $x\geq 0$, so
\[
x+4y \ \leq \ 4x+4y \ = \ 4(x+y)
\]
\item if $y<0$, then:
\[
x+4y \ = \ (x+y)+3y \ < \ x+y \ \leq \ 4(x+y)
\]
\end{itemize}
Combining the above inequalities, we get:
\[
0 \ \leq \ \Phi(x,y) \ = \ \frac{\xi(x+4y)}{x+y} \ < \ \frac{(\varepsilon/4)(x+4y)}{x+y} \ \leq \ \frac{(\varepsilon/4)\cdot 4(x+y)}{x+y} \ = \ \varepsilon
\]
Thus $|\Phi(x,y)|<\varepsilon$ whenever $(x,y)\in Q$ and $|x|,|y|<\delta$.
\end{proof}

\noindent The following lemma gives full continuity of $\rho$ on $[0,+\infty)$ from \emph{a priori} weaker assumptions:

\begin{lemma}\label{helper_lemma_weak_C0_A3wplus}
Expand $\bm{R}$ to a model of $T_{\operatorname{wP}}$ such that:
\begin{enumerate}
\item $\xi$ is continuous,
\item the restriction $\sigma:[0,+\infty)\to R$ is continuous at $0$, and
\item $t\mapsto\xi(t)\rho(t)$ is continuous on $[0,+\infty)$.
\end{enumerate}
Then:
\begin{enumerate}\setcounter{enumi}{3}
\item for every $\varepsilon>0$, there exists $\eta>0$ such that if $y\geq\varepsilon$, then $\sigma(y)\geq\eta$, and
\item $\rho$ is continuous on $[0,+\infty)$.
\end{enumerate}
\end{lemma}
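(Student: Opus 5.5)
For (4), I will argue by contradiction, using the hypothesis that $\xi\rho$ is continuous. The first step is to record the pointwise facts on $[0,+\infty)$ given by Lemma~\ref{sigma_rho_lemma}: $\rho(0)=0$, $\rho(y)>0$ for $y>0$, and $\sigma(\rho(y))=y$.

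Now suppose (4) fails for some $\varepsilon>0$. Then for every $\eta>0$ there is $y\geq\varepsilon$ with $0\leq\sigma(y)<\eta$. The idea is to use multiplicativity (T$\sigma2$) to transfer smallness of $\sigma$ at a large argument to smallness near $0$. Write $1=\sigma(1)=\sigma(y\cdot y^{-1})=\sigma(y)\sigma(y^{-1})$, so $\sigma(y^{-1})=1/\sigma(y)>1/\eta$. Here $y^{-1}\in(0,1/\varepsilon]$, which alone is not near $0$. Instead I would use points $\varepsilon y^{-1}$, or more robustly $a/y$ for small fixed $a>0$: $\sigma(a)=\sigma(y)\sigma(a/y)$, so $\sigma(a/y)=\sigma(a)/\sigma(y)$ is huge while $a/y\leq a/\varepsilon$ is small.

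Continuity of $\sigma$ at $0$ with $\sigma(0)=0$ gives $\delta$ with $\sigma<1$ on $[0,\delta)$. Fix $a:=\varepsilon\delta/2$. Then $a/y\in(0,\delta)$, while $\sigma(a/y)=\sigma(a)/\sigma(y)>\sigma(a)/\eta$. Since $\sigma(a)>0$ by Lemma~\ref{sigma_rho_lemma}, choosing $\eta<\sigma(a)$ gives a contradiction. So (4) only needs continuity of $\sigma$ at $0$; the $\xi\rho$ hypothesis is not used here.

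For (5), take $y_0\geq 0$. If $y_0>0$, then near $y_0$ we have $\xi>0$ (by (T$\xi2$) and continuity), and $\rho=(\xi\rho)/\xi$ is continuous as a quotient. The main obstacle is continuity at $y_0=0$, i.e.\ $\rho(y)\to0$ as $y\downarrow 0$. Suppose not: there are $\varepsilon>0$ and $y_n\downarrow 0$ (via $\delta$-neighbourhoods, since $R$ need not be first countable) with $\rho(y)\geq\varepsilon$. Applying (4) to $\rho(y)\geq\varepsilon$ gives $y=\sigma(\rho(y))\geq\eta$, which contradicts $y$ being arbitrarily small. Hence for every $\varepsilon$ there is $\delta$ (namely the $\eta$ from (4)) with $0\leq y<\delta\Rightarrow\rho(y)<\varepsilon$, which is continuity at $0$ since $\rho(0)=0$. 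Thus (4) is the crux: the step I expect to need care is the multiplicative rescaling argument there, making sure every point lies in $[0,+\infty)$ so that (T$\sigma2$) applies.
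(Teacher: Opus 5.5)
Your proposal is correct and follows essentially the same route as the paper. For (4), you rescale to $a/y$ with $a=\varepsilon\delta/2$ and use multiplicativity to get $\sigma(a/y)=\sigma(a)/\sigma(y)>1$ inside the region where $\sigma<1$; the paper does the same but simply sets $\eta:=\sigma(c)$ directly rather than arguing by contradiction over $\eta$. For (5), you combine (4) with $\sigma(\rho(t))=t$ for continuity at $0$, and use the quotient $\rho=(\xi\rho)/\xi$ near points $a>0$, exactly as the paper does.
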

\begin{proof}
(4) We use Lemma~\ref{sigma_rho_lemma}, namely:
\[
\sigma(0) \ = \ 0,\quad \sigma(1) \ = \ 1,\quad a>0 \ \Rightarrow \ \sigma(a)>0
\]
Fix $\varepsilon>0$. Since $\sigma$ is continuous at $0$ and $\sigma(0)=0$, there exists $\delta_0>0$ such that
\[
0 \ \leq \ z \ < \ \delta_0\quad\Rightarrow\quad \sigma(z) \ < \ 1
\]
Set:
\[
c \ := \ \frac{\varepsilon\delta_0}{2} \quad\text{and}\quad \eta \ := \ \sigma(c)
\]
Since $c>0$, Lemma~\ref{sigma_rho_lemma} gives $\eta=\sigma(c)>0$.

We claim that if $y\geq \varepsilon$, then $\sigma(y)\geq\eta$. Suppose not. Then for some $y\geq\varepsilon$ we have $\sigma(y)<\eta=\sigma(c)$. Since $y\geq\varepsilon>0$, put $z:=c/y$. Then:
\[
0 \ < \ z \ = \ \frac{c}{y} \ \leq \ \frac{c}{\varepsilon} \ = \ \frac{\delta_0}{2} \ < \ \delta_0
\]
and hence $\sigma(z)<1$ by the choice of $\delta_0$.

On the other hand, since $c>0$ and $y^{-1}>0$, axiom (T$\sigma2$) gives:
\[
\sigma(z) \ = \ \sigma(cy^{-1}) \ = \ \sigma(c)\sigma(y^{-1})
\]
Also:
\[
1 \ = \ \sigma(1) \ = \ \sigma(yy^{-1}) \ = \ \sigma(y)\sigma(y^{-1})
\]
again by (T$\sigma2$). Since $y>0$, Lemma~\ref{sigma_rho_lemma} gives $\sigma(y)>0$, so $\sigma(y^{-1})=1/\sigma(y)$. Therefore:
\[
\sigma(z) \ = \ \frac{\sigma(c)}{\sigma(y)} \ > \ 1
\]
contradicting $\sigma(z)<1$.

(5) We first prove continuity at $0$. By Lemma~\ref{sigma_rho_lemma}, we have $\rho(0)=0$. Let $\varepsilon>0$. By (4), choose $\eta>0$ such that:
\[
y \ \geq \ \varepsilon\quad\Rightarrow\quad \sigma(y) \ \geq \ \eta
\]
We claim that:
\[
0 \ \leq \ t \ < \ \eta\quad\Rightarrow\quad \rho(t) \ < \ \varepsilon
\]
Indeed, suppose $0\leq t<\eta$ and $\rho(t)\geq\varepsilon$. Since $t\geq 0$, axiom (T$\rho$) gives $\rho(t)\geq 0$. Applying the defining property of $\eta$ to $y:=\rho(t)$ gives $\sigma(\rho(t))\geq\eta$. But by (T$\sigma\rho$), $\sigma(\rho(t))=t$, contradicting $t<\eta$. Hence $\rho(t)<\varepsilon$ whenever $0\leq t<\eta$. Since also $\rho(t)\geq 0$ for $t\geq 0$, this proves
\[
|\rho(t)-\rho(0)| \ = \ \rho(t) \ < \ \varepsilon
\]
for all $0\leq t<\eta$. Thus $\rho$ is continuous at $0$.

Now let $a>0$ be arbitrary. By (T$\xi2$), $\xi(a)>0$. Since $\xi$ is continuous, there is a neighbourhood $I\subseteq (0,+\infty)$ of $a$ such that $\xi(t)\neq 0$ for every $t\in I$. On $I$ we have:
\[
\rho(t) \ = \ \frac{\xi(t)\rho(t)}{\xi(t)}
\]
The numerator $\xi(t)\rho(t)$ is continuous by assumption, the denominator $\xi$ is continuous and nonvanishing on $I$, and inversion is continuous on $R^{\neq}$. Hence $\rho$ is continuous at $a$.
\end{proof}

\begin{corollary}\label{wP_C0_general_criterion_cor}
Expand $\bm{R}$ to a model of $T_{\operatorname{wP}}$ such that:
\begin{enumerate}
\item $\xi$ is continuous and $\xi(t)\prec t$ at $0^+$,
\item $\sigma$ is continuous on $[0,+\infty)$, and
\item $t\mapsto \xi(t)\rho(t)$ is continuous on $[0,+\infty)$.
\end{enumerate}
Then the algebra $A=C^0(X,R)$ satisfies the axioms (AwP). Moreover, $\rho$ is continuous on $[0,+\infty)$ and thus $A$ satisfies the stronger axiom:
\begin{itemize}
\item[(A3w${}^{+}$)] if $f\in A$ and $\{f\geq 0\}=X$, then $\sigma(f),\rho(f)\in A$
\end{itemize}
\end{corollary}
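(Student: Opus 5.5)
The plan is to verify the five weak axioms one at a time. Each check reduces to a continuity statement about a scalar function on a subset of $R$ or $R^2$, and is then transferred to $C^0(X,R)$ by Lemma~\ref{general_composition_continuity_lemma}. A preliminary step is needed for the ``moreover'' clause. Continuity of $\sigma$ on $[0,+\infty)$ gives in particular continuity at $0$. Together with continuity of $\xi$ and of $t\mapsto\xi(t)\rho(t)$, this places us in the situation of Lemma~\ref{helper_lemma_weak_C0_A3wplus}, whose part (5) gives continuity of $\rho$ on $[0,+\infty)$. Hence if $f\in A$ with $f\geq0$, then $\sigma(f),\rho(f)\in A$ by composition, which is (A3w${}^{+}$).

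Axiom (A0) is immediate from continuity of the ring operations, exactly as in the proof of Corollary~\ref{sP_C0_general_criterion_cor}, and (A2) follows from continuity of $\xi$. For (A3w), take $f\geq0$ in $A$. Then $\sigma(f)\in A$ because $\sigma$ is continuous on $[0,+\infty)$. The product $\rho(f)\xi(f)$ is the composite of $f$ with the continuous map $t\mapsto\xi(t)\rho(t)$ on $[0,+\infty)$, so it also lies in $A$. Alternatively, it is the product of $\rho(f)$ and $\xi(f)$, both of which are in $A$ by the preliminary step.

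For (A4w), note that $\xi(-f)\cdot f^{-1}$ equals $\Theta_\xi(f)$ under the convention $0^{-1}=0$. Since $\xi$ is continuous and $\xi(t)\prec t$ at $0^+$, Lemma~\ref{key_A4w_C0_lemma}(\ref{key_A4w_C0_lemma_asymp_cond}) gives continuity of $\Theta_\xi$ on all of $R$, so $\Theta_\xi(f)\in A$.

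The step needing the most care is (A5w). Let $g,h\in A$ with $\{h\geq0\}\subseteq\{g\geq0\}$. Then $(g(x),h(x))\in Q$ for every $x\in X$, and Proposition~\ref{key_A5w_C0_proposition}(\ref{key_A5w_C0_proposition_asymp_cond}) shows that $\Phi_\xi$ is continuous on $Q$, so $\Phi_\xi(g,h)\in A$. I then have to check pointwise that
\[
\xi(g^2)\,\xi(g+4h)\,(g+h)^{-1} \ = \ \xi(g^2)\cdot\Phi_\xi(g,h).
\]
Away from $\{g+h=0\}$ the two sides agree by definition. Where $g+h=0$, both sides vanish: the left side because $0^{-1}=0$, the right side because $\Phi_\xi=0$ there by definition. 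The required element is then a product of $\xi(g^2)\in A$ and $\Phi_\xi(g,h)\in A$, and so lies in $A$ by (A0). This completes (AwP).
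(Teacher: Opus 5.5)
Your proposal is correct and follows the paper's proof essentially step for step. You get (A0) and (A2) by composition, (A3w) from continuity of $\sigma$ and of $t\mapsto\xi(t)\rho(t)$ on $[0,+\infty)$, (A4w) from Lemma~\ref{key_A4w_C0_lemma}, (A5w) from Proposition~\ref{key_A5w_C0_proposition} followed by multiplication by $\xi(g^2)$, and (A3w${}^{+}$) from Lemma~\ref{helper_lemma_weak_C0_A3wplus}. You also check explicitly that $\Theta_\xi(f)$ and $\Phi_\xi(g,h)$ agree with the totalized-inverse expressions on the relevant zero sets, a detail the paper leaves implicit.
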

\begin{proof}
Suppose $f,g,h\in A$. We use Lemma~\ref{general_composition_continuity_lemma}.

Axioms (A0) and (A2) follow as in Corollary~\ref{sP_C0_general_criterion_cor}. The same argument also proves (A1), although (A1) is not part of (AwP).

(A3w) Suppose $f\geq 0$ on $X$. By assumption the functions $\sigma$ and $t\mapsto \rho(t)\cdot\xi(t)$ are continuous on $[0,+\infty)$. Thus $\sigma(f),\rho(f)\cdot\xi(f)\in A$.

(A4w) By assumption we have $\xi:R\to R$ is continuous and $\xi(t)\prec t$ at $0^+$. Thus by Lemma~\ref{key_A4w_C0_lemma} it follows that $\Theta:R\to R$ is continuous. Thus $\Theta(f)=\xi(-f)\cdot f^{-1}\in A$.

(A5w) Suppose $\{h\geq 0\}\subseteq\{g\geq 0\}$; then $(g(x),h(x))\in Q$ for every $x\in X$. By Proposition~\ref{key_A5w_C0_proposition}, the function $\Phi:Q\to R$ is continuous. Hence $\Phi(g,h)=\xi(g+4h)\cdot(g+h)^{-1}\in A$. Also $\xi(g^2)\in A$ by (A0) and (A2), so multiplication in $A$ gives
\[
\xi(g^2)\cdot\xi(g+4h)\cdot(g+h)^{-1}\in A,
\]
as required by (A5w).

Finally, Lemma~\ref{helper_lemma_weak_C0_A3wplus} implies $\rho$ is continuous on $[0,+\infty)$. Thus if $f\geq 0$ on $X$, then $\rho(f)\in A$. Thus the stronger axiom (A3w${}^{+}$) holds.
\end{proof}

\subsubsection{Some common choices of \texorpdfstring{$\sigma,\rho,\xi,\delta,\nu$}{sigma, rho, xi, delta, nu}}\label{subsubsection_details_some_common_choices_of_aux_fcts}
\noindent For the remainder of this subsection, we consider common choices of $\sigma,\rho,\xi,\delta,\nu$ available over arbitrary ordered fields and verify their basic properties.

\medskip\noindent First, define the function $\operatorname{ReLU}=\operatorname{ReLU}_R$ on $R$ as follows:
\[
\operatorname{ReLU} \ : \ R\to R,\quad x \ \mapsto \ \operatorname{ReLU}(x) \ := \ \max(0,x)
\]
The following lemma gives the corresponding scalar axioms and continuity properties:

\begin{lemma}\label{general_ReLU_lemma}
The function $\operatorname{ReLU}:R\to R$ is continuous.
Moreover, suppose $s\in\N$ satisfies $s\geq 1$, and define the following functions $R\to R$:
\[
\xi \ := \ \operatorname{ReLU}^s,\quad \sigma \ := \ \operatorname{ReLU}, \quad \rho \ := \ \operatorname{ReLU},\quad \delta \ := \ 2\operatorname{ReLU}+1
\]
Then:
\begin{enumerate}
\item\label{general_ReLU_lemma_scalar_axioms} axioms (T$\xi1$), (T$\xi2$), (T$\sigma1$), (T$\sigma2$), (T$\rho$), (T$\sigma\rho$), (T$\delta$) all hold,
\item $\xi,\sigma,\rho,\delta$ are all continuous, and
\item if $s\geq 2$, then $\xi(t)\prec t$ at $0^+$.
\end{enumerate}
\end{lemma}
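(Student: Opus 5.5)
The plan is to verify each claim directly from the definition $\operatorname{ReLU}(x)=\max(0,x)$, working over an arbitrary ordered field $R$ with the order topology. For continuity of $\operatorname{ReLU}$, I would write $\operatorname{ReLU}(x)=\frac{x+|x|}{2}$. Then use that $|x|=\max(x,-x)$ is continuous in the order topology, since $||x|-|y||\leq|x-y|$. Alternatively, note that $\operatorname{ReLU}$ is $1$-Lipschitz, because $|\max(0,x)-\max(0,y)|\leq|x-y|$. Lipschitz maps are continuous in the order topology: given an interval $(a-\varepsilon,a+\varepsilon)$, take the same $\varepsilon$. Continuity of $\xi=\operatorname{ReLU}^s$, $\sigma$, $\rho$, $\delta$ then follows by closure under products, sums and constants, via Lemma~\ref{general_composition_continuity_lemma}, since $+$ and $\cdot$ are continuous.

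For the scalar axioms in (1), I would check them one at a time.

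\begin{itemize}
\item (T$\xi1$): $\operatorname{ReLU}(a)^s\geq0$ because $\operatorname{ReLU}(a)\geq0$.
\item (T$\xi2$): if $a>0$ then $\operatorname{ReLU}(a)^s=a^s>0$. If $a\leq0$ then $\operatorname{ReLU}(a)=0$, so $\operatorname{ReLU}(a)^s=0$ since $s\geq1$.
\item (T$\sigma1$) and (T$\rho$): both are immediate from $\operatorname{ReLU}\geq0$.
\item (T$\sigma2$): for $a,b\geq0$ we have $ab\geq0$, so $\sigma(ab)=ab=\sigma(a)\sigma(b)$.
\item (T$\sigma\rho$): for $a\geq0$, $\rho(a)=a$ and so $\sigma(\rho(a))=a$.
\item (T$\delta$): $\delta(a)=2\operatorname{ReLU}(a)+1\geq1>0$. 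Also $2a\leq2\operatorname{ReLU}(a)<\delta(a)$, since $a\leq\max(0,a)$.
\end{itemize}

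For (3), take $s\geq2$ and $\varepsilon>0$, and choose $\delta:=\min(\varepsilon,1)$; only $\delta\leq\varepsilon$ matters. For $0<t<\delta$ we get $|\xi(t)|=t^s=t^{s-1}\cdot t$. Moreover $t^{s-1}<\varepsilon$: since $0<t<1$ and $s-1\geq1$, we have $t^{s-1}\leq t<\varepsilon$. Hence $|\xi(t)|<\varepsilon|t|$, so $\xi(t)\prec t$ at $0^+$.

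None of the steps is a real obstacle. The only point needing care is working in a general ordered field, which need not be Archimedean or complete. So I would state continuity arguments purely in $\varepsilon$–$\delta$ form with $\varepsilon,\delta\in R$, and avoid any limits or real-analytic facts. The $1$-Lipschitz estimate for $\operatorname{ReLU}$ is the key step, and I would prove it by a short case split on the signs of $x$ and $y$.
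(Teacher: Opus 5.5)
Your proof is correct and follows essentially the same direct verification as the paper. The differences are cosmetic: you use a $1$-Lipschitz estimate for $\operatorname{ReLU}$ instead of the paper's three-case pointwise $\varepsilon$--$\delta$ check, a one-line chain $2a\leq 2\operatorname{ReLU}(a)<\delta(a)$ for (T$\delta$), and $\delta=\min(\varepsilon,1)$ instead of $\varepsilon/(1+\varepsilon)$. One small slip: your aside that ``only $\delta\leq\varepsilon$ matters'' is inaccurate, because the bound $\delta\leq 1$ is what gives $t^{s-1}\leq t$ (with $s=3$, $\varepsilon=2$, $t=1.9$ the estimate fails without it); your actual argument uses both bounds, so nothing breaks.
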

\begin{proof}
Let $r:=\operatorname{ReLU}$, so:
\[
r(x) \ = \ \max(0,x) \ = \ \begin{cases}
0 & \text{if $x\leq 0$} \\
x & \text{if $x>0$}
\end{cases}
\]
We first show that $r$ is continuous.
Let $a\in R$, there are three cases:

Case 1: ($a<0$) Choose $\delta:=(-a)/2>0$. If $|x-a|<\delta$, then
\[
x \ < \ a+\delta \ = \ a+\frac{-a}{2} \ = \ \frac{a}{2} \ < \ 0
\]
so $r(x)=0=r(a)$. Thus $r$ is continuous at $a$.

Case 2: ($a>0$) Choose $\delta:=a/2>0$. If $|x-a|<\delta$, then
\[
x \ > \ a-\delta \ = \ \frac{a}{2} \ > \ 0
\]
so $r(x)=x$. Hence $r$ agrees locally with the identity map near $a$, and thus is continuous at $a$.

Case 3: ($a=0$) Given $\varepsilon>0$, choose $\delta:=\varepsilon$. If $|x|<\delta$, then either $x\leq 0$, in which case $r(x)=0$, or $x>0$, in which case $r(x)=x<\delta=\varepsilon$. Thus:
\[
|r(x)-r(0)| \ = \ |r(x)| \ < \ \varepsilon
\]
Thus $r$ is continuous at $0$.

(1) Now we check the axioms; in the following let $a$ range over $R$.

(T$\xi1$) $r(a)\geq 0$, hence $\xi(a)=r(a)^s\geq 0$.

(T$\xi2$) We have
\[
\xi(a) \ > \ 0 \quad\Leftrightarrow\quad r(a)^s \ > \ 0 \quad\Leftrightarrow\quad r(a) \ > \ 0 \quad\Leftrightarrow\quad a \ > \ 0
\]
Indeed, if $r(a)=0$, then $r(a)^s=0$, while if $r(a)>0$, then $r(a)^s>0$.

(T$\sigma1$) This is clear since $\sigma(a)=r(a)\geq 0$.

(T$\sigma2$) Suppose $a,b\geq 0$. Then $ab\geq 0$, and hence:
\[
\sigma(ab) \ = \ r(ab) \ = \ ab \ = \ r(a)r(b) \ = \ \sigma(a)\sigma(b)
\]

(T$\rho$) If $a\geq 0$, then $\rho(a)=r(a)=a\geq 0$.

(T$\sigma\rho$) Moreover, if $a\geq 0$ we have:
\[
\sigma(\rho(a)) \ = \ r(r(a)) \ = \ r(a) \ = \ a
\]

(T$\delta$) Since $r(a)\geq 0$:
\[
\delta(a) \ = \ 2r(a)+1 \ > \ 0
\]
If $a\geq 0$, then $r(a)=a$, so
\[
\delta(a) \ = \ 2a+1 \ \geq \ 2a
\]
If $a<0$, then $r(a)=0$, so
\[
\delta(a) \ = \ 1 \ > \ 0 \ > \ 2a
\]

(2) Since constant functions, addition, and multiplication are all continuous on $R$, it follows that $\xi,\sigma,\rho,\delta$ are all continuous.

(3) Assume $s\geq 2$. We show that $\xi(t)\prec t$ at $0^+$. Let $\varepsilon>0$. Choose $\delta:=\varepsilon/(1+\varepsilon)$. Then $\delta\in(0,1)$, and $\delta<\varepsilon$. If $0<t<\delta$, then $0<t<1$ and $t<\varepsilon$. Since $s-1\geq 1$, we have:
\[
0 \ \leq \ t^{s-1} \ \leq \ t \ < \ \varepsilon
\]
Therefore:
\[
\xi(t) \ = \ r(t)^s \ = \ t^s \ = \ t^{s-1}t \ < \ \varepsilon t \qedhere
\]
\end{proof}

\noindent We next consider the choice $\nu=\max$. in our examples:

\begin{lemma}\label{general_max_nu_lemma}
Define the function $\nu:=\max:R^2\to R$. Then:
\begin{enumerate}
\item\label{general_max_nu_lemma_axioms} axioms (T$\nu1$) and (T$\nu2$) both hold, and
\item $\nu$ is continuous.
\end{enumerate}
\end{lemma}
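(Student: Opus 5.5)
The plan is to verify the two scalar axioms directly from the definition $\nu(a,b)=\max(a,b)$ in the ordered field $R$, and then obtain continuity from the continuity of $\operatorname{ReLU}$ established in Lemma~\ref{general_ReLU_lemma}.

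For (T$\nu1$), fix $a,b\in R$ with $a>0$ or $b>0$. Since $\max(a,b)\geq a$ and $\max(a,b)\geq b$, in either case $\max(a,b)$ dominates a strictly positive element, so $\nu(a,b)>0$. For (T$\nu2$), suppose $a\leq 0$ and $b>0$. Then $a<b$, so $\max(a,b)=b$, and in particular $\nu(a,b)\leq b$; in fact equality holds.

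For continuity, I will use the algebraic identity
\[
\max(a,b) \ = \ a+\operatorname{ReLU}(b-a),
\]
which holds in any ordered field: if $b\leq a$ the right side is $a+0=a$, and if $b>a$ it is $a+(b-a)=b$. The maps $(a,b)\mapsto b-a$ and $(a,b)\mapsto a$ are continuous $R^2\to R$, since subtraction and projections are continuous for the order and product topologies. $\operatorname{ReLU}$ is continuous by Lemma~\ref{general_ReLU_lemma}, and addition is continuous. So $\nu$ is a composition of continuous maps, and Lemma~\ref{general_composition_continuity_lemma} (with $X=R^2$ and $Y=R$) gives continuity on all of $R^2$, hence in particular on $D_\nu$.

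There is no real obstacle here. The only point needing care is that we work in an arbitrary ordered field with the order topology, so we should avoid any appeal to completeness. The reduction to $\operatorname{ReLU}$, whose continuity was already proved by a purely order-theoretic $\varepsilon$–$\delta$ argument, avoids that issue. Alternatively, a direct three-case $\varepsilon$–$\delta$ argument ($a<b$, $a>b$, $a=b$), using $|\max(a,b)-\max(a',b')|\leq\max(|a-a'|,|b-b'|)$, would also work.
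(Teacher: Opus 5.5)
Your proposal is correct and follows essentially the same route as the paper. Both verify (T$\nu1$) and (T$\nu2$) directly from $\max(a,b)\geq a,b$ and from $a<b$. Both get continuity by writing $\max$ through $\operatorname{ReLU}$ and invoking Lemma~\ref{general_ReLU_lemma}; the paper uses $\max(x,y)=y+\operatorname{ReLU}(x-y)$, the mirror image of your identity.
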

\begin{proof}
(1) Let $(a,b)$ range over $R^2$.

(T$\nu1$) Suppose $a>0$ or $b>0$. If $a>0$, then:
\[
\nu(a,b) \ = \ \max(a,b) \ \geq \ a \ > \ 0
\]
Likewise, if $b>0$, then $\nu(a,b)>0$.

(T$\nu2$) Suppose $a\leq 0$ and $b>0$. Then $a<b$, so:
\[
\nu(a,b) \ = \ \max(a,b) \ = \ b
\]
In particular, $\nu(a,b)\leq b$.

(2) It remains to prove continuity. Note that for every $(x,y)$ we have:
\[
\max(x,y) \ = \ y+\operatorname{ReLU}(x-y)
\]
By Lemma~\ref{general_ReLU_lemma}, the $\operatorname{ReLU}$ function is continuous, hence so is $\nu=\max:R^2\to R$.
\end{proof}

\noindent In general, the function $x\mapsto x^2:[0,+\infty)\to [0,+\infty)$ is injective but need not be surjective (e.g., $R=\Q$); in the event that this function is surjective, we define the \textbf{square root function}:
\[
\sqrt{\cdot} \ : \ [0,+\infty)\to[0,+\infty),\quad x \ \mapsto \ \sqrt{x} \ := \ \text{the unique $y\in[0,+\infty)$ such that $y^2=x$}
\]
and we say $R$ is \textbf{closed} under square roots. If we choose to use the $\sqrt{\cdot}$ function we have:

\begin{lemma}\label{general_sqrt_lemma}
Suppose $R$ is closed under square roots.
Then $\sqrt{\cdot}:[0,+\infty)\to[0,+\infty)$ is strictly increasing and continuous.
Moreover, define the functions $\sigma,\rho,\delta,\nu$ as follows:
\[
\sigma(x) \ := \ x^2,\quad \rho(x) \ := \ \sqrt{|x|},\quad \delta(x) \ := \ 2\sqrt{1+x^2},\quad \nu(x,y) \ := \ \sqrt{x^2+y^2}+x
\]
Then:
\begin{enumerate}
\item\label{general_sqrt_lemma_axioms} axioms (T$\sigma1$), (T$\sigma2$), (T$\rho$), (T$\sigma\rho$), (T$\delta$), (T$\nu1$), and (T$\nu2$) all hold,
\item $\sigma,\rho$ are continuous,
\item $\delta:R\to R$ is continuous, and
\item $\nu:R^2\to R$ is continuous.
\end{enumerate}
\end{lemma}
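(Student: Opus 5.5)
We first handle the square root itself. For \emph{strict monotonicity}, take $0\leq x<y$. If $\sqrt{x}\geq\sqrt{y}$, then since both are nonnegative we get $x=(\sqrt{x})^2\geq(\sqrt{y})^2=y$, a contradiction. For \emph{continuity}, we work with the order topology and need no completeness of $R$. At $a=0$, given $\varepsilon>0$, take $\delta:=\varepsilon^2$: if $0\leq x<\varepsilon^2$, then $\sqrt{x}<\varepsilon$ by monotonicity, because $\sqrt{\varepsilon^2}=\varepsilon$. At $a>0$, use the identity
\[
\sqrt{x}-\sqrt{a}=\frac{x-a}{\sqrt{x}+\sqrt{a}}.
\]
It gives $|\sqrt{x}-\sqrt{a}|\leq|x-a|/\sqrt{a}$, and this is Lipschitz control near $a$.

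Next we verify the scalar axioms pointwise.
\begin{itemize}
\item (T$\sigma1$) and (T$\sigma2$) are immediate from $\sigma(x)=x^2$, since $(ab)^2=a^2b^2$.
\item (T$\rho$) holds because $\sqrt{|x|}\geq 0$.
\item (T$\sigma\rho$): for $a\geq0$ we have $\sigma(\rho(a))=(\sqrt{a})^2=a$.
\item (T$\delta$): $\delta(a)=2\sqrt{1+a^2}>0$. For the bound $2a\leq\delta(a)$, if $a\leq0$ it is trivial. If $a>0$, then $a^2<1+a^2$, so monotonicity of the square root gives $a=\sqrt{a^2}<\sqrt{1+a^2}$.
\item (T$\nu1$): the key inequality is $\sqrt{a^2+b^2}\geq\sqrt{a^2}=|a|\geq -a$, which gives $\nu(a,b)\geq0$. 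For strict positivity, if $a>0$ then $\nu(a,b)\geq |a|+a=2a>0$. If instead $b>0$ and $a\leq 0$, then $a^2+b^2>a^2$, so $\sqrt{a^2+b^2}>|a|=-a$ strictly, and $\nu(a,b)>0$.
\item (T$\nu2$): for $a\leq0<b$, we have $\sqrt{a^2+b^2}\leq\sqrt{a^2}+\sqrt{b^2}=|a|+b$. This follows by squaring, since $(|a|+b)^2=a^2+b^2+2|a|b\geq a^2+b^2$, and then applying monotonicity. Hence $\nu(a,b)\leq -a+b+a=b$.
\end{itemize}

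Finally, continuity of $\sigma,\rho,\delta,\nu$ follows by composing continuous maps (Lemma~\ref{general_composition_continuity_lemma}). The map $x\mapsto x^2$ is continuous, and so is $|x|=\max(x,-x)$, by Lemma~\ref{general_max_nu_lemma}. The maps $1+x^2$ and $x^2+y^2$ are polynomial, hence continuous, and take values in $[0,+\infty)$, where $\sqrt{\cdot}$ is continuous. Addition and multiplication by $2$ are continuous as well.

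The main obstacle is continuity of $\sqrt{\cdot}$ over a general ordered field. There we must avoid any appeal to the intermediate value theorem or to completeness, which is why we rely on the explicit $\varepsilon^2$ choice at $0$ and the difference-quotient bound at $a>0$.
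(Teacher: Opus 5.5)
Your proof is correct and follows essentially the same route as the paper. The common steps are: strict monotonicity of $\sqrt{\cdot}$ from monotonicity of squaring on $[0,+\infty)$; continuity via the choice $\varepsilon^2$ at $0$ and the bound $|\sqrt{x}-\sqrt{a}|\leq|x-a|/\sqrt{a}$ at $a>0$; and pointwise verification of the axioms by squaring and monotonicity. The differences are cosmetic: you split (T$\delta$) on the sign of $a$, you write $|a|+b$ where the paper writes $b-a$ in (T$\nu2$), and you spell out the composition argument for continuity of $\sigma,\rho,\delta,\nu$ that the paper leaves implicit.
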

\begin{proof}
First, the square map $x\mapsto x^2$ is strictly increasing on $[0,+\infty)$; indeed, if $0\leq u<v$, then:
\[
v^2-u^2 \ = \ (v-u)(v+u) \ > \ 0
\]
Since $R$ is closed under square roots, the square map $[0,+\infty)\to[0,+\infty)$ is strictly increasing bijective. Hence its inverse $\sqrt{\cdot}$ is also strictly increasing.

Next we prove continuity of $\sqrt{\cdot}$. Let $a\in[0,+\infty)$. There are two cases:

Case 1: ($a=0$) Let $\varepsilon>0$ and set $\eta:=\varepsilon^2>0$. If $x\in [0,+\infty)$ and $x<\eta$, then $0\leq x<\varepsilon^2$. By monotonicity of $\sqrt{\cdot}$, this implies $0\leq \sqrt{x}<\varepsilon$. Thus:
\[
|\sqrt{x}-\sqrt{0}| \ < \ \varepsilon
\]

Case 2: ($a>0$) Put $b:=\sqrt{a}>0$. Let $\varepsilon>0$ and set $\eta:=\varepsilon b>0$. If $x\in [0,+\infty)$ satisfies $|x-a|<\eta$, then, using
\[
|x-a| \ = \ |(\sqrt{x})^2-b^2| \ = \ |\sqrt{x}-b|(\sqrt{x}+b)
\]
and using $\sqrt{x}+b\geq b>0$, we obtain:
\[
|\sqrt{x}-b| \ = \ \frac{|x-a|}{\sqrt{x}+b} \ \leq \ \frac{|x-a|}{b} \ < \ \varepsilon
\]
Hence:
\[
|\sqrt{x}-\sqrt{a}| \ < \ \varepsilon
\]

(1) Let $a,b$ range over $R$.

(T$\sigma1$) We have $\sigma(a)=a^2\geq 0$.

(T$\sigma2$) Note that:
\[
\sigma(ab) \ = \ (ab)^2 \ = \ a^2b^2 \ = \ \sigma(a)\sigma(b)
\]

(T$\rho$) If $a\geq 0$, then $|a|=a$, so:
\[
\rho(a) \ = \ \sqrt{|a|} \ = \ \sqrt{a} \ \geq \ 0
\]

(T$\sigma\rho$) If $a\geq 0$, then:
\[
\sigma(\rho(a)) \ = \ (\sqrt{|a|})^2 \ = \ (\sqrt{a})^2 \ = \ a
\]

(T$\delta$) Since $1+a^2>0$, we have $\sqrt{1+a^2}>0$, and therefore:
\[
\delta(a) \ = \ 2\sqrt{1+a^2} \ > \ 0
\]
Also $a^2\leq 1+a^2$. Since $|a|\geq 0$ and $\sqrt{1+a^2}\geq 0$, monotonicity of the square root gives:
\[
|a| \ = \ \sqrt{a^2} \ \leq \ \sqrt{1+a^2}
\]
Therefore:
\[
2a \ \leq \ 2|a| \ \leq \ 2\sqrt{1+a^2} \ = \ \delta(a)
\]

(T$\nu1$) Suppose $a>0$ or $b>0$. If $a>0$, then:
\[
\nu(a,b) \ = \ \sqrt{a^2+b^2}+a \ \geq \ a \ > \ 0
\]
If $b>0$, then $a^2<a^2+b^2$, so by monotonicity of the square root:
\[
|a| \ = \ \sqrt{a^2} \ < \ \sqrt{a^2+b^2}
\]
Hence:
\[
\nu(a,b) \ = \ \sqrt{a^2+b^2}+a \ > \ |a|+a \ \geq \ 0
\]

(T$\nu2$) Suppose $a\leq 0<b$. Then $b-a>0$. We claim that:
\[
\sqrt{a^2+b^2} \ \leq \ b-a
\]
Since both sides are nonnegative, it suffices to square. We have:
\[
a^2+b^2 \ \leq \ (b-a)^2 \ = \ a^2+b^2-2ab
\]
since $-ab\geq 0$. Hence $\sqrt{a^2+b^2}\leq b-a$. Therefore:
\[
\nu(a,b) \ = \ \sqrt{a^2+b^2}+a \ \leq \ (b-a)+a \ = \ b
\]

Finally, the continuity assumptions in (2), (3), and (4) follow from the fact we just established that the square root function is continuous.
\end{proof}

\subsection{The main continuous example}\label{subsection_checking_main_example}

\noindent In this subsection we verify the claims made about the running Main Example~\ref{mainExamplePt1},~\ref{mainExamplePt2},~\ref{mainExamplept3},~\ref{mainExample_pt4}, and~\ref{mainExamplePt5} as an easy consequence from the material in subsection~\ref{appendix_C0_examples_subsection}.
The main example is already covered by~\cite{FischerPsatzDiff}; the novelty is the axiomatic approach.

\medskip\noindent First, recall that in the main example, the scalar structure is an $\calL_{\operatorname{sP}}$-structure:
\[
\bm{R} \ = \ (\R;\leq,0,1,-,+,\cdot,(\cdot)^{-1},\sigma,\rho,\xi,\delta,\nu)
\]
where:
\begin{itemize}
\item the $\calL_{\operatorname{oF}}$-reduct $(\R;\leq,0,1,-,+,\cdot,(\cdot)^{-1})$ of $\bm{R}$ is the usual ordered field of real numbers where all symbols have their usual interpretation with the convention $0^{-1}:=0$, and
\item the additional function symbols in $\calL_{\operatorname{sP}}\setminus\calL_{\operatorname{oF}}$ are interpreted
as follows for every $x,y\in\R$:
\[
\sigma(x) \ := \ x^2,\quad \rho(x) \ := \ \sqrt{|x|},\quad \xi(x) \ := \ \operatorname{ReLU}^2(x),
\]
\[
\delta(x) \ := \ 2\sqrt{1+x^2},\quad \nu(x,y) \ := \ \sqrt{x^2+y^2}+x
\]
\end{itemize}

\begin{corollary}\label{main_example_scalar_theories_checked_cor}
The $\calL_{\operatorname{sP}}$ structure $\bm{R}$ is a model of $T_{\operatorname{wP}}$ and $T_{\operatorname{sP}}$.
\end{corollary}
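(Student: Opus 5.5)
The plan is to reduce everything to the two general lemmas already established for arbitrary ordered fields, specialized to $R=\R$. First, $\R$ is an ordered field closed under square roots, so Lemma~\ref{general_sqrt_lemma} applies to the choices $\sigma(x)=x^2$, $\rho(x)=\sqrt{|x|}$, $\delta(x)=2\sqrt{1+x^2}$, $\nu(x,y)=\sqrt{x^2+y^2}+x$. Part~(\ref{general_sqrt_lemma_axioms}) of that lemma delivers (T$\sigma1$), (T$\sigma2$), (T$\rho$), (T$\sigma\rho$), (T$\delta$), (T$\nu1$), and (T$\nu2$) directly.

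Second, for $\xi=\operatorname{ReLU}^2$ we invoke Lemma~\ref{general_ReLU_lemma} with $s=2$. Its part~(\ref{general_ReLU_lemma_scalar_axioms}) gives (T$\xi1$) and (T$\xi2$). That lemma also certifies $\sigma,\rho,\delta$ as $\operatorname{ReLU}$-based functions, but we use only its $\xi$-conclusions, since the $\sigma,\rho,\delta$ here are the square-root ones. We need to check that the proofs of (T$\xi1$) and (T$\xi2$) there depend only on $\xi$, which is visible from the argument.

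Third, the $\calL_{\operatorname{oF}}$-reduct is the usual real field, with $0^{-1}:=0$, so $T_{\operatorname{oF}}$ holds. Combining these, all axioms of $T_{\operatorname{wP}}$ hold, namely (T$\xi1$), (T$\xi2$), (T$\sigma1$), (T$\sigma2$), (T$\rho$), and (T$\sigma\rho$). Adding (T$\delta$), (T$\nu1$), and (T$\nu2$) gives $T_{\operatorname{sP}}$. Since $T_{\operatorname{sP}}$ extends $T_{\operatorname{wP}}$, it would even suffice to verify $T_{\operatorname{sP}}$ alone.

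There is no real obstacle. The only point requiring care is bookkeeping: the two cited lemmas interpret different symbol sets, so we must make sure each axiom is drawn from the lemma that uses the same interpretation as the Main Example. We also note that (T$\sigma2$) for $\sigma(x)=x^2$ holds for all $a,b$, which is stronger than the axiom requires.
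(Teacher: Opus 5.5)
Your proposal is correct and matches the paper's proof. The paper likewise notes that the $\calL_{\operatorname{oF}}$-reduct is the real field, takes (T$\xi1$) and (T$\xi2$) from Lemma~\ref{general_ReLU_lemma}, and takes the remaining $\sigma,\rho,\delta,\nu$ axioms from Lemma~\ref{general_sqrt_lemma}, using only the $\xi$-conclusions of the ReLU lemma as you do.
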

\begin{proof}
First, the $\calL_{\operatorname{oF}}$-reduct of $\bm{R}$ is the usual ordered field of real numbers, so $\bm{R}\models T_{\operatorname{oF}}$.
Next, axioms (T$\xi1$) and (T$\xi2$) follow from Lemma~\ref{general_ReLU_lemma}(\ref{general_ReLU_lemma_scalar_axioms}).
Finally, axioms (T$\sigma1$), (T$\sigma2$), (T$\rho$), (T$\sigma\rho$), (T$\delta$), (T$\nu1$), and (T$\nu2$)
follow from Lemma~\ref{general_sqrt_lemma}(\ref{general_sqrt_lemma_axioms}).
\end{proof}

\noindent Next, we recall the algebra. Let $U\subseteq\R^n$ be an open set and define $A:=C^0(U,\R)$ to be the collection of all continuous function $U\to\R$.

\begin{corollary}
The algebra $A=C^0(U,\R)$ satisfies all axioms (AsP) and (AwP).
\end{corollary}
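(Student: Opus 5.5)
The plan is to derive this directly from the general criteria in subsection~\ref{appendix_C0_examples_subsection}, taking $R=\R$ with its usual (order $=$ Euclidean) topology and $X=U$ with the subspace topology. Corollary~\ref{main_example_scalar_theories_checked_cor} already shows that $\bm{R}$ is a model of both $T_{\operatorname{wP}}$ and $T_{\operatorname{sP}}$. It therefore suffices to check the continuity hypotheses of Corollaries~\ref{sP_C0_general_criterion_cor} and~\ref{wP_C0_general_criterion_cor} for the specific choices $\sigma(x)=x^2$, $\rho(x)=\sqrt{|x|}$, $\xi=\operatorname{ReLU}^2$, $\delta(x)=2\sqrt{1+x^2}$ and $\nu(x,y)=\sqrt{x^2+y^2}+x$.

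For (AsP) I would verify the four hypotheses of Corollary~\ref{sP_C0_general_criterion_cor}.
\begin{enumerate}
\item Continuity of $\xi$ comes from Lemma~\ref{general_ReLU_lemma} with $s=2$.
\item Continuity of $\sigma$ and $\rho$ on all of $\R$, hence on $(0,+\infty)$, comes from Lemma~\ref{general_sqrt_lemma}, since $\R$ is closed under square roots.
\item Continuity of $\delta$ on $\R$ also comes from Lemma~\ref{general_sqrt_lemma}.
\item Continuity of $\nu$ on $\R^2$, and so on the subset $D_\nu$, again comes from Lemma~\ref{general_sqrt_lemma}.
\end{enumerate}
With these in hand, Corollary~\ref{sP_C0_general_criterion_cor} gives (AsP) for $C^0(U,\R)$.

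For (AwP) I would verify the three hypotheses of Corollary~\ref{wP_C0_general_criterion_cor}.
\begin{enumerate}
\item Lemma~\ref{general_ReLU_lemma}(3) with $s=2\geq 2$ gives that $\xi$ is continuous and $\xi(t)\prec t$ at $0^+$.
\item The map $\sigma(x)=x^2$ is continuous on $[0,+\infty)$.
\item The map $t\mapsto \xi(t)\rho(t)$ is a product of continuous functions, so it is continuous on $[0,+\infty)$.
\end{enumerate}
Corollary~\ref{wP_C0_general_criterion_cor} then gives (AwP).

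There is no real obstacle; the argument is bookkeeping. The only point needing care is that Lemma~\ref{general_sqrt_lemma} states continuity of $\rho=\sqrt{|\cdot|}$. If that reading is in doubt, I would note that $|\cdot|$ is continuous, since $|x|=\operatorname{ReLU}(x)+\operatorname{ReLU}(-x)$, and compose it with the continuous $\sqrt{\cdot}$ on $[0,+\infty)$ via Lemma~\ref{general_composition_continuity_lemma}. This argument also justifies Main Examples~\ref{mainExample_pt4} and~\ref{mainExamplePt5}.
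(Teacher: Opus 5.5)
Your proposal is correct and follows the paper's proof: the paper likewise uses Corollary~\ref{main_example_scalar_theories_checked_cor} for the scalar axioms, then verifies the continuity and asymptotic hypotheses of Corollaries~\ref{sP_C0_general_criterion_cor} and~\ref{wP_C0_general_criterion_cor} via Lemmas~\ref{general_ReLU_lemma} and~\ref{general_sqrt_lemma}. Your itemized check, including the remark on continuity of $|\cdot|$ inside $\rho$, only spells out what the paper compresses into one sentence.
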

\begin{proof}
By Corollary~\ref{main_example_scalar_theories_checked_cor} we know that $\bm{R}$ is a model of $T_{\operatorname{wP}}$ and $T_{\operatorname{sP}}$.
Thus it suffices to check that all the continuity/asymptotic assumptions of Corollaries~\ref{sP_C0_general_criterion_cor} and~\ref{wP_C0_general_criterion_cor} are satisfied. However, these follow from Lemmas~\ref{general_ReLU_lemma} and~\ref{general_sqrt_lemma}.
\end{proof}

\subsection{The \texorpdfstring{$\Q$}{Q}-valued continuous example}\label{subsection_Q_valued_continuous_functions_example_proof}

\noindent In this subsection we verify the claims about the example $\bm{Q}=(\Q;\ldots)$ from subsection~\ref{subsection_Q_valued_continuous_functions_example}. This example is not a direct instance of the hypotheses in~\cite{FischerPsatzDiff}, since that construction uses a square-root primitive on the scalar field, whereas the present $\Q$-valued instance uses primitives adapted to $\Q$.

\medskip\noindent We recall the example:
first, consider the ordered field $(\Q;\leq,+,\cdot)$ as a model $\bm{Q}=(\Q;\ldots)$ of $T_{\operatorname{oF}}$ in the natural way. Expand $\bm{Q}$ to an $\calL_{\operatorname{sP}}$-structure by interpreting the new function symbols as follows for every $x,y\in\Q$:
\[
\sigma(x) \ := \ \operatorname{ReLU}(x),\quad \rho(x) \ := \ \operatorname{ReLU}(x),\quad \xi(x) \ := \ \operatorname{ReLU}^2(x)
\]
\[
\delta(x) \ := \ 2\operatorname{ReLU}(x)+1,\quad \nu(x,y) \ := \ \max(x,y)
\]
\begin{corollary}\label{Qrational_example_scalar_theories_checked_cor}
The $\calL_{\operatorname{sP}}$-structure $\bm{Q}$ is a model of $T_{\operatorname{wP}}$ and $T_{\operatorname{sP}}$.
\end{corollary}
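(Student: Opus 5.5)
The plan is to verify the axioms of $T_{\operatorname{wP}}$ and $T_{\operatorname{sP}}$ one by one for the interpretations on $\Q$, reusing the lemmas of subsection~\ref{subsubsection_details_some_common_choices_of_aux_fcts}. Those lemmas were stated for an arbitrary ordered field $R$ and use nothing beyond the ordered-field axioms. First, the $\calL_{\operatorname{oF}}$-reduct of $\bm{Q}$ is the usual ordered field of rationals with $0^{-1}:=0$, so $\bm{Q}\models T_{\operatorname{oF}}$.

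Next I would apply Lemma~\ref{general_ReLU_lemma} with $R=\Q$. With $s=2$, so that $\xi=\operatorname{ReLU}^2$, part (\ref{general_ReLU_lemma_scalar_axioms}) gives (T$\xi1$) and (T$\xi2$). The same lemma gives (T$\sigma1$), (T$\sigma2$), (T$\rho$) and (T$\sigma\rho$) for $\sigma=\rho=\operatorname{ReLU}$, and (T$\delta$) for $\delta=2\operatorname{ReLU}+1$. The choices of $\sigma,\rho,\delta$ in that lemma do not depend on $s$, so the single application with $s=2$ covers everything. Together these establish $\bm{Q}\models T_{\operatorname{wP}}$.

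Finally I would apply Lemma~\ref{general_max_nu_lemma}(\ref{general_max_nu_lemma_axioms}) with $R=\Q$ and $\nu=\max$ to obtain (T$\nu1$) and (T$\nu2$). Combined with (T$\delta$) from the previous step, this gives $\bm{Q}\models T_{\operatorname{sP}}$.

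There is no real obstacle. The one point to check is that none of the cited verifications uses square roots or completeness; for example, (T$\sigma\rho$) reduces to $\operatorname{ReLU}(\operatorname{ReLU}(a))=a$ for $a\geq0$. A quick pass through the proofs of those lemmas confirms that they use only order and field arithmetic, so they transfer verbatim to $\Q$.
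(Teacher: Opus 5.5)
Your proposal is correct and follows essentially the same route as the paper: $T_{\operatorname{oF}}$ holds because the reduct is the usual ordered field $\Q$, Lemma~\ref{general_ReLU_lemma} with $R=\Q$ supplies (T$\xi1$)--(T$\sigma\rho$) and (T$\delta$), and Lemma~\ref{general_max_nu_lemma} supplies (T$\nu1$) and (T$\nu2$). Your explicit choice $s=2$ and your check that no square roots are used simply make precise what the paper's proof leaves implicit.
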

\begin{proof}
First, the $\calL_{\operatorname{oF}}$-reduct of $\bm{Q}$ is the usual ordered field of rational numbers, so $\bm{Q}\models T_{\operatorname{oF}}$.
Next, axioms (T$\xi1$), (T$\xi2$), (T$\sigma1$), (T$\sigma2$), (T$\rho$), (T$\sigma\rho$), and (T$\delta$) follow from Lemma~\ref{general_ReLU_lemma}(\ref{general_ReLU_lemma_scalar_axioms}).
Finally, axioms (T$\nu1$) and (T$\nu2$) follow from Lemma~\ref{general_max_nu_lemma}(\ref{general_max_nu_lemma_axioms}).
\end{proof}

\noindent Now recall the algebra: equip $\Q$ with the order topology, and let $X$ be an arbitrary topological space. Let $A=C^0(X,\Q)$ be the collection of all continuous functions $X\to\Q$.

\begin{corollary}
The algebra $A=C^0(X,\Q)$ satisfies all axioms (AsP) and (AwP).
\end{corollary}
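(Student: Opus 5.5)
The plan is to apply Corollaries~\ref{sP_C0_general_criterion_cor} and~\ref{wP_C0_general_criterion_cor} with $R=\Q$ and with $X$ the given topological space. These corollaries were proved for an arbitrary ordered field $R$ with the order topology and an arbitrary topological space $X$, so they apply verbatim. By Corollary~\ref{Qrational_example_scalar_theories_checked_cor}, $\bm{Q}$ is a model of $T_{\operatorname{wP}}$ and $T_{\operatorname{sP}}$. It therefore remains only to check the continuity and asymptotic hypotheses of the two corollaries for the chosen primitives
\[
\sigma=\rho=\operatorname{ReLU},\quad \xi=\operatorname{ReLU}^2,\quad \delta=2\operatorname{ReLU}+1,\quad \nu=\max .
\]

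For the strict package, Lemma~\ref{general_ReLU_lemma} with $s=2$ shows that $\xi,\sigma,\rho,\delta$ are continuous on all of $\Q$. In particular, $\sigma$ and $\rho$ are continuous when restricted to $(0,+\infty)$. Lemma~\ref{general_max_nu_lemma} shows that $\nu=\max$ is continuous on $\Q^2$, hence on $D_\nu$. Corollary~\ref{sP_C0_general_criterion_cor} then gives (AsP).

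For the weak package, Lemma~\ref{general_ReLU_lemma}(3) with $s=2$ gives $\xi(t)\prec t$ at $0^+$. The map $\sigma$ is continuous on $[0,+\infty)$. The map $t\mapsto\xi(t)\rho(t)$ is a product of continuous functions, so it is continuous on $[0,+\infty)$. Corollary~\ref{wP_C0_general_criterion_cor} then gives (AwP).

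No step is a real obstacle, since every ingredient was established over a general ordered field. The one point to check is that no earlier argument silently used completeness or square roots. In particular, the proofs of Lemma~\ref{general_C0_case_Psi_cont}, Lemma~\ref{key_A4w_C0_lemma}, and Proposition~\ref{key_A5w_C0_proposition} use only ordered-field arithmetic and the order topology on $R$ (for instance, $\delta:=\delta_0/8$), so they go through over $\Q$. Lemma~\ref{general_sqrt_lemma} is not needed.
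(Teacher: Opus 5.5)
Your proposal is correct and follows the paper's proof: it also reduces to Corollaries~\ref{sP_C0_general_criterion_cor} and~\ref{wP_C0_general_criterion_cor}, taking the scalar theories from Corollary~\ref{Qrational_example_scalar_theories_checked_cor} and the continuity and asymptotic hypotheses from Lemmas~\ref{general_ReLU_lemma} and~\ref{general_max_nu_lemma}. Your extra check, that none of the underlying $C^0$ lemmas uses completeness or square roots, is sound and makes explicit why the general-$R$ criteria apply over $\Q$; the paper leaves this implicit.
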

\begin{proof}
By Corollary~\ref{Qrational_example_scalar_theories_checked_cor} we know that $\bm{Q}$ is a model of $T_{\operatorname{wP}}$ and $T_{\operatorname{sP}}$.
Thus it suffices to check that all the continuity/asymptotic assumptions of Corollaries~\ref{sP_C0_general_criterion_cor} and~\ref{wP_C0_general_criterion_cor} are satisfied. However, these follow from Lemmas~\ref{general_ReLU_lemma} and~\ref{general_max_nu_lemma}.
\end{proof}

\subsection{Fischer's definable \texorpdfstring{$C^r$}{Cr} setting}\label{subsection_Fischers_examples}

\noindent We recover Fischer's definable strict and weak theorems from the axiomatic framework, using slightly different notation from~\cite{FischerPsatzDiff}. The assumptions below are stated in the form used by Fischer rather than derived from stronger background claims.

\medskip\noindent \emph{Fix $r\in\N\cup\{\infty\}$ and $d,k\in\N^{\geq1}$.} Let $R$ be a real closed field, let $\calL$ extend $\calL_{\operatorname{oF}}$, and let $\bm R$ be a definably complete $\calL$-expansion of $R$. Here ``definable'' means definable in $\bm R$ with parameters from $R$. If $r=\infty$, assume in addition that $\bm R$ defines a smooth exponential function, as in~\cite[Section 1]{FischerPsatzDiff}. These hypotheses provide the definable differential calculus used below.

\medskip\noindent \emph{Fix a definable $C^r$-manifold $M\subseteq R^n$ and a definable index set $S\subseteq R^m$.}

\medskip\noindent By definition, a \emph{definable family of definable $C^r$-functions $M\to R$} is a definable function:
\[
f \ : \ M\times S\to R
\]
such that for each $s\in S$, the function:
\[
f_s \ : \ M\to R,\quad x \ \mapsto \ f_s(x) \ := \ f(x,s)
\]
is a definable $C^r$-function. We may write $f_s$ to denote either the family or the particular instance.

\medskip\noindent \emph{For the rest of this subsection we fix definable families $g_s,f_{1,s},\ldots,f_{k,s}$ of definable $C^r$-functions $M\to R$.} We also consider the definable set-valued map:
\[
F \ : \ S\rightrightarrows M,\quad s \ \mapsto \ F_s \ := \ \bigcap_{1\leq i\leq k}\{f_{i,s}\geq 0\}
\]
With this setup, we now state Fischer's theorems (with proofs below):

\begin{corollary}(Definable Strict Positivstellensatz; \cite[Theorem 1.1]{FischerPsatzDiff})\label{Fischer_Thm_1_1_sP}
Suppose $g_s>0$ on $F_s$ and $F_s\neq\varnothing$ for each $s$. Then there exists definable families $v_{0,s},\ldots,v_{k,s}$ of definable $C^r$-functions $M\to R$ such that each $v_{i,s}$ is strictly positive on $M$, and:
\[
g_s \ = \ v_{0,s}^{2d}+\sum_{1\leq i\leq k}v_{i,s}^{2d}f_{i,s} \quad\text{for every $s\in S$.}
\]
\end{corollary}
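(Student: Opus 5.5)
The plan is to obtain the corollary as an instance of the Strict Positivstellensatz~\ref{strict_Positivstellensatz_abstract}. We take the space $X:=M\times S$ and scalar structure $\bm R$ expanded to an $\calL_{\operatorname{sP}}$-structure with
\[
\sigma(x):=x^{2d},\qquad \rho(x):=\sqrt[2d]{|x|},
\]
and with $\xi$, $\delta$, $\nu$ chosen as Fischer-style definable smooth primitives. Concretely, $\xi(t):=\exp(-1/t)$ for $t>0$ and $\xi(t):=0$ otherwise when $r=\infty$, and $\xi:=\operatorname{ReLU}^{r+1}$ when $r<\infty$. We also take $\delta(x):=2\sqrt{1+x^2}$ and $\nu(x,y):=\sqrt{x^2+y^2}+x$. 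The algebra $A$ is the set of definable functions $f:M\times S\to R$ such that every fiber $f_s$ is $C^r$ on $M$.

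Since $R$ is real closed, $2d$-th roots exist, and the scalar axioms $T_{\operatorname{sP}}$ follow as in Lemma~\ref{general_ReLU_lemma} and Lemma~\ref{general_sqrt_lemma}. Here (T$\sigma2$) and (T$\sigma\rho$) are immediate for even powers, and (T$\xi1$)--(T$\xi2$) are clear. Once (AsP) is verified, we apply the theorem to $g,f_1,\dots,f_k\in A$, regarded as functions on $M\times S$. The hypothesis $\bigcap_i\{f_i\geq0\}\subseteq\{g>0\}$ on $X$ is exactly the fiberwise hypothesis $g_s>0$ on $F_s$; nonemptiness of $F_s$ is not needed. The theorem then gives $t_i\in A$, strictly positive, with $g=\sigma(t_0)+\sum_i\sigma(t_i)f_i$. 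Setting $v_{i,s}:=(t_i)_s$ gives the required identity $g_s=v_{0,s}^{2d}+\sum_i v_{i,s}^{2d}f_{i,s}$ for each $s$, and definability of the families holds because each $t_i$ is a term in definable functions (Remark~\ref{uniform_term_evaluation_remark}).

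The work is in checking (AsP) for this $A$. Axioms (A0) and (A1) hold because products, sums, and reciprocals of nonvanishing $C^r$ functions are $C^r$, and definability is preserved under composition with definable operations. Axioms (A3s), (A$\delta$), and (A$\nu$) follow from the chain rule in definable calculus: $\sigma$ and $\rho$ are $C^\infty$ on $(0,+\infty)$, $\delta$ is smooth everywhere since $1+x^2>0$, and $\nu$ is smooth wherever $x^2+y^2>0$. This last condition holds on the domain relevant to (A$\nu$), since there $y>0$ whenever $x\leq0$. Axiom (A2) holds because $\xi$ is $C^r$ on $R$ (flat at $0$ in the smooth case).

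The main obstacle is (A4s): showing that $\xi(f)/g$ is $C^r$ when $\{f\geq0\}\subseteq\{g\neq0\}$. Near points with $g\neq0$ this is clear. Near a point where $g=0$, we have $f<0$ there, so $\xi(f)$ vanishes on a neighbourhood, and the function is locally $0$. This is the same argument as in Lemma~\ref{general_C0_case_Psi_cont}, now with the open set $\{f<0\}$ giving local identical vanishing, so all derivatives are handled at once. The argument uses only openness in $M$ for each fixed $s$, so it works fiberwise, and that is all that membership in $A$ requires.
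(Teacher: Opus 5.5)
Your proof is correct, but it handles the parameters differently from the paper.

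\textbf{The paper's route.} The paper takes $X:=M$ and lets $A$ be the definable $C^r$ functions on $M$. It applies the Strict Positivstellensatz~\ref{strict_Positivstellensatz_abstract} separately to $g_s,f_{1,s},\ldots,f_{k,s}$ for each fixed $s$. Joint definability of $v_i(x,s):=\tilde t_i(g_s(x),f_{1,s}(x),\ldots,f_{k,s}(x))$ comes from the same universal terms being used for every $s$. The abstract theorem then supplies membership of each fiber in $A$.

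\textbf{Your route.} You build the parameter into the space, taking $X:=M\times S$ and $A$ the algebra of definable families with $C^r$ fibers, and you apply the theorem once. All your closure checks are fiberwise local calculus; in particular, the (A4s) argument via local vanishing of $\xi(f_s)$ near zeros of $g_s$ uses only openness in $M$. So this $A$ does satisfy (AsP). Joint definability and fiberwise regularity of the $t_i$ then come out together as the single statement $t_i\in A$, so your appeal to the uniform-evaluation remark is not actually needed.

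\textbf{Comparison.} The paper's route works with one natural algebra on $M$, checked once for both the strict and weak theorems. It also makes visible that uniformity in parameters follows from the universality of the terms. Your route shows that the abstract theorem, applied to a suitable family algebra, delivers parametric uniformity by itself.

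\textbf{Choice of $\xi$.} You take $\xi=\operatorname{ReLU}^{r+1}$ rather than the paper's $\operatorname{ReLU}^{2r+2}$. This suffices for the strict axioms, which only need $\xi$ to be $C^r$. It would fail the weak axioms in general: (A4w) needs $t\mapsto\xi(-t)t^{-1}$ to be $C^r$, but for $\operatorname{ReLU}^{r+1}$ this map is $-(-t)^r$ on $t<0$ and $0$ on $t\geq0$, which is not $C^r$ at $0$. This is why the paper fixes the larger exponent for both theorems at once.
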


\begin{corollary}(Definable Weak Positivstellensatz; \cite[Theorem 1.2]{FischerPsatzDiff})\label{Fischer_Thm_1_2_wP}
Assume in addition that $M$ has pure dimension $n$. Suppose $g_s\geq 0$ on $F_s$ and $F_s\neq\varnothing$ for all $s$. Then there exists definable families $v_{0,s},\ldots,v_{k,s},p_s$ of definable $C^r$-functions $M\to R$ such that:
\[
p_s^{2d}g_s \ = \ v_{0,s}^{2d}+\sum_{1\leq i\leq k}v_{i,s}^{2d}f_{i,s}
\]
and $\{p_s=0\}\subseteq\{g_s=0\}$ for every $s\in S$.
\end{corollary}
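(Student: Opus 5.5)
The plan is to derive the corollary from the Weak Positivstellensatz~\ref{weak_Positivstellensatz_abstract}, not to redo Fischer's argument. Since the terms $\tilde t_i$ are universal, only the choice of scalar primitives and the verification of (AwP) require work. Following Remark~\ref{uniform_term_evaluation_remark}, I will handle families by taking the space $X:=M\times S$. The algebra $A$ will consist of all definable $f:M\times S\to R$ such that every fiber $f_s$ is a $C^r$-function $M\to R$. Then each $t_i=\tilde t_i(g,f_1,\ldots,f_k)$ is jointly definable, and fiberwise membership in $A$ is exactly the $C^r$ condition. Evaluating the identity fiberwise gives the required families.

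For the scalar primitives I set $\sigma(x):=x^{2d}$ and $\rho(x):=|x|^{1/(2d)}$. Since $R$ is real closed, $\rho$ is well defined, and (T$\sigma1$)--(T$\sigma\rho$) hold as in Lemma~\ref{general_sqrt_lemma}. For $\xi$ I choose a definable $C^\infty$-on-$R\setminus\{0\}$ function that vanishes on $(-\infty,0]$, is positive on $(0,\infty)$, and is very flat at $0$:
\begin{itemize}
\item for finite $r$, $\xi:=\operatorname{ReLU}^{N}$ with $N$ large relative to $r$ and $d$ (say $N\geq 2r+2$);
\item for $r=\infty$, $\xi(t):=\exp(-1/t)\cdot 1\{t>0\}$, using the assumed definable smooth exponential.
\end{itemize}
With these choices the conclusion reads $\sigma(p)=p^{2d}$ and $\sigma(t_i)=t_i^{2d}$. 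Setting $v_{i,s}:=t_{i,s}$ and $p_s:=\tilde t_{-1}(\ldots)_s$, the statement of Theorem~\ref{weak_Positivstellensatz_abstract} gives $\{p_s=0\}=\{g_s=0\}$, which is stronger than the required inclusion. The hypothesis $F_s\neq\varnothing$ plays no role.

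Most of the closure axioms are routine. (A0) and (A2) follow from the chain rule in definable calculus, because $\xi$ is $C^r$. For (A3w), $\sigma(f)=f^{2d}$ is clearly in $A$. The function $\rho(f)\xi(f)=f^{1/(2d)}\xi(f)$ for $f\geq 0$ equals $H\circ f$ with $H(t):=t^{1/(2d)}\xi(t)$ for $t>0$ and $H(t):=0$ otherwise. This $H$ is $C^r$ on all of $R$ because the flatness of $\xi$ absorbs the root singularity. For (A4w), $\xi(-f)/f$ is $\Theta\circ f$, where $\Theta(t)=-\xi(-t)/(-t)\cdot 1\{t<0\}$. Again $\Theta$ is $C^r$ on $R$: it is a one-variable function, flat at $0$, and for $\operatorname{ReLU}^N$ it equals $-\operatorname{ReLU}(-t)^{N-1}$.

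The main obstacle is (A5w): showing that $E:=\xi(g^2)\xi(g+4h)/(g+h)$ is $C^r$ whenever $\{h\geq0\}\subseteq\{g\geq0\}$. Away from $\{g=h=0\}$ this is easy. On the support of $\xi(g+4h)$, Lemma~\ref{common_sign_geometry_lemma}(1) gives $g+h\geq\frac34 g>0$, and elsewhere $E$ vanishes on an open set as in Proposition~\ref{key_A5w_C0_proposition}. Near a point where $g=h=0$, I would argue by derivative estimates. By the Leibniz and Faà di Bruno formulas, every partial derivative of order $j\leq r$ of $1/(g+h)$ is bounded on the support, locally, by $C\,g^{-(j+1)}$. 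This uses local boundedness of derivatives of $g,h$ on closed bounded definable neighbourhoods, which definable completeness provides. Every derivative of order $\leq r$ of $\xi(g^2)$ is $O(g^{M})$ with $M$ large, because $\xi$ is flat. Hence on the open set $U_1$ the derivatives of $E$ are $O(g^{M-r-1})$ and tend to $0$ as $(g,h)\to(0,0)$. I then show that $E$ is $C^r$ with all derivatives zero on the boundary of $U_1$ and at the common zeros, by extending the derivatives continuously by $0$.

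For this extension step I expect to need that $M$ has pure dimension $n$. Derivatives are first computed on the open dense part where the formulas hold, and the definable mean value theorem is then applied to conclude differentiability at boundary points. This is where that hypothesis enters. In the $C^\infty$ case with $\exp(-1/t)$, the same estimates work, since $\exp(-1/g^2)$ beats every negative power of $g$. If these direct estimates become unwieldy, my fallback is to quote Fischer's regularity lemma for the corresponding quotient and simply check that it matches $E$.
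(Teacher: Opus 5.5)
Your proposal is correct and is essentially the paper's argument. You use the same primitives $\sigma(x)=x^{2d}$, $\rho(x)=\sqrt[2d]{|x|}$ and a flat $\xi$ ($\operatorname{ReLU}^{2r+2}$ or $\exp(-1/t)$), one-variable flatness for (A3w) and (A4w), and for (A5w) the derivative estimate from $g+h\geq\frac34 g$ on $\{g+4h>0\}$ followed by extension of the derivatives by zero, as in Lemma~\ref{Fischer_auxiliary_regularity_lemma}. One small correction: $g$ may vanish on $\{g+4h>0\}$ where $h>0$, so write the bound as $g^{a}(g+h)^{-b}\leq(4/3)^{b}g^{a-b}$ rather than through $(g+h)^{-1}\leq\frac{4}{3}g^{-1}$.

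The only real deviation is packaging. You take $X=M\times S$ with the algebra of definable functions whose fibres are $C^r$, so a single application of Theorem~\ref{weak_Positivstellensatz_abstract} gives joint definability directly. The paper instead applies the theorem with $X=M$ for each fixed $s$ and obtains joint definability from Remark~\ref{uniform_term_evaluation_remark}.
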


\noindent We apply the axiomatic framework to Fischer's theorems.
First we expand $\bm{R}$ to an $\calL\cup\calL_{\operatorname{sP}}$-structure:
\begin{itemize}
\item The choices of $\sigma$ and $\rho$ are determined by $d$ as follows:
\[
\sigma(x) \ := \ x^{2d},\quad \rho(x) \ := \ \sqrt[2d]{|x|}
\]
\item The choice of $\xi$ is determined by $r$. If $r<\infty$ then we choose:
\[
\xi(x) \ := \ \operatorname{ReLU}^{2r+2}(x)
\]
and if $r=\infty$, then we choose:
\[
\xi(x) \ := \ \begin{cases}
\exp(-1/x) & \text{if $x>0$} \\
0 & \text{if $x\leq 0$}
\end{cases}
\]
\item Finally we define $\delta,\nu$ by:
\[
\delta(x) \ := \ 2\sqrt{1+x^2},\quad \nu(x,y) \ := \ \sqrt{x^2+y^2}+x
\]
\end{itemize}
\noindent Each of these functions $\sigma,\rho,\xi,\delta,\nu$ is definable in the original $\calL$-structure $\bm{R}$. Moreover, we have:

\begin{lemma}\label{Fischer_example_satisfies_scalar_axioms}
The $\calL\cup\calL_{\operatorname{sP}}$-structure $\bm{R}$ is a model of $T_{\operatorname{wP}}$ and $T_{\operatorname{sP}}$.
\end{lemma}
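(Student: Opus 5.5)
The plan is to check the scalar axioms one group at a time, reusing Lemma~\ref{general_sqrt_lemma} wherever the primitives coincide with those of the Main Example. The $\calL_{\operatorname{oF}}$-reduct is a real closed field, so $T_{\operatorname{oF}}$ holds. Real closedness also gives closure under square roots, so $\delta(x)=2\sqrt{1+x^2}$ and $\nu(x,y)=\sqrt{x^2+y^2}+x$ are exactly the functions treated in Lemma~\ref{general_sqrt_lemma}. Hence (T$\delta$), (T$\nu1$) and (T$\nu2$) follow verbatim from Lemma~\ref{general_sqrt_lemma}(\ref{general_sqrt_lemma_axioms}).

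For $\sigma(x)=x^{2d}$ and $\rho(x)=\sqrt[2d]{|x|}$ I will argue as follows. In a real closed field every nonnegative element has a unique nonnegative $2d$-th root: existence comes from real closedness, and uniqueness from the fact that $y\mapsto y^{2d}$ is strictly increasing on $[0,+\infty)$. The latter follows from $v^{2d}-u^{2d}=(v-u)\sum_{j}v^{j}u^{2d-1-j}>0$ for $0\le u<v$. With this in hand the axioms are immediate:
\begin{itemize}
\item (T$\sigma1$) holds because $x^{2d}=(x^d)^2\geq0$.
\item (T$\sigma2$) holds because $(ab)^{2d}=a^{2d}b^{2d}$, which is true for all $a,b$.
\item (T$\rho$) holds by the choice of the nonnegative root.
\item (T$\sigma\rho$) holds because, for $a\ge0$, the root satisfies $(\sqrt[2d]{a})^{2d}=a$ by definition.
\end{itemize}

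For $\xi$ there are two cases. If $r<\infty$, then $\xi=\operatorname{ReLU}^{2r+2}$, and (T$\xi1$), (T$\xi2$) follow from Lemma~\ref{general_ReLU_lemma}(\ref{general_ReLU_lemma_scalar_axioms}) with $s=2r+2\geq1$. If $r=\infty$, the function $\xi$ vanishes on $(-\infty,0]$ and equals $\exp(-1/x)$ for $x>0$. I then need $\exp(t)>0$ for all $t$ in the definably complete field, which gives (T$\xi1$) and (T$\xi2$) at once.

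The only potential obstacle is this last positivity claim, since $\exp$ is merely assumed to be a definable smooth exponential in the sense of Fischer. I would derive positivity from the homomorphism property: $\exp(t)=\exp(t/2)^2\geq0$, and $\exp(t)\exp(-t)=\exp(0)=1$ rules out $\exp(t)=0$. This uses only that $\exp$ is an exponential function, i.e.\ a homomorphism $(R,+)\to(R^{>0},\cdot)$, which is part of the meaning of the hypothesis cited from~\cite[Section 1]{FischerPsatzDiff}. Finally, definability of all five primitives in $\bm R$ is noted before the lemma and is not needed for the axioms themselves.
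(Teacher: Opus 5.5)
Your proposal is correct and follows essentially the same route as the paper. The $\xi$-axioms come from Lemma~\ref{general_ReLU_lemma} with $s=2r+2$, or from positivity of $\exp$ when $r=\infty$. The axioms (T$\delta$), (T$\nu1$), (T$\nu2$) come from Lemma~\ref{general_sqrt_lemma}, and the $2d$-th power and root axioms are checked directly. Your extra details are sound refinements of steps the paper simply asserts: existence and uniqueness of nonnegative $2d$-th roots in a real closed field, and deriving $\exp(t)>0$ from the homomorphism property.
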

\begin{proof}
First we check the $\xi$-axioms. If $r<\infty$, then $\xi=\operatorname{ReLU}^{2r+2}$ and thus (T$\xi1$) and (T$\xi2$) are already established by Lemma~\ref{general_ReLU_lemma} with $s:=2r+2$.

If $r=\infty$, then $\xi(x)=\exp(-1/x)$ for $x>0$, and $=0$ otherwise. Since $\exp(x)>0$ for all $x$, we immediately get $\xi(a)\geq 0$ and $\xi(a)>0$ iff $a>0$. Thus (T$\xi1$) and (T$\xi2$) also hold in this case as well.

Next, (T$\delta$), (T$\nu1$) and (T$\nu2$) follow from Lemma~\ref{general_sqrt_lemma}.

It remains only to check the $2d$-analogue of the $\sigma,\rho$ part of Lemma~\ref{general_sqrt_lemma}. This is immediate. For every $a\in R$, $\sigma(a)=a^{2d}\geq 0$, so (T$\sigma1$) holds. If $a,b\geq 0$, then:
\[
\sigma(ab) \ = \ (ab)^{2d} \ = \ a^{2d}b^{2d} \ = \ \sigma(a)\sigma(b)
\]
so (T$\sigma2$) holds. If $a\geq 0$, then $\rho(a)=\sqrt[2d]{a}\geq 0$, and $\sigma(\rho(a))=(\sqrt[2d]{a})^{2d}=a$. Thus (T$\rho$) and (T$\sigma\rho$) hold as well.
\end{proof}
\medskip\noindent The next lemma isolates the regularity facts needed to verify axioms (A3w), (A4w), and (A5w) for Fischer's algebra of definable $C^r$ functions.

\begin{lemma}[Regularity of the auxiliary functions]\label{Fischer_auxiliary_regularity_lemma}
Let $\rho$ and $\xi$ be the functions fixed above.
\begin{enumerate}
\item The total function $U:R\to R$ given by $U(t):=\rho(t)\xi(t)$ is definable and $C^r$.
\item The total function $V:R\to R$ given by $V(t):=\xi(-t)t^{-1}$, with $0^{-1}=0$, is definable and $C^r$.
\item If $g,h:M\to R$ are definable $C^r$ functions satisfying $\{h\geq0\}\subseteq\{g\geq0\}$, then
\[
W_{g,h}:=\xi(g^2)\xi(g+4h)(g+h)^{-1}
\]
is a definable $C^r$ function on $M$.
\end{enumerate}
\end{lemma}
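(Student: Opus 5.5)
For (1) and (2) I will argue one variable at a time, splitting $R$ at $0$. Away from $0$ each function is a composition of definable $C^\infty$ maps. On $(0,+\infty)$ these are $t\mapsto \sqrt[2d]{t}$ (definable and smooth in a real closed field with definable calculus), $\xi$ (either $t^{2r+2}$ or $\exp(-1/t)$) and $t\mapsto t^{-1}$. On $(-\infty,0)$ we have $U=0$, and on $(0,+\infty)$ we have $V=0$. So everything reduces to checking $C^r$-regularity at $t=0$. There I will verify that every derivative of order $\leq r$ tends to $0$ from the nontrivial side. A definable version of the standard fact that a function which is $C^r$ off $0$, continuous at $0$, and whose derivatives up to order $r$ have limits at $0$ is $C^r$ at $0$ then gives the claim. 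That fact follows from the definable mean value theorem, which holds in definably complete expansions.

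\emph{Finite $r$.} Here $U(t)=t^{2r+2+1/(2d)}$ for $t>0$. Its $j$-th derivative is $c\,t^{2r+2+1/(2d)-j}$, which tends to $0$ for $j\leq r$. Likewise $V(t)=\xi(-t)/t=-(-t)^{2r+1}$ for $t<0$, a polynomial vanishing to order $2r+1>r$.

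\emph{$r=\infty$.} Each derivative of $\exp(-1/t)\,t^{\alpha}$ on $(0,+\infty)$ has the form $P(t^{-1},t^{\alpha})\exp(-1/t)$ with a polynomial-type factor. I will use the definable fact that $\exp(-1/t)$ decays faster than any power of $t$ as $t\downarrow0$; this holds in any expansion with a definable exponential, via definable l'H\^opital or via $\exp(s)\geq s^n/n!$. The same applies to $\exp(1/t)\cdot t^{-1}$-type terms for $V$ on $t<0$.

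Part (3) is the main obstacle, because $W_{g,h}$ is not a composition of a $C^r$ function of $(g,h)$ with $(g,h)$. The map $\Phi$ of Proposition~\ref{key_A5w_C0_proposition} is merely continuous at $(0,0)$, so the factor $\xi(g^2)$ must absorb the singularity. First I will handle the region away from the bad set. Off $Z:=\{g=0\}\cap\{h=0\}$, Lemma~\ref{common_sign_geometry_lemma} and the argument of Proposition~\ref{key_A5w_C0_proposition}(1) show that locally either $g+h\neq0$ or $g+4h<0$. In either case $W_{g,h}$ is locally a composition of $C^r$ maps, or identically $0$, using $\xi\in C^r$. Near points of $Z$, I will use that on $\{\xi(g+4h)>0\}$ we have $g>0$ and $g+h\geq\tfrac34 g$ (Lemma~\ref{common_sign_geometry_lemma}(1)). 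I will also use $0\leq g+4h\leq 4(g+h)$. Writing $W=\xi(g^2)\cdot\Phi(g,h)$, I will expand the derivatives up to order $r$ by the Leibniz and Fa\`a di Bruno formulas. Each term is a derivative of $\xi(g^2)$ or of $\xi(g+4h)$, times derivatives of $g,h$ (locally bounded on a definable compact neighbourhood, by definable completeness), times powers $(g+h)^{-m}$ with $m\leq r+1$. Since $(g+h)^{-1}\leq \tfrac43 g^{-1}$ on the support, it suffices to show $\xi^{(j)}(g^2)=o(g^{2r+2})$ as $g\to0^+$.

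For the $\xi(g^2)$ factor this holds: when $r<\infty$, $\xi^{(j)}(s)=O(s^{2r+2-j})$, so $\xi^{(j)}(g^2)=O(g^{2(2r+2-j)})$, and $2(2r+2-j)\geq 2r+4>r+1$ for $j\leq r$. In the flat case every derivative is $o(g^N)$. The derivatives of $\xi(g+4h)$ are bounded. Hence every derivative of order $\leq r$ of $W$ tends to $0$ at points of $Z$, and $W$ itself vanishes on $Z$. Via the same definable limit criterion on the manifold $M$, applied in charts and along segments, this gives $W\in C^r$ with all derivatives up to order $r$ vanishing on $Z$. Definability is immediate, since $W$ is a term in definable functions. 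I expect the bookkeeping of the powers, namely that the exponent $2r+2$ in $\xi$ together with the square in $g^2$ beats the $(g+h)^{-(r+1)}$ loss, to be the delicate point.
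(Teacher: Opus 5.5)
Your proposal is correct and follows essentially the same route as the paper: localize the difficulty to the common zero set, show every derivative through order $r$ tends to $0$ there using $g+h\geq\frac34 g$ on $\{g+4h>0\}$ together with the high-order vanishing (or flatness) of $\xi(g^2)$, and paste by a limit/mean-value argument. The differences are cosmetic: the paper writes $\xi(g^2)=g^{4r+4}$ and differentiates $g^{4r+4}/(g+h)$ inductively rather than using Fa\`a di Bruno. Also, on $\{g+4h>0\}$ one only has $g\geq 0$ (e.g.\ $g=0<h$), not $g>0$; this is harmless, since every term then carries a vanishing factor $\xi^{(i)}(g^2)$.
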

\begin{proof}
Definability is immediate, so only regularity at the zero sets requires verification.

Suppose first that $r<\infty$, so $\xi(t)=(t_+)^{2r+2}$. For $t>0$,
\[
U(t)=t^{2r+2+1/(2d)},
\]
while $U(t)=0$ for $t\leq0$. Every derivative through order $r$ on the positive half-line is a constant multiple of $t^{2r+2+1/(2d)-j}$ and tends to $0$ as $t\downarrow0$. Thus the zero extension is $C^r$. Likewise,
\[
V(t)=
\begin{cases}
-(-t)^{2r+1},&t<0,\\
0,&t\geq0,
\end{cases}
\]
which is $C^r$.

For (3), work in a local chart and write $D^\ell$ for the $\ell$th derivative. Put $\mathcal U:=\{g+4h>0\}$. Away from $\{g+h=0\}$ the conclusion follows from the usual calculus rules. If a point of $\{g+h=0\}$ does not lie in the closure of $\mathcal U$, then $\xi(g+4h)$, and hence $W_{g,h}$, vanishes on a neighborhood of that point. At a point of $\{g+h=0\}\cap\overline{\mathcal U}$, Lemma~\ref{common_sign_geometry_lemma} forces $g=h=0$ and gives
\[
g+h\geq\frac34g\geq0
\quad\text{on }\mathcal U.
\]
On $\mathcal U$, one has $\xi(g^2)=g^{4r+4}$. An induction using the product and reciprocal rules gives, for every $0\leq\ell\leq r$,
\[
D^\ell\!\left(\frac{g^{4r+4}}{g+h}\right)
=
\frac{g^{4r+4-\ell}}{(g+h)^{\ell+1}}\,\tau_\ell,
\]
where $\tau_\ell$ is continuous (tensor-valued in local coordinates). After shrinking the chart, $\tau_\ell$ is bounded, and therefore
\[
\left\|D^\ell\!\left(\frac{g^{4r+4}}{g+h}\right)\right\|
\leq C_\ell g^{4r+3-2\ell}\longrightarrow0
\qquad(0\leq\ell\leq r).
\]
By the Leibniz rule, multiplication by the globally $C^r$ gate $\xi(g+4h)$ preserves the vanishing of every boundary derivative through order $r$. Extend the function and these derivatives by $0$ on the singular set. Induction on the derivative order then shows that the resulting total function is $C^r$. This is the derivative estimate used in Fischer's proof of Theorem~1.2~\cite[proof of Theorem 1.2]{FischerPsatzDiff}.

Now suppose $r=\infty$. We use the standard flatness estimate
\[
t^{-N}\exp(-1/t)\longrightarrow0\qquad(t\downarrow0)
\]
for every $N$: choosing an integer $m>N$ and using $\exp(s)\geq s^m/m!$ with $s=1/t$ gives the claim. Every derivative of $t^{1/(2d)}\exp(-1/t)$ for $t>0$, and of $\exp(1/t)/t$ for $t<0$, is a finite sum of the same exponential factor times a power of $t^{-1}$, so all one-sided derivatives tend to $0$ and (1)--(2) follow. For (3), the same product-and-reciprocal induction shows that each derivative on $\mathcal U$ is a finite sum bounded near a common zero by
\[
C\,g^{-N}\exp(-1/g^2)
\]
for some $N$, using again $g+h\geq3g/4$. Flatness makes every such bound tend to $0$. At singular points outside $\overline{\mathcal U}$ the function vanishes locally, and the same derivative-by-derivative pasting argument proves smoothness. This is the smooth analogue of Fischer's finite-order estimate.
\end{proof}

\noindent Next, we define $X$ and $A$:
\[
X \ := \ M,\quad A \ := \ \{h:M\to R:\text{$h$ is a definable $C^r$-function}\}
\]

\begin{lemma}\label{Fischer_example_satisfies_algebra_axioms}
The algebra $A$ satisfies (AsP) and (AwP).
\end{lemma}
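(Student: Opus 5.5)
The plan is to verify the axioms one at a time. Each required function is obtained by composing elements of $A$ with definable scalar primitives, so definability is automatic; the real work is the $C^r$ regularity. That regularity will come either from ordinary calculus on open sets where the relevant primitive is $C^r$, or from Lemma~\ref{Fischer_auxiliary_regularity_lemma} at the boundary points.

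\textbf{Ring axioms.} (A0) holds because sums, products and negatives of definable $C^r$ functions are definable $C^r$. For (A1), note that $t\mapsto t^{-1}$ is $C^\infty$ and definable on $(0,+\infty)$, so $f^{-1}$ is $C^r$ when $f>0$ everywhere.

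\textbf{The map $\xi$ and (A2).} I first show $\xi$ is globally $C^r$ on $R$.
\begin{itemize}
\item For $r<\infty$, the function $\operatorname{ReLU}^{2r+2}$ has derivatives up to order $2r+1\geq r$ that vanish at $0$.
\item For $r=\infty$, I use the flatness estimate already recorded in the proof of Lemma~\ref{Fischer_auxiliary_regularity_lemma}.
\end{itemize}
Then (A2) follows by the chain rule.

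\textbf{Axioms of (AsP).}
\begin{itemize}
\item (A3s): $\sigma(t)=t^{2d}$ is a polynomial, and $\rho=\sqrt[2d]{\cdot}$ is $C^\infty$ on $(0,+\infty)$, being the inverse of a map with nonvanishing derivative there. Definable completeness supplies the inverse function theorem in this setting.
\item (A$\delta$): $1+x^2>0$, so $\delta$ is $C^\infty$.
\item (A$\nu$): $x^2+y^2>0$ on $D_\nu$, since if $x\leq0$ then $y>0$. Hence $\nu$ is $C^\infty$ on the open set $D_\nu$, and $\nu(f,g)$ is $C^r$.
\item (A4s): on the open set $\{g\neq0\}$, the function $\xi(f)g^{-1}$ is $C^r$. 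Near a point with $g=0$, the hypothesis $\{f\geq0\}\subseteq\{g\neq0\}$ gives $f<0$ there, hence on a neighbourhood, so $\xi(f)\equiv0$ nearby and the function vanishes locally. Since $\{g\neq0\}$ and the interior of $\{f<0\}$ cover $M$, the function is $C^r$.
\end{itemize}

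\textbf{Axioms of (AwP).}
\begin{itemize}
\item (A3w): $\sigma(f)$ is clear. The product $\rho(f)\xi(f)$ equals $U(f)$ with $U$ as in Lemma~\ref{Fischer_auxiliary_regularity_lemma}(1), so it is $C^r$ by composition.
\item (A4w): $\xi(-f)f^{-1}=V(f)$ with $V$ from Lemma~\ref{Fischer_auxiliary_regularity_lemma}(2).
\item (A5w): this is exactly Lemma~\ref{Fischer_auxiliary_regularity_lemma}(3) applied to $g,h\in A$.
\end{itemize}

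The main obstacle, the (A5w) regularity at common zeros of $g$ and $h$, is already discharged by that lemma. What remains is routine: make sure the compositions are well defined as maps $M\to R$, using the totalized inverse, and keep track of the regularity of $\rho$ and $\xi$ in the $r=\infty$ case.
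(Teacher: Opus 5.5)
Your proposal is correct and matches the paper's proof axiom by axiom. (A0), (A1), (A3s), (A$\delta$) and (A$\nu$) follow from elementary calculus, (A4s) from locality as in the continuous case, and (A3w), (A4w), (A5w) from Lemma~\ref{Fischer_auxiliary_regularity_lemma}(1)--(3); you are only more explicit than the paper in checking that $\xi$ is globally $C^r$ and in spelling out the open cover $\{g\neq 0\}\cup\{f<0\}$ for (A4s).
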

\begin{proof}
(A0) The constant functions $0,1$ are definable $C^r$. If $f,g\in A$, then $-f,f+g$, and $fg$ are also definable $C^r$ by the usual calculus rules.

(A1) Suppose $f\in A$ and $f>0$ on $M$. Then $1/f$ is definable, and since $f$ has no zeros, the reciprocal rule shows $1/f$ is $C^r$. Hence $f^{-1}\in A$.

(A2) By construction, each $\xi:R\to R$ is a definable $C^r$-function. Thus if $f\in A$, then $\xi(f)\in A$.

(A3s) Suppose $f\in A$ and $f>0$ on $M$. Then $\sigma(f)=f^{2d}\in A$ because it is polynomial in $f$. Also $\rho(f)=\sqrt[2d]{f}\in A$ because $f$ takes values in $(0,+\infty)$, where $t\mapsto t^{1/2d}$ is $C^r$.

(A4s) Follows exactly as in the continuous case since $C^r$ is a local property (cf. Lemma~\ref{general_C0_case_Psi_cont} and Corollary~\ref{sP_C0_general_criterion_cor}).

(A$\delta$) Suppose $f\in A$. Since $1+f^2>0$ on $M$ and the restriction $x\mapsto \sqrt{x}:(0,+\infty)\to R$ is $C^r$, it follows that $\delta(f)=2\sqrt{1+f^2}\in A$.

(A$\nu$) If $f,g\in A$ is such that $\{f\leq 0\}\subseteq\{g>0\}$, then $f^2+g^2>0$ on $M$, it likewise follows that $\nu(f,g)=\sqrt{f^2+g^2}+f\in A$.

(A3w) If $f\in A$ and $f\geq0$ on $M$, Lemma~\ref{Fischer_auxiliary_regularity_lemma}(1) gives $\rho(f)\xi(f)=U(f)\in A$. Also $\sigma(f)=f^{2d}\in A$.

(A4w) Lemma~\ref{Fischer_auxiliary_regularity_lemma}(2) gives $\xi(-f)\cdot f^{-1}=V(f)\in A$.

(A5w) If $g,h\in A$ and $\{h\geq0\}\subseteq\{g\geq0\}$, Lemma~\ref{Fischer_auxiliary_regularity_lemma}(3) gives
\[
\xi(g^2)\xi(g+4h)(g+h)^{-1}\in A.\qedhere
\]
\end{proof}

\begin{proof}[Proof of Corollaries~\ref{Fischer_Thm_1_1_sP} and~\ref{Fischer_Thm_1_2_wP}]
We first prove Corollary~\ref{Fischer_Thm_1_1_sP}.

Let $\tilde{t}_0(z_0,\ldots,z_k),\ldots,\tilde{t}_k(z_0,\ldots,z_k)$ be the $\calL_{\operatorname{sP}}$-terms provided by the axiomatic Strict Positivstellensatz~\ref{strict_Positivstellensatz_abstract}. For $i=0,\ldots,k$ consider the definable function:
\[
v_i \ := \ M\times S\to R,\quad (x,s) \ \mapsto \ v_i(x,s) \ := \ \tilde{t}_i(g_{s}(x),f_{1,s}(x),\ldots,f_{k,s}(x))
\]
Now let $s\in S$ be arbitrary, and define $v_{i,s}:M\to R$ by $x\mapsto v_i(x,s)$. Note that we do not \emph{a priori} know that $v_{i,s}$ is in $A$, i.e., is a $C^r$-function.

However we have $g_s>0$ on $F_s$ by assumption. Thus by Lemmas~\ref{Fischer_example_satisfies_scalar_axioms} and~\ref{Fischer_example_satisfies_algebra_axioms} the collection $\bm{R}$, $A$, and $g_s,f_{1,s},\ldots,f_{k,s}$ satisfies the assumptions of the axiomatic Strict Positivstellensatz~\ref{strict_Positivstellensatz_abstract}. Thus we conclude:
\begin{itemize}
\item each $v_{i,s}$ lies in $A$, i.e., $v_{i,s}$ is a definable $C^r$-function $M\to R$,
\item each $v_{i,s}$ is strictly positive on $X=M$, and
\item the desired positivity certificate holds:
\[
g_{s} \ = \ v_{0,s}^{2d}+\sum_{1\leq i\leq k}v_{i,s}^{2d}f_{i,s}
\]
\end{itemize}

For Corollary~\ref{Fischer_Thm_1_2_wP} the proof is similar. One now considers the terms $\tilde{t}_{-1},\tilde{t}_0,\ldots,\tilde{t}_k$ provided by the axiomatic Weak Positivstellensatz~\ref{weak_Positivstellensatz_abstract} and constructs functions $p,v_0,\ldots,v_k:M\times S\to R$. Joint definability follows from Remark~\ref{uniform_term_evaluation_remark}. For each fixed $s$, membership in the algebra $A$ supplied by Lemmas~\ref{Fischer_example_satisfies_scalar_axioms} and~\ref{Fischer_example_satisfies_algebra_axioms} gives the required fiberwise $C^r$ regularity. The certificate follows, and the present construction in fact yields the stronger conclusion
\[
\{p_s=0\}=\{g_s=0\}
\]
for every $s\in S$, which implies Fischer's published inclusion.
\end{proof}

\begin{remark}[Comparison with the source theorem]
Fischer's Theorem~1.2 states $\{p_s=0\}\subseteq\{g_s=0\}$~\cite[Theorem 1.2]{FischerPsatzDiff}. We have stated that conclusion in Corollary~\ref{Fischer_Thm_1_2_wP}; the equality above is a strengthening furnished by the particular universal multiplier constructed here. The construction follows Fischer's explicit differentiable-function construction and the earlier definable-function results of Acquistapace--Andradas--Broglia~\cite{AAB,FischerPsatzDiff}.
\end{remark}

\section{Complexity of the universal constructions}\label{appendix_complexity_section}

\subsection{Expanded term length}\label{complexity_analysis_prop_proof_subsection}

\noindent We prove Proposition~\ref{main_complexity_analysis_proposition} by the recursive length rules below. Tables~\ref{strict_length_table} and~\ref{weak_length_table} retain the intermediate lengths and representative calculations, which together give enough information for a direct hand check without printing every repetitive substitution.

\medskip\noindent Suppose $\calL$ is a one-sorted language. Unique readability gives the following recursive definition:
\[
\lh(z_i)=1,\qquad
\lh(ft_1\cdots t_n)=1+\sum_{j=1}^n\lh(t_j).
\]
Thus $\lh(0)=\lh(1)=1$, unary operations add one node, and each binary operation adds one node to the lengths of its two arguments. For $k\geq1$,
\[
\lh\!\left(\sum_{i=1}^k t_i\right)=k-1+\sum_{i=1}^k\lh(t_i),
\qquad
\lh(\underbrace{1+\cdots+1}_{k\text{ times}})=2k-1.
\]

\begin{lemma}\label{main_complexity_analysis_strict_lemma}
For $k\geq1$, the intermediate strict terms have the lengths in Table~\ref{strict_length_table}, and consequently the strict certificate's right-hand side has length
\[
72k^3+232k^2+224k+51=\Theta(k^3).
\]
\end{lemma}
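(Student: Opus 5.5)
The plan is a direct computation by structural recursion on the terms in the strict construction of subsection~\ref{appendix_universal_constructions}. I will use the rules $\lh(z_i)=1$ and $\lh(ft_1\cdots t_n)=1+\sum_j\lh(t_j)$, together with the derived formulas for $k$-fold sums and the numeral $k$. I will treat $-$, $(\cdot)^{-1}$, $\sigma,\rho,\xi,\delta$ as unary, and $+,\cdot,\nu$ as binary. Each abbreviation is expanded exactly as the conventions prescribe: $s-t$ is $s+(-t)$, $s/t$ is $s\cdot(t)^{-1}$, $4=1+1+1+1$ has length $7$, and sums are left-associated. I then proceed term by term in the order of the dependency chain $\psi\leadsto\varepsilon_i\leadsto s_i\leadsto h\leadsto\varphi_j\leadsto\varphi\leadsto w\leadsto u\leadsto t_i$. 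Each length is recorded as a polynomial in $k$ in Table~\ref{strict_length_table}.

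The first layers come out as follows. Each summand $\xi(-z_i)\cdot z_i$ has length $5$, so
\[
\lh(\tilde\psi)=6k-1.
\]
From $\lh(\underbrace{1+\cdots+1}_{k})=2k-1$ and $\lh(\tilde\varepsilon(a,b,c))=4+\lh(a)+\lh(b)+\lh(c)$ we get
\[
\lh(-\tilde\psi\cdot(1+\cdots+1)^{-1})=8k+1,\quad \lh(\tilde\varepsilon_i)=8k+7,\quad \lh(\tilde s_i)=8k+12.
\]
Each summand $\sigma(\tilde s_i)\cdot z_i$ has length $8k+15$, hence
\[
H:=\lh(\tilde h)=8k^2+16k-1.
\]
Next come the gates:
\[
\lh(\tilde\varphi_1)=H+11,\quad \lh(\tilde\varphi_2)=\lh(\tilde\varphi_3)=H+4,\quad \lh(\tilde\varphi)=3H+21.
\]
For $\tilde w$ I expand the three quotients exactly as written. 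The parts are $z_0\cdot\tilde\varphi_1\cdot(z_0+\tilde h)^{-1}$ and $(z_0+\tilde h)\cdot\tilde\varphi_2\cdot(\tilde h)^{-1}$, plus $\tilde\varphi_3$, all multiplied by $(\tilde\varphi)^{-1}$. This gives $\lh(\tilde w)=aH+b$ for explicit constants, so it is quadratic in $k$. From there,
\[
\lh(\tilde u)=\lh(\tilde w)+H+3,\qquad \lh(\tilde t_0)=\lh(\tilde u)+1,\qquad \lh(\tilde t_i)=\lh(\tilde w)+\lh(\tilde s_i)+2.
\]

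The final right-hand side $\sigma(\tilde t_0)+\sum_{1\le i\le k}\sigma(\tilde t_i)\cdot z_i$ has length
\[
1+(\lh(\tilde t_0)+1)+(k-1)+\sum_{i=1}^k(\lh(\tilde t_i)+3).
\]
Here the outer binary $+$ joins $\sigma(\tilde t_0)$ to the $k$-fold sum, which contributes $k-1$ nodes. Since each $\lh(\tilde t_i)$ is quadratic and there are $k$ of them, the total is cubic with leading coefficient $8a$, where $8=$ the leading coefficient of $H$. Collecting terms should yield $72k^3+232k^2+224k+51$. As sanity checks, the cubic coefficient forces $a=9$: $\tilde h$ should occur exactly nine times inside $\tilde w$, namely once each in $\tilde\varphi_1,\tilde\varphi_2,\tilde\varphi_3$, hence three times in $\tilde\varphi$, plus the explicit occurrences in the quotients. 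I will also recompute the case $k=1$ by brute expansion and compare it with Appendix~\ref{subsection_small_arity_certificates}; the formula predicts $579$.

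The main obstacle is bookkeeping fidelity rather than any conceptual difficulty. Off-by-one errors can enter at many points: the parse of $-\tilde\psi\cdot(\ldots)^{-1}$ (is the unary minus applied to $\tilde\psi$ or to the product?), the count of the $\tilde h$ occurrences and inversion nodes in $\tilde w$, the grouping $(\ldots)\cdot(\tilde\varphi)^{-1}$, and the numerals. Any one of these shifts the lower-order coefficients. To guard against this, I will fix the parse of each displayed term before counting. I will also cross-check the resulting polynomial at $k=1,2$ against the symbolic lengths quoted in Appendix~\ref{subsection_small_arity_certificates}; a cubic with known leading coefficient is pinned down by three values, so agreement at $k=1,2,3$ settles it. The $\Theta(k^3)$ claim is then immediate.
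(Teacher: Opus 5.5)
Your approach is the same as the paper's: a structural recursion on the terms, with each entry of Table~\ref{strict_length_table} obtained by substitution. Your values for $\tilde\psi$, $\tilde\varepsilon_i$, $\tilde s_i$, $\tilde h$, $\tilde\varphi_j$, $\tilde\varphi$ and $\tilde t_i$ agree with the paper. So do your count of nine occurrences of $\tilde h$ in $\tilde w$ and your formula $4k+1+\lh(\tilde t_0)+k\lh(\tilde t_i)$ for the final right-hand side.

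However, one step of your recursion is wrong. The term $\tilde u=z_0+(-\tilde w\cdot\tilde h)$ has four nodes outside $\tilde w$ and $\tilde h$: the binary $+$, the leaf $z_0$, the unary $-$, and the binary $\cdot$. Hence $\lh(\tilde u)=\lh(\tilde w)+H+4$, not $+3$; you appear to have dropped the leaf $z_0$. As written, your recursion gives $\lh(\tilde u)=80k^2+160k+48$ and $\lh(\tilde t_0)=80k^2+160k+49$, contradicting the table. The final length then comes out as $72k^3+232k^2+224k+50$, which is $578$ at $k=1$, not the claimed value. Your $k=1$ cross-check would catch this, but the argument as stated does not prove the lemma.

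You also leave the constant $b$ in $\lh(\tilde w)=9H+b$ undetermined, and you only assert that collecting terms ``should'' give the result. Since the lemma is exactly this computation, those numbers have to be produced. The three numerator summands of $\tilde w$ have lengths $2H+17$, $3H+9$ and $H+4$, so
\[
\lh(\tilde w)=1+(1+3H+21)+2+(2H+17)+(3H+9)+(H+4)=9H+55=72k^2+144k+46.
\]
From this, $\lh(\tilde u)=80k^2+160k+49$, $\lh(\tilde t_0)=80k^2+160k+50$ and $\lh(\tilde t_i)=72k^2+152k+60$. Therefore $4k+1+\lh(\tilde t_0)+k\lh(\tilde t_i)=72k^3+232k^2+224k+51$, as claimed.
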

\begin{proof}
The recursive length rules give the entries in Table~\ref{strict_length_table}.

\begin{table}[!htbp]
\centering
\small
\begin{tabular}{@{}ll@{}}
\toprule
Term & Length \\
\midrule
$\tilde\psi$ & $6k-1$ \\
$\tilde\varepsilon_i$ & $8k+7$ \\
$\tilde s_i$ & $8k+12$ \\
$\tilde h$ & $8k^2+16k-1$ \\
$\tilde\varphi_1$ & $8k^2+16k+10$ \\
$\tilde\varphi_2,\tilde\varphi_3$ & $8k^2+16k+3$ \\
$\tilde\varphi$ & $24k^2+48k+18$ \\
$\tilde w$ & $72k^2+144k+46$ \\
$\tilde u$ & $80k^2+160k+49$ \\
$\tilde t_0$ & $80k^2+160k+50$ \\
$\tilde t_i$ & $72k^2+152k+60$ \\
\bottomrule
\end{tabular}
\caption{Strict intermediate term lengths.}\label{strict_length_table}
\end{table}
For a representative calculation,
\[
\lh(\tilde\psi)
=(k-1)+\sum_{i=1}^k\lh(\xi(-z_i)z_i)
=(k-1)+5k=6k-1,
\]
and hence
\[
\lh(\tilde h)
=(k-1)+\sum_{i=1}^k\bigl(3+\lh(\tilde s_i)\bigr)
=8k^2+16k-1.
\]
All other rows follow by the same substitution. Finally,
\begin{align*}
\lh\!\left(\sigma(\tilde t_0)+\sum_{i=1}^k\sigma(\tilde t_i)z_i\right)
&=4k+1+\lh(\tilde t_0)+k\lh(\tilde t_i)\\
&=72k^3+232k^2+224k+51.\qedhere
\end{align*}
\end{proof}

\begin{lemma}\label{main_complexity_analysis_weak_lemma}
For $k\geq1$, the intermediate weak terms have the lengths in Table~\ref{weak_length_table}. Consequently the left- and right-hand sides of the weak certificate have lengths
\[
532k+467=\Theta(k),
\qquad
707k^2+1370k+649=\Theta(k^2),
\]
respectively.
\end{lemma}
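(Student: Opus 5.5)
The plan mirrors the proof of Lemma~\ref{main_complexity_analysis_strict_lemma}. I will apply the recursive length rules $\lh(z_i)=1$ and $\lh(ft_1\cdots t_n)=1+\sum_j\lh(t_j)$ to the weak construction in subsection~\ref{appendix_universal_constructions}, one definition at a time. Each intermediate length goes into a table, Table~\ref{weak_length_table}, as an affine function of $k$. Before counting I will fix the reading conventions from the term-conventions paragraph:
\begin{itemize}
\item $s-t$ abbreviates $s+(-t)$, so it costs $2+\lh(s)+\lh(t)$;
\item $(\cdot)^{-1}$ is unary and $s/t$ is $s\cdot(t)^{-1}$;
\item $z_0^2$ is $z_0\cdot z_0$, of length $3$;
\item $1+1+1+1$ has length $7$;
\item products and sums are left-associated, so a product or sum of $m$ factors adds $m-1$ nodes;
\item $-z_0\cdot\tilde h$ is read as $-(z_0\cdot\tilde h)$.
\end{itemize}

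The first rows are quick. Each summand $\sigma(\xi(-z_i))\cdot z_i$ has length $6$, so $\lh(\tilde h)=(k-1)+6k=7k-1$. From this,
\begin{itemize}
\item $\lh(\tilde\varphi_1)=7k+10$;
\item $\lh(\tilde\varphi_2)=\lh(\tilde\varphi_3)=7k+3$;
\item $\lh(\tilde\varphi)=21k+18$.
\end{itemize}
For $\tilde q$, the last factor $\xi(-\tilde h)+\xi(z_0)\cdot\xi(z_0+\tilde h)$ has length $14k+7$, which gives $\lh(\tilde q)=35k+31$. I will continue the same way through $\tilde\omega_1,\tilde\omega_2,\tilde\omega_3,\tilde\omega,\tilde u$, the pairs $\tilde\psi_j,\tilde\Psi_j$, and finally $\tilde t_{-1},\tilde t_0,\tilde t_i$. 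Every entry stays affine in $k$, because each of these terms contains only a bounded number of copies of $\tilde h$ and of $z_0$, and $\tilde h$ is the only place where $k$ enters. The one exception is $\tilde t_i$, which depends on $i$ only through the extra factor $\xi(-z_i)$; its length is therefore independent of $i$, exactly as in the strict table.

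The two final counts then follow. The left-hand side $\sigma(\tilde t_{-1})\cdot z_0$ has length $\lh(\tilde t_{-1})+3$. Matching this to $532k+467$ is a consistency check on the table. The right-hand side $\sigma(\tilde t_0)+\sum_{1\le i\le k}\sigma(\tilde t_i)\cdot z_i$ has length
\[
\lh(\tilde t_0)+1+k\bigl(\lh(\tilde t_i)+3\bigr)+(k-1)+1 .
\]
This is one plus the $k$-term sum formula applied to the $k$ summands $\sigma(\tilde t_i)\cdot z_i$, plus the outer $+$ and the leading $\sigma(\tilde t_0)$; I will recheck it against the strict final count $4k+1+\lh(\tilde t_0)+k\lh(\tilde t_i)$. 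Since $\lh(\tilde t_i)$ is affine in $k$ and it is multiplied by $k$, the result is a quadratic $ak^2+bk+c$, to be matched with $707k^2+1370k+649$. The $\Theta$ claims are immediate once the coefficients are exact.

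I expect the main obstacle to be pure bookkeeping. The terms $\tilde u$, $\tilde\psi_1$ and $\tilde\Psi_1$ contain $\tilde q$ and $\tilde\omega$, and each of those contains $\tilde\varphi$ several times, so a single miscount in $\tilde\omega_2$ or $\tilde\omega_3$ propagates with a large multiplicity. A convention slip, such as the parenthesization of $-z_0\cdot\tilde h$ or how many nodes $(\tilde h\cdot\tilde\varphi)^{-1}$ contributes, shifts the constants. To control this, I will first verify the table at $k=1$ and $k=2$ against the machine-generated expansions mentioned in Appendix~\ref{subsection_small_arity_certificates}. Because every entry is affine in $k$, these two checks determine each entry completely.
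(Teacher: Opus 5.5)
Your proposal is correct and follows the paper's own proof. You apply the recursive rule $\lh(ft_1\cdots t_n)=1+\sum_j\lh(t_j)$ row by row, then substitute into $3+\lh(\tilde t_{-1})$ and $4k+1+\lh(\tilde t_0)+k\lh(\tilde t_i)$. Your computed rows for $\tilde h,\tilde\varphi_j,\tilde\varphi,\tilde q$ agree with Table~\ref{weak_length_table}, and the rows you leave to ``continue the same way'' also check out, e.g.\ $\lh(\tilde\omega_2)=3+(35k+31)+(7k+3)+(7k+1)+(28k+19)=77k+57$.

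The machine cross-check is unnecessary, and the paper records only the final $k=2$ lengths, not intermediate ones. The two readings of $-z_0\cdot\tilde h$ both give $7k+3$ for $\tilde\varphi_3$, so that convention does not affect the count.
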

\begin{proof}
The same recursion gives the entries in Table~\ref{weak_length_table}.

\begin{table}[H]
\centering
\small
\begin{tabular}{@{}ll@{}}
\toprule
Term & Length \\
\midrule
$\tilde h$ & $7k-1$ \\
$\tilde\varphi_1$ & $7k+10$ \\
$\tilde\varphi_2,\tilde\varphi_3$ & $7k+3$ \\
$\tilde\varphi$ & $21k+18$ \\
$\tilde q$ & $35k+31$ \\
$\tilde\omega_1,\tilde\omega_2,\tilde\omega_3$ & $63k+55,\ 77k+57,\ 70k+66$ \\
$\tilde\omega$ & $210k+180$ \\
$\tilde u$ & $252k+215$ \\
$\tilde\psi_1,\tilde\Psi_1$ & $252k+216,\ 504k+433$ \\
$\tilde\psi_2,\tilde\Psi_2$ & $210k+181,\ 420k+363$ \\
$\tilde\psi_3,\tilde\Psi_3$ & $35k+32,\ 70k+65$ \\
$\tilde t_{-1}$ & $532k+464$ \\
$\tilde t_0$ & $749k+648$ \\
$\tilde t_i$ & $707k+617$ \\
\bottomrule
\end{tabular}
\caption{Weak intermediate term lengths.}\label{weak_length_table}
\end{table}

Substituting these entries into the two certificate expressions gives
\[
\lh(\sigma(\tilde t_{-1})z_0)=3+\lh(\tilde t_{-1})=532k+467
\]
and
\begin{align*}
\lh\!\left(\sigma(\tilde t_0)+\sum_{i=1}^k\sigma(\tilde t_i)z_i\right)
&=4k+1+\lh(\tilde t_0)+k\lh(\tilde t_i)\\
&=707k^2+1370k+649.\qedhere
\end{align*}
\end{proof}

\noindent Lemmas~\ref{main_complexity_analysis_strict_lemma} and~\ref{main_complexity_analysis_weak_lemma} prove Proposition~\ref{main_complexity_analysis_proposition}.

\subsection{Shared straight-line computation graphs}\label{appendix_shared_graph_model_subsection}

\begin{definition}[Cost model]\label{shared_graph_cost_model_definition}
A shared straight-line computation graph is a finite directed acyclic graph with input nodes $z_0,\ldots,z_k$, constant nodes $0,1$, and operation nodes labelled by the scalar primitives $-,+,\cdot,(\cdot)^{-1},\sigma,\rho,\xi,\delta,\nu$. All requested certificate outputs are computed jointly. Syntactically identical subexpressions are represented by one node, and the value of a node may feed arbitrarily many later nodes at no additional cost. Each primitive application contributes one node; inversion is a unit-cost primitive. Finite sums and the numeral $k=1+\cdots+1$ are formed by balanced binary addition trees and may themselves be shared. Size is the total number of nodes, including inputs and constants, and depth is the maximum number of operation nodes on a directed path to an output.
\end{definition}

\begin{proof}[Proof of Proposition~\ref{shared_graph_complexity_proposition}]
In the strict construction, the balanced sums defining $\tilde\psi$ and $\tilde h$, together with the shared numeral $k$, use $O(k)$ nodes and depth $O(\log(k+1))$. For each $i$, the nodes for $\tilde\varepsilon_i$, $\tilde s_i$, and $\tilde t_i$ add only a constant amount once the common nodes $\tilde\psi,\tilde h,\tilde\varphi,\tilde w,\tilde u$ are shared. The final sum is balanced. Thus the joint graph has size $O(k)$ and depth $O(\log(k+1))$.

The weak construction is analogous. The balanced sum $\tilde h$ and the common nodes $\tilde\varphi,\tilde q,\tilde\omega,\tilde u,\tilde\psi_j,\tilde\Psi_j$ are shared. Each $\tilde t_i$ adds only the individual gate $\xi(-z_i)$ and a constant number of multiplication nodes; the final certificate sum is balanced. Hence the same size and depth bounds hold.

If the input functions are already represented jointly by graphs of total size $S$ and maximum depth $D$, identify their output nodes with $z_0,\ldots,z_k$. This adds the universal graph's $O(k)$ nodes and adds at most $O(\log(k+1))$ operation levels after the deepest input, giving size $S+O(k)$ and depth $D+O(\log(k+1))$.
\end{proof}

\subsection{Small-arity expansions}\label{subsection_small_arity_certificates}
\noindent Appendix~\ref{appendix_zero_constraint_sanity_subsection} gives the boundary case $k=0$. The following four formulae give the complete $k=1$ identities after every shared subterm has been inlined. They were produced by maintained symbolic code implementing the recursive terms, with a separate check of the reported formal lengths. The code is an authoring and regression-checking tool, not a substitute for the proofs above. The formulae show the resulting literal expression trees; in the displays, division is written using the primitive inverse notation of the term language.

\medskip\noindent Under the length convention of Proposition~\ref{main_complexity_analysis_proposition}, the strict right-hand side has length $579$ for $k=1$, while the weak left- and right-hand sides have lengths $999$ and $2726$. We continue to omit $k=2$: the corresponding lengths are $2003$ for the strict right-hand side and $1531$ and $6217$ for the weak left- and right-hand sides. These numbers concern the particular universal constructions used here and are not lower bounds for all possible certificates.

\phantomsection\label{k1_full_formulae}
\includepdf[pages=-,pagecommand={\thispagestyle{plain}},fitpaper=false]{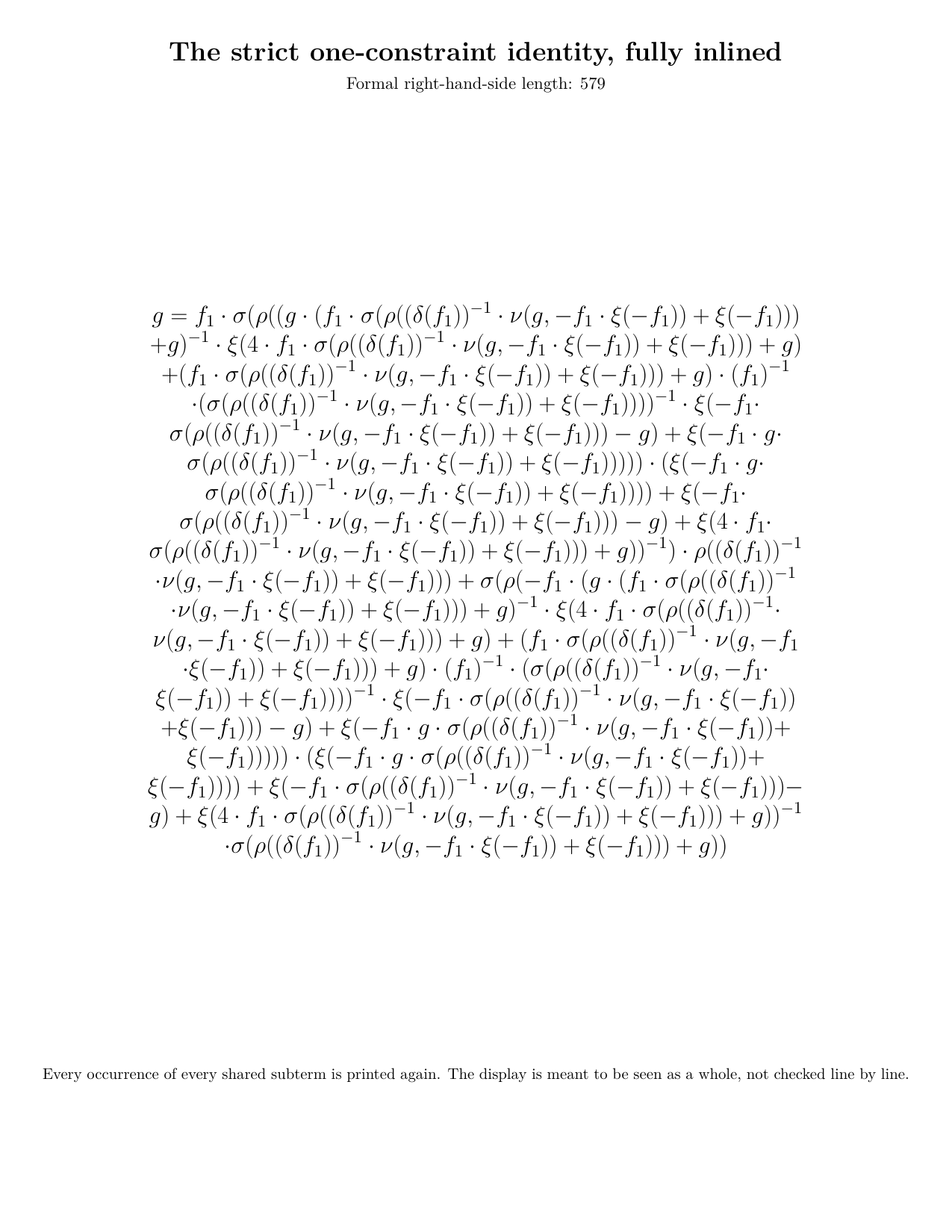}

\bibliographystyle{plainnat}
\bibliography{arxiv/refs}

@article{1011453498704,
author = {M\"{u}ller, Mark Niklas and Makarchuk, Gleb and Singh, Gagandeep and P\"{u}schel, Markus and Vechev, Martin},
title = {PRIMA: general and precise neural network certification via scalable convex hull approximations},
year = {2022},
issue_date = {January 2022},
publisher = {Association for Computing Machinery},
address = {New York, NY, USA},
volume = {6},
number = {POPL},
url = {https://doi.org/10.1145/3498704},
doi = {10.1145/3498704},
journal = {Proc. ACM Program. Lang.},
month = jan,
articleno = {43},
numpages = {33}
}

@article{AAB,
  author  = {Acquistapace, Francesca and Andradas, Carlos and Broglia, Fabrizio},
  title   = {The {P}ositivstellensatz for Definable Functions on O-Minimal Structures},
  journal = {Illinois Journal of Mathematics},
  volume  = {46},
  number  = {3},
  pages   = {685--693},
  year    = {2002},
  doi     = {10.1215/ijm/1258130979},
  url     = {https://doi.org/10.1215/ijm/1258130979}
}

@book {ADAMTT,
    AUTHOR = {Aschenbrenner, Matthias and van den Dries, Lou and van der
              Hoeven, Joris},
     TITLE = {Asymptotic differential algebra and model theory of
              transseries},
    SERIES = {Annals of Mathematics Studies},
    VOLUME = {195},
 PUBLISHER = {Princeton University Press, Princeton, NJ},
      YEAR = {2017},
     PAGES = {xxi+849},
      ISBN = {978-0-691-17543-0},
   MRCLASS = {12H05 (03-02 03C60 03C98 12J10)},
  MRNUMBER = {3585498},
MRREVIEWER = {David\ A.\ Pierce},
       DOI = {10.1515/9781400885411},
       URL = {https://doi.org/10.1515/9781400885411},
}

@book {BGT_RV,
    AUTHOR = {Bingham, N. H. and Goldie, C. M. and Teugels, J. L.},
     TITLE = {Regular variation},
    SERIES = {Encyclopedia of Mathematics and its Applications},
    VOLUME = {27},
 PUBLISHER = {Cambridge University Press, Cambridge},
      YEAR = {1987},
     PAGES = {xx+491},
      ISBN = {0-521-30787-2},
   MRCLASS = {26A12 (11K65 11N60 30-02 40E05 60-02 60Fxx)},
  MRNUMBER = {898871},
MRREVIEWER = {R.\ A.\ Maller},
       DOI = {10.1017/CBO9780511721434},
       URL = {https://doi.org/10.1017/CBO9780511721434},
}

@article {FischerPsatzDiff,
    AUTHOR = {Fischer, Andreas},
     TITLE = {Positivstellens\"atze for differentiable functions},
   JOURNAL = {Positivity},
  FJOURNAL = {Positivity. An International Mathematics Journal Devoted to
              Theory and Applications of Positivity},
    VOLUME = {15},
      YEAR = {2011},
    NUMBER = {2},
     PAGES = {297--307},
      ISSN = {1385-1292,1572-9281},
   MRCLASS = {03C64 (14P10 26E40 46E25)},
  MRNUMBER = {2803820},
MRREVIEWER = {Tobias\ Kaiser},
       DOI = {10.1007/s11117-010-0077-5},
       URL = {https://doi.org/10.1007/s11117-010-0077-5},
}

@article {KrivinePsatz,
    AUTHOR = {Krivine, J.-L.},
     TITLE = {Anneaux pr\'eordonn\'es},
   JOURNAL = {J. Analyse Math.},
  FJOURNAL = {Journal d'Analyse Math\'ematique},
    VOLUME = {12},
      YEAR = {1964},
     PAGES = {307--326},
      ISSN = {0021-7670,1565-8538},
   MRCLASS = {13.98 (46.55)},
  MRNUMBER = {175937},
MRREVIEWER = {R.\ C.\ Lyndon},
       DOI = {10.1007/BF02807438},
       URL = {https://doi.org/10.1007/BF02807438},
}

@inproceedings{NEURIPS2018_f2f44698,
 author = {Singh, Gagandeep and Gehr, Timon and Mirman, Matthew and P\"{u}schel, Markus and Vechev, Martin},
 booktitle = {Advances in Neural Information Processing Systems},
 editor = {S. Bengio and H. Wallach and H. Larochelle and K. Grauman and N. Cesa-Bianchi and R. Garnett},
 pages = {},
 publisher = {Curran Associates, Inc.},
 title = {Fast and Effective Robustness Certification},
 url = {https://proceedings.neurips.cc/paper_files/paper/2018/file/f2f446980d8e971ef3da97af089481c3-Paper.pdf},
 volume = {31},
 year = {2018}
}

@inproceedings{NEURIPS2020_dea9ddb2,
 author = {Chen, Tong and Lasserre, Jean B and Magron, Victor and Pauwels, Edouard},
 booktitle = {Advances in Neural Information Processing Systems},
 editor = {H. Larochelle and M. Ranzato and R. Hadsell and M.F. Balcan and H. Lin},
 pages = {19189--19200},
 publisher = {Curran Associates, Inc.},
 title = {Semialgebraic Optimization for Lipschitz Constants of ReLU Networks},
 url = {https://proceedings.neurips.cc/paper_files/paper/2020/file/dea9ddb25cbf2352cf4dec30222a02a5-Paper.pdf},
 volume = {33},
 year = {2020}
}

@inproceedings{NEURIPS2021_e3b21256,
 author = {Chen, Tong and Lasserre, Jean B. and Magron, Victor and Pauwels, Edouard},
 booktitle = {Advances in Neural Information Processing Systems},
 editor = {Ranzato, Marc'Aurelio and Beygelzimer, Alina and Dauphin, Yann and Liang, Percy S. and Vaughan, Jenn Wortman},
 pages = {27146--27159},
 publisher = {Curran Associates, Inc.},
 title = {Semialgebraic Representation of Monotone Deep Equilibrium Models and Applications to Certification},
 url = {https://proceedings.neurips.cc/paper_files/paper/2021/file/e3b21256183cf7c2c7a66be163579d37-Paper.pdf},
 volume = {34},
 year = {2021}
}

@inproceedings{NEURIPS2022_0b06c867,
 author = {Zhang, Huan and Wang, Shiqi and Xu, Kaidi and Li, Linyi and Li, Bo and Jana, Suman and Hsieh, Cho-Jui and Kolter, J. Zico},
 booktitle = {Advances in Neural Information Processing Systems},
 editor = {S. Koyejo and S. Mohamed and A. Agarwal and D. Belgrave and K. Cho and A. Oh},
 pages = {1656--1670},
 publisher = {Curran Associates, Inc.},
 title = {General Cutting Planes for Bound-Propagation-Based Neural Network Verification},
 url = {https://proceedings.neurips.cc/paper_files/paper/2022/file/0b06c8673ebb453e5e468f7743d8f54e-Paper-Conference.pdf},
 volume = {35},
 year = {2022}
}

@article {PutinarPsatz,
    AUTHOR = {Putinar, Mihai},
     TITLE = {Positive polynomials on compact semi-algebraic sets},
   JOURNAL = {Indiana Univ. Math. J.},
  FJOURNAL = {Indiana University Mathematics Journal},
    VOLUME = {42},
      YEAR = {1993},
    NUMBER = {3},
     PAGES = {969--984},
      ISSN = {0022-2518,1943-5258},
   MRCLASS = {47A57 (14P10 44A60 47B20)},
  MRNUMBER = {1254128},
MRREVIEWER = {Ameer\ Athavale},
       DOI = {10.1512/iumj.1993.42.42045},
       URL = {https://doi.org/10.1512/iumj.1993.42.42045},
}

@article {SchmudgenPsatz,
    AUTHOR = {Schm\"udgen, Konrad},
     TITLE = {The {$K$}-moment problem for compact semi-algebraic sets},
   JOURNAL = {Math. Ann.},
  FJOURNAL = {Mathematische Annalen},
    VOLUME = {289},
      YEAR = {1991},
    NUMBER = {2},
     PAGES = {203--206},
      ISSN = {0025-5831,1432-1807},
   MRCLASS = {44A60 (14P10)},
  MRNUMBER = {1092173},
MRREVIEWER = {Gilbert\ Stengle},
       DOI = {10.1007/BF01446568},
       URL = {https://doi.org/10.1007/BF01446568},
}

@article {StenglePsatz,
    AUTHOR = {Stengle, Gilbert},
     TITLE = {A nullstellensatz and a positivstellensatz in semialgebraic
              geometry},
   JOURNAL = {Math. Ann.},
  FJOURNAL = {Mathematische Annalen},
    VOLUME = {207},
      YEAR = {1974},
     PAGES = {87--97},
      ISSN = {0025-5831,1432-1807},
   MRCLASS = {12D15 (14A25)},
  MRNUMBER = {332747},
MRREVIEWER = {E.\ Graham\ Evans, Jr.},
       DOI = {10.1007/BF01362149},
       URL = {https://doi.org/10.1007/BF01362149},
}

@article{balauca2024gaussian,
  title={Gaussian loss smoothing enables certified training with tight convex relaxations},
  author={Balauca, Stefan and M{\"u}ller, Mark Niklas and Mao, Yuhao and Baader, Maximilian and Fischer, Marc and Vechev, Martin},
  journal={Transactions on Machine Learning Research},
  year={2025}
}

@inproceedings{de2025truly,
  title     = {Truly Supercritical Trade-Offs for Resolution, Cutting Planes, Monotone Circuits, and {W}eisfeiler--{L}eman},
  author    = {De Rezende, Susanna F. and Fleming, Noah and Janett, Duri Andrea and Nordstr{\"o}m, Jakob and Pang, Shuo},
  booktitle = {Proceedings of the 57th Annual ACM Symposium on Theory of Computing},
  pages     = {1371--1382},
  year      = {2025},
  publisher = {Association for Computing Machinery},
  doi       = {10.1145/3717823.3718271},
  url       = {https://doi.org/10.1145/3717823.3718271}
}

@article{dey2023lower,
  title={Lower bounds on the size of general branch-and-bound trees},
  author={Dey, Santanu S and Dubey, Yatharth and Molinaro, Marco},
  journal={Mathematical Programming},
  volume={198},
  number={1},
  pages={539--559},
  year={2023},
  publisher={Springer}
}

@article{dinh2021nichtnegativstellens,
  title={Representations of nonnegative functions and global optimality conditions},
  author={Dinh, Si Tiep and Pham, Tien Son},
  journal={Optimization},
  year={2026},
  note={Published online 10 June 2026; preprint arXiv:2105.08278},
  doi={10.1080/02331934.2026.2684634},
  url={https://doi.org/10.1080/02331934.2026.2684634}
}

@book{driesTameTopologyOminimal1998,
    AUTHOR = {Dries, Lou van den},
     TITLE = {Tame topology and o-minimal structures},
    SERIES = {London Mathematical Society Lecture Note Series},
    VOLUME = {248},
 PUBLISHER = {Cambridge University Press, Cambridge},
      YEAR = {1998},
     PAGES = {x+180},
      ISBN = {0-521-59838-9},
   MRCLASS = {03-02 (03C50 03C60 14P10 52-02 54-02 55-02 57-02)},
  MRNUMBER = {1633348},
MRREVIEWER = {O.\ V.\ Belegradek},
       DOI = {10.1017/CBO9780511525919},
       }

@inproceedings{dwork2012fairness,
  title={Fairness through awareness},
  author={Dwork, Cynthia and Hardt, Moritz and Pitassi, Toniann and Reingold, Omer and Zemel, Richard},
  booktitle={Proceedings of the 3rd innovations in theoretical computer science conference},
  pages={214--226},
  year={2012}
}

@inproceedings{fleming2021power,
  author    = {Fleming, Noah and G\"{o}\"{o}s, Mika and Impagliazzo, Russell and Pitassi, Toniann and Robere, Robert and Tan, Li-Yang and Wigderson, Avi},
  title     = {On the Power and Limitations of Branch and Cut},
  booktitle = {36th Computational Complexity Conference (CCC 2021)},
  pages     = {6:1--6:30},
  series    = {Leibniz International Proceedings in Informatics (LIPIcs)},
  volume    = {200},
  editor    = {Kabanets, Valentine},
  publisher = {Schloss Dagstuhl--Leibniz-Zentrum f{\"u}r Informatik},
  address   = {Dagstuhl, Germany},
  year      = {2021},
  doi       = {10.4230/LIPIcs.CCC.2021.6},
  url       = {https://drops.dagstuhl.de/entities/document/10.4230/LIPIcs.CCC.2021.6}
}

@article{gamboa1987positivstellensatz,
  author  = {Gamboa, Jos{\'e} M.},
  title   = {A Positivstellensatz for Rings of Continuous Functions},
  journal = {Journal of Pure and Applied Algebra},
  volume  = {45},
  number  = {3},
  pages   = {211--212},
  year    = {1987},
  doi     = {10.1016/0022-4049(87)90070-3},
  url     = {https://doi.org/10.1016/0022-4049(87)90070-3}
}

@article{huchette2026deep,
  title   = {When Deep Learning Meets Polyhedral Theory: A Survey},
  author  = {Huchette, Joey and Mu{\~n}oz, Gonzalo and Serra, Thiago and Tsay, Calvin},
  journal = {INFORMS Journal on Computing},
  year    = {2026},
  note    = {Published online March 12, 2026},
  doi     = {10.1287/ijoc.2024.0902},
  url     = {https://doi.org/10.1287/ijoc.2024.0902}
}

@article{jung2019algorithmic,
  title={An algorithmic framework for fairness elicitation},
  author={Jung, Christopher and Kearns, Michael and Neel, Seth and Roth, Aaron and Stapleton, Logan and Wu, Zhiwei Steven},
  journal={arXiv preprint arXiv:1905.10660},
  year={2019}
}

@inproceedings{kliachkin2025benchmarking,
  title={Benchmarking Stochastic Approximation Algorithms for Fairness-Constrained Training of Deep Neural Networks},
  author={Kliachkin, Andrii and Lep{\v{s}}ov{\'a}, Jana and Bareilles, Gilles and Mare{\v{c}}ek, Jakub},
  booktitle={The Fourteenth International Conference on Learning Representations},
  year={2026},
  url={https://openreview.net/forum?id=JxmjzC6syB}
}

@article{kunc2024three,
  title={Three decades of activations: A comprehensive survey of 400 activation functions for neural networks},
  author={Kunc, Vladim{\'\i}r and Kl{\'e}ma, Ji{\v{r}}{\'\i}},
  journal={arXiv preprint arXiv:2402.09092},
  year={2024}
}

@article{lasserre2010positivity,
  title={Positivity and optimization for semi-algebraic functions},
  author={Lasserre, Jean B. and Putinar, Mihai},
  journal={SIAM Journal on Optimization},
  volume={20},
  number={6},
  pages={3364--3383},
  year={2010},
  doi={10.1137/090775221},
  url={https://doi.org/10.1137/090775221}
}

@article{mai2026tractable,
  title   = {Tractable Hierarchies of Convex Relaxations for Polynomial Optimization on the Nonnegative Orthant},
  author  = {Mai, Ngoc Hoang Anh and Magron, Victor and Lasserre, Jean-Bernard and Toh, Kim-Chuan},
  journal = {Computational Optimization and Applications},
  volume  = {94},
  number  = {3},
  pages   = {851--905},
  year    = {2026},
  doi     = {10.1007/s10589-026-00782-4},
  url     = {https://doi.org/10.1007/s10589-026-00782-4}
}

@inproceedings{mao2024expressiveness,
  title={Expressiveness of Multi-Neuron Convex Relaxations in Neural Network Certification},
  author={Mao, Yuhao and Zhang, Yani and Vechev, Martin},
  booktitle={The Fourteenth International Conference on Learning Representations},
  year={2026},
  url={https://openreview.net/forum?id=f07Kf4pD0f}
}

@article{marshall2012positivstellensatze,
  title={Positivstellens{\"a}tze for real function algebras},
  author={Marshall, Murray and Netzer, Tim},
  journal={Mathematische Zeitschrift},
  volume={270},
  number={3--4},
  pages={889--901},
  year={2012},
  doi={10.1007/s00209-010-0831-1},
  url={https://doi.org/10.1007/s00209-010-0831-1}
}

@inproceedings{palma2024expressive,
title={Expressive Losses for Verified Robustness via Convex Combinations},
author={Alessandro De Palma and Rudy R Bunel and Krishnamurthy Dj Dvijotham and M. Pawan Kumar and Robert Stanforth and Alessio Lomuscio},
booktitle={The Twelfth International Conference on Learning Representations},
year={2024},
url={https://openreview.net/forum?id=mzyZ4wzKlM}
}

@article{plump1999term,
  title={Term graph rewriting},
  author={Plump, Detlef},
  journal={Handbook Of Graph Grammars And Computing By Graph Transformation: Volume 2: Applications, Languages and Tools},
  pages={3--61},
  year={1999},
  publisher={World Scientific}
}

@book {scheiderer2024course,
    AUTHOR = {Scheiderer, Claus},
     TITLE = {A course in real algebraic geometry---positivity and sums of
              squares},
    SERIES = {Graduate Texts in Mathematics},
    VOLUME = {303},
 PUBLISHER = {Springer, Cham},
 YEAR = {2024},
     PAGES = {xviii+404},
      ISBN = {978-3-031-69212-3; 978-3-031-69213-0},
   MRCLASS = {14Pxx (06F25 12D15 13J25 13J30 14-02 90C22 90C23)},
  MRNUMBER = {4824192},
MRREVIEWER = {Mario\ Kummer},
       DOI = {10.1007/978-3-031-69213-0},
       URL = {https://doi.org/10.1007/978-3-031-69213-0},
}

@article{wang2021chordal,
  title={Chordal-TSSOS: a moment-SOS hierarchy that exploits term sparsity with chordal extension},
  author={Wang, Jie and Magron, Victor and Lasserre, Jean-Bernard},
  journal={SIAM Journal on optimization},
  volume={31},
  number={1},
  pages={114--141},
  year={2021},
  publisher={SIAM}
}

@inproceedings{wang2024scalability,
  title={On the scalability and memory efficiency of semidefinite programs for Lipschitz constant estimation of neural networks},
  author={Wang, Zi and Hu, Bin and Havens, Aaron J and Araujo, Alexandre and Zheng, Yang and Chen, Yudong and Jha, Somesh},
  booktitle={The Twelfth International Conference on Learning Representations},
  year={2024}
}

@inproceedings{xue2022chordal,
  title={Chordal sparsity for {L}ipschitz constant estimation of deep neural networks},
  author={Xue, Anton and Lindemann, Lars and Robey, Alexander and Hassani, Hamed and Pappas, George J and Alur, Rajeev},
  booktitle={2022 IEEE 61st Conference on Decision and Control (CDC)},
  pages={3389--3396},
  year={2022},
  organization={IEEE}
}

@article{pardalosSchnitger1988local,
  author  = {Pardalos, P. M. and Schnitger, G.},
  title   = {Checking Local Optimality in Constrained Quadratic Programming Is {NP}-Hard},
  journal = {Operations Research Letters},
  volume  = {7},
  number  = {1},
  pages   = {33--35},
  year    = {1988},
  month   = feb,
  doi     = {10.1016/0167-6377(88)90049-1},
  url     = {https://doi.org/10.1016/0167-6377(88)90049-1}
}

\end{document}